\def\given{\,|\,}
\def\tr{\mathop{\text{tr}}\kern.2ex}
\long\def\comment#1{}
\def\tr{\mathop{\text{Tr}}}
\def\cS{{\mathcal{S}}}
\newcommand{\bel}{\begin{eqnarray}\label}
\newcommand{\eel}{\end{eqnarray}}
\newcommand{\bes}{\begin{eqnarray*}}
\newcommand{\ees}{\end{eqnarray*}}
\newcommand{\la}{\langle}
\newcommand{\ra}{\rangle}
\def\##1\#{\begin{align}#1\end{align}}
\def\$#1\${\begin{align*}#1\end{align*}}
\begin{document}

\title{\huge Neural Policy Gradient Methods:\\ Global Optimality and Rates of Convergence}

\author
{
\normalsize Lingxiao Wang\thanks{equal contribution}~\thanks{Northwestern University; \texttt{lingxiaowang2022@u.northwestern.edu}}
\qquad
\normalsize Qi Cai\footnotemark[1]~\thanks{Northwestern University; \texttt{qicai2022@u.northwestern.edu}}
\qquad
\normalsize Zhuoran Yang\thanks{Princeton University; \texttt{zy6@princeton.edu}}
\qquad
\normalsize Zhaoran Wang\thanks{Northwestern University; \texttt{zhaoranwang@gmail.com}}
}
\date{\today}

\maketitle

% !TEX root =main.tex
\begin{abstract}
Policy gradient methods with actor-critic schemes demonstrate tremendous empirical successes, especially when the actors and critics are parameterized by neural networks. However, it remains less clear whether such ``neural'' policy gradient methods converge to globally optimal policies and whether they even converge at all. We answer both the questions affirmatively under the overparameterized two-layer neural-network parameterization. In detail, assuming independent sampling, we prove that neural natural policy gradient converges to a globally optimal policy at a sublinear rate. Also, we show that neural vanilla policy gradient converges sublinearly to a stationary point. Meanwhile, by relating the suboptimality of the stationary points to the~representation power of neural actor and critic classes, we prove the global optimality of all stationary points under mild regularity conditions. Particularly, we show that a key to the global optimality and convergence is the ``compatibility'' between the actor and critic, which is ensured by sharing neural architectures and random initializations across the actor and critic. To the best of our knowledge, our analysis establishes the first global optimality and convergence  guarantees for neural policy gradient methods.

\end{abstract} 
 %\begin{flushleft}

\section{Introduction}\label{sec::intro}
In reinforcement learning  \citep{sutton2018reinforcement}, an agent aims to maximize its expected total reward by taking a sequence of actions according to a policy in a stochastic environment, which is modeled as a Markov decision process (MDP) \citep{puterman2014markov}. To obtain the optimal policy, policy gradient methods \citep{williams1992simple, baxter2000direct, sutton2000policy} directly maximize the expected total reward via gradient-based optimization. As policy gradient methods are easily implementable and readily integrable with advanced optimization techniques such as variance reduction \citep{johnson2013accelerating, papini2018stochastic} and distributed optimization \citep{mnih2016asynchronous,espeholt2018impala}, they enjoy wide popularity among practitioners. In particular, when the policy (actor) and action-value function (critic) are parameterized by neural networks, policy gradient methods achieve significant empirical successes in challenging applications,  such as playing Go \citep{silver2016mastering, silver2017mastering}, real-time strategy gaming \citep{alphastarblog}, robot manipulation \citep{peters2006policy,duan2016benchmarking}, and natural language processing \citep{wang2018deep}. See \cite{li2017deep} for a detailed survey. 

In stark contrast to the tremendous empirical successes, policy gradient methods remain much less well understood in terms of theory, especially when they involve neural networks. More specifically, most existing work analyzes the REINFORCE algorithm \citep{williams1992simple, sutton2000policy}, which estimates the policy gradient via Monte Carlo sampling. Based on the recent progress in nonconvex optimization, \cite{papini2018stochastic,shen2019hessian, xu2019improved, karimi2019non, zhang2019global} establish the rate of convergence of REINFORCE to a first- or second-order stationary point. However, the global optimality of the attained stationary point remains unclear. A more commonly used class of policy gradient methods is equipped with the actor-critic scheme \citep{konda2000actor}, which alternatingly estimates the action-value function in the policy gradient via a policy evaluation step (critic update), and performs a policy improvement step using the estimated policy gradient (actor update). The global optimality and rate of convergence of such a class are even more challenging to analyze than that of REINFORCE. In particular, the policy evaluation step itself may converge to an undesirable stationary point or even diverge \citep{tsitsiklis1997analysis}, especially when it involves both nonlinear action-value function approximator, such as neural network, and temporal-difference update \citep{sutton1988learning}. As a result, the estimated policy gradient may be biased, which possibly leads to divergence. Even if the algorithm converges to a stationary point, due to the nonconvexity of the expected total reward with respect to the policy as well as its parameter, the global optimality of such a stationary point remains unclear. The only exception is the linear-quadratic regulator (LQR) setting \citep{fazel2018global, malik2018derivative,tu2018gap, yang2019global, bu2019lqr}, which is, however, more restrictive than the general MDP setting that possibly involves neural networks. 

To bridge the gap between practice and theory, we analyze neural policy gradient methods equipped with actor-critic schemes, where the actors and critics are represented by overparameterized two-layer neural networks. In detail, we study two settings, where the policy improvement steps are based on vanilla policy gradient and natural policy gradient, respectively. In both settings, the policy evaluation steps are based on the TD(0) algorithm \citep{sutton1988learning} with independent sampling. In the first setting, we prove that neural vanilla policy gradient converges to a stationary point of the expected total reward at a $1/\sqrt{T}$-rate in the expected squared norm of the policy gradient, where $T$ is the number of policy improvement steps. Meanwhile, through a geometric characterization that relates the suboptimality of the stationary points to the representation power of the neural networks parameterizing the actor and critic, we establish the global optimality of all stationary points under mild regularity conditions. In the second setting, through the lens of Kullback-Leibler (KL) divergence regularization, we prove that neural natural policy gradient converges to a globally optimal policy at a $1/\sqrt{T}$-rate in the expected total reward. In particular, a key to such global optimality and convergence guarantees is a notion of compatibility between the actor and critic, which connects the accuracy of policy evaluation steps with the efficacy of policy improvement steps. We show that such a notion of compatibility is ensured by using shared neural architectures and random initializations for both the actor and critic, which is often used as a practical heuristic \citep{mnih2016asynchronous}. To our best knowledge, our analysis gives the first global optimality and convergence guarantees for neural policy gradient methods, which corroborate their significant empirical successes.

\vspace{4pt}
{\noindent \bf Related Work.}
In contrast to the huge body of empirical literature on policy gradient methods, theoretical results on their convergence remain relatively scarce. In particular, \cite{sutton2000policy} and \cite{kakade2002natural} analyze vanilla policy gradient (REINFORCE) and natural policy gradient with compatible action-value function approximators, respectively, which are further extended by  \cite{konda2000actor, peters2008natural, castro2010convergent} to incorporate actor-critic schemes. Most of this line of work only establishes the asymptotic convergence based on stochastic approximation techniques \citep{kushner2003stochastic, borkar2009stochastic} and requires the actor and critic to be parameterized by linear functions. Another line of work \citep{papini2018stochastic, xu2019improved, xu2019sample, shen2019hessian, karimi2019non, zhang2019global} builds on the recent progress in nonconvex optimization to establish the nonasymptotic rates of convergence of REINFORCE \citep{williams1992simple, baxter2000direct, sutton2000policy} and its variants, but only to first- or second-order stationary points, which, however, lacks global optimality guarantees. Moreover, when actor-critic schemes are involved, due to the error of policy evaluation steps and its impact on policy improvement steps, the nonasymptotic rates of convergence of policy gradient methods, even to first- or second-order stationary points, remain rather open. 

Compared with the convergence of policy gradient methods, their global optimality is even less explored in terms of theory. \cite{fazel2018global, malik2018derivative,tu2018gap, yang2019global, bu2019lqr} prove that policy gradient methods converge to globally optimal policies in the LQR setting, which is more restrictive. In very recent work, \cite{bhandari2019global}  establish the global optimality of vanilla policy gradient (REINFORCE) in the general MDP setting. However, they require the policy class to be convex, which restricts its applicability to the tabular and LQR settings. In independent work, \cite{agarwal2019optimality} prove that vanilla policy gradient and natural policy gradient converge to globally optimal policies at $1/\sqrt{T}$-rates in the tabular and linear settings. In the tabular setting, their rate of convergence of vanilla policy gradient depends on the size of the state space. In contrast, we focus on the nonlinear setting with the actor-critic scheme, where the actor and critic are parameterized by neural networks. It is worth mentioning that when such neural networks have linear activation functions, our analysis also covers the linear setting, which is, however, not our focus. In addition, \cite{liu2019neural} analyze the proximal policy optimization (PPO) and trust region policy optimization (TRPO) algorithms \citep{schulman2015trust, schulman2017proximal}, where the actors and critics are parameterized by neural networks, and establish their $1/\sqrt{T}$-rates of convergence to globally optimal policies. However, they require solving a subproblem of policy improvement in the functional space using multiple stochastic gradient steps in the parameter space, whereas vanilla policy gradient and natural policy gradient only require a single stochastic (natural) gradient step in the parameter space, which makes the analysis even more challenging. 

There is also an emerging body of literature that analyzes the training and generalization error of deep supervised learning with overparameterized neural networks \citep{daniely2017sgd, jacot2018neural, wu2018sgd, allen2018learning, allen2018convergence, du2018gradient1, du2018gradient, zou2018stochastic, chizat2018note, jacot2018neural, li2018learning, cao2019bounds,  cao2019generalization, arora2019fine, lee2019wide}, especially when they are trained using stochastic gradient. See \cite{fan2019selective} for a detailed survey. 
In comparison, our focus is on deep reinforcement learning with policy gradient methods. In particular, the policy evaluation steps are based on the TD(0) algorithm, which uses stochastic semigradient \citep{sutton1988learning} rather than stochastic gradient. Moreover, the interplay between the actor and critic makes our analysis even more challenging than that of deep supervised learning.

%
%In contrast, our reinforcement learning algorithm follows the actor-critic scheme \citep{konda2000actor}. In specific, we estimate the gradient of the objective via fitting the value functions with neural networks. Due to the error, such a two-step structure  makes our analysis more challenging than that of  stochastic gradient methods.

%implicit local linearization of neural networks  which , which is based on the implicit local linearization of neural networks. 

%
% 
%\subsection{Logic}
%%
%\cite{kakade2002natural}
%
%
%
%(3). How does policy evaluation using TD affect the PG / NPG estimation error.
%
%3. How do we solve this problem. 
%Parametrize the policy using energy distribution. 
%
%For (2) Overparamtrization and shared initialization produces compatible value functions.
%For (3), Using Neural TD, we obtain critics with small MSBE
%For (1), for the analysis of PG, we utilize the analysis of stochastic gradient for nonconvex algorithm and show that neural PG converge to the saddle point of the objective.
%Moreover, we characterize the optimality of the saddle point.
%For NPG, we utilize the monotone improvement result. 

\vskip4pt

\iffalse
{\color{red} Maybe just remove this part? }
\noindent{\bf Contribution.} 
Our contribution is three-fold. First, we propose a modification of neural policy gradient methods with actor-critic updates by letting both the actor and critic share the same value in initialization. We show that, when the networks are sufficiently wide, such a mechanism approximately  yields compatibility between actor and critic,  and thus further lead to nearly unbiased policy gradient estimates. Second, we show that standard neural policy gradient algorithm converges globally  to a stationary point with  a $\cO(1 / \sqrt{T})$ rate, where $T$ is the number of actor updates. Moreover, 
we also establish a sufficient condition for the obtained policy  to be near-optimal. 
Third, we  also establish the global convergence and optimality of  neural natural policy gradient  by showing that  expected  total returns achieved by the sequence of  actor updates   converge globally 
to the  optimal value with  a $\cO(1 / \sqrt{T})$ rate. Our work seems to be the   first  to  establish the global convergence and optimality of policy gradients algorithms with neural network policies, which corroborates with the tremendous  success  of neural policy gradient methods observed empirically.   

\vskip4pt
 \fi

\vskip4pt
\noindent{\bf Notation.}
For distribution $\mu$ on $\Omega$ and $p > 0$, we define $\|f(\cdot)\|_{\mu, p} = (\int_{\Omega} |f|^p \ud \mu )^{1/p}$ as the $L_p(\mu)$ norm of $f$. We define $\|f(\cdot)\|_{\mu, \infty} = \inf\{C \geq 0 : |f(x)| \leq C \text{ for $\mu$-almost every $x$}\}$ as the $L_\infty(\mu)$-norm of $f$. We write $\|f\|_{\mu, p}$ for notational simplicity when the variable of $f$ is clear from the context. We further denote by $\|\cdot\|_\mu$ the $L_2(\mu)$-norm for notational simplicity. For a vector $\phi \in \RR^n$ and $p > 0$, we denote by $\|\phi\|_p$ the $\ell_p$-norm of $\phi$. We denote by $x = ([x]^\top_1, \ldots, [x]^\top_m)^\top$ a vector in $\RR^{md}$, where $[x]_i \in \RR^d$ is the $i$-th block of $x$ for $i \in [m]$.
%For a distribution $\mu$ on $\Omega$ and $\nu$ on $\Xi$, we denote by $\mu\otimes \nu$ the product measure on $\Omega\times\Xi$ induced by $\mu$ and $\nu$.
%\end{flushleft}

% !TEX root =main.tex
\section{Background} \label{sec::background}
%\subsection{Markov Decision Process} 

In this section, we  introduce  the  background of reinforcement learning  and policy gradient methods.   

\vspace{5pt}
\noindent{\bf Reinforcement Learning.} 
A discounted Markov decision process (MDP) is defined by tuple $(\cS, \cA, \cP, \zeta,  r, \gamma)$. Here $\cS$ and $\cA$ are the state and action spaces, respectively. Meanwhile, $\cP$ is the Markov  transition kernel and $r$ is the reward function, which is possibly stochastic. Specifically, when taking action $a\in \cA$ at  state $s\in\cS$, the agent receives reward $r(s,a)$ and the environment transits into a new state according to   transition probability $\cP(\cdot \given s, a )$.  Meanwhile,  $\zeta$  is the distribution of initial state $S_0 \in \cS$  and   $\gamma \in (0, 1)$ is the discount factor. In addition,  policy $\pi(a \given s)$ gives the probability of taking action $a$ at state $s$.  We denote the state-  and action-value functions associated with  $\pi$ by $V^\pi\colon \cS \rightarrow \RR $ and   $Q^\pi \colon \cS\times \cA \rightarrow \RR $, which are defined respectively as  
\#
V^\pi(s) &= (1 - \gamma )\cdot \EE\biggl[\sum^\infty_{t = 0}\gamma^t\cdot r(S_t, A_t)~\biggl|~ S_0 = s\biggr], \quad \forall  s  \in \cS , \label{eq::value_func1}\\
Q^\pi(s, a) &= (1 - \gamma)\cdot\EE\biggl[\sum^\infty_{t = 0}\gamma^t\cdot r(S_t, A_t)~\biggl|~ S_0 = s, A_0 = a\biggr], \quad \forall  (s,a ) \in \cS \times \cA , \label{eq::value_func2}
\#
where $A_t \sim \pi(\cdot  \given S_t)$, and $S_{t+1}\sim \cP(\cdot \given S_t, A_t) $ for all $t \geq 0$. 
%We define the advantage $A^\pi$ to be as follows,
%\#\label{eq::advantage}
%A^\pi(s, a) = Q^\pi(s, a) - V^\pi(s).
%\#
Also, we define the   advantage function of policy $\pi$ as the difference between $Q^\pi$ and $V^\pi$, i.e., $A^\pi(s, a) = Q^\pi(s, a) - V^\pi(s)$ for all $(s, a)\in \cS\times\cA$. By the definitions  in \eqref{eq::value_func1} and \eqref{eq::value_func2}, $V^\pi$ and $Q^\pi$ are related via 
\$
V^\pi(s) = \EE_{ \pi} \bigl[Q^\pi(s, a)\bigr] = \langle Q^\pi(s, \cdot), \pi(\cdot\,|\,s)\rangle,
\$
where $\la \cdot, \cdot \ra $ is the inner product in $\RR^{| \cA| }$. Here  we write  $\EE_{a\sim \pi(\cdot \given s  )}[Q^\pi(s, a)] $   as  $\EE_{\pi}[Q^\pi(s, a)] $ for notational simplicity. 
Note that policy $\pi$ together with the transition kernel $\cP$ induces a Markov chain over state space $\cS$. We denote by $\varrho_{\pi}$ the stationary state distribution of the Markov chain induced by $\pi$. We further define $\varsigma_{\pi}(s, a) = \pi(a\,|\,s)\cdot \varrho_\pi(s)$ as the stationary state-action distribution for all $(s, a)\in\cS\times\cA$.
Meanwhile,     policy $\pi$ induces  a state visitation measure over $\cS$   and  a state-action visitation measure over $\cS \times \cA$, which are denoted by $\nu_{\pi}$ and $\sigma_{\pi}$, respectively. Specifically, for all $(s,a) \in \cS \times \cA$,    we define  
\#\label{eq:visitation}
\nu_{\pi}(s) = (1 - \gamma)\cdot\sum^\infty_{t = 0}\gamma^t\cdot \PP(S_t = s), \qquad \sigma_{\pi} (s,a) = (1 - \gamma)\cdot\sum^\infty_{t = 0}\gamma^t\cdot \PP(S_t = s, A_t = a),
\#
where $S_0 \sim \zeta(\cdot)$,  $A_t \sim \pi(\cdot  \given S_t)$, and $S_{t+1}\sim \cP(\cdot \given S_t, A_t) $ for all $t \geq 0$. By definition, we have
$\sigma_{\pi}(\cdot, \cdot) = \pi(\cdot\,|\,\cdot)\cdot \nu_{\pi}(\cdot)$.  
We define the expected total reward function   $J(\pi)$  by 
\#\label{eq::total_reward}
J(\pi) = (1 - \gamma) \cdot  \EE\biggl [  \sum_{t=0 }^{\infty}  \gamma^t \cdot r(S_t, A_t) \biggr ] =  \EE_{\zeta}\bigl[V^\pi(s)\bigr] =  \EE_{\sigma_{\pi}}\bigl[r(s,a)\bigr], \quad \forall \pi,
\#
where we write $\EE_{\sigma_{\pi}}[r(s,a)] = \EE_{(s, a)\sim \sigma_{\pi}(\cdot, \cdot)}[r(s, a)]$ for notational simplicity. The goal of reinforcement learning is to find the optimal policy that maximizes $J(\pi)$, which is denoted by $\pi^*$.
When the state space $\cS$ is large, 
a popular approach is to find the maximizer of $J(\pi)$ over a class of parameterized policies $\{ \pi_{\theta} \colon \theta \in \cB\}$, where $\theta \in \cB$ is the parameter and $\cB$ is the parameter space. In this case, we obtain the optimization problem $\max_{\theta \in \cB } J(\pi_{\theta})$.

\vskip4pt
\noindent{\bf Policy Gradient Methods.} Policy gradient methods maximize $J(\pi_{\theta})$ using  $\nabla_\theta J(\pi_\theta)$. These methods are based on the policy gradient theorem \citep{sutton2018reinforcement}, which states that   
\#\label{eq::policy_grad_thm}
\nabla_\theta J(\pi_\theta) = \EE_{\sigma_{\pi_\theta}}\bigl[ Q^{\pi_{\theta}} (s, a)\cdot \nabla_\theta \log \pi_\theta(a\,|\,s)\bigr],
\#
where   $\sigma_{\pi_\theta}$ is the   state-action visitation measure defined  in \eqref{eq:visitation}. Based on \eqref{eq::policy_grad_thm}, (vanilla) policy gradient maximizes the expected total reward via gradient ascent. Specifically, we generate a sequence of policy parameters $\{ \theta_{i} \}_{i\geq 1}$ via
\#\label{eq::PG_update}
\theta_{i + 1} \leftarrow \theta_i + \eta \cdot \nabla_\theta J(\pi_{\theta_i}),
\#
where $\eta > 0$ is the learning rate.  
Meanwhile,  natural policy gradient \citep{kakade2002natural} 
utilizes natural gradient ascent \citep{amari1998natural}, which is invariant to the parameterization of policies.
Specifically, let 
  $F(\theta)$ be  the Fisher information matrix corresponding to  policy $\pi_\theta$, which  is given by 
\#\label{eq::Fisher_info}
F(\theta) = \EE_{\sigma_{\pi_\theta}}\Bigl [  \nabla_\theta \log \pi_\theta(a\,|\,s)  \bigl(\nabla_\theta \log \pi_\theta(a\,|\,s)\bigr) ^\top\Bigr].
\#
At each iteration,  
  natural policy gradient performs
\#\label{eq::NPG_update}
\theta_{i + 1} \leftarrow \theta_i + \eta \cdot \bigl(F(\theta_i)\bigr)^{-1}\cdot \nabla_\theta J(\pi_{\theta_i}),
\#
where $ (F(\theta_i))^{-1}$ is the inverse of $F(\theta_i)$ and  $\eta$ is the learning rate. In practice,   both $Q^{\pi_\theta}$ in \eqref{eq::policy_grad_thm} and $F(\theta)$ in \eqref{eq::Fisher_info} remain to be estimated, which yields approximations of the policy improvement steps in \eqref{eq::PG_update} and \eqref{eq::NPG_update}.

\section{Neural Policy Gradient Methods}\label{sec::theory}

In this section, we represent  $\pi_{\theta}$ by a two-layer neural network and study neural  policy gradient methods, which 
 estimate the policy gradient and natural policy gradient using the actor-critic scheme \citep{konda2000actor}.

%
%In this section, we introduce a family of reinforcement learning algorithm that attains optimal policy via parametrizing the policy $\pi_\theta$ with parameter $\theta$ and optimizing the objective $J(\pi_\theta)$ with gradient-based methods. We first introduce the policy gradient and the natural policy gradient \citep{kakade2002natural}. We then propose a parametrization of the policy $\pi_\theta$ via two-layer neural networks and investigate the corresponding policy gradient and natural policy gradient. Finally, we present the neural policy gradient algorithm, which tracks the policy gradient and natural policy gradient by following an actor-critic scheme.
%In this section, we present a family of reinforcement learning algorithms that obtain $\pi^*$ via optimizing $J(\pi)$ with gradient methods, which includes the policy gradient and the natural policy gradient.

\subsection{Overparameterized Neural Policy}
\label{subsec::nn}

We now introduce the parameterization of policies. For notational simplicity, we       assume that $\cS\times\cA \subseteq\RR^d$ with $d\geq 2$.
%Here $x$ can be viewed as the embedding of   state-action pair $(s,a)$ in $\RR^d$. 
Without loss of generality, we further assume  that  $\|(s, a)\|_2 = 1$ for all $(s, a)\in \cS\times\cA$. 
A two-layer neural network $f((s, a); W, b)$ with input $(s, a)$ and width $m$ takes the form of
\#\label{eq:neural_network}
f\bigl((s, a);W, b\bigr) = \frac{1}{\sqrt{m}}\sum^m_{r = 1} b_r \cdot {\rm ReLU}\bigl((s, a)^\top [W]_r\bigr), \quad \forall (s, a) \in \cS\times\cA.
\#
Here ${\rm ReLU}\colon \RR \rightarrow \RR $ is the 
rectified linear unit (ReLU) activation function, which is defined as  ${\rm ReLU}(u) = \ind\{ u> 0\}\cdot u$.
 Also, 
 $\{ b_r \}_{r \in [m]}$ and $W = ([W]^\top_1, \ldots, [W]^\top_m)^\top\in\RR^{md}$ in \eqref{eq:neural_network} are the parameters.  %In an overparameterized neural network, the width $m$ can be much larger than the input dimension $d$.
 When  training the two-layer neural network, 
 we initialize the parameters via $[W_{{\rm init}}]_r \sim N(0, I_d/d)$ and $b_r \sim {\rm Unif}(\{-1, 1\} ) $ for all $r \in [m]$. Note that the ReLU activation function satisfies ${\rm ReLU} (c\cdot u) = c \cdot {\rm ReLU}(u)$ for all $c>0$ and $u \in \RR$. 
  Hence, without loss of generality,  we keep $b_r$ fixed at the initial parameter throughout training and  only update $W$ in the sequel. See, e.g., \cite{allen2018convergence} for a detailed argument. For notational simplicity, we write  $f((s, a); W, b)$ as $f((s, a); W) $ hereafter. 
  
 Using the two-layer neural network in \eqref{eq:neural_network}, we define
\#\label{eq::policy_para}
\pi_\theta(a\,|\,s) = \frac{\exp\bigl [ \tau\cdot f\bigl((s, a);\theta\bigr)\bigr] }{\sum_{a'\in\cA} \exp\bigl [\tau\cdot f\bigl((s, a');\theta\bigr)\bigr] }, \quad \forall (s,a ) \in \cS\times \cA,
\#
where $f((\cdot, \cdot); \theta)$ is defined in \eqref{eq:neural_network} with $\theta \in \RR^{md}$ playing the role of $W$. Note that  $\pi_\theta$ defined in \eqref{eq::policy_para} takes the form of an energy-based policy \citep{haarnoja2017reinforcement}. With a slight abuse of terminology, we call $\tau$ the temperature parameter, which corresponds to the inverse temperature, and $f((\cdot, \cdot); \theta)$ the energy function in the sequel.

In the sequel, we investigate policy gradient methods for the class of neural policies defined in  \eqref{eq::policy_para}.
We define the feature mapping $\phi_{\theta} = ([\phi_\theta]_1^\top, \ldots, [\phi_\theta]_m^\top)^\top \colon \RR^d \rightarrow \RR^{md} $ of a  two-layer neural network $f((\cdot, \cdot); \theta)$ as
%. Specifically,  we define $\phi_\theta =  $, where for any $r \in [m]$,      $[\phi_{\theta}]_r  \colon \RR^d \rightarrow \RR^m $ is given by      
\#\label{eq::def_phi}
[\phi_\theta]_r(s, a) = \frac{b_r}{\sqrt{m}}\cdot\ind\bigl\{(s, a)^\top [\theta]_r > 0\bigr\}\cdot (s, a), \quad\forall (s, a) \in \cS\times\cA,~\forall r \in [m].
\#
By \eqref{eq:neural_network}, it holds that $f((\cdot, \cdot); \theta) = \phi_\theta(\cdot, \cdot)^\top \theta$. 
Meanwhile,   $f((\cdot, \cdot); \theta)$ is almost everywhere differentiable with respect to $\theta$, and it holds that $\nabla_\theta f((\cdot, \cdot); \theta) = \phi_\theta(\cdot, \cdot)$.~In the  following proposition, we    calculate the closed forms of the policy gradient $\nabla_\theta J(\pi_\theta)$ and the Fisher information matrix $F(\theta)$ for $\pi_\theta$ defined in \eqref{eq::policy_para}.
\begin{proposition}[Policy Gradient and Fisher Information Matrix]
\label{prop::para_grad_fisher}
For $\pi_\theta$ defined in \eqref{eq::policy_para}, we have 
\#
&\nabla_\theta J(\pi_\theta) = \tau\cdot\EE_{\sigma_{\pi_\theta}}\Bigl[Q^{\pi_\theta}(s, a)\cdot\Bigl(\phi_\theta(s, a) - \EE_{\pi_\theta}\bigl[\phi_\theta(s, a') \bigr ]\Bigr)\Bigr],\label{eq::neural_PG}\\
&F(\theta) = \tau^2\cdot\EE_{\sigma_{\pi_\theta}}\Bigl[\Bigl(\phi_\theta(s, a) - \EE_{\pi_\theta}\bigl[\phi_\theta(s, a')\bigr]\Bigr)\Bigl(\phi_\theta(s, a) - \EE_{\pi_\theta}\bigl[\phi_\theta(s, a')\bigr]\Bigr)^\top\Bigr], \label{eq::neural_fisher}
\#
where $\phi_\theta(\cdot, \cdot)$ is the feature mapping defined in \eqref{eq::def_phi}, $\tau$ is the temperature parameter, and  $\sigma_{\pi_\theta}$ is the state-action visitation measure defined in \eqref{eq:visitation}. Here we write $\EE_{\pi_\theta}[\phi_\theta(s, a')] = \EE_{a'\sim\pi_\theta(\cdot\,|\,s)}[\phi_\theta(s, a')]$ for notational simplicity.
\end{proposition}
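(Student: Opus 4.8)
The plan is to reduce both identities in the proposition to a single computation — the closed form of the score function $\nabla_\theta \log \pi_\theta(a \given s)$ — and then to substitute this score into the policy gradient theorem \eqref{eq::policy_grad_thm} and into the definition of the Fisher information matrix \eqref{eq::Fisher_info}.

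First I would compute the score directly from the energy-based parameterization \eqref{eq::policy_para}. Writing the log-likelihood as $\log \pi_\theta(a \given s) = \tau \cdot f((s,a);\theta) - \log \sum_{a' \in \cA} \exp[\tau \cdot f((s,a');\theta)]$ and differentiating in $\theta$, the first term contributes $\tau \cdot \nabla_\theta f((s,a);\theta) = \tau \cdot \phi_\theta(s,a)$, using the identity $\nabla_\theta f((\cdot,\cdot);\theta) = \phi_\theta(\cdot,\cdot)$ stated below \eqref{eq::def_phi}. Differentiating the log-partition term and observing that the normalized exponential weights are precisely $\pi_\theta(a' \given s)$, the second term contributes $-\tau \cdot \sum_{a' \in \cA} \pi_\theta(a' \given s)\, \phi_\theta(s,a') = -\tau \cdot \EE_{\pi_\theta}[\phi_\theta(s,a')]$. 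This yields $\nabla_\theta \log \pi_\theta(a \given s) = \tau \cdot (\phi_\theta(s,a) - \EE_{\pi_\theta}[\phi_\theta(s,a')])$.

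Next I would substitute this score into the policy gradient theorem \eqref{eq::policy_grad_thm}, pulling the scalar $\tau$ out of the expectation to obtain \eqref{eq::neural_PG} at once. For the Fisher matrix, substituting the same score into \eqref{eq::Fisher_info} produces the outer product of $\tau \cdot (\phi_\theta(s,a) - \EE_{\pi_\theta}[\phi_\theta(s,a')])$ with itself, from which the factor $\tau^2$ emerges, giving \eqref{eq::neural_fisher}.

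The computation is elementary, so I do not expect a substantive obstacle; the only point requiring care is that the ReLU energy function is not everywhere differentiable. I would handle this by invoking the almost-everywhere differentiability of $f$ noted below \eqref{eq::def_phi}, so that $\nabla_\theta f = \phi_\theta$ holds for $\sigma_{\pi_\theta}$-almost every $(s,a)$ and the expectations defining \eqref{eq::neural_PG} and \eqref{eq::neural_fisher} are well posed. As a consistency check, the derived score integrates to zero against $\pi_\theta(\cdot \given s)$, as any score function must.
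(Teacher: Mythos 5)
Your proposal is correct and follows essentially the same route as the paper's proof: compute the score $\nabla_\theta \log \pi_\theta(a\given s) = \tau\cdot(\phi_\theta(s,a) - \EE_{\pi_\theta}[\phi_\theta(s,a')])$ by differentiating the log-partition function, then substitute into \eqref{eq::policy_grad_thm} and \eqref{eq::Fisher_info}. Your extra remark on almost-everywhere differentiability of the ReLU network is a sensible point of care but does not change the argument.
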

\begin{proof}
See \S\ref{pf::para_grad_fisher} for a detailed proof.
\end{proof}

Since the action-value function $Q^{\pi_{\theta}}$ in \eqref{eq::neural_PG}   is unknown, to obtain the 
 policy gradient, we use another two-layer neural network to track the action-value function of policy $\pi_\theta$. %, where $\omega$ is the network parameter.  
 Specifically, we use  a two-layer neural network $Q_\omega(\cdot, \cdot) = f((\cdot, \cdot); \omega)$ defined in \eqref{eq:neural_network} to represent the action-value function $Q^{\pi_{\theta}}$, where  $\omega$ plays the same role as $W$ in \eqref{eq:neural_network}. 
  Such an approach is known as  the actor-critic scheme  \citep{konda2000actor}. We call $\pi_\theta$ and $Q_\omega$ the actor and critic, respectively.

\vskip4pt
\noindent{\bf Shared Initialization and Compatible Function Approximation.}
%Since the value function $Q^{\pi_{\theta}}$ can be outside of the function class $ \{ Q_{\omega} \} _{\omega}$, estimating $Q^{\pi_{\theta}}$ using $Q_{\omega} $ would incur a fundamental bias which might leads to biased estimates of policy gradient $\nabla_{\theta} J(\pi_{\theta})$.  
%To address this issue, 
\cite{sutton2000policy} introduce the notion of compatible function approximations. Specifically,  the action-value function $Q_{\omega}$ is compatible with $\pi_{\theta}$ if we have $\nabla_{\omega} A_{\omega}(s,a) = \nabla_{\theta} \log \pi_{\theta}(a\given s)$ for all $(s,a) \in \cS \times \cA$, where $A_{\omega} (s,a) = Q_{\omega}(s,a) -  \la Q_{\omega} (s, \cdot ), \pi_{\theta} (\cdot \given s)  \ra $ is the advantage function corresponding to $Q_\omega$.
%A simple example of compatible value function  is the linear function $Q_{\omega } (s,a) = \omega^\top [  \nabla_{\theta} \log \pi_{\theta}(a\given s)] $.  
 Compatible function approximations enable us to construct unbiased estimators of the policy gradient, which are essential for the optimality and convergence of policy gradient methods \citep{konda2000actor, sutton2000policy, kakade2002natural, peters2008natural, wagner2011reinterpretation, wagner2013optimistic}.
 %, whereas the  natural policy gradient algorithm  combined with  incompatible value  function approximation may lead to a suboptimal policy \citep{}. 

%Thus, it is desirable  t
To approximately obtain  compatible function approximations when both the actor and critic are represented by neural networks,
%To approximately achieve such a goal, 
we use a shared architecture between the action-value function $Q_{\omega}$ and the energy function of $\pi_\theta$, and initialize $Q_\omega$ and $\pi_\theta$ with the same parameter $W_{{\rm init}}$, where $[W_{\text{init}}]_r \sim N(0, I_d/d)$  for all $r \in [m]$. We show that in the overparameterized  regime where $m$ is large, the shared architecture and random initialization ensure $Q_\omega$ to be approximately compatible with $\pi_\theta$ in the following sense. We define $\overline\phi_0 = ([\overline\phi_{0}]^\top_1, \ldots, [\overline\phi_0]^\top_m)^\top : \RR^d \to \RR^{md}$ as  the centered feature mapping corresponding to the initialization, which takes the   form of
\#\label{eq::def_phi_c0}
[\overline\phi_0]_r(s, a) &= \frac{b_r}{\sqrt{m}}\cdot \ind\bigl\{(s, a)^\top [W_{\rm init}]_r > 0\bigr\}\cdot(s, a)\\
&\qquad - \EE_{\pi_\theta}\biggl[\frac{b_r}{\sqrt{m}}\cdot \ind\bigl\{(s, a')^\top [W_{\rm init}]_r> 0\bigr\}\cdot(s, a')\biggr],\quad \forall (s, a)\in \cS\times\cA,\notag
\#
where $W_{\rm init}$ is the  initialization   shared by  both the actor and critic, and we omit the dependency on $\theta$ for notational simplicity. Similarly, we define for all $(s, a)\in \cS\times\cA$ the following centered feature mappings,
\#\label{eq::def_phi_c}
\overline\phi_{\theta}(s, a) = \phi_\theta(s, a) - \EE_{\pi_{\theta}}\bigl[\phi_\theta(s, a')\bigr], \qquad \overline\phi_{\omega}(s, a) = \phi_\omega(s, a) - \EE_{\pi_{\theta}}\bigl[\phi_\omega(s, a')\bigr].
\#
Here $\phi_\theta(s, a)$ and $\phi_\omega(s, a)$ are the feature mappings defined in \eqref{eq::def_phi}, which correspond to $\theta$ and $\omega$, respectively. By \eqref{eq:neural_network}, we have 
\#\label{eq::adv_logpi_para}
A_\omega(s, a) =Q_\omega(s, a)  - \EE_{\pi_{\theta}}\bigl[Q_\omega(s, a')\bigr]= \overline\phi_\omega(s, a)^\top \omega, \qquad \nabla_\theta \log \pi_\theta(a\,|\,s) = \overline\phi_\theta(s, a),
\#
which holds almost everywhere for $\theta \in \RR^{md}$. 
%Here $A_\omega$ is the advantage function estimated using the critic $Q_{\omega}$. 
As shown in Corollary \ref{cor::linear_err}  in \S \ref{sec::lin_err}, when the width $m$ is sufficiently large, in policy gradient methods, 
both $\overline\phi_\theta$ and $\overline\phi_\omega$ are well approximated by $\overline\phi_{0}$ defined in \eqref{eq::def_phi_c0}. 
%both  parameters $\theta$ and $\omega$ are close to the shared initialization $W(0)$, which further implies that 
  Therefore,  by \eqref{eq::adv_logpi_para}, we conclude that     in the overparameterized regime with shared architecture and random initialization,  $Q_\omega$ is approximately   compatible   with $\pi_\theta$.

\subsection{Neural Policy Gradient Methods}
\label{sec::pg_algo}
Now we present neural policy gradient  and neural natural policy gradient. 
 Following the actor-critic scheme, 
 they generate a sequence of policies $\{\pi_{\theta_i}\}_{i\in[T+1]}$ and action-value functions $\{ Q_{\omega_i }  \}_{i\in[T]}$.

\subsubsection{Actor Update}
As introduced in \S\ref{sec::background}, we aim to solve the optimization problem $\max_{\theta\in \cB} J(\pi_{\theta})$ iteratively via gradient-based methods, where $\cB$ is the parameter space. We set $\cB = \{\alpha\in\RR^{md}: \|\alpha - W_{{\rm init}}\|_2 \leq R\}$, where $R > 1$ and $ W_{{\rm init}}$ is the initial parameter defined in \S\ref{subsec::nn}.  
For all $i \in[T]$, let $\theta_i$ be the policy parameter at the $i$-th iteration. For notational simplicity, in the sequel, we denote by $\sigma_i$ and $\varsigma_i$  the   state-action visitation measure $\sigma_{\pi_{\theta_i}}$  and the stationary state-action distribution  $\varsigma_{\pi_{\theta_i}}$,   respectively, which are defined in    \S\ref{sec::background}.
Similarly, we write  $\nu_i  = \nu_{\pi_{\theta_i}}$ and $\varrho_i = \varrho_{\pi_{\theta_i}}$. 
To  update $\theta_i$, we set 
\#\label{eq::actor_update_pg}
\theta_{i+1} \leftarrow \Pi_{\cB}\bigl(\theta_{i} + \eta\cdot G(\theta_i)\cdot\hat \nabla_{\theta} J(\pi_{\theta_i})\bigr),
\#
where we define $\Pi_{\cB} \colon\RR^{md} \rightarrow \cB $ as the projection  operator onto the parameter space $\cB \subseteq\RR^{md}$. Here $G(\theta_i) \in \RR^{md \times md} $ is a matrix specific to each algorithm. Specifically, we have $G(\theta_i)  = I_{md}$ for policy gradient and $G(\theta_i) = (F(\theta_i))^{-1}$ for natural policy gradient, where $F(\theta_i)$ is the Fisher information matrix in \eqref{eq::neural_fisher}. Meanwhile, $\eta$ is the learning rate and $\hat\nabla_\theta J(\pi_{\theta_i})$ is an estimator of $\nabla_\theta J(\pi_{\theta_i})$, which takes the   form of 
%\#\label{eq::grad_approx}
%\EE[\hat\nabla_\theta J(\pi_{\theta_i})] = \EE_{\sigma_i}\bigl[Q_{\omega_i}(s, a)\cdot \nabla_\theta \log \pi_{\theta_i}(a\,|\,s)\bigr].
%\#
%Here $(s_t, a_t)$ is drawn from $\sigma_t$ and $Q_{\omega_t}$ is the corresponding critic at step $t$. 
\#\label{eq::grad_approx}
\hat\nabla_\theta J(\pi_{\theta_i}) = \frac{1}{B}\cdot \sum^B_{\ell = 1}Q_{\omega_i}(s_\ell, a_\ell)\cdot \nabla_\theta \log \pi_{\theta_i}(a_\ell\,|\,s_\ell).
\#
Here $\tau_i$ is the temperature parameter of $\pi_{\theta_i}$, $\{(s_\ell, a_\ell)\}_{\ell \in [B]}$ is sampled from  the state-action visitation measure $\sigma_{i}$ corresponding to the current policy $\pi_{\theta_i}$, and $B >0 $ is the batch size. Also, $Q_{\omega_i}$ is the critic obtained by Algorithm \ref{alg::neural_TD}. Here we omit the dependency of $\hat\nabla_\theta J(\pi_{\theta_i})$ on $\omega_i$ for notational simplicity.

\vskip4pt
\noindent{\bf Sampling From Visitation Measure.} 
Recall that the policy gradient $\nabla _{\theta} J(\pi_{\theta})$ in \eqref{eq::neural_PG} involves an expectation taken over the state-action visitation measure $\sigma_{\pi_{\theta}}$. 
Thus, to obtain an unbiased estimator of the policy gradient,  we need to sample from the  visitation measure $\sigma_{\pi_\theta}$. 
To achieve such a goal, 
we introduce an artificial MDP $(\cS, \cA, \tilde\cP, \zeta, r, \gamma)$. Such an MDP only differs from the original MDP in the Markov transition kernel $\tilde \cP$, which is defined as
\$%#\label{eq:artificial_trans}
\tilde\cP(s'\,|\, s, a) = \gamma\cdot \cP(s'\,|\, s, a) + (1 - \gamma)\cdot \zeta(s'), \quad \forall (s, a, s') \in \cS \times \cA \times \cS. 
\$
Here $\cP$ is the Markov transition kernel of the original MDP. That is, at each state transition of the
artificial MDP, the next state is sampled from the initial state distribution $\zeta$ with probability $1- \gamma$. In other words, at each state transition, we restart the original MDP with probability $1 - \gamma$. As shown in 
\cite{konda2002actor}, the stationary state distribution of the induced Markov chain is exactly the state visitation measure $\nu_{\pi_\theta}$. 
 Therefore, when we sample a  trajectory  $\{ (S_t, A_t) \}_{t\geq 0}$, where $S_0 \sim \zeta(\cdot)$, $A_t \sim \pi(\cdot \given S_t)$, and $S_{t+1 } \sim \tilde \cP(\cdot \given S_t, A_t)$ for all $t\geq 0$, the marginal distribution of $(S_t, A_t)$ converges to the state-action visitation measure $\sigma_{\pi_\theta}$.

 \vspace{4pt} 

\noindent{\bf Inverting Fisher Information Matrix.}   Recall that $G(\theta_i)$  is the   inverse of the Fisher information matrix used in natural policy gradient.  
In the overparameterized regime, inverting an estimator $ 
\hat F(\theta_i)$ of $F(\theta_i)$ can be infeasible  as $\hat F(\theta_i)$ is a high-dimensional matrix, which is possibly not invertible. 
To resolve this issue, we estimate the natural policy gradient $G(\theta_i)\cdot \nabla_{\theta} J(\pi_{\theta_i})$ by solving 
 \#\label{eq:solve_NPG_direction}
\min_{\alpha\in\cB} \|\hat F({\theta_i}) \cdot \alpha - \tau_i \cdot\hat\nabla_\theta J(\pi_{\theta_i})\|_2,
\#
 where $  \hat\nabla_\theta J(\pi_{\theta_i})$ is defined in \eqref{eq::grad_approx}, $\tau_i $  is the temperature parameter in $\pi_{\theta_i}$, and  $\cB$ is the parameter space. Meanwhile, $\hat F(\theta_i)$ is an unbiased estimator of $F(\theta_i)$ based on $\{(s_\ell, a_\ell)\}_{\ell \in [B]}$ sampled from $\sigma_i$, which is defined as
\#\label{eq::fisher_approx}
\hat F(\theta_i) = \frac{\tau^2_i}{B}\cdot\sum^B_{\ell = 1}\Bigl(\phi_{\theta_i}(s_\ell, a_\ell) - \EE_{\pi_{\theta_i}}\bigl[\phi_{\theta_i}(s_\ell, a'_\ell)\bigr]\Bigr)\Bigl(\phi_{\theta_i}(s_\ell, a_\ell) - \EE_{\pi_{\theta_i}}\bigl[\phi_{\theta_i}(s_\ell, a'_\ell)\bigr]\Bigr)^\top,
\#
%Here $\{(s_\ell, a_\ell)\}_{\ell \in [B]}$ is sampled from the state-action visitation measure $\sigma_i$. 
where $a'_\ell \sim \pi_{\theta_i}(\cdot\given s_\ell)$ and $\phi_{\theta_i}$ is defined in \eqref{eq::def_phi} with $\theta = \theta_i$. The actor update of neural natural policy gradient takes the form of
\#\label{eq::actor_update}
\tau_{i+1} \leftarrow \tau_i + \eta, \qquad \tau_{i+1}\cdot \theta_{i+1} \leftarrow \tau_{i}\cdot\theta_i + \eta\cdot \argmin_{\alpha\in\cB}\|\hat F(\theta_i)\cdot\alpha - \tau_i\cdot\hat\nabla_\theta J(\pi_{\theta_i})\|_2,
\#
where we use an arbitrary minimizer of \eqref{eq:solve_NPG_direction} if it is not unique. Note that we also update the temperature parameter by $\tau_{i+1} \leftarrow \tau_i + \eta$, which ensures $\theta_{i+1} \in \cB$. It is worth mentioning that up to minor modifications, our analysis allows for approximately solving \eqref{eq:solve_NPG_direction}, which is the common practice of approximate second-order optimization \citep{martens2015optimizing, wu2017scalable}.%K-FAC 

 To summarize, at the $i$-th iteration, neural policy gradient obtains $\theta_{i+1}$ via projected gradient ascent using $\hat\nabla_\theta J(\pi_{\theta_i}) $ defined in \eqref{eq::grad_approx}. Meanwhile, neural natural policy gradient solves \eqref{eq:solve_NPG_direction} and obtains $\theta_{i+1}$ according to \eqref{eq::actor_update}.
 %using both $\hat\nabla_\theta J(\pi_{\theta_i}) $ in \eqref{eq::grad_approx} and $\hat F(\theta_i) $     in \eqref{eq::fisher_approx}. 

%  to generate independent samples from the discounted state and state-action visitation measure that corresponds to the policy $\pi$, it suffices to sample from the stationary state and state-action distribution that corresponds to the artificial MDP with the policy $\pi$.

\subsubsection{Critic Update}
%Upon updating the actor $\pi_{\theta_{i+1}}$, we need to update the critic $Q_{\omega_{i+1}}$ correspondingly, which aims to estimate the action-value function $Q^{\pi_{\theta_{i+1}}}$. 
To obtain $\hat\nabla_\theta J(\pi_\theta)$, it remains to obtain the critic $Q_{\omega_i}$ in \eqref{eq::grad_approx}.
For any policy $\pi$, the action-value function $Q^{\pi}$ is the unique solution to the Bellman equation $Q = \cT^{\pi } Q $ \citep{sutton2018reinforcement}. Here  $\cT^{\pi}$ is the Bellman operator that takes the form of
\$ 
\cT^\pi Q(s, a) = \EE\bigl[(1 - \gamma)\cdot r(s, a) + \gamma\cdot Q(s', a')\bigr],  \quad \forall (s,a)\in \cS \times \cA,
\$
where $s'\sim \cP(\cdot\,|\,s, a)$ and $a'\sim \pi(\cdot\,|\,s')$. Correspondingly, we aim to solve  the following optimization problem 
\#\label{eq::critic_update}
\omega_{i} \leftarrow \argmin_{\omega \in \cB}\EE_{\varsigma_{i}}\Bigl[\bigl(Q_\omega(s, a) - \cT^{\pi_{\theta_{i}}} Q_\omega(s, a)\bigr)^2\Bigr], 
\#
where $\varsigma_{i}$ and $\cT^{\pi_{\theta_{i}}}$ are  the stationary state-action distribution and the Bellman operator associated with $\pi_{\theta_{i}}$, respectively, and $\cB$ is the parameter space.~We adopt neural temporal-difference learning (TD) studied in \cite{neuraltd}, which solves the   optimization problem in \eqref{eq::critic_update} via stochastic semigradient descent \citep{sutton1988learning}.
 %Such an algorithm is an extension of the classical TD(0) algorithm  to the regime where the value function is estimated  by overparameterized neural networks. 
Specifically, an iteration of neural TD takes the   form of 
\#
&\omega(t+1/2) \notag\\
&\quad\leftarrow \omega(t) - \eta_{{\rm TD}}\cdot \bigl(Q_{\omega(t)} (s, a) - (1 - \gamma)\cdot r(s, a) - \gamma Q_{\omega(t) } (s', a')\bigr)\cdot \nabla_\omega Q_{\omega(t) }(s, a),\label{eq::TD_iterate} \\
&\omega(t + 1 ) \leftarrow \argmin_{\alpha \in \cB} \|\alpha - \omega(t + 1/2)\|_2,\label{eq::TD_iterate2}  
\#
where $(s, a) \sim \varsigma_i(\cdot)$, $s'\sim \cP(\cdot\,|\,s, a)$, $a' \sim \pi(\cdot\,|\,s')$, and $\eta_{\rm TD}$ is the learning rate of neural TD. 
Here \eqref{eq::TD_iterate} is the stochastic semigradient step, and \eqref{eq::TD_iterate2}  
 projects the parameter obtained by \eqref{eq::TD_iterate} back to the parameter space $\cB$. 
 Meanwhile, the state-action pairs in \eqref{eq::TD_iterate} are sampled from the stationary state-action distribution $\varsigma_{i}$, which is achieved by sampling from the Markov chain induced by $\pi_{\theta_{i}}$ until it mixes. 
 %After running the updates in \eqref{eq::TD_iterate} and \eqref{eq::TD_iterate2} for $T_{{\rm TD}}$ iterations, the algorithm outputs the final estimator $Q_{\omega_{i}}$. 
%We defer the details of the  neural TD algorithm  to  Algorithm \ref{alg::neural_TD} in  \S\ref{sec::neural_TD}.  
See Algorithm \ref{alg::neural_TD} in  \S\ref{sec::neural_TD} for details.
Finally, combining the actor updates and the critic update described in \eqref{eq::actor_update_pg}, \eqref{eq::actor_update}, and \eqref{eq::critic_update}, respectively, we obtain neural 
policy gradient  and natural policy gradient, which are described in   Algorithm \ref{alg::PG_learning}.

\begin{algorithm}[htpb]
\caption{Neural Policy Gradient Methods}
\begin{algorithmic}[1]\label{alg::PG_learning}
\REQUIRE Number of iterations $T$, number of TD iterations $T_{{\rm TD}}$, learning rate $\eta$, learning rate $\eta_{{\rm TD}}$ of neural TD,  temperature parameters $\{\tau_i\}_{i \in [T+1]}$, batch size $B$. 
\STATE {\bf Initialization:} Initialize $b_r \sim \text{Unif} ( \{ -1, 1\} )$ and $[ W_{\rm init}]_r \sim N(0, I_d/d)$ for all $r \in [m]$. Set $\cB \leftarrow \{\alpha \in \RR^{md}: \|\alpha -  W_{{\rm init}}\|_2 \leq R\}$ and $\theta_1 \leftarrow  W_{{\rm init}}$.
\FOR{$i \in [T]$}
\STATE \label{line::policy_eval}
Update $\omega_{i}$ using Algorithm \ref{alg::neural_TD} with $\pi_{\theta_i}$ as the input, $\omega(0) \leftarrow W_{{\rm init}}$ and $\{b_r\}_{r \in [m]}$ as the initialization, $T_{\rm TD}$ as the number of iterations, and $\eta_{\rm TD}$ as the learning rate.
\STATE Sample $\{(s_\ell, a_\ell)\}_{\ell \in [B]}$ from the visitation measure $\sigma_i$, and estimate $\hat\nabla_\theta J(\pi_\theta)$  and $\hat F(\theta_i)$ using \eqref{eq::grad_approx} and \eqref{eq::fisher_approx}, respectively.
\STATE{If using policy gradient, update $\theta_{i+1}$ by   
\$
\theta_{i+1} \leftarrow \Pi_{\cB}\bigl(\theta_{i} + \eta\cdot  \hat \nabla_{\theta} J(\pi_{\theta_i})\bigr).
\$ If using natural policy gradient, update $\theta_{i+1}$ and $\tau_{i+1} $ by  
\$ \tau_{i+1}\leftarrow \tau_i+\eta, \qquad \tau_{i+1}\cdot \theta_{i+1} \leftarrow \tau_{i}\cdot\theta_i + \eta\cdot \argmin_{\alpha\in\cB}\|\hat F(\theta_i)\cdot\alpha - \tau_i\cdot\hat\nabla_\theta J(\pi_{\theta_i})\|_2.
\$  }
\ENDFOR
\STATE {\bf Output:} $\{ \pi_{\theta_i}\}_{i\in[T+1]}$.
\end{algorithmic}
\end{algorithm}

\section{Main Results}

In this section, we establish the global optimality and convergence for  neural policy gradient   methods. Hereafter, we assume that the absolute value of the reward function $r$ is upper bounded by 
an 
absolute constant $Q_{\max} > 0 $. As a result, we obtain from \eqref{eq::value_func1} and \eqref{eq::value_func2} that $|V^\pi(s, a)| \leq Q_{\max}$, $|Q^\pi(s, a)| \leq Q_{\max}$, and $|A^\pi(s, a)| \leq 2Q_{\max}$  for all $\pi$ and $(s, a)\in\cS\times\cA$. 
In \S\ref{theory::PG},  we show that neural policy gradient  
converges to a stationary point  of $J(\pi_\theta)$ with respect to $\theta$  at a sublinear rate. 
 We further characterize the geometry of $J(\pi_\theta)$ and establish the global optimality of the obtained stationary point. Meanwhile, in \S\ref{theory::NPG}, we prove that  neural  natural policy gradient  converges  to the global optimum  of $J(\pi_\theta)$ at a sublinear rate.  
 %produces a sequence of policy parameters $\{ \theta_i \}_{i \in[T]} $, whose corresponding objective $J(\pi_{\theta_i})$ $J(\pi^*)$
\subsection{Neural Policy Gradient}
\label{theory::PG}

In the sequel, we   study  the convergence of neural policy gradient, i.e., Algorithm~\ref{alg::PG_learning} with \eqref{eq::actor_update_pg} as the actor update, where $G(\theta) = I_{md}$.
In what follows, we   lay out a regularity condition on the action-value function $Q^\pi$. 
 \begin{assumption}[Action-Value Function Class]
\label{asu::q_class}
We define
\$
 \cF_{R, \infty} = \biggl\{f(s, a) = f_0(s, a) + \int \ind\bigl\{w^\top (s, a) > 0\bigr\}\cdot (s, a)^\top\iota(w) \ud \mu (w): \|\iota(w)\|_{ \infty} \leq R/\sqrt{d}\biggr\},
\$
where $  \mu  \colon \RR^d \rightarrow \RR $ is the density function of the Gaussian distribution $N(0, I_d/d)$ and $f_0(\cdot, \cdot) = f((\cdot, \cdot);  W_{{\rm init}})$ is the two-layer neural network corresponding to the initial parameter $ W_{{\rm init}}$, and $\iota : \RR^d \to \RR^d$ together with $f_0$ parameterizes the element of $\cF_{R, \infty}$.
%is a random function defined as   $ \lim_{m \to \infty}f(x; W(0))$, where the limit is in sense of weak convergence.  
%By the central limit theorem, $f_0(x)$ is a Gaussian process indexed by $x$. 
We assume that $Q^\pi \in \cF_{R, \infty}$ for all $\pi$.
\end{assumption}
%As shown in \cite{rahimi2008random, rahimi2009weighted}, 
%the random feature function $   \ind\{\omega^\top x > 0\}\cdot x $ 
%induces a reproducing kernel Hilbert space (RKHS) $\cH$. 
%Moreover, it can be shown that  $\cF_{R, \infty}$ is  contained in  a    ball in $\cH$  centered at $f_0(x)$ with radius $R$, i.e., $ \cF_{R, \infty} \subseteq  \{ f \in \cH \colon \| f- f_0 \|_{\cH} \leq R \}$ where $\| \cdot \|_{\cH}$ is the RKHS norm. 
%As shown in \cite{rahimi2009weighted}, 
% $ \cF_{R, \infty}$ captures a vast family of functions. Thus, Assumption \ref{asu::q_class} is a mild regularity condition on the action-value function $Q^{\pi} $. 
Assumption \ref{asu::q_class} is a mild regularity condition on $Q^\pi$, as $\cF_{R, \infty}$ captures a sufficiently general family of functions, which constitute a subset of the reproducing kernel Hilbert space (RKHS) induced by the random feature $\ind\{w^\top (s, a) > 0\}\cdot (s, a)$ with $w \sim N(0, I_d/d)$ \citep{rahimi2008random, rahimi2009weighted} up to the shift of $f_0$. Similar assumptions are imposed in the analysis of batch reinforcement learning in RKHS \citep{farahmand2016regularized}. 
 
 In what follows, we   lay out a  regularity condition on the state visitation measure $\nu_\pi$ and the stationary state distribution $\varrho_\pi$. 

\begin{assumption}[Regularity Condition on $\nu_\pi$ and $\varrho_\pi$]
\label{asu::reg_cond}
Let $\pi$ and $\tilde \pi$ be two arbitrary policies. 
%Let distribution $p$ over $\cS$ be either a state visitation measure $\nu_{\tilde\pi}(s)$ or a stationary state distribution $\varrho_{\tilde\pi}(s)$ that corresponds to a policy $\tilde\pi(a\,|\,s)$. Let $\sigma(s, a) = \pi(a\,|\,s)\cdot p(s)$ for all $(s,a) \in \cS \times \cA$. 
We assume that there exists  an absolute constant $c > 0$ such that 
\$
&\EE_{\tilde\pi\cdot \nu_{\pi}}\Bigl[\ind\bigl\{|y ^\top (s, a)| \leq u\bigr\} \Bigr] \leq c \cdot u/\|y\|_2,\\
& \EE_{\tilde\pi\cdot \varrho_{\pi}}\Bigl[\ind\bigl\{|y ^\top (s, a)| \leq u\bigr\} \Bigr] \leq c \cdot u/\|y\|_2, \quad \forall y \in \RR^d, ~ \forall u >0.
\$
%which holds for all $y \in \RR^d$ and $u > 0$.
% policies $\pi$, $\tilde \pi$ that define $\sigma$.
Here the expectations are taken over the joint distributions $\tilde\pi(\cdot\,|\,\cdot)\cdot \nu_{\pi}(\cdot)$ and $\tilde\pi(\cdot\,|\,\cdot)\cdot \varrho_{\pi}(\cdot)$ over $\cS\times\cA$, respectively.
\end{assumption}

Assumption \ref{asu::reg_cond} essentially imposes a regularity condition on the Markov transition kernel $\cP$ of the MDP as  $\cP$ determines   $\nu_{\pi} $ and  $\varrho_{\pi} $  for all $\pi$. Such a regularity condition holds if both $\nu_{\pi} $ and  $\varrho_{\pi}$ have upper-bounded density functions for all $\pi$.

%%%%%%%%%%%%%%%%%%%%%%%

After introducing these regularity conditions, we present the following proposition adapted from \cite{neuraltd}, which characterizes the convergence of neural TD for the critic update.

\begin{proposition}[Convergence of Critic Update]
\label{thm::Q_err_fin}
We set $\eta_{{\rm TD}} = \min\{(1 - \gamma)/8, 1/\sqrt{T_{{\rm TD}}}\}$ in Algorithm \ref{alg::PG_learning}. Let $Q_{\omega_i}$ be the output of the $i$-th critic update in Line \ref{line::policy_eval} of Algorithm \ref{alg::PG_learning}, which is an estimator of $Q^{\pi_{\theta_i}}$ obtained by  Algorithm \ref{alg::neural_TD}    with $T_{\rm{TD}}$  iterations.   Under Assumptions \ref{asu::q_class} and \ref{asu::reg_cond},   it holds for $T_{\rm{TD}} = \Omega(m)$ that
\#\label{eq::TD_err_cfinf}
\EE_{\text{init}}\bigl[\|Q_{\omega_i} - Q^{\pi_{\theta_i}}\|^2_{\varsigma_{i}} \bigr] = \cO( R^{3}\cdot m^{-1/2} + R^{5/2}\cdot m^{-1/4}),
\#
where  $\varsigma_i$ is the stationary state-action distribution corresponding to  $\pi_{\theta_i}$. Here the expectation  %indicates that the expectation 
is taken over the random initialization. 
\end{proposition}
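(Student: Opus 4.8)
The plan is to adapt the global convergence analysis of neural TD in \citep{neuraltd} to the present setting, in which the critic shares its architecture and random initialization $W_{\text{init}}$ with the actor. Fix the iteration index $i$ and abbreviate $\pi = \pi_{\theta_i}$ and $\varsigma = \varsigma_i$, so that the target is the fixed point $Q^\pi$ of the Bellman operator $\cT^\pi$. The overall strategy is to decompose $\|Q_{\omega_i} - Q^\pi\|_\varsigma$ via the triangle inequality into three pieces: (i) a \emph{linearization error}, the gap between the true ReLU network $Q_\omega$ and its local linearization $\hat Q_\omega$ around $W_{\text{init}}$, uniformly over the ball $\cB$; (ii) an \emph{optimization error}, measuring how far the averaged neural TD iterate is from the fixed point $\omega^*$ of the projected Bellman equation for the linearized network; and (iii) an \emph{approximation error}, measuring how well the linearized class represents $Q^\pi$. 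To set this up I would introduce the local linearization
\#
\hat Q_\omega(s, a) = f\bigl((s,a); W_{\text{init}}\bigr) + \phi_0(s, a)^\top (\omega - W_{\text{init}}),
\#
where $\phi_0 = \phi_{W_{\text{init}}}$ is the feature mapping at initialization defined in \eqref{eq::def_phi}.

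For the linearization error, since $f((\cdot,\cdot);\omega) = \phi_\omega(\cdot,\cdot)^\top\omega$, the gap $Q_\omega - \hat Q_\omega$ is governed entirely by the difference between the activation patterns $\ind\{(s,a)^\top [\omega]_r > 0\}$ and $\ind\{(s,a)^\top [W_{\text{init}}]_r > 0\}$. For $\omega \in \cB$, i.e. $\|\omega - W_{\text{init}}\|_2 \leq R$, the typical per-neuron perturbation is $\cO(R/\sqrt m)$, so neuron $r$ flips only when $(s,a)^\top [W_{\text{init}}]_r$ lies within an $\cO(R/\sqrt m)$ margin of zero. Assumption \ref{asu::reg_cond} is precisely the anti-concentration bound that controls the $\varsigma$-measure of this bad set; averaging over the random initialization and over $r \in [m]$ then yields $\EE_{\text{init}}[\sup_{\omega\in\cB}\|Q_\omega - \hat Q_\omega\|_\varsigma^2] = \cO(R^3 m^{-1/2})$, which is the leading term in \eqref{eq::TD_err_cfinf}.

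For the optimization error, I would analyze the idealized linear recursion obtained by replacing $Q_{\omega(t)}$ and $\nabla_\omega Q_{\omega(t)}$ in \eqref{eq::TD_iterate} by $\hat Q_{\omega(t)}$ and $\phi_0$. This is exactly projected linear TD(0) with features $\phi_0$, whose population update is a contraction toward the unique fixed point $\omega^*$ of the projected Bellman equation, by negative-definiteness of the operator $\EE_\varsigma[\phi_0(s,a)(\phi_0(s,a) - \gamma\phi_0(s',a'))^\top]$. With the step size $\eta_{\text{TD}} = \min\{(1-\gamma)/8, 1/\sqrt{T_{\text{TD}}}\}$, a one-step drift inequality combined with averaging over the trajectory gives $\cO(1/\sqrt{T_{\text{TD}}})$ for the averaged iterate, which under $T_{\text{TD}} = \Omega(m)$ is of order $\cO(m^{-1/2})$ and is dominated. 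Finally, for the approximation error, Assumption \ref{asu::q_class} places $Q^\pi \in \cF_{R,\infty}$, and a random-feature argument \citep{rahimi2008random} shows that functions in $\cF_{R,\infty}$ are represented by the finite-width linearized class up to an error whose projected-fixed-point version contributes the remaining $\cO(R^{5/2} m^{-1/4})$ term. Collecting (i)--(iii) and squaring yields \eqref{eq::TD_err_cfinf}.

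I expect the main obstacle to be the optimization step: because neural TD follows a \emph{semigradient} that is not the gradient of any fixed objective, convergence cannot be read off from standard convex SGD guarantees, and one must simultaneously (a) establish the contraction of the idealized linear TD operator and (b) bound, \emph{uniformly along the trajectory}, the perturbation between the true neural semigradient and its linearization. Controlling (b) so that the per-step errors do not accumulate over the $T_{\text{TD}}$ iterations -- and ensuring the iterates stay inside $\cB$ so that the projection step \eqref{eq::TD_iterate2} does not disrupt the contraction -- is the delicate part, and it is exactly where the overparameterization (large $m$ with $T_{\text{TD}} = \Omega(m)$) together with the anti-concentration condition of Assumption \ref{asu::reg_cond} are needed.
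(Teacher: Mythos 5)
Your decomposition---linearization error of the ReLU network around $W_{\rm init}$, optimization error of projected TD with the frozen features $\phi_0$, and approximation error for $Q^\pi\in\cF_{R,\infty}$---is exactly the structure underlying the paper's argument; the only difference is that the paper invokes the neural TD convergence bound of \cite{neuraltd} (Theorem \ref{thm::TD_converge}) as a black box, so the sole new step is bounding $\EE_{\rm init}[\|\Pi_{\tilde\cF_{R,m}}Q^{\pi_{\theta_i}} - Q^{\pi_{\theta_i}}\|^2_{\varsigma_i}]$ via the random-feature lemma of \cite{rahimi2009weighted}, which yields $\cO(R\cdot m^{-1/2})$. One small correction to your bookkeeping: the $\cO(R^{5/2}\cdot m^{-1/4})$ term is not produced by the approximation of $Q^\pi$ by the linearized class (that piece contributes only $\cO(R\cdot m^{-1/2})$ and is absorbed); it is inherited from the TD analysis itself, where the linearization error propagates through the semigradient dynamics.
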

\begin{proof}
See \S\ref{pf::Q_err_fin} for a detailed proof.
\end{proof}

\cite{neuraltd} show that the error of the critic update  consists of two  parts, namely the approximation error of two-layer neural networks and the  algorithmic error of neural TD. The former decays as the width $m$ grows, while the latter decays as the number of neural TD iterations $T_{\rm TD}$ in Algorithm~\ref{alg::neural_TD} grows. By setting 
 $T_{\rm TD} = \Omega(m)$, the algorithmic error in \eqref{eq::TD_err_cfinf} of Proposition \ref{thm::Q_err_fin}
  is dominated by the approximation error.
  % and thus the mean-squared error    in \eqref{eq::TD_err_cfinf} is dominated by the  approximation error. 
  %Our analysis of critic update is adapted from  \cite{neuraltd}, which establishes  finite-time convergence   of neural TD
  % with overparameterized two-layer neural networks. 
 In contrast with \cite{neuraltd}, we obtain a more refined  convergence characterization under the more restrictive assumption that $Q^\pi\in \cF_{R, \infty}$. Specifically, such a restriction allows us to obtain the upper bound of the mean squared error in \eqref{eq::TD_err_cfinf} of Proposition \ref{thm::Q_err_fin}.   
 %Whereas 
 %Although  \cite{neuraltd} provide a finite-time analysis of the algorithmic error, they only show that the bias converges to zero asymptotically as $m \to \infty$ under a weaker assumption than our Assumption \ref{asu::q_class}. Meanwhile, 
 %\cite{liu2019neural} also present an error analysis of  neural TD  under a 
  %more restrictive assumption that  $Q^\pi(s, a)\in\cF_{R, m}$, where $\cF_{R, m}$ is defined in Definition \ref{def::function_class}. In comparison, Theorem \ref{eq::TD_err_cfinf} yields a similar rate of convergence as in \cite{liu2019neural} under a weaker assumption. 

It now remains to  establish the convergence of the actor update, which involves the estimator $\hat\nabla_\theta J(\pi_{\theta_i}) $ of the policy gradient $\nabla_\theta J(\pi_{\theta_i}) $ based on $\{(s_\ell, a_\ell)\}_{\ell \in [B]}$.  We introduce the following regularity condition on the variance of $\hat\nabla_\theta J(\pi_{\theta_i}) $.

\begin{assumption}[Variance Upper Bound]
\label{asu::PG_var_bound}
Recall that $\sigma_i$ is the state-action visitation measure corresponding to $\pi_{\theta_i}$ for all $i \in [T] $. 
%Let $\{(s_{\ell}, a_{\ell} )\}_{\ell  \in [B]}$ be $B$ state-action pairs that are sampled from $ \sigma_i$. %Besides, 
Let  $\xi_i = \hat\nabla_\theta J(\pi_{\theta_i}) - \EE[\hat\nabla_\theta J(\pi_{\theta_i})]$, where $\hat\nabla_\theta J(\pi_{\theta_i})$ is defined in \eqref{eq::grad_approx}. We assume that there exists an absolute constant $\sigma_\xi >0$ such  that 
$
\EE[\|\xi_i\|^2_2]\leq \tau_i^2\cdot\sigma^2_\xi/B
$ for all $i \in [T]$.
Here the expectations are taken over $\sigma_i$ given $\theta_i$ and $\omega_i$.
%with respect to $\{(s_\ell, a_\ell)\}_{\ell \in [B]}$ that are sampled from $\sigma_i$. 
\end{assumption}

%%This assumption can be easily satisfied for $ \hat\nabla_\theta J(\pi_{\theta_i})$ defined in \eqref{eq::grad_approx}, where $ \{(s_{\ell}, a_{\ell} )\}_{\ell  \in [B]}$ is sampled from  the Markov chain generated by $\pi_{\theta_i} $ and $\tilde \cP$ in \eqref{eq:artificial_trans}.
%Note that the feature mapping $\phi_{\theta} (s,a) $ in \eqref{eq::def_phi} has bounded Euclidean norm. Thus, such an assumption holds if the Markov chain generated by $\pi_{\theta_i} $ and $\tilde \cP$ mixes rapidly and $\{ Q_{\omega_i}(s_{\ell}, a_{\ell} ) \}_{\ell \in [B] }$ have bounded second moments. 

Assumption \ref{asu::PG_var_bound} is a mild regularity condition. Such a regularity condition holds if the Markov chain that generates  $\{(s_{\ell}, a_{\ell} )\}_{\ell  \in [B]}$ mixes sufficiently fast and $Q_{\omega_i}(s, a)$ with $(s, a)\sim \sigma_i$ have upper bounded second moments for all $i \in[T]$. \cite{zhang2019global} verify that under certain regularity conditions, similar unbiased policy gradient estimators have almost surely upper bounded norms, which implies Assumption \ref{asu::PG_var_bound}. Similar regularity conditions are also imposed in the analysis of policy gradient methods by \cite{xu2019improved, xu2019sample}.%, hence are with bounded moments. 

In what follows, we impose a regularity condition on  the discrepancy between the  state-action visitation measure  and the  stationary state-action distribution  corresponding to the same policy.
\begin{assumption}[Regularity Condition on $\sigma_i$ and $\varsigma_i$]
\label{asu::def_kappa}
%Recall that for all $i\in [T]$, we denote by $\varsigma_i$ and    $\sigma_i$ the  stationary state-action distribution and the state-action visitation measure corresponding to $\pi_{\theta_i}$, respectively. 
We assume that there exists an absolute constant $\kappa>0$ such  that
\#\label{eq::def_kappa}
\biggl\{   \EE_{\varsigma_i}\biggl[ \biggl(\frac{\ud\sigma_i} {\ud\varsigma_i } (s, a)\biggr ) ^2\biggr] \biggr \} ^{1/2} \leq \kappa,\quad \forall i \in [T].
\#
Here $\ud\sigma_i /\ud\varsigma_i $ is the Radon-Nikodym derivative of $\sigma_i$ with respect to $\varsigma_i$. 
\end{assumption}
%Under such an assumption, the Radon-Nikodym derivative $\ud\sigma_i /\ud\varsigma_i$ is square-integrable. 
%A similar  assumption to Assumption \ref{asu::def_kappa} is commonly made in batch reinforcement learning literature, which is the boundedness of   concentrability coefficients \citep{szepesvari2005finite, munos2008finite, antos2008learning,  lazaric2010analysis, farahmand2010error, farahmand2016regularized, scherrer2013performance, scherrer2015approximate,yang2019theoretical}. See \cite{chen2019information} for a detailed discussion. 
%Assumption \ref{asu::def_kappa} holds when the concentrability coefficient associated with  $\sigma_i$ and $ \varsigma_i$ is bounded. 

We highlight that if the MDP is initialized at the stationary distribution $\varsigma_i$, the state-action visitation measure $\sigma_i$ is the same as $\varsigma_i$. Meanwhile, if the induced Markov state-action chain mixes sufficiently fast, such an assumption also holds. A similar regularity condition is imposed by \cite{scherrer2013performance}, which assumes that the $L_{\infty}$-norm of $\ud\sigma_i /\ud\varsigma_i$ is upper bounded, whereas we only assume that its $L_2$-norm is upper bounded. 

Meanwhile, we impose the following regularity condition on the smoothness of the expected total reward  $J(\pi_\theta)$ with respect to $\theta$.
\begin{assumption}[Lipschitz Continuous Policy Gradient]
\label{asu::grad_lip}
We assume that $\nabla_\theta J(\pi_\theta)$ is $L$-Lipschitz continuous with respect to $\theta$, where $L> 0$ is an absolute constant.
\end{assumption}

Such an assumption holds when the transition probability $\cP(\cdot\,|\,s, a)$ and the reward function $r$ are both  Lipschitz continuous  with respect to their inputs \citep{pirotta2015policy}. Also, \cite{karimi2019non, zhang2019global, xu2019sample, agarwal2019optimality} verify the Lipschitz continuity of the policy gradient under certain regularity conditions. 
 
  Note that we restrict $\theta$ to the parameter space $\cB$. Here we call $\hat\theta\in \cB$  a stationary point of $J(\pi_{\theta})$ if it holds for all $\theta \in \cB$ that $\nabla_\theta J(\pi_{\hat\theta})^\top( \theta - \hat\theta) \leq 0$. We now show that the sequence $\{\theta_i\}_{i\in[T+1]}$ generated by neural policy gradient converges  to a stationary point at a sublinear rate. 
\begin{theorem}[Convergence to Stationary Point]
\label{thm::PG_converge}
We set $\tau_i = 1$, $\eta = 1/\sqrt{T}$, $\eta_{{\rm TD}} = \min\{(1 - \gamma)/8, 1/\sqrt{T_{{\rm TD}}}\}$, $T_{\rm TD} = \Omega(m)$, and $\cB = \{\alpha: \|\alpha -  W_{{\rm init}}\|_2 \leq R\}$ by Algorithm \ref{alg::PG_learning}, where the actor update is given in  \eqref{eq::actor_update_pg} with  $G(\theta) = I_{md}$. For all $i \in [T]$, we define   
 \#\label{eq::def_grad_mapping}
 \rho_i = \eta^{-1}\cdot\Bigl[ \Pi_{\cB}\bigl(\theta_{i} + \eta\cdot \nabla_\theta J(\pi_{\theta_i})\bigr ) - \theta_i\Bigr] \in \RR^{md},
 \#
 where $\Pi_{\cB} \colon\RR^{md} \rightarrow \cB $ is the projection  operator onto  $\cB \subseteq\RR^{md}$. Under the assumptions of Proposition \ref{thm::Q_err_fin} and Assumptions \ref{asu::PG_var_bound}-\ref{asu::grad_lip},  for $T \geq 4L^2$ we have 
\$
\min_{i\in[T]}\EE\bigl[\|\rho_i\|_2^2\bigr] \leq 8/\sqrt{T}\cdot \EE\bigl[J(\pi_{\theta_{T+1}}) - J(\pi_{\theta_1})\bigr] + 8\sigma^2_\xi / B  + \varepsilon_Q(T),
\$
where $\kappa$ is defined in \eqref{eq::def_kappa} of Assumption \ref{eq::def_kappa} and $\varepsilon_Q(T) = \kappa\cdot \cO(R^{5/2}\cdot m^{-1/4}\cdot T^{1/2} + R^{9/4}\cdot m^{-1/8}\cdot T^{1/2})$. Here the expectations are taken over all the randomness.
\end{theorem}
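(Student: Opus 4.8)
The plan is to carry out the standard descent analysis for projected stochastic gradient ascent on the nonconcave objective $J(\pi_\theta)$, treating the deviation of the estimated policy gradient $\hat\nabla_\theta J(\pi_{\theta_i})$ from the true gradient $\nabla_\theta J(\pi_{\theta_i})$ as a perturbation. First I would invoke the $L$-smoothness from Assumption \ref{asu::grad_lip} to write the ascent inequality
\[
J(\pi_{\theta_{i+1}}) \geq J(\pi_{\theta_i}) + \nabla_\theta J(\pi_{\theta_i})^\top(\theta_{i+1}-\theta_i) - \tfrac{L}{2}\|\theta_{i+1}-\theta_i\|_2^2,
\]
and substitute $\theta_{i+1}-\theta_i = \eta\cdot\widehat\rho_i$, where $\widehat\rho_i = \eta^{-1}(\Pi_\cB(\theta_i+\eta\cdot\hat\nabla_\theta J(\pi_{\theta_i}))-\theta_i)$ is the stochastic counterpart of the gradient mapping $\rho_i$ in \eqref{eq::def_grad_mapping}. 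The crucial algebraic fact is the first-order optimality of the projection: since $\theta_i\in\cB$, the variational inequality characterizing $\Pi_\cB$, evaluated at $\theta=\theta_i$, yields $\|\widehat\rho_i\|_2^2\leq\hat\nabla_\theta J(\pi_{\theta_i})^\top\widehat\rho_i$. Writing $\nabla_\theta J(\pi_{\theta_i}) = \hat\nabla_\theta J(\pi_{\theta_i}) - e_i$ with gradient-estimation error $e_i$ and using $\eta L\leq 1/2$ (ensured by $\eta=1/\sqrt T$ and $T\geq 4L^2$), this converts the ascent inequality into a per-iteration lower bound whose leading term is $\tfrac{\eta}{2}\|\widehat\rho_i\|_2^2$ minus a cross term $\eta\cdot e_i^\top\widehat\rho_i$ that I would split via Young's inequality.

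Next I would control $e_i$ by decomposing it into a mean-zero stochastic part $\xi_i = \hat\nabla_\theta J(\pi_{\theta_i}) - \EE[\hat\nabla_\theta J(\pi_{\theta_i})]$ and a bias part $\beta_i = \EE[\hat\nabla_\theta J(\pi_{\theta_i})] - \nabla_\theta J(\pi_{\theta_i})$. Assumption \ref{asu::PG_var_bound} controls $\EE\|\xi_i\|_2^2\leq\tau_i^2\sigma_\xi^2/B=\sigma_\xi^2/B$ (as $\tau_i=1$), which ultimately produces the $\sigma_\xi^2/B$ term; because $\widehat\rho_i$ is correlated with $\xi_i$ through the sampled batch, I would not drop $\EE[\xi_i^\top\widehat\rho_i]$ outright but compare $\widehat\rho_i$ to the mapping built from the conditional mean $\EE[\hat\nabla_\theta J(\pi_{\theta_i})]$, whose inner product with $\xi_i$ vanishes in expectation, bounding the residual by $\EE\|\xi_i\|_2^2$ through the nonexpansiveness of $\Pi_\cB$. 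The bias $\beta_i$ is where the critic error enters: by the closed form \eqref{eq::neural_PG} and the identity $\nabla_\theta\log\pi_{\theta_i}=\overline\phi_{\theta_i}$ from \eqref{eq::adv_logpi_para}, with $\tau_i=1$ we have $\beta_i=\EE_{\sigma_i}[(Q_{\omega_i}(s,a)-Q^{\pi_{\theta_i}}(s,a))\cdot\overline\phi_{\theta_i}(s,a)]$, so that $\|\overline\phi_{\theta_i}\|_2=\cO(1)$ and Cauchy--Schwarz give $\|\beta_i\|_2\leq\cO(1)\cdot\|Q_{\omega_i}-Q^{\pi_{\theta_i}}\|_{\sigma_i}$.

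The key step, and the one I expect to be the main obstacle, is relating this $\sigma_i$-norm critic error to the $\varsigma_i$-norm error controlled by Proposition \ref{thm::Q_err_fin}, since the visitation measure and the stationary distribution differ. Here I would invoke the change-of-measure bound of Assumption \ref{asu::def_kappa}: writing $\EE_{\sigma_i}[g^2]=\EE_{\varsigma_i}[(\ud\sigma_i/\ud\varsigma_i)\cdot g^2]$ for $g=Q_{\omega_i}-Q^{\pi_{\theta_i}}$ and applying Cauchy--Schwarz gives $\EE_{\sigma_i}[g^2]\leq\kappa\cdot(\EE_{\varsigma_i}[g^4])^{1/2}$; the uniform boundedness $|g|=\cO(R)$ (from $\omega_i\in\cB$ and $|Q^\pi|\leq Q_{\max}$) then gives $(\EE_{\varsigma_i}[g^4])^{1/2}\leq\cO(R)\cdot(\EE_{\varsigma_i}[g^2])^{1/2}$. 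Combined with Jensen's inequality and Proposition \ref{thm::Q_err_fin}, this yields $\EE[\|\beta_i\|_2^2]\leq\kappa\cdot\cO(R)\cdot(\EE[\|Q_{\omega_i}-Q^{\pi_{\theta_i}}\|_{\varsigma_i}^2])^{1/2}=\kappa\cdot\cO(R^{5/2}m^{-1/4}+R^{9/4}m^{-1/8})$ after simplifying the exponents, where the half-power on the critic bound accounts for the passage from $R^3m^{-1/2}$ and $R^{5/2}m^{-1/4}$ to $R^{3/2}m^{-1/4}$ and $R^{5/4}m^{-1/8}$.

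Finally I would assemble the pieces. Using the nonexpansiveness of $\Pi_\cB$ to obtain $\|\rho_i-\widehat\rho_i\|_2\leq\|e_i\|_2$, I would pass from $\widehat\rho_i$ back to the exact gradient mapping $\rho_i$, then telescope the per-iteration descent inequality over $i\in[T]$, bounding the endpoints by $\EE[J(\pi_{\theta_{T+1}})-J(\pi_{\theta_1})]$. Dividing by the accumulated step size $\eta T=\sqrt T$ turns the telescoped gap into the $8/\sqrt T\cdot\EE[J(\pi_{\theta_{T+1}})-J(\pi_{\theta_1})]$ term and, after lower-bounding $\min_{i\in[T]}\EE\|\rho_i\|_2^2$ by the time average, converts the accumulated variance and bias into the $8\sigma_\xi^2/B$ and $\varepsilon_Q(T)$ terms; the factor $T^{1/2}$ in $\varepsilon_Q(T)$ emerges because the $T$ accumulated copies of the per-iteration bias are normalized only by the $\sqrt T$ denominator. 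Besides the measure mismatch already noted, the remaining delicate point is the correlation between the stochastic mapping $\widehat\rho_i$ and the noise $\xi_i$, which rules out a naive unbiasedness argument and forces the comparison against the conditional-mean mapping.
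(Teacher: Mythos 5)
Your outline follows the same route as the paper's proof: the $L$-smoothness ascent inequality, the variational inequality of $\Pi_\cB$ evaluated at $\theta_i$ to get $\|\widehat\rho_i\|_2^2\le \hat\nabla_\theta J(\pi_{\theta_i})^\top\widehat\rho_i$, the split of the gradient error into a mean-zero part $\xi_i$ and a critic-induced bias, the change of measure through $\kappa$, telescoping, and nonexpansiveness of $\Pi_\cB$ to pass from the stochastic mapping $\widehat\rho_i$ (the paper's $\delta_i$) to $\rho_i$. Your treatment of the correlation between $\xi_i$ and $\widehat\rho_i$ via a comparison with the conditional-mean mapping is a valid alternative to the paper's one-line Young's inequality $-\xi_i^\top\delta_i\ge -\|\xi_i\|_2^2/2-\|\delta_i\|_2^2/2$, which handles the correlation without any comparison argument.

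The one step that does not go through as written is the change of measure for the bias. You pass through the $L_2(\sigma_i)$-norm and then write $\EE_{\sigma_i}[g^2]\le\kappa\,(\EE_{\varsigma_i}[g^4])^{1/2}\le\kappa\,\|g\|_\infty\,\|g\|_{\varsigma_i}$ with $\|g\|_\infty=\cO(R)$ for $g=Q_{\omega_i}-Q^{\pi_{\theta_i}}$. But $|Q_{\omega_i}(s,a)|\le |f((s,a);W_{\rm init})|+R$, and a uniform bound on the initialized network is exactly the content of Assumption \ref{asu::bound_moment}, which is invoked only for the natural policy gradient analysis and is \emph{not} among the hypotheses of Theorem \ref{thm::PG_converge}; so the fourth-moment detour is not justified by the stated assumptions. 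The repair is to avoid the $L_2(\sigma_i)$-norm entirely and bound the first moment: by Jensen, $\|\beta_i\|_2\le \EE_{\sigma_i}[\|\overline\phi_{\theta_i}\|_2\cdot|g|]\le 2\,\EE_{\sigma_i}[|g|]=2\,\EE_{\varsigma_i}[(\ud\sigma_i/\ud\varsigma_i)\cdot|g|]\le 2\kappa\,\|g\|_{\varsigma_i}$ by Cauchy--Schwarz, which is what the paper's Lemmas \ref{lem::approx_PG_grad} and \ref{lem::popullation_grad} do and which needs no sup-norm control. A smaller remark: your explanation of the $T^{1/2}$ in $\varepsilon_Q(T)$ is off — with your Young's-inequality split the bias contributes $\tfrac{1}{T}\sum_i\EE[\|\beta_i\|_2^2]$ with no $T^{1/2}$ factor (a slightly stronger bound than stated); the paper's $T^{1/2}$ arises from the cruder per-iteration bound $\|\delta_i\|_2\le 2R/\eta$ in Lemma \ref{lem::approx_PG_grad}, which costs a factor of $1/\eta=\sqrt{T}$. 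Either way the claimed inequality follows.
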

\begin{proof}
See \S\ref{sec::pf_PG_converge} for a detailed proof.
\end{proof}

By Theorem \ref{thm::PG_converge} with $m = \Omega(T^8\cdot R^{18})$ and $B = \Omega(\sqrt{T})$, we obtain $\min_{i\in[T]}\EE[\|\rho_i\|_2^2] = \cO(1/\sqrt{T})$. Therefore, when the two-layer neural networks 
%that parameterize the actor $\pi_{\theta_i}$ and critic $Q_{\omega_i}$ 
are  sufficiently wide and the batch size $B$ is sufficiently large, neural policy gradient achieves a $1/\sqrt{T}$-rate of convergence. 
Moreover, $\rho_i$ defined in \eqref{eq::def_grad_mapping} is known as the gradient mapping at $\theta_i$ \citep{nesterov1998introductory}. It is known that $\hat\theta \in \cB$  is a stationary point if and only if the gradient mapping at $\hat\theta$ is a zero vector. Therefore, (a subsequence of) $\{ \theta_i \}_{i\in[T+1]}$ converges to a stationary point $\hat\theta \in \cB$ as $\min_{i\in[T]}\EE[\|\rho_i\|_2^2]$ converges to zero. 
In other words, neural policy gradient converges to a stationary point at a $1/ \sqrt{T}$-rate. 
Also, we remark that the projection operator in the actor update is adopted only for the purpose of simplicity,  which can be removed with more refined analysis. 
%In this case, any stationary point $\hat\theta$ defined in satisfies $\nabla _{\theta} J(\pi_{\hat\theta}) = 0$.
Moreover, the projection-free version of neural policy gradient converges to a stationary point at a  similar sublinear rate. See \S\ref{sec::PG_wo_proj} for details.

%In addition to the global convergence, we are also interested in  the performance of the policy obtained by neural policy gradient. 
We now characterize the global optimality of the obtained stationary point $\hat \theta$. To this end, we compare the expected  total reward  of $\pi_{\hat\theta}$ with that of the global optimum $\pi^*$ of $J(\pi)$. 
 
\begin{theorem}[Global Optimality of Stationary Point]
\label{thm::optimality_stat_point}
Let $\hat\theta\in\cB$ be a stationary point of $J(\pi_{\theta})$. It holds that
\$ (1 - \gamma)\cdot\bigl(J(\pi^*) - J(\pi_{\hat\theta})\bigr) \leq 2Q_{\max}\cdot \inf_{\theta\in \cB}\|u_{\hat\theta}(\cdot, \cdot ) - \phi_{\hat\theta}(\cdot , \cdot )^\top \theta\|_{\sigma_{\pi_{\hat\theta}}},
\$
where $Q_{\max}$ is the upper bound of $|r|$ and $u_{\hat\theta} \colon \cS \times \cA \rightarrow \RR$ is defined as
\#\label{eq::def_u_func}
u_{\hat\theta}(s, a) = \frac{\ud \sigma_{\pi^*} }{ \ud \sigma_{\pi_{\hat\theta}} } (s,a)  - \frac{\ud \nu_{\pi^*} }{\ud \nu_{\pi_{\hat\theta}} }(s) +  \phi_{\hat\theta}(s , a )^\top \hat\theta, \quad \forall (s,a) \in \cS \times \cA.
\#
Here $\ud \sigma_{\pi^*} / \ud \sigma_{\pi_{\hat\theta}}$ and $\ud \nu_{\pi^*} / \ud \nu_{\pi_{\hat\theta}}$ are the Radon-Nikodym derivatives, and $\| \cdot \| _{\sigma_{\pi_{\hat\theta}}}$ is the $L_2(\sigma_{\pi_{\hat\theta}})$-norm. 
\end{theorem}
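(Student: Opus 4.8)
The plan is to reduce the statement to the performance difference lemma and then use the stationarity condition together with a change-of-measure that manufactures exactly the function $u_{\hat\theta}$. In the $(1-\gamma)$-normalization of this paper, the performance difference identity reads
\[
(1-\gamma)\bigl(J(\pi^*) - J(\pi_{\hat\theta})\bigr) = \EE_{\sigma_{\pi^*}}\bigl[A^{\pi_{\hat\theta}}(s,a)\bigr],
\]
which I would first record by a short derivation from $J(\pi)=\EE_\zeta[V^\pi]$ and the definitions of $\nu_\pi,\sigma_\pi$ in \eqref{eq:visitation}. The whole problem then reduces to upper bounding $\EE_{\sigma_{\pi^*}}[A^{\pi_{\hat\theta}}]$.

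The second step is a change-of-measure computation. Writing $A^{\pi_{\hat\theta}} = Q^{\pi_{\hat\theta}} - V^{\pi_{\hat\theta}}$, I would push the $Q$-term from $\sigma_{\pi^*}$ onto $\sigma_{\pi_{\hat\theta}}$ via $\ud\sigma_{\pi^*}/\ud\sigma_{\pi_{\hat\theta}}$, and push the $V$-term (which depends on $s$ only, so its $\sigma_{\pi^*}$-expectation equals its $\nu_{\pi^*}$-expectation) onto $\nu_{\pi_{\hat\theta}}$ via $\ud\nu_{\pi^*}/\ud\nu_{\pi_{\hat\theta}}$, rewriting $V^{\pi_{\hat\theta}}(s)=\EE_{\pi_{\hat\theta}}[Q^{\pi_{\hat\theta}}(s,a)]$ to merge both terms under $\sigma_{\pi_{\hat\theta}}$. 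This yields
\[
\EE_{\sigma_{\pi^*}}\bigl[A^{\pi_{\hat\theta}}\bigr] = \EE_{\sigma_{\pi_{\hat\theta}}}\bigl[g(s,a)\cdot Q^{\pi_{\hat\theta}}(s,a)\bigr], \qquad g := \frac{\ud\sigma_{\pi^*}}{\ud\sigma_{\pi_{\hat\theta}}} - \frac{\ud\nu_{\pi^*}}{\ud\nu_{\pi_{\hat\theta}}} = u_{\hat\theta} - \phi_{\hat\theta}^\top\hat\theta.
\]
The key subidentity is $\EE_{\pi_{\hat\theta}}[\ud\sigma_{\pi^*}/\ud\sigma_{\pi_{\hat\theta}}] = \ud\nu_{\pi^*}/\ud\nu_{\pi_{\hat\theta}}$, which follows from $\sigma_\pi = \pi\cdot\nu_\pi$ and shows that $g$ is $\pi_{\hat\theta}$-centered, i.e.\ $\EE_{\pi_{\hat\theta}}[g(s,\cdot)]=0$. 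This lets me replace $Q^{\pi_{\hat\theta}}$ by $A^{\pi_{\hat\theta}}$ above, giving $\EE_{\sigma_{\pi^*}}[A^{\pi_{\hat\theta}}] = \EE_{\sigma_{\pi_{\hat\theta}}}[g\cdot A^{\pi_{\hat\theta}}]$.

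The third step invokes stationarity. Using the closed form $\nabla_\theta J(\pi_{\hat\theta}) = \tau\cdot\EE_{\sigma_{\pi_{\hat\theta}}}[A^{\pi_{\hat\theta}}\overline\phi_{\hat\theta}]$ from Proposition~\ref{prop::para_grad_fisher} (where $Q^{\pi_{\hat\theta}}$ becomes $A^{\pi_{\hat\theta}}$ because $\overline\phi_{\hat\theta}$ is $\pi_{\hat\theta}$-centered), for any $\theta\in\cB$ the stationarity inequality gives $\EE_{\sigma_{\pi_{\hat\theta}}}[A^{\pi_{\hat\theta}}\,\overline\phi_{\hat\theta}^\top(\theta-\hat\theta)] = \tau^{-1}\nabla_\theta J(\pi_{\hat\theta})^\top(\theta-\hat\theta)\le 0$. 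Subtracting this nonpositive quantity and simplifying $g - \overline\phi_{\hat\theta}^\top(\theta-\hat\theta) = (u_{\hat\theta} - \phi_{\hat\theta}^\top\theta) + \EE_{\pi_{\hat\theta}}[\phi_{\hat\theta}]^\top(\theta-\hat\theta)$, I observe that the last summand depends on $s$ only and hence integrates to zero against $A^{\pi_{\hat\theta}}$ under $\sigma_{\pi_{\hat\theta}}$, since $\EE_{\pi_{\hat\theta}}[A^{\pi_{\hat\theta}}(s,\cdot)]=0$. This leaves $\EE_{\sigma_{\pi^*}}[A^{\pi_{\hat\theta}}] \le \EE_{\sigma_{\pi_{\hat\theta}}}[(u_{\hat\theta}-\phi_{\hat\theta}^\top\theta)A^{\pi_{\hat\theta}}]$ for every $\theta\in\cB$, after which Cauchy--Schwarz together with $\|A^{\pi_{\hat\theta}}\|_{\sigma_{\pi_{\hat\theta}}} \le 2Q_{\max}$ and taking the infimum over $\theta\in\cB$ closes the argument.

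The main obstacle is not any single hard estimate but the bookkeeping in steps two and three: I must track carefully which quantities depend on $s$ alone versus on $(s,a)$, so that the two $\pi_{\hat\theta}$-centering cancellations (of $g$ against $V^{\pi_{\hat\theta}}$, and of $\EE_{\pi_{\hat\theta}}[\phi_{\hat\theta}]^\top(\theta-\hat\theta)$ against $A^{\pi_{\hat\theta}}$) are legitimate, and verify the Radon--Nikodym subidentity that forces $g$ to coincide with $u_{\hat\theta}-\phi_{\hat\theta}^\top\hat\theta$. Stating the performance difference lemma in precisely the normalization used here is the other point requiring care.
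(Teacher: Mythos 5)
Your proposal is correct and follows essentially the same route as the paper: performance difference lemma, the stationarity inequality combined with Proposition \ref{prop::para_grad_fisher} and the two $\pi_{\hat\theta}$-centering cancellations, the change of measure in \eqref{eq::pf_opt_eq5} that produces $u_{\hat\theta} - \phi_{\hat\theta}^\top\theta$, and Cauchy--Schwarz with $|A^{\pi_{\hat\theta}}|\le 2Q_{\max}$. The only cosmetic difference is that you work directly with $\EE_{\sigma_{\pi^*}}[A^{\pi_{\hat\theta}}]$ and the explicit Radon--Nikodym subidentity, whereas the paper keeps the performance difference in inner-product-over-actions form; the two computations are identical.
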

\begin{proof}
See \S\ref{pf::optimality_stat_point} for a detailed proof.
\end{proof}

To understand Theorem \ref{thm::optimality_stat_point}, we highlight that for $\theta, \hat\theta \in \cB$, the function $\phi_{\hat\theta}(\cdot, \cdot)^\top \theta$ is well approximated by the overparameterized two-layer neural network $f((\cdot, \cdot); \theta)$. See Corollary \ref{cor::conv_to_nn} for details. Therefore, the global optimality of $\pi_{\hat\theta}$ depends on the error of approximating $u_{\hat\theta}$ with  an overparameterized two-layer neural network. Specifically, if $u_{\hat\theta}$ is well approximated by an overparameterized  two-layer neural network, then $\pi_{\hat\theta}$ is nearly as optimal as $\pi^*$. %In other words,  policy $\pi_{\hat\theta}$ is nearly as good as  the optimal policy. 
In  the following corollary, we  formally establish a sufficient condition for any stationary point  $\hat\theta$ to be globally optimal. 
\begin{theorem}[Global Optimality of Stationary Point]
\label{prop::optimality_R}
Let $\hat\theta \in \cB$ be a stationary point of $J(\pi_{\theta})$. We assume that $u_{\hat\theta}\in\cF_{R, \infty}$ in Theorem \ref{thm::optimality_stat_point}. Under Assumption \ref{asu::reg_cond}, it holds that
\$
(1 - \gamma)\cdot\EE_{\rm init}\bigl[J(\pi^*) - J(\pi_{\hat\theta})\bigr]= \cO(R^{3/2}\cdot m^{-1/4}).
\$
More generally, without assuming $u_{\hat\theta}\in\cF_{R, \infty}$ in Theorem \ref{thm::optimality_stat_point}, under Assumption \ref{asu::reg_cond}, it holds that
\$
(1 - \gamma)\cdot\EE_{\rm init}\bigl[J(\pi^*) - J(\pi_{\hat\theta})\bigr] = \cO(R^{3/2}\cdot m^{-1/4}) + \EE_{\rm init}\bigl[\|\Pi_{\cF_{R, \infty}}u_{\hat\theta} - u_{\hat\theta}\|_{\sigma_{\pi_{\hat\theta}}}\bigr]. 
\$
Here the expectations are taken over the random initialization, and $\Pi_{\cF_{R, \infty}}$ is the projection operator onto $\cF_{R, \infty}$ with respect to the $L_2(\sigma_{\pi_{\hat\theta}})$-norm. 
\end{theorem}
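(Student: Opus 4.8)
The plan is to start from the geometric bound in Theorem \ref{thm::optimality_stat_point} and reduce the claim to a single approximation-theoretic estimate: a function in $\cF_{R,\infty}$ can be represented by the \emph{linearized} network $\phi_{\hat\theta}(\cdot,\cdot)^\top\theta$ with $\theta\in\cB$, up to an $L_2(\sigma_{\pi_{\hat\theta}})$ error of order $R^{3/2}m^{-1/4}$ in expectation over the initialization. Concretely, taking $\EE_{\rm init}$ on both sides of Theorem \ref{thm::optimality_stat_point} and applying Jensen's inequality, it suffices to exhibit a single $\theta^*\in\cB$ with
\$
\EE_{\rm init}\bigl[\|u_{\hat\theta}-\phi_{\hat\theta}(\cdot,\cdot)^\top\theta^*\|^2_{\sigma_{\pi_{\hat\theta}}}\bigr]=\cO(R^3 m^{-1/2}),
\$
since the square root of this bound, multiplied by $2Q_{\max}$, yields the advertised rate after dividing by $(1-\gamma)$.

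First I would construct $\theta^*$ from the integral representation of $u_{\hat\theta}$. Writing $u_{\hat\theta}(s,a)=f_0(s,a)+\int\ind\{w^\top(s,a)>0\}(s,a)^\top\iota(w)\ud\mu(w)$ with $\|\iota(w)\|_\infty\leq R/\sqrt d$, I set $[\theta^*]_r=[W_{\rm init}]_r+b_r\iota([W_{\rm init}]_r)/\sqrt m$. Since $b_r\in\{-1,1\}$ and $\|\iota(w)\|_2\leq\sqrt d\cdot\|\iota(w)\|_\infty\leq R$, this gives $\|\theta^*-W_{\rm init}\|_2^2=m^{-1}\sum_{r}\|\iota([W_{\rm init}]_r)\|_2^2\leq R^2$, so $\theta^*\in\cB$. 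With $\phi_0:=\phi_{W_{\rm init}}$ and using $\phi_0(\cdot,\cdot)^\top W_{\rm init}=f((\cdot,\cdot);W_{\rm init})=f_0$, the choice is engineered so that $\phi_0(s,a)^\top\theta^*=f_0(s,a)+m^{-1}\sum_r\ind\{[W_{\rm init}]_r^\top(s,a)>0\}(s,a)^\top\iota([W_{\rm init}]_r)$ is exactly the $m$-sample Monte Carlo approximation of the integral defining $u_{\hat\theta}$.

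Then I would split the error by the triangle inequality into a Monte Carlo term and a linearization term,
\$
\|u_{\hat\theta}-\phi_{\hat\theta}^\top\theta^*\|_{\sigma_{\pi_{\hat\theta}}}\leq\underbrace{\|u_{\hat\theta}-\phi_0^\top\theta^*\|_{\sigma_{\pi_{\hat\theta}}}}_{\textrm{(I)}}+\underbrace{\|\phi_0^\top\theta^*-\phi_{\hat\theta}^\top\theta^*\|_{\sigma_{\pi_{\hat\theta}}}}_{\textrm{(II)}}.
\$
For (I), the summands are i.i.d.\ across $r$ with mean equal to the integrand and are bounded by $\|(s,a)\|_2\|\iota\|_2\leq R$ (using $\|(s,a)\|_2=1$), so a variance computation gives $\EE_{\rm init}[\textrm{(I)}^2]=\cO(R^2 m^{-1})$. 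For (II), the difference is supported on the neurons whose activation sign differs between $W_{\rm init}$ and $\hat\theta$; bounding the measure of this sign-flip region through the anti-concentration Assumption \ref{asu::reg_cond} and counting the expected number of flipped neurons via $\sum_r\|[\hat\theta]_r-[W_{\rm init}]_r\|_2\leq\sqrt m\,\|\hat\theta-W_{\rm init}\|_2\leq\sqrt m\,R$, the local-linearization estimates of Corollaries \ref{cor::linear_err} and \ref{cor::conv_to_nn} give $\EE_{\rm init}[\textrm{(II)}^2]=\cO(R^3 m^{-1/2})$, which dominates (I) since $R>1$. Substituting into Theorem \ref{thm::optimality_stat_point} proves the first claim. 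For the general statement I would project first: with $\bar u=\Pi_{\cF_{R,\infty}}u_{\hat\theta}$, the triangle inequality gives $\inf_{\theta\in\cB}\|u_{\hat\theta}-\phi_{\hat\theta}^\top\theta\|_{\sigma_{\pi_{\hat\theta}}}\leq\|u_{\hat\theta}-\bar u\|_{\sigma_{\pi_{\hat\theta}}}+\inf_{\theta\in\cB}\|\bar u-\phi_{\hat\theta}^\top\theta\|_{\sigma_{\pi_{\hat\theta}}}$, and the argument above applied to $\bar u\in\cF_{R,\infty}$ controls the second term by $\cO(R^{3/2}m^{-1/4})$.

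The main obstacle I anticipate is term (II): controlling the linearization error requires both the anti-concentration condition to bound the sign-flip region and careful bookkeeping of the cross terms in the square of the sum over neurons, which is precisely where Assumption \ref{asu::reg_cond} and the width $m$ enter. A more delicate point is that the representation $\iota$ of $u_{\hat\theta}$ may itself depend on $W_{\rm init}$ through the stationary point $\hat\theta$, so the Monte Carlo samples in (I) are not independent of $\iota$; this must be handled through the uniform bound $\|\iota\|_\infty\leq R/\sqrt d$ rather than a naive i.i.d.\ concentration argument.
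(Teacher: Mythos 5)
Your proposal is correct and follows essentially the same route as the paper: both start from the bound of Theorem \ref{thm::optimality_stat_point} and split $\inf_{\theta\in\cB}\|u_{\hat\theta}-\phi_{\hat\theta}(\cdot,\cdot)^\top\theta\|_{\sigma_{\pi_{\hat\theta}}}$ by the triangle inequality into an approximation error measured at the initialization feature $\phi_0$ plus the linearization error controlled by Lemma \ref{lem::linear_err}. The only difference is cosmetic: where you construct the Monte Carlo witness $\theta^*$ explicitly, the paper instead picks $\tilde\theta$ with $\phi_0^\top\tilde\theta=\Pi_{\tilde\cF_{R,m}}u_{\hat\theta}$ and cites the projection bound of Lemma \ref{lem::inf_dist} for the same $\cO(R\cdot m^{-1/2})$ estimate, which is proved by exactly your construction.
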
%$\EE_{\text{init}}$
\begin{proof}
See \S\ref{pf::optimality_R} for a detailed proof.
\end{proof}

By Theorem \ref{prop::optimality_R}, a stationary point $\hat\theta$ is globally optimal if $u_{\hat\theta} \in \cF_{R, \infty}$ and $m \to \infty$. Moreover, following from the definition of $\rho_i$ in \eqref{eq::def_grad_mapping} of Theorem \ref{thm::PG_converge}, we obtain that
\#\label{eq::stat_theta_i}
\nabla_{\theta} J(\pi_{\theta_i})^\top(\theta - \theta_i) \leq (2R+ 2\eta\cdot Q_{\max})\cdot \|\rho_i\|_2, \quad \forall \theta \in \cB.
\#
See \S\ref{pf::eq_stat_theta_i} for a detailed proof of \eqref{eq::stat_theta_i}. Since $\|\rho_i\|_2 = 0$ implies that $\theta_i$ is a stationary point, the right-hand side of \eqref{eq::stat_theta_i} quantifies the deviation of $\theta_i$ from a stationary point $\hat\theta$. Following similar analysis to \S\ref{pf::optimality_stat_point} and \S\ref{pf::optimality_R}, if $u_{\theta_i} \in \cF_{R, \infty}$ for all $i \in [T]$, we obtain that
\$
(1 - \gamma)\cdot\min_{i\in[T]}\EE\bigl[J(\pi^*) - J(\pi_{\theta_i})\bigr] = \cO(R^{3/2}\cdot m^{-1/4}) + (2R+ 2\eta\cdot Q_{\max})\cdot\min_{i\in[T]}\EE\bigl[\|\rho_i\|_2\bigr].
\$
Thus, by invoking Theorem \ref{thm::PG_converge}, it holds for sufficiently large $m$ and $B$ that the expected total reward $J(\pi_{\theta_i})$ converges to the global optimum $J(\pi^*)$ at a $1/T^{1/4}$-rate. A similar rate of convergence holds for the projection-free version of neural policy gradient. See \S\ref{subsec::optimality_stat_pt} for details.

%Meanwhile,  as we show in \S\ref{pf::optimality_stat_point}, Theorem \ref{thm::optimality_stat_point} holds for any parameter space $\cB$. Hence, in the projection-free setting where $\cB = \RR^{md}$, the suboptimality  of any stationary point $\hat\theta$ is characterized by  $\inf_{\theta\in \RR^{md}}\|u_{\hat\theta}(\cdot, \cdot) - \phi_{\hat\theta}(\cdot, \cdot)^\top \theta\|_{\sigma_{\pi_{\hat\theta}}}$, which measures how well the function $u_{\hat\theta}$ is approximated by linear functions with basis $\phi_{\hat\theta}$. 
%In \S\ref{subsec::optimality_stat_pt}, we show that 
%such a notion of suboptimality is essentially controlled by the approximation error $\inf_{\theta\in \tilde\cB}\|u_{\hat\theta}(\cdot, \cdot ) - f((\cdot, \cdot); \theta)\|_{\sigma_{\pi_{\hat\theta}}}$, where $\tilde\cB = \{\alpha \in \RR^{md}: \|\alpha -  W_{{\rm init}}\|_2 \leq 2\|\hat\theta -  W_{{\rm init}}\|_2\}$. Such an approximation error quantifies the representation power of the overparameterized two-layer neural network. Meanwhile, it is known that two-layer neural networks are universal  function approximators \citep{funahashi1989approximate, barron1994approximation, klusowski2016risk, pan2016expressiveness}. Thus, when the width $m$ is sufficiently large and $u_{\hat\theta}$ satisfies certain regularity conditions, all stationary points are globally optimal. See Theorem \ref{prop::optimality_inf} for a rigorous characterization. 

\subsection{Neural Natural Policy Gradient} \label{theory::NPG}
In the sequel, we study the convergence of neural natural policy gradient. As shown in Algorithm \ref{alg::PG_learning},
neural natural policy gradient uses neural TD for policy evaluation and updates the actor using \eqref{eq::actor_update}, where $\theta_i$  and $\tau_i$ in \eqref{eq::policy_para} are both updated. 
To analyze the critic update, we impose Assumptions  \ref{asu::q_class} and \ref{asu::reg_cond}, which guarantee that Proposition \ref{thm::Q_err_fin} holds. 
Meanwhile, to analyze the actor update, we impose the following regularity conditions.
%that are  
%parallel to Assumptions \ref{asu::PG_var_bound} and \ref{asu::def_kappa} in what follows.

In parallel to Assumption \ref{asu::PG_var_bound}, we lay out the following regularity condition on the variance of the estimators of the policy gradient and the Fisher information matrix.

\begin{assumption}[Variance Upper Bound]
\label{asu::NPG_var_bound}
Let $\cB = \{\alpha \in \RR^{md}: \|\alpha -  W_{{\rm init}}\|_2 \leq R\}$, where $ W_{{\rm init}}$ is the initial parameter. We define 
$$
\delta_i =  (\tau_{i+1}\cdot\theta_{i+1} - \tau_i\cdot\theta_i)/\eta = \argmin_{\alpha \in \cB}\|\hat F(\theta_i) \cdot \alpha - \tau_i \cdot\hat\nabla_\theta J(\pi_{\theta_i})\|_2, \quad \forall i \in [T],
$$ 
where $\hat\nabla_\theta J(\pi_{\theta_i})$ and $\hat F(\theta_i) $ are defined in \eqref{eq::grad_approx} and  \eqref{eq::fisher_approx}, respectively.  With slight abuse of notation, for all $i \in [T]$, we  define the function  $\xi_i: \RR^{md}\to\RR^{md}$ as
\$
\xi_i(\alpha) = \hat F(\theta_i) \cdot \alpha - \tau_i \cdot \hat \nabla_\theta J(\pi_{\theta_i}) - \EE\bigl[\hat F(\theta_i) \cdot  \alpha - \tau_i \cdot \hat \nabla_\theta J(\pi_{\theta_i})\bigr].
\$
We assume that there exists an absolute constant $\sigma_\xi > 0$ such that
\$
\EE\bigl[\|\xi_i(\delta_i)\|^2_2\bigr] \leq \tau^4_i\cdot \sigma^2_\xi/B, \quad \EE\bigl[\|\xi_i(\omega_i)\|^2_2\bigr] \leq \tau^4_i\cdot \sigma^2_\xi/B,\quad \forall i \in [T].
\$
Here the expectations are taken over $\sigma_i$ given $\theta_i$ and $\omega_i$. 
\end{assumption}

%which holds if $\hat F(\theta_i) \cdot \omega_i$, $\hat F(\theta_i) \cdot \delta_i$, and $\hat \nabla_\theta J(\pi_{\theta_i})$ have uniformly bounded second-order moments  for all $i\in[T]$. 
%Such an assumption is easily satisfied for $ \hat\nabla_\theta J(\pi_{\theta_i})$ and $\hat F(\theta_i)$  defined in \eqref{eq::grad_approx} and \eqref{eq::fisher_approx}, respectively, where 
 %$ \{(s_{\ell}, a_{\ell} )\}_{\ell  \in [B]}$ is sampled from the visitation measure  $\sigma_{i}$.
 %a  Markov chain induced by  $\pi_{\theta_i}$ and $\tilde \cP$.
 %, whose stationary distribution is $\sigma_{i}$.  
% See \S\ref{sec::pg_algo} for the details of sampling from the visitation measure  $\sigma_{i}$ by a Markov chain.  

%Assumption \ref{asu::NPG_var_bound} is a mild regularity condition. 
%Following from the boundedness of the feature mapping $\phi_{\theta_i} (s,a) $,  
% such an assumption holds if  the Markov chain that generates $ \{(s_{\ell}, a_{\ell} )\}_{\ell  \in [B]}$  mixes rapidly and the second %moments of  $\{ Q_{\omega_i}(s, a ) \}_{\ell \in [B] } $ are uniformly bounded, where $(s, a)\sim \sigma_i$. 

Next, we lay out a regularity condition on the visitation measures $\sigma_i$,  $\nu_i $ and the stationary distributions $\varsigma_i$, $\varrho_i$, respectively.
\begin{assumption}[Upper Bounded Concentrability Coefficient]
\label{asu::concen_coeff}
%Recall that  we denote by $\varsigma_i$  and $\varrho_i$  the stationary  distributions $\varsigma_{\pi_{\theta_i}}$ and  $ \varrho_{\pi_{\theta_i}}$, respectively, and by $\sigma_i   $ and  $\nu_i $ the visitation measures $ \sigma_{\pi_{\theta_i}}$ and $  \nu_{\pi_{\theta_i}} $, respectively. Meanwhile, 
We denote by $\nu_*$ and $\sigma_*$ the state and state-action visitation measures corresponding to the global optimum $\pi^*$.
For all $i \in [T]$, we define 
 the concentrability coefficients $\varphi_i$, $\psi_i$, $\varphi'_i$, and $\psi_i'$ as
\#\label{eq::def_concen_coeff}
&\varphi_i = \Bigl \{ \EE_{\sigma_i}\bigl[(\ud\sigma_*/\ud\sigma_i)^2\bigr]\Bigr\} ^{1/2} , \quad \psi_i = \Bigl\{ \EE_{\nu_i}\bigl[(\ud\nu_*/\ud\nu_i)^2\bigr]\Bigr \} ^{1/2},\notag\\
&\varphi'_i = \Bigl \{ \EE_{\varsigma_i}\bigl[(\ud\sigma_*/\ud\varsigma_i)^2\bigr]\Bigr\} ^{1/2} , \quad \psi'_i = \Bigl \{ \EE_{\varrho_i}\bigl[(\ud\nu_*/\ud\varrho_i)^2\bigr]\Bigr\}^{1/2},
\#
where $\ud \sigma_*/\ud \sigma_i$, $\ud \nu_*/\ud\nu_i$, $\ud \sigma_*/\ud \varsigma_i$, and $\ud \nu_*/\ud\varrho_i$ are the Radon-Nikodym derivatives. We assume that the concentrability coefficients defined in \eqref{eq::def_concen_coeff} are uniformly upper bounded by  an absolute constant $c_0 > 0$.
\end{assumption}
The regularity condition on upper bounded concentrability coefficients is commonly imposed in the 
reinforcement learning literature and is standard for theoretical analysis  \citep{szepesvari2005finite, munos2008finite, antos2008learning,  lazaric2010analysis, farahmand2010error, farahmand2016regularized, scherrer2013performance, scherrer2015approximate,yang2019theoretical, chen2019information}.

Finally,  we introduce the  following regularity condition on the  initial parameter $W_{\rm init}$ in Algorithm \ref{alg::PG_learning}.

\begin{assumption}[Upper Bounded Moment at Random Initialization]
\label{asu::bound_moment}
%Let $W_{{\rm init}}$ be the initial  parameters of the overparameterized  two-layer neural network defined in \eqref{eq:neural_network}, where we identify   $(s,a) $ with $x$. 
Let $\phi_{0}(s, a) \in \RR^{md} $ be the feature mapping defined  in \eqref{eq::def_phi} with $\theta = W_{\rm init}$. We assume that there exists an absolute constant $M > 0 $ such that  
\$
\EE_{\text{init}} \biggl [ \sup_{(s, a)\in\cS\times\cA} \bigl|f\bigl((s, a); W_{\rm init} \bigr)\bigr| ^2 \biggr ] = \EE_{\text{init}} \biggl [ \sup_{(s, a)\in\cS\times\cA} |\phi_{0}(s, a)^\top W_{\rm init}| ^2   \biggr ] \leq  M^2.
\$
Here  the expectations are taken over the random initialization.
%of parameters $W_{\rm init}$ and $\{b_r\}_{r\in[m]}$, which are specified in \S\ref{subsec::nn}. 
\end{assumption}
%Recall that for all $r \in [m]$, the initial parameter $b_r$ follows the uniform distribution $\text{Unif} (\{ -1, 1\}) $. 
 %Thus, as 
Note that as $m \to \infty$, the two-layer neural network $\phi_{0}(s, a)^\top W_{\rm init}$ converges to a Gaussian process indexed by   $(s, a)$ \citep{lee2017deep}, which lies in a compact subset of $\RR^d$.  It is known that under certain regularity conditions,  the maximum of a  Gaussian process over a compact  index set is a sub-Gaussian random variable    \citep{van2014probability}.  
 Therefore,  the regularity condition that $\max _{(s,a)} | \phi_{0}(s, a)^\top W_{\rm init}| $ has a finite second moment is mild.

%We now present the theoretical results of neural natural policy gradient. In the following theorem,  
We now establish the global optimality and rate of convergence  of neural natural policy gradient. 
 
  \begin{theorem}[Global Optimality and Convergence]
\label{thm::NPG_converge}
We set 
 $\eta = 1/\sqrt{T}$, $\eta_{{\rm TD}} = \min\{(1 - \gamma)/8, 1/\sqrt{T_{{\rm TD}}}\}$, $T_{\rm TD} = \Omega(m)$, $\tau_i = (i-1) \cdot \eta $, and $\cB = \{\alpha: \|\alpha -  W_{{\rm init}}\|_2 \leq R\}$ in Algorithm \ref{alg::PG_learning}, where the  actor update is given in  \eqref{eq::actor_update}. Under the assumptions of Proposition \ref{thm::Q_err_fin} and Assumptions \ref{asu::NPG_var_bound}-\ref{asu::bound_moment},   we have 
\#\label{eq:error_decomposition_npg}
&\min_{i \in [T]} \EE\bigl[J(\pi^*) - J(\pi_{\theta_i})\bigr]\leq \frac{\log |\cA| + 9R^2 + M}{(1 - \gamma)\cdot\sqrt{T}}+ \frac{1}{(1 - \gamma)\cdot T}\cdot \sum^T_{i = 1}\bar\epsilon_i(T).
\#
Here $M$ is  defined in Assumption \ref{asu::bound_moment} and  $\bar\epsilon_i(T)$ satisfies
\#\label{eq::def_epsilon_i_pf}
\bar\epsilon_i(T)&= \underbrace{ \sqrt{8}c_0\cdot R^{1/2} \cdot(\sigma^2_\xi/B)^{1/4}}_{\textstyle{\rm (a)}}  \\
&\qquad+ \underbrace{\cO\bigl( (\tau_{i+1} \cdot T^{1/2}+ 1)\cdot R^{3/2}\cdot m^{-1/4} + R^{5/4}\cdot m^{-1/8}\bigr)}_{\textstyle{\rm (b)}} + \underbrace{\varepsilon_{Q, i}}_{\textstyle{\rm (c)}},\notag
\#
where $c_0$ is defined in Assumption \ref{asu::concen_coeff} and $\varepsilon_{Q, i} = c_0\cdot \cO(R^{3/2}\cdot m^{-1/4} + R^{5/4}\cdot m^{-1/8})$. Here the expectation is taken over all the randomness.
%, including the random initialization of parameters and $\{(s_\ell, a_\ell)\}_{\ell \in [B]}$ that are sampled from $\sigma_i$ in the $i$-th iteration.
\end{theorem}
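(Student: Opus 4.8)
The theorem bounds the optimality gap $\min_{i\in[T]}\EE[J(\pi^*) - J(\pi_{\theta_i})]$ for neural natural policy gradient. The structure of the bound—a leading $1/\sqrt{T}$ term involving $\log|\cA|$, $R^2$, and $M$, plus an averaged error term $T^{-1}\sum_i\bar\epsilon_i(T)$—strongly signals a mirror-descent / KL-divergence telescoping argument, which is standard for natural policy gradient. The $\log|\cA|$ term is the tell-tale sign of a KL-divergence potential initialized at a uniform-type policy, and the natural gradient update should manifest as a multiplicative/exponentiated-gradient step in the energy function.

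**Proof proposal.**

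The plan is to analyze neural natural policy gradient through the lens of KL-divergence regularization, treating each actor update in \eqref{eq::actor_update} as an approximate mirror-ascent step. First I would establish the key algebraic identity that the NPG update on $\tau_i\cdot\theta_i$ corresponds to an exponentiated update on the energy function: since $\pi_{\theta_i}$ is the energy-based (softmax) policy in \eqref{eq::policy_para} with energy $\tau_i\cdot f((s,a);\theta_i) = \tau_i\cdot\phi_{\theta_i}(s,a)^\top\theta_i$, and the update moves $\tau_i\theta_i$ in the direction of the solution $\delta_i$ to the least-squares problem \eqref{eq:solve_NPG_direction}, the change in log-policy is governed by $\overline\phi\cdot\delta_i$, which by the compatibility discussion (Corollary \ref{cor::linear_err}) approximates the advantage function $A^{\pi_{\theta_i}}$. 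This is the crux: $\delta_i$ solves a regression of the policy gradient against the Fisher matrix, and by the shared-initialization compatibility, $\overline\phi_0^\top\delta_i \approx A_{\omega_i} \approx A^{\pi_{\theta_i}}$ up to errors controlled by Proposition \ref{thm::Q_err_fin} and the feature-drift corollaries.

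Next I would set up the one-step descent lemma in terms of the KL divergence potential $\EE_{\nu_*}[\mathrm{KL}(\pi^*(\cdot\,|\,s)\,\|\,\pi_{\theta_i}(\cdot\,|\,s))]$. Using the performance-difference lemma, $(1-\gamma)(J(\pi^*)-J(\pi_{\theta_i})) = \EE_{\sigma_*}[A^{\pi_{\theta_i}}(s,a)]$, and combining with the approximate-exponentiated-gradient identity, I would derive a bound of the form
\$
\EE_{\nu_*}\bigl[\mathrm{KL}(\pi^* \| \pi_{\theta_{i+1}}) - \mathrm{KL}(\pi^* \| \pi_{\theta_i})\bigr] \leq -\eta\cdot(1-\gamma)\bigl(J(\pi^*)-J(\pi_{\theta_i})\bigr) + \eta^2\cdot(\text{curvature}) + \eta\cdot\bar\epsilon_i(T),
\$
where the error $\bar\epsilon_i(T)$ absorbs three sources: the statistical variance of the gradient/Fisher estimators (term (a), controlled by Assumption \ref{asu::NPG_var_bound} and producing the $(\sigma_\xi^2/B)^{1/4}$ dependence via a Cauchy–Schwarz/Jensen step with the concentrability coefficient $c_0$), the neural-network feature approximation error between $\overline\phi_{\theta_i}$, $\overline\phi_{\omega_i}$, and $\overline\phi_0$ (term (b), whose $m^{-1/4}$ and $m^{-1/8}$ rates come from the linearization corollaries and the scaling by $\tau_{i+1}$), and the critic error $\varepsilon_{Q,i}$ from Proposition \ref{thm::Q_err_fin} transported from the stationary distribution $\varsigma_i$ to $\sigma_*$ via the concentrability coefficients $\varphi_i',\psi_i'$ (term (c)). The quadratic curvature term is where $9R^2$ enters, since $\|\delta_i\|_2$ and the energy increments are controlled on $\cB$ of radius $R$, and the moment bound $M$ from Assumption \ref{asu::bound_moment} handles the contribution of the initial energy $f((\cdot,\cdot);W_{\rm init})$ at the first step.

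Finally I would telescope the one-step inequality over $i=1,\dots,T$. With $\tau_i = (i-1)\eta$ and $\eta=1/\sqrt{T}$, the KL terms telescope and the nonnegativity of the terminal KL divergence kills the boundary term, leaving $\frac{1}{T}\sum_i(J(\pi^*)-J(\pi_{\theta_i}))$ on the left (after dividing by $\eta T(1-\gamma)$), which lower-bounds $\min_i$. The initial KL divergence is bounded by $\log|\cA|$ (uniform initialization of the softmax), the curvature sum contributes $9R^2/\sqrt{T}$, and the moment term gives $M/\sqrt{T}$, matching \eqref{eq:error_decomposition_npg}. I expect the \emph{main obstacle} to be the rigorous control of term (b): establishing that the feature mappings $\overline\phi_{\theta_i}$ and $\overline\phi_{\omega_i}$ stay uniformly close to the fixed initialization feature $\overline\phi_0$ throughout \emph{all} $T$ iterations—so that the approximate compatibility $\overline\phi^\top\delta_i \approx A^{\pi_{\theta_i}}$ holds uniformly—while simultaneously tracking how the growing temperature $\tau_{i+1}$ amplifies this drift (hence the $\tau_{i+1}\cdot T^{1/2}$ factor). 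This requires carefully coupling the linearization error corollaries in \S\ref{sec::lin_err} with the boundedness of iterates on $\cB$ and propagating the errors through the KL-telescoping sum without the temperature scaling causing the averaged error to blow up.
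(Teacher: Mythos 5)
Your proposal matches the paper's proof in both structure and substance: the same KL-divergence mirror-descent telescoping via the performance-difference lemma (the paper's Lemmas \ref{lem::NPG_err} and \ref{lem::H_i_err}), the same three-way error decomposition into estimator variance, actor--critic compatibility error, and critic error transported by concentrability coefficients, and the same handling of the $\log|\cA|$, $9R^2$, and $M$ terms. The obstacle you flag—uniformly controlling the feature drift relative to $\overline\phi_0$ with the temperature $\tau_{i+1}$ amplifying it—is exactly what the paper's Lemmas \ref{lem::part_iv} and \ref{lem::part_i} resolve using the linearization corollaries and the boundedness of all iterates in $\cB$.
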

\begin{proof}
See \S\ref{pf::NPG_convergence} for a detailed proof.
\end{proof}

As shown in \eqref{eq:error_decomposition_npg} of Theorem \ref{thm::NPG_converge}, the optimality gap $\min_{i \in [T]} \EE [J(\pi^*) - J(\pi_{\theta_i}) ]$  is upper bounded by two terms. Intuitively,  the  first  $\cO( 1/ \sqrt{T})$ term characterizes the convergence of neural natural policy gradient as  $m, B\to \infty$. 
%Such a term shows that the optimality gap shares a $1/ \sqrt{T}$- rate of convergence to zero. 
Meanwhile, the second term aggregates  the errors incurred by both the actor update and the critic update due to finite $m$ and $B$.
%, and the policy evaluation error.  
Specifically, in \eqref{eq::def_epsilon_i_pf} of Theorem \ref{thm::NPG_converge}, (a) corresponds to the estimation error of  $\hat F(\theta)$ and $\hat \nabla_\theta J(\pi_\theta)$ due to the finite batch size $B$, which vanishes as $B\to\infty$. Also, (b) corresponds to the  incompatibility between the parameterizations of the actor and critic. As introduced in \S\ref{subsec::nn}, we use shared architecture and random initialization to ensure approximately compatible function approximations. In particular, (b) vanishes as $m \to \infty$. 
Meanwhile, (c) corresponds to the policy evaluation error, i.e.,  the error of approximating $Q^{\pi_{\theta_i}}$ using $Q_{\omega_i}$. As shown in Proposition \ref{thm::Q_err_fin}, such an error is sufficiently small when both $m$ and $T_{\rm TD}$ are sufficiently large. 
To conclude, when $m$, $B$, and $T_{\rm TD}$ are sufficiently large, the expected  total reward  of (a subsequence of) $\{\pi_{\theta_i}\}_{i\in[T+1]}$ obtained from the neural natural policy gradient converges to the global optimum $J(\pi^*)$ at a $1/\sqrt{T}$-rate. Formally, we have the following corollary.

\begin{corollary}[Global Optimality and Convergence]
\label{cor::NPG_converge}
Under the same assumptions of Theorem \ref{thm::NPG_converge},  it holds for $m = \Omega(R^{10}\cdot T^{6})$ and $B = \Omega(R^2\cdot T^2 \cdot \sigma^2_\xi)$ that
\$
\min_{i \in [T]} \EE\bigl[J(\pi^*) - J(\pi_{\theta_i})\bigr] =\cO\biggl( \frac{ \log |\cA|}{(1 - \gamma)\cdot \sqrt{T}}\biggr).
\$
Here the expectation is taken over all the randomness.
%, including the random initialization of parameters and $\{(s_\ell, a_\ell)\}_{\ell \in [B]}$ that are sampled from $\sigma_i$ in the $i$-th update.
\end{corollary}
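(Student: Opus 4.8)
The plan is to derive Corollary \ref{cor::NPG_converge} as a direct quantitative specialization of Theorem \ref{thm::NPG_converge}, choosing $m$ and $B$ large enough to drive the second (error) term in \eqref{eq:error_decomposition_npg} below the order of the first term. First I would write down the target: we want the aggregate error $\frac{1}{(1-\gamma)\cdot T}\sum_{i=1}^T \bar\epsilon_i(T)$ to be $\cO\bigl(\log|\cA| / ((1-\gamma)\sqrt{T})\bigr)$, so that it is absorbed into the leading $1/\sqrt{T}$ term, leaving the stated rate. The key observation is that each piece of $\bar\epsilon_i(T)$ in \eqref{eq::def_epsilon_i_pf} is a power of $m^{-1}$ or $B^{-1}$ times a polynomial in $R$, $T$, and $\tau_{i+1}$, so I only need to check, term by term, that the prescribed scalings $m = \Omega(R^{10}T^6)$ and $B = \Omega(R^2 T^2 \sigma_\xi^2)$ make every contribution $\cO(1/\sqrt{T})$ after dividing by $T$.

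Next I would carry out the term-by-term bookkeeping. For term (a), which equals $\sqrt{8}\,c_0 R^{1/2}(\sigma_\xi^2/B)^{1/4}$, substituting $B = \Omega(R^2 T^2 \sigma_\xi^2)$ gives $(\sigma_\xi^2/B)^{1/4} = \cO(R^{-1/2}T^{-1/2})$, so term (a) is $\cO(T^{-1/2})$; since it is already $i$-independent, its average over $i \in [T]$ stays $\cO(T^{-1/2})$, and dividing by the remaining factor of $T$ in \eqref{eq:error_decomposition_npg} would make it even smaller, but the crucial point is only that the sum $\frac{1}{T}\sum_i$ of an $\cO(T^{-1/2})$ quantity is $\cO(T^{-1/2})$. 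For the incompatibility term (b), the dominant factor is $\tau_{i+1}\cdot T^{1/2}\cdot R^{3/2}\cdot m^{-1/4}$; using $\tau_{i+1} = i\cdot\eta \le T\cdot\eta = \sqrt{T}$ (since $\eta = 1/\sqrt{T}$ and $\tau_i = (i-1)\eta$) bounds this by $T\cdot R^{3/2}\cdot m^{-1/4}$, and substituting $m = \Omega(R^{10}T^6)$ yields $m^{-1/4} = \cO(R^{-5/2}T^{-3/2})$, so this piece is $\cO(T\cdot R^{3/2}\cdot R^{-5/2}T^{-3/2}) = \cO(R^{-1}T^{-1/2})$. The remaining subterms of (b) and the policy-evaluation term (c), namely $\varepsilon_{Q,i} = c_0\cdot\cO(R^{3/2}m^{-1/4} + R^{5/4}m^{-1/8})$, are handled identically: each power of $m^{-1}$ against $m = \Omega(R^{10}T^6)$ produces a factor $T^{-3/2}$ or $T^{-3/4}$ respectively, dominated by $\cO(T^{-1/2})$, and the $R$ powers are absorbed by the generous $R^{10}$ budget.

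Having shown $\bar\epsilon_i(T) = \cO(T^{-1/2})$ uniformly in $i$ (absorbing $R$-dependence into the constants via the chosen scalings), I would conclude that $\frac{1}{(1-\gamma)T}\sum_{i=1}^T \bar\epsilon_i(T) = \cO\bigl(T^{-1/2}/(1-\gamma)\bigr)$, which is of the same order as or smaller than the leading term $\frac{\log|\cA| + 9R^2 + M}{(1-\gamma)\sqrt{T}}$. Combining the two terms gives the claimed bound $\cO\bigl(\log|\cA|/((1-\gamma)\sqrt{T})\bigr)$; here the $9R^2 + M$ contribution from the first term is treated as an absolute constant (since $R$ and $M$ are fixed problem parameters, not growing with $T$), so $\log|\cA|$ is retained as the only parameter displayed in the final rate. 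The main obstacle, and really the only nontrivial point, is the tightness of the $\tau_{i+1} \le \sqrt{T}$ bound in term (b): because $\tau_{i+1}$ grows linearly in $i$ and is multiplied by an extra $T^{1/2}$, the product scales like $T$ before any $m$-decay, so the width requirement $m = \Omega(R^{10}T^6)$ must be chosen precisely to kill this $T\cdot m^{-1/4}$ factor; a weaker choice of $m$ would leave a residual term that dominates $1/\sqrt{T}$ and break the rate. Everything else is routine substitution and comparison of polynomial orders, so I would present the argument compactly as a verification that each of (a), (b), (c) is $\cO(T^{-1/2})$ under the stated choices of $m$ and $B$.
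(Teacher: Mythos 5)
Your proposal is correct and follows essentially the same route as the paper's proof: plug the prescribed scalings of $m$ and $B$ into each term of $\bar\epsilon_i(T)$ in \eqref{eq::def_epsilon_i_pf}, use $\tau_{i+1} = \cO(\sqrt{T})$ to control the dominant incompatibility term, conclude $\bar\epsilon_i(T) = \cO(T^{-1/2})$ uniformly in $i$, and absorb the $9R^2 + M$ constant into the leading $\log|\cA|/((1-\gamma)\sqrt{T})$ term. The term-by-term arithmetic matches the paper's computation exactly.
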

\begin{proof}
See \S\ref{pf::cor_NPG_converge} for a detailed proof.
\end{proof}

Corollary \ref{cor::NPG_converge} establishes both the global optimality and rate of convergence of neural natural policy gradient. 
Combining Theorem  \ref{thm::PG_converge} and Corollary \ref{cor::NPG_converge}, we conclude  that when we use overparameterized two-layer neural networks,  both neural  policy gradient and neural natural policy gradient converge at $1/ \sqrt{T}$-rates. 
In comparison, when $m$ and $B$ are sufficiently large, neural policy gradient is only shown to converge to a stationary point under the additional regularity condition that  $\nabla _{\theta} J(\pi_{\theta})$ is Lipschitz continuous (Assumption \ref{asu::grad_lip}).  Moreover, by Theorem \ref{thm::optimality_stat_point}, the global optimality of such a stationary point hinges on the representation power of the overparameterized two-layer neural network.  In contrast, neural natural policy gradient is shown to attain the global optimum when both $m$ and $B$ are sufficiently large without additional regularity conditions such as Assumption \ref{asu::grad_lip}, which reveals the benefit of incorporating  more sophisticated optimization techniques to reinforcement learning. 
A similar phenomenon is observed in the LQR setting \citep{fazel2018global, malik2018derivative,tu2018gap}, where natural policy gradient enjoys  an improved rate of convergence. %check

In recent work, \cite{liu2019neural} study  the global optimality and rates of convergence of neural proximal policy optimization (PPO) and trust region policy optimization (TRPO) \citep{schulman2015trust, schulman2017proximal}. 
Although \cite{liu2019neural} establish a similar $1/ \sqrt{T}$-rate of convergence to the global optimum,~neural PPO is different from neural natural policy gradient, as it requires solving a subproblem of policy improvement in the functional space by fitting an overparameterized two-layer neural network using multiple stochastic gradient steps in the parameter space. In contrast, neural natural policy gradient only requires a single stochastic natural gradient step in the parameter space, which makes the analysis even more challenging. 
%In specific, at each policy update step, neural PPO first solves an  infinite-dimensional optimization problem to obtain an improved  policy, which operates on  the space of all policies directly via  mirror descent. Then, neural PPO projects the policy to the family of neural network policies via least-squares regression.
%In contrast, neural natural policy gradient directly improves the current policy via updating the current policy parameter in the direction of  natural policy gradient  approximately. In summary, neural PPO \citep{liu2019neural} performs policy improvement in the functional space whereas neural natural policy gradient performs update in the parameter space. 
%Meanwhile, note that a PPO step aligns with the natural policy gradient under the second-order approximation of the regularization term and the first-order approximation of the objective \citep{schulman2015trust, schulman2017proximal}. As a consequence,
%neural natural policy gradient  attains the same rate of convergence to global optimum as neural PPO algorithm \citep{liu2019neural}, which is validated in Corollary \ref{cor::NPG_converge}.   

% !TEX root =main.tex
\section{Proof of Main Results}
In this section, we present the  proof of    Theorems \ref{thm::PG_converge}, \ref{thm::optimality_stat_point}, and \ref{thm::NPG_converge}. Our proof utilizes the following lemma, which establishes  the one-point convexity of $J(\pi)$ at the global optimum $\pi^*$. Such a lemma is adapted from \cite{kakade2002approximately}.
\begin{lemma}[Performance Difference \citep{kakade2002approximately}]
\label{lem::performance_difference}
It holds for all $\pi$ that
\$%#\label{eq::policy_grad_thm_OMETR}
J(\pi^*) - J(\pi)=(1 - \gamma)^{-1}\cdot\EE_{\nu_*}\bigl[ \langle Q^\pi(s, \cdot),  \pi^*(\cdot\,|\,s) - \pi(\cdot\,|\,s)\rangle\bigr],
\$%#
where $\nu_*$ is the state visitation measure corresponding to $\pi^*$.
\end{lemma}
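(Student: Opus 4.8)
The plan is to prove the identity by the standard telescoping argument along a trajectory generated by the comparator policy $\pi^*$. First I would fix a trajectory $\{(S_t, A_t)\}_{t \geq 0}$ with $S_0 \sim \zeta(\cdot)$, $A_t \sim \pi^*(\cdot \given S_t)$, and $S_{t+1} \sim \cP(\cdot \given S_t, A_t)$, and record the telescoping identity $V^\pi(S_0) = \sum_{t=0}^\infty \bigl(\gamma^t V^\pi(S_t) - \gamma^{t+1} V^\pi(S_{t+1})\bigr)$, which is valid because $|V^\pi| \leq Q_{\max}$ forces $\gamma^t V^\pi(S_t) \to 0$. Taking expectations and using $S_0 \sim \zeta(\cdot)$ gives $\EE_{\pi^*}[V^\pi(S_0)] = \EE_\zeta[V^\pi(s)] = J(\pi)$ by \eqref{eq::total_reward}.

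Next I would write $J(\pi^*) = (1-\gamma) \cdot \EE_{\pi^*}[\sum_{t=0}^\infty \gamma^t r(S_t, A_t)]$ from \eqref{eq::total_reward}, subtract the telescoped expression for $J(\pi)$, and collect terms by powers of $\gamma^t$, obtaining
\$
J(\pi^*) - J(\pi) = \EE_{\pi^*}\biggl[\sum_{t=0}^\infty \gamma^t \bigl((1-\gamma)\cdot r(S_t, A_t) + \gamma \cdot V^\pi(S_{t+1}) - V^\pi(S_t)\bigr)\biggr].
\$
The key observation is that the Bellman equation for $Q^\pi$, namely $Q^\pi(s,a) = (1-\gamma)\cdot r(s,a) + \gamma \cdot \EE_{s' \sim \cP(\cdot \given s,a)}[V^\pi(s')]$, lets me replace the conditional expectation of the first two summands given $(S_t, A_t)$ by $Q^\pi(S_t, A_t)$ via the tower property. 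This collapses the integrand to $Q^\pi(S_t, A_t) - V^\pi(S_t)$, yielding $J(\pi^*) - J(\pi) = \EE_{\pi^*}[\sum_{t=0}^\infty \gamma^t (Q^\pi(S_t, A_t) - V^\pi(S_t))]$.

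Finally I would take the conditional expectation over $A_t \sim \pi^*(\cdot \given S_t)$, using $\EE_{A_t \sim \pi^*}[Q^\pi(S_t, A_t) \given S_t] = \langle Q^\pi(S_t, \cdot), \pi^*(\cdot \given S_t)\rangle$ together with the identity $V^\pi(s) = \langle Q^\pi(s, \cdot), \pi(\cdot \given s)\rangle$ from \S\ref{sec::background}, which rewrites the integrand as $\langle Q^\pi(S_t, \cdot), \pi^*(\cdot \given S_t) - \pi(\cdot \given S_t)\rangle$. Resumming through the definition of the state visitation measure $\nu_*(s) = (1-\gamma)\cdot \sum_{t=0}^\infty \gamma^t \cdot \PP(S_t = s)$ in \eqref{eq:visitation} converts $\sum_{t=0}^\infty \gamma^t \cdot \EE[\,\cdot\,]$ into $(1-\gamma)^{-1}\cdot \EE_{\nu_*}[\,\cdot\,]$, which is exactly the claim.

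I expect the only real care to lie in the bookkeeping of the $(1-\gamma)$ normalization factors, since the value functions here are scaled by $(1-\gamma)$ and this convention differs from the unnormalized one common in parts of the literature, and in justifying the interchange of expectation with the infinite summation and the validity of the telescoping tail. Both follow from the uniform boundedness $|Q^\pi|, |V^\pi| \leq Q_{\max}$ and dominated convergence, so I do not anticipate any substantive obstacle beyond this routine verification.
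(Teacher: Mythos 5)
Your proposal is correct, but it takes a more self-contained route than the paper. The paper's proof of Lemma \ref{lem::performance_difference} is a two-step reduction: it invokes Lemma 6.1 of \cite{kakade2002approximately} (restated as Lemma \ref{lem::kakade_perf_diff}) as a black box to get $J(\pi^*) - J(\pi) = (1-\gamma)^{-1}\cdot \EE_{\sigma_*}[A^\pi(s,a)]$, and then marginalizes over $a \sim \pi^*(\cdot\,|\,s)$ using $\EE_{\pi^*}[A^\pi(s,a)] = \langle Q^\pi(s,\cdot), \pi^*(\cdot\,|\,s) - \pi(\cdot\,|\,s)\rangle$. You instead prove the cited identity from scratch via the standard telescoping of $\gamma^t V^\pi(S_t)$ along a $\pi^*$-trajectory, apply the Bellman equation to collapse the summand to $A^\pi(S_t, A_t)$, and then perform the same final marginalization; your bookkeeping of the $(1-\gamma)$ normalization (in the value functions, the Bellman equation $Q^\pi(s,a) = (1-\gamma)\cdot \EE[r(s,a)] + \gamma\cdot \EE_{s'}[V^\pi(s')]$, and the visitation measure in \eqref{eq:visitation}) is consistent throughout, and the interchange of limit and expectation is indeed justified by $|V^\pi| \leq Q_{\max}$ and dominated convergence. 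What your approach buys is a proof that does not lean on an external reference and makes the role of the normalization conventions explicit; what the paper's approach buys is brevity, at the cost of deferring the actual argument to the citation. The two are mathematically the same argument at bottom, since your telescoping derivation is precisely the standard proof of the Kakade--Langford lemma.
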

\begin{proof}
%See \S\ref{pf::performance_difference} for a detailed proof.
Following from Lemma \ref{lem::kakade_perf_diff}, which is Lemma 6.1 in  \cite{kakade2002approximately}, it holds for all $\pi$ that
\#\label{pf::perf_diff_eq1}
J(\pi^*) - J(\pi) = (1 - \gamma)^{-1}\cdot\EE_{\sigma_*}\bigl[A^\pi(s, a)\bigr],
\#
where $\sigma_*$ is the state-action visitation measure corresponding to $\pi^*$, and $A^\pi$ is the advantage function associated with $\pi$. By definition, we have $\sigma_* (\cdot,\cdot) =  \pi^*(\cdot\given \cdot)\cdot\nu_* (\cdot) $. Meanwhile, it holds for all $s\in\cS$ that
\#\label{pf::perf_diff_eq2}
\EE_{\pi^*}\bigl[A^\pi(s, a)\bigr] &=    \EE_{\pi^*}\bigl[Q^\pi(s, a)\bigr] - V^\pi(s)= \langle Q^\pi(s, \cdot),\pi^*(\cdot\,|\,s) \rangle - \langle Q^\pi(s, \cdot), \pi(\cdot\,|\,s) \rangle\notag\\
&= \langle Q^\pi(s, \cdot), \pi^*(\cdot\,|\,s) - \pi(\cdot\,|\,s)\rangle.
\#
Combining \eqref{pf::perf_diff_eq1} and \eqref{pf::perf_diff_eq2}, we conclude that
\$
J(\pi^*) - J(\pi) = (1 - \gamma)^{-1}\cdot\EE_{\nu_*}\bigl[\langle Q^\pi(s, \cdot),\pi^*(\cdot\,|\,s) - \pi(\cdot\,|\,s)\rangle\bigr],
\$
which concludes the proof of Lemma \ref{lem::performance_difference}. 
\end{proof}

\subsection{Proof of Theorem \ref{thm::PG_converge}} 
\label{sec::pf_PG_converge}
\begin{proof}
We first lower bound the difference between the expected total rewards of $\pi_{\theta_{i+1}}$ and $\pi_{\theta_i}$. 
By Assumption \ref{asu::grad_lip}, $\nabla _{\theta} J(\pi_{\theta})$ is  $L$-Lipschitz continuous. Thus, it holds that
\#\label{eq::pg_pf_eq0}
J(\pi_{\theta_{i+1}}) - J(\pi_{\theta_i}) \geq \eta\cdot  \nabla_\theta J(\pi_{\theta_i})^\top \delta_i - L/2\cdot \|\theta_{i+1} - \theta_i\|^2_2,
\#
where $\delta_i = (\theta_{i+1} - \theta_i)/\eta$. Recall that $\xi_i = \hat\nabla_\theta J (\pi_{\theta_i}) - \EE[\hat \nabla_\theta J(\pi_{\theta_i})]$, where the expectation is taken over $\sigma_i$ given $\theta_i$ and $\omega_i$. It holds that
%can write $\delta_i = (\theta_{i+1} - \theta_i)/\eta$ as  
\#\label{eq::pg_pf_decomp}
\nabla_\theta J(\pi_{\theta_i})^\top \delta_i  =  \Bigl (  \nabla_\theta J(\pi_{\theta_i}) -  \EE\bigl[\hat \nabla_\theta J(\pi_{\theta_i})\bigr] \Bigr )^\top \delta_i   -\xi_i^\top \delta_i + \hat\nabla_\theta J (\pi_{\theta_i})^\top \delta_i.
\#
On the right-hand side of \eqref{eq::pg_pf_decomp}, the first term represents the error of  estimating   $ \nabla_\theta J(\pi_{\theta_i})$ using $ \EE[\hat \nabla_\theta J(\pi_{\theta_i}) ] = \EE_{\sigma_i} [ \nabla_{\theta} \log \pi_{\theta_i}(a\given s) \cdot Q_{\omega_i}(s,a) ]$, the second term is related to the variance of the estimator $\hat \nabla_\theta J(\pi_{\theta_i})$ of the policy gradient $\nabla_\theta J(\pi_{\theta_i})$, and the last term relates the increment $\delta_i$ of the actor update to $\hat\nabla_\theta J (\pi_{\theta_i})$.  
In the following lemma, we establish a lower bound of the first term.
\begin{lemma}
\label{lem::approx_PG_grad}% with $\cB = \{\alpha\in\RR^{md}:\|\alpha - W_{\rm init}\|_2 \leq R\}$
It holds that
\$
\Bigl|\Bigl(\nabla_\theta J(\pi_{\theta_i}) - \EE\bigl[\hat \nabla_\theta J(\pi_{\theta_i})\bigr]\Bigr)^\top \delta_i\Bigr| \leq 4 \kappa\cdot R/\eta \cdot \|Q^{\pi_{\theta_i}} - Q_{\omega_i}\|_{\varsigma_i},
\$
where $\hat\nabla J(\pi_{\theta_i})$ is defined in \eqref{eq::grad_approx}, $\varsigma_i$ is the stationary state-action distribution, and $\kappa$ is the absolute constant  defined in Assumption \ref{asu::def_kappa}. Here the expectation is taken over $\sigma_i$ given $\theta_i$ and $\omega_i$. 
\end{lemma}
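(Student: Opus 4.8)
The plan is to rewrite the error term $\nabla_\theta J(\pi_{\theta_i}) - \EE[\hat\nabla_\theta J(\pi_{\theta_i})]$ as a single expectation against the centered feature mapping, and then peel off the policy-evaluation error $\|Q^{\pi_{\theta_i}} - Q_{\omega_i}\|_{\varsigma_i}$ via a change of measure followed by Cauchy--Schwarz. First I would invoke the closed form of the policy gradient in Proposition \ref{prop::para_grad_fisher} together with the identity $\nabla_\theta\log\pi_{\theta_i}(a\given s)=\overline\phi_{\theta_i}(s,a)$ from \eqref{eq::adv_logpi_para}, which with $\tau_i = 1$ gives $\nabla_\theta J(\pi_{\theta_i}) = \EE_{\sigma_i}[Q^{\pi_{\theta_i}}(s,a)\cdot\overline\phi_{\theta_i}(s,a)]$. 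Meanwhile, taking the expectation over $\sigma_i$ in the definition of $\hat\nabla_\theta J(\pi_{\theta_i})$ in \eqref{eq::grad_approx} yields $\EE[\hat\nabla_\theta J(\pi_{\theta_i})] = \EE_{\sigma_i}[Q_{\omega_i}(s,a)\cdot\overline\phi_{\theta_i}(s,a)]$. Subtracting, the two value functions combine into a single error, so that
\[
\nabla_\theta J(\pi_{\theta_i}) - \EE\bigl[\hat\nabla_\theta J(\pi_{\theta_i})\bigr] = \EE_{\sigma_i}\bigl[\bigl(Q^{\pi_{\theta_i}}(s,a) - Q_{\omega_i}(s,a)\bigr)\cdot\overline\phi_{\theta_i}(s,a)\bigr].
\]

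Next I would take the inner product with $\delta_i$, moving the integral past the deterministic vector to obtain the scalar integrand $(Q^{\pi_{\theta_i}} - Q_{\omega_i})\cdot(\overline\phi_{\theta_i}^\top\delta_i)$ under $\sigma_i$. To pass from the $\sigma_i$-integral to the $\|\cdot\|_{\varsigma_i}$-norm appearing in the claim, I would change measure and write the expectation as $\EE_{\varsigma_i}[(\ud\sigma_i/\ud\varsigma_i)\cdot(Q^{\pi_{\theta_i}} - Q_{\omega_i})\cdot(\overline\phi_{\theta_i}^\top\delta_i)]$. Applying Cauchy--Schwarz under $\varsigma_i$ while isolating the factor $Q^{\pi_{\theta_i}} - Q_{\omega_i}$ then gives the bound
\[
\Bigl|\Bigl(\nabla_\theta J(\pi_{\theta_i}) - \EE[\hat\nabla_\theta J(\pi_{\theta_i})]\Bigr)^\top\delta_i\Bigr| \leq \|Q^{\pi_{\theta_i}} - Q_{\omega_i}\|_{\varsigma_i}\cdot\Bigl\|\tfrac{\ud\sigma_i}{\ud\varsigma_i}\cdot(\overline\phi_{\theta_i}^\top\delta_i)\Bigr\|_{\varsigma_i}.
\]

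It then remains to bound the second factor by $4\kappa R/\eta$, which I would do with two deterministic estimates. Since $\|(s,a)\|_2 = 1$ and $b_r\in\{-1,1\}$, the feature satisfies $\|\phi_{\theta_i}(s,a)\|_2\leq 1$, whence $\|\overline\phi_{\theta_i}(s,a)\|_2\leq 2$ after centering (by Jensen on the subtracted conditional expectation); and since $\theta_{i+1},\theta_i\in\cB$, the diameter bound $\|\theta_{i+1}-\theta_i\|_2\leq 2R$ gives $\|\delta_i\|_2\leq 2R/\eta$. Together these yield the uniform estimate $|\overline\phi_{\theta_i}(s,a)^\top\delta_i|\leq 4R/\eta$ over all $(s,a)$, so this factor pulls out of the norm and leaves $(4R/\eta)\cdot\{\EE_{\varsigma_i}[(\ud\sigma_i/\ud\varsigma_i)^2]\}^{1/2}\leq 4\kappa R/\eta$ by Assumption \ref{asu::def_kappa}. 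Multiplying the two factors gives the claim. The one delicate point is the bookkeeping in the change of measure: one must keep $Q^{\pi_{\theta_i}} - Q_{\omega_i}$ separated from the Radon--Nikodym derivative, so that the uniform control of $\overline\phi_{\theta_i}^\top\delta_i$ and the $L_2(\varsigma_i)$ control of $\ud\sigma_i/\ud\varsigma_i$ multiply out exactly to $\kappa$, rather than naively bounding early and losing the $\varsigma_i$-norm structure demanded by the statement.
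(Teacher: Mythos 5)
Your proposal is correct and follows essentially the same route as the paper: express the error as $\EE_{\sigma_i}[(Q^{\pi_{\theta_i}}-Q_{\omega_i})\cdot\overline\phi_{\theta_i}]^\top\delta_i$ via Proposition \ref{prop::para_grad_fisher}, use $\|\overline\phi_{\theta_i}\|_2\le 2$ and $\|\delta_i\|_2\le 2R/\eta$, and convert to the $\varsigma_i$-norm by Cauchy--Schwarz with the Radon--Nikodym derivative and Assumption \ref{asu::def_kappa}. The only difference is the order of operations (the paper bounds the norms first and then changes measure on $\EE_{\sigma_i}[|Q^{\pi_{\theta_i}}-Q_{\omega_i}|]$, while you change measure first and apply a single Cauchy--Schwarz), which yields the identical constant $4\kappa R/\eta$.
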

\begin{proof}
See \S\ref{pf::approx_PG_grad} for a detailed proof.
\end{proof}
For the second term on the right-hand side of \eqref{eq::pg_pf_decomp}, we have
\#\label{eq::pg_pf_eq9}
-\xi_i^\top \delta_i \geq -\|\xi_i\|^2_2/2 - \|\delta_i\|^2_2/2.
\#
Now it remains to lower bound the third term on the right-hand side of \eqref{eq::pg_pf_decomp}. For notational simplicity, we define
\$
e_i = \theta_{i+1} - \bigl(\theta_i + \eta\cdot \hat\nabla J(\pi_{\theta_{i}})\bigr) = \Pi_{\cB}\bigl(\theta_i + \eta\cdot \hat\nabla J(\pi_{\theta_i})\bigr) - \bigl(\theta_i + \eta\cdot \hat\nabla J(\pi_{\theta_i})\bigr),
\$
where  $\Pi_\cB$ is  the projection operator onto $\cB$. It then holds that
\#\label{eq::pg_pf_eq3}
e_i^\top\Bigl [   \Pi_{\cB}\bigl(\theta_i + \eta\cdot \hat\nabla J(\pi_{\theta_i})\bigr)- x\Bigr]  = e_i^\top (\theta_{i + 1} - x) \leq 0, \quad \forall x \in \cB.
\#
Specifically, setting $x = \theta_i$ in \eqref{eq::pg_pf_eq3}, we obtain that $e_i^\top \delta_i \leq 0$, which implies
\#\label{eq::pg_pf_eq2}
\hat\nabla J_\theta (\pi_{\theta_i})^\top \delta_i = (\delta_i - e_i/\eta)^\top \delta_i \geq \|\delta_i\|^2_2.
\#
By plugging Lemma \ref{lem::approx_PG_grad}, \eqref{eq::pg_pf_eq9},  and \eqref{eq::pg_pf_eq2} into \eqref{eq::pg_pf_decomp}, we obtain that
\#\label{eq::pg_pf_eq4}
\nabla_\theta J(\pi_{\theta_i})^\top \delta_i 
\geq  -4\kappa\cdot R/\eta \cdot \|Q^{\pi_{\theta_i}}  - Q_{\omega_i} \|_{\varsigma_i}  + \|\delta_i\|^2_2/2 - \|\xi_i\|^2_2/2.
\#
Thus, by plugging \eqref{eq::pg_pf_eq4} and the definition that $\delta_i = (\theta_{i + 1} - \theta_i)/\eta$ into \eqref{eq::pg_pf_eq0}, we obtain for all $i \in [T]$ that
\#\label{eq::pg_pf_eq5}
&(1 - L\cdot\eta)\cdot \EE\bigl[\|\delta_i\|_2^2/2\bigr] \notag \\
&\quad \leq \eta^{-1}\cdot\EE\bigl[J(\pi_{\theta_{i+1}}) - J(\pi_{\theta_i})\bigr]  + 4 \kappa \cdot R/\eta \cdot \EE\bigl[\|Q^{\pi_{\theta_i}}- Q_{\omega_i}\|_{\varsigma_i}\bigr]  + \EE\bigl[\|\xi_i\|^2_2/2\bigr],
\#
where the expectations are taking over all the randomness.

Now we turn to characterize $\|\rho_i - \delta_i\|_2$. By the definition of $\rho_i$ in \eqref{eq::def_grad_mapping}, we have
\#\label{eq::pg_pf_eq8}
\|\rho_i - \delta_i\|_2 &= \eta^{-1}\cdot \Bigl\|\Pi_{\cB}\bigl(\theta_i + \eta\cdot\nabla_{\theta} J(\pi_{\theta_i}) \bigr) - \theta_i - \Bigl(\Pi_{\cB}\bigl(\theta_i + \eta\cdot\hat\nabla_{\theta} J(\pi_{\theta_i}) \bigr)  - \theta_i\Bigr)\Bigr\|_2\notag\\
&= \eta^{-1}\cdot \bigl\|\Pi_{\cB}\bigl(\theta_i + \eta\cdot\nabla_{\theta} J(\pi_{\theta_i})\bigr) - \Pi_{\cB}\bigl(\theta_i + \eta\cdot\hat\nabla_{\theta} J(\pi_{\theta_i}) \bigr)\bigr\|_2\notag\\
&\leq \|\nabla_{\theta} J(\pi_{\theta_i}) -\hat\nabla_{\theta} J(\pi_{\theta_i}) \|_2.
\#
The following lemma further upper bounds the right-hand side of \eqref{eq::pg_pf_eq8}.
\begin{lemma}
\label{lem::popullation_grad}
It holds for all $i \in [T]$ that
\$%#\label{eq::pg_pf_eq9}
\EE\bigl[\|\nabla_\theta J(\pi_{\theta_i}) - \hat\nabla_\theta J(\pi_{\theta_i})\|^2_2\bigr] \leq 2\EE\bigl[\|\xi_i\|_2^2\bigr] + 8\kappa^2\cdot\EE\bigl[\|Q^{\pi_{\theta_i}} - Q_{\omega_i}\|^2_{\varsigma_i}\bigr].
\$
Here the expectations are taken over all the randomness.
\end{lemma}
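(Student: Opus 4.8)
The plan is to split $\nabla_\theta J(\pi_{\theta_i}) - \hat\nabla_\theta J(\pi_{\theta_i})$ into a deterministic bias term and the stochastic fluctuation $\xi_i$. Since $\xi_i = \hat\nabla_\theta J(\pi_{\theta_i}) - \EE[\hat\nabla_\theta J(\pi_{\theta_i})]$ with the expectation taken over $\sigma_i$ given $\theta_i$ and $\omega_i$, we have
\[
\nabla_\theta J(\pi_{\theta_i}) - \hat\nabla_\theta J(\pi_{\theta_i}) = \Bigl(\nabla_\theta J(\pi_{\theta_i}) - \EE\bigl[\hat\nabla_\theta J(\pi_{\theta_i})\bigr]\Bigr) - \xi_i.
\]
Applying the elementary inequality $\|a - b\|_2^2 \leq 2\|a\|_2^2 + 2\|b\|_2^2$ and taking expectations produces the term $2\EE[\|\xi_i\|_2^2]$ together with twice the expected squared norm of the bias term. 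It then suffices to show that $\|\nabla_\theta J(\pi_{\theta_i}) - \EE[\hat\nabla_\theta J(\pi_{\theta_i})]\|_2 \leq 2\kappa\cdot\|Q^{\pi_{\theta_i}} - Q_{\omega_i}\|_{\varsigma_i}$.

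For the bias term, I would invoke the closed form of the policy gradient from Proposition \ref{prop::para_grad_fisher} together with the definition of $\hat\nabla_\theta J(\pi_{\theta_i})$ in \eqref{eq::grad_approx}. Recalling $\tau_i = 1$ and $\nabla_\theta \log \pi_{\theta_i}(a\,|\,s) = \overline\phi_{\theta_i}(s, a)$ from \eqref{eq::adv_logpi_para}, both $\nabla_\theta J(\pi_{\theta_i})$ and $\EE[\hat\nabla_\theta J(\pi_{\theta_i})]$ are expectations over $\sigma_i$ of $Q\cdot\overline\phi_{\theta_i}$, with $Q = Q^{\pi_{\theta_i}}$ in the former and $Q = Q_{\omega_i}$ in the latter. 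Hence
\[
\nabla_\theta J(\pi_{\theta_i}) - \EE\bigl[\hat\nabla_\theta J(\pi_{\theta_i})\bigr] = \EE_{\sigma_i}\Bigl[\bigl(Q^{\pi_{\theta_i}}(s, a) - Q_{\omega_i}(s, a)\bigr)\cdot\overline\phi_{\theta_i}(s, a)\Bigr].
\]
By Jensen's inequality its $\ell_2$-norm is at most $\EE_{\sigma_i}[|Q^{\pi_{\theta_i}} - Q_{\omega_i}|\cdot\|\overline\phi_{\theta_i}(s, a)\|_2]$. The one routine estimate needed here is the feature-norm bound: since $\|(s, a)\|_2 = 1$ and $b_r \in \{-1, 1\}$, each block of $\phi_{\theta_i}$ in \eqref{eq::def_phi} satisfies $\|[\phi_{\theta_i}]_r(s, a)\|_2 \leq 1/\sqrt{m}$, so $\|\phi_{\theta_i}(s, a)\|_2 \leq 1$, and the triangle inequality with Jensen applied to the centering in \eqref{eq::def_phi_c} gives $\|\overline\phi_{\theta_i}(s, a)\|_2 \leq 2$.

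Finally, I would change measure from $\sigma_i$ to $\varsigma_i$ using the Radon-Nikodym derivative and Cauchy-Schwarz, invoking Assumption \ref{asu::def_kappa}:
\[
\EE_{\sigma_i}\bigl[|Q^{\pi_{\theta_i}} - Q_{\omega_i}|\bigr] = \EE_{\varsigma_i}\Bigl[\frac{\ud\sigma_i}{\ud\varsigma_i}(s, a)\cdot|Q^{\pi_{\theta_i}} - Q_{\omega_i}|\Bigr] \leq \kappa\cdot\|Q^{\pi_{\theta_i}} - Q_{\omega_i}\|_{\varsigma_i}.
\]
Combining this with the factor $2$ from the feature-norm bound yields $\|\nabla_\theta J(\pi_{\theta_i}) - \EE[\hat\nabla_\theta J(\pi_{\theta_i})]\|_2 \leq 2\kappa\cdot\|Q^{\pi_{\theta_i}} - Q_{\omega_i}\|_{\varsigma_i}$; squaring and folding this into the earlier split turns the coefficient $2$ into $8\kappa^2$, giving the claim. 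This is precisely the same core estimate that underlies Lemma \ref{lem::approx_PG_grad} (indeed that lemma follows by further pairing with $\delta_i$ and using $\|\delta_i\|_2 \leq 2R/\eta$), so I expect no genuine obstacle; the only points requiring care are the feature-norm bound and the change of measure, both of which are routine.
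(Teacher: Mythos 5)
Your proposal is correct and follows essentially the same route as the paper's proof: the same bias--fluctuation decomposition with the $\|a-b\|_2^2\leq 2\|a\|_2^2+2\|b\|_2^2$ inequality, the same use of Proposition \ref{prop::para_grad_fisher} and Jensen's inequality to reduce the bias to $\EE_{\sigma_i}[\|\overline\phi_{\theta_i}\|_2\cdot|Q^{\pi_{\theta_i}}-Q_{\omega_i}|]$, the same feature-norm bound $\|\overline\phi_{\theta_i}\|_2\leq 2$, and the same change of measure via Cauchy--Schwarz and Assumption \ref{asu::def_kappa}. No gaps.
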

\begin{proof}
See \S\ref{pf::popullation_grad} for a detailed proof.
\end{proof}
Recall that we set $\eta = 1/\sqrt{T}$. Upon telescoping \eqref{eq::pg_pf_eq5}, it holds for $T \geq 4L^2$ that
\#\label{eq::pg_pf_eq6}
\min_{i\in[T]}\EE\bigl[\|\rho_i\|_2^2\bigr]& \leq 1/T\cdot \sum^T_{i = 1}\EE\bigl[\|\rho_i\|^2_2\bigr] \notag\\
&\leq 1/T \cdot \sum^T_{i = 1}\Bigl(2\EE\bigl[\|\delta_i\|^2_2\bigr] + 2\EE\bigl[\|\rho_i - \delta_i\|^2_2\bigr]\Bigr)\notag\\
&\leq 1/T\cdot \sum^T_{i = 1}4(1 - L\cdot\eta)\cdot \EE\bigl[\|\delta_i\|_2^2\bigr] + 2\EE\bigl[\|\rho_i - \delta_i\|^2_2\bigr]\notag\\
& \leq 8/\sqrt{T}\cdot \EE\bigl[J(\pi_{\theta_{T+1}}) - J(\pi_{\theta_1})\bigr] + 8/T\cdot \sum^T_{i = 1}\EE\bigl[\|\xi_i\|^2_2\bigr] + \varepsilon_Q(T),
\#
where the third inequality follows from the fact that $1-L\cdot\eta \geq 1/ 2$, while the fourth inequality follows from \eqref{eq::pg_pf_eq5}, \eqref{eq::pg_pf_eq8}, and Lemma \ref{lem::popullation_grad}. Here the expectations are taken over all the randomness, and $\varepsilon_Q(T)$ is defined as 
\$%#\label{eq::pg_pf_eq7}
\varepsilon_Q(T) = 32\kappa\cdot R/\sqrt{T}\cdot \sum^T_{i = 1} \EE\bigl[\|Q^{\pi_{\theta_i}}  - Q_{\omega_i} \|_{\varsigma_i}\bigr] + 16\kappa^2/T\cdot \sum^T_{i = 1}\EE\bigl[\|Q^{\pi_{\theta_i}}  - Q_{\omega_i} \|^2_{\varsigma_i}\bigr].
\$
By Proposition \ref{thm::Q_err_fin} and Assumption~\ref{asu::PG_var_bound}, it holds for all $i \in [T]$ that
\#\label{eq::pg_pf_eq7}
\EE\bigl[\|Q^{\pi_{\theta_i}} - Q_{\omega_i}\|^2_{\varsigma_i}\bigr] = \cO(R^{3}\cdot m^{-1/2} + R^{5/2}\cdot m^{-1/4}), \qquad \EE[\|\xi_i\|^2_2] \leq \sigma^2_\xi/B.
\#
By plugging \eqref{eq::pg_pf_eq7} into \eqref{eq::pg_pf_eq6}, we conclude that
\$
\min_{i\in[T]}\EE\bigl[\|\rho_i\|_2^2\bigr] \leq 8/\sqrt{T}\cdot \EE\bigl[J(\pi_{\theta_{T+1}}) - J(\pi_{\theta_1})\bigr] + 8\sigma^2_\xi / B  + \varepsilon_Q(T),
\$
where 
\$
\varepsilon_Q(T) =\kappa \cdot \cO(R^{5/2}\cdot m^{-1/4}\cdot T^{1/2} + R^{9/4}\cdot m^{-1/8}\cdot T^{1/2}).
\$
Thus, we complete the proof of Theorem~\ref{thm::PG_converge}.
\end{proof}

\subsection{Proof of Theorem \ref{thm::optimality_stat_point}}
\label{pf::optimality_stat_point}
\begin{proof}
Since $\hat\theta$ is a stationary point of $J(\pi_{\theta})$, it holds that
\#\label{eq::pf_opt_eq1}
\nabla_\theta J(\pi_{\hat\theta})^\top (\theta - \hat\theta) \leq 0, \quad \forall \theta \in \cB.
\#
Therefore, by Proposition \ref{prop::para_grad_fisher}, we obtain from \eqref{eq::pf_opt_eq1} that
\#\label{eq::pf_opt_eq2}
\nabla_\theta J(\pi_{\hat\theta})^\top (\theta - \hat\theta) &= \EE_{\sigma_{\pi_{\hat\theta}}} \bigl[\overline\phi_{\hat\theta}(s, a)^\top (\theta - \hat\theta)\cdot Q^{\pi_{\hat\theta}}(s, a)  \bigr] \leq 0, \quad \forall \theta \in \cB.
\#
Here $\phi_{\hat\theta}$ and $\bar\phi_{\hat\theta}$ are defined in \eqref{eq::def_phi} and \eqref{eq::def_phi_c} with $\theta = \hat\theta$, respectively. Note that
\$
&\EE_{\sigma_{\pi_{\hat\theta}}}\bigl[\overline\phi_{\hat\theta}(s, a)^\top (\theta - \hat\theta)\cdot V^{\pi_{\hat\theta}}(s)\bigr] = \EE_{\nu_{\pi_{\hat\theta}}}\Bigl[ \EE_{\pi_{\hat\theta}}\bigl[\overline\phi_{\hat\theta}(s, a)\bigr]^\top (\theta - \hat\theta)\cdot V^{\pi_{\hat\theta}}(s)\Bigr] =0,\\
&\EE_{\sigma_{\pi_{\hat\theta}}}\Bigl[ \EE_{\pi_{\hat\theta}}\bigl[\phi_{\hat\theta}(s, a')^\top(\theta - \hat\theta)\bigr]\cdot A^{\pi_{\hat\theta}}(s, a)\Bigr]= \EE_{\nu_{\pi_{\hat\theta}}}\Bigl[ \EE_{\pi_{\hat\theta}}\bigl[\phi_{\hat\theta}(s, a')^\top(\theta - \hat\theta)\bigr]\cdot \EE_{\pi_{\hat\theta}}\bigl[A^{\pi_{\hat\theta}}(s, a)\bigr]\Bigr] = 0,
\$%where $a' \sim \pi_{\hat\theta}(\cdot \,|\, s)$ and $(s, a) \sim \sigma_{\pi_{\hat\theta}}(\cdot, \cdot)$, 
which holds since $\EE_{\pi_{\hat\theta}}[\overline\phi_{\hat\theta}(s, a)] = \EE_{\pi_{\hat\theta}}[A^{\pi_{\hat\theta}}(s, a)] = 0$ for all $s\in\cS$.  Thus, by \eqref{eq::pf_opt_eq2}, we have
\#\label{eq::pf_opt_eq2.5}
&\EE_{\sigma_{\pi_{\hat\theta}}} \bigl[\overline\phi_{\hat\theta}(s, a)^\top (\theta - \hat\theta)\cdot Q^{\pi_{\hat\theta}}(s, a)  \bigr]\notag\\
&\quad= \EE_{\sigma_{\pi_{\hat\theta}}} \bigl[\phi_{\hat\theta}(s, a)^\top (\theta - \hat\theta)\cdot A^{\pi_{\hat\theta}}(s, a)  \bigr] -\EE_{\sigma_{\pi_{\hat\theta}}}\Bigl[ \EE_{\pi_{\hat\theta}}\bigl[\phi_{\hat\theta}(s, a')^\top(\theta - \hat\theta)\bigr]\cdot A^{\pi_{\hat\theta}}(s, a)\Bigr] \notag\\
&\qquad+  \EE_{\sigma_{\pi_{\hat\theta}}}\bigl[\overline\phi_{\hat\theta}(s, a)^\top (\theta - \hat\theta)\cdot V^{\pi_{\hat\theta}}(s)\bigr] \notag\\
&\quad= \EE_{\sigma_{\pi_{\hat\theta}}} \bigl[\phi_{\hat\theta}(s, a)^\top (\theta - \hat\theta)\cdot A^{\pi_{\hat\theta}}(s, a)  \bigr]\leq 0, \quad \forall \theta \in \cB.
\#
Meanwhile, by Lemma \ref{lem::performance_difference} we have
\#\label{eq::pf_opt_eq3}
(1 - \gamma)\cdot\bigl(J(\pi^*) - J(\pi_{\hat\theta})\bigr) = \EE_{\nu_*}\bigl[\langle A^{\pi_{\hat\theta}}(s, \cdot), \pi^*(\cdot\,|\,s) - \pi_{\hat\theta}(\cdot\,|\,s) \rangle\bigr].
\#
In what follows, we write $\Delta_\theta = \theta - \hat\theta$. Combining \eqref{eq::pf_opt_eq2.5} and \eqref{eq::pf_opt_eq3}, we obtain that
\#\label{eq::pf_opt_eq4}
&(1 - \gamma)\cdot\bigl(J(\pi^*) - J(\pi_{\hat\theta})\bigr)\notag\\
&\quad\leq \EE_{\nu_*}\bigl[\langle A^{\pi_{\hat\theta}}(s, \cdot),  \pi^*(\cdot\,|\,s) - \pi_{\hat\theta}(\cdot\,|\,s)\rangle\bigr] - \EE_{\sigma_{\pi_{\hat\theta}}} \bigl[\phi_{\hat\theta}(s, a)^\top \Delta_\theta\cdot A^{\pi_{\hat\theta}}(s, a)  \bigr]\notag\\
&\quad=\EE_{\nu_*}\bigl[\langle A^{\pi_{\hat\theta}}(s, \cdot),  \pi^*(\cdot\,|\,s) - \pi_{\hat\theta}(\cdot\,|\,s) \rangle\bigr] - \EE_{\nu_{\pi_{\hat\theta}}}\bigl[\langle A^{\pi_{\hat\theta}}(s, \cdot),  \phi_{\hat\theta}(s, \cdot)^\top \Delta_\theta\cdot\pi_{\hat\theta}(\cdot\,|\,s) \rangle\bigr],
\#
where we use the fact that $\sigma_{\pi_{\hat\theta}} (\cdot,\cdot) = \pi_{\hat\theta} (\cdot\given \cdot) \cdot \nu_{\pi_{\hat\theta}}(\cdot) $. 
It remains to upper bound the right-hand side of \eqref{eq::pf_opt_eq4}. By calculation, it holds for all $(s, a)\in\cS\times\cA$ that
\#\label{eq::pf_opt_eq5}
&\bigl ( \pi^*(a\,|\,s) - \pi_{\hat\theta}(a\,|\,s)\bigr )  \ud \nu_*(s) - \phi_{\hat\theta}(s, a)^\top \Delta_\theta\cdot\pi_{\hat\theta}(a\,|\,s) \ud \nu_{\hat\theta}(s)\notag\\
&\quad = \biggl ( \frac{\pi^*(a\,|\,s) - \pi_{\hat\theta}(a\,|\,s)}{\pi_{\hat\theta}(a\,|\,s)} \cdot \frac{\ud \nu_*}{\ud \nu_{\hat\theta}}(s) - \phi_{\hat\theta}(s, a)^\top \Delta_\theta\biggr ) \cdot \pi_{\hat\theta}(a\,|\,s) \ud \nu_{\pi_{\hat\theta}}(s)\notag\\
&\quad = \bigl ( u_{\hat\theta}(s, a) - \phi_{\hat\theta}(s, a)^\top \theta\bigr) \ud \sigma_{\pi_{\hat\theta}}(s, a), 
\#
where $u_{\hat\theta}$ is defined as 
\$
u_{\hat\theta}(s, a) = \frac{\ud \sigma_{\pi^*} }{ \ud \sigma_{\pi_{\hat\theta}} }(s, a) - \frac{\ud \nu_{\pi^*}}{\ud \nu_{\pi_{\hat\theta}} }(s) +\phi_{\hat\theta}(s, a)^\top \hat\theta, \quad \forall (s,a) \in \cS \times \cA. 
\$
Here $\ud \sigma_{\pi^*} / \ud \sigma_{\pi_{\hat\theta}}$ and $\ud \nu_{\pi^*} / \ud \nu_{\pi_{\hat\theta}}$ are the Radon-Nikodym derivatives. By plugging \eqref{eq::pf_opt_eq5} into \eqref{eq::pf_opt_eq4}, we obtain that
\#\label{eq::pf_opt_eq7}
&(1 - \gamma)\cdot\bigl(J(\pi^*) - J(\pi_{\hat\theta})\bigr) \notag\\
&\quad\leq \EE_{\nu_*}\bigl[\langle A^{\pi_{\hat\theta}}(s, \cdot),  \pi^*(\cdot\,|\,s) - \pi_{\hat\theta}(\cdot\,|\,s) \rangle\bigr] - \EE_{\nu_{\pi_{\hat\theta}}}\bigl[\langle A^{\pi_{\hat\theta}}(s, \cdot),  \phi_{\hat\theta}(s, \cdot)^\top \Delta_\theta\cdot\pi_{\hat\theta}(\cdot\,|\,s) \rangle\bigr]\notag\\
&\quad= \int_{\cS} \sum_{a\in\cA}A^{\pi_{\hat\theta}}(s, a)\cdot\Bigl( \bigl( \pi^*(a\,|\,s) - \pi_{\hat\theta}(a\,|\,s)\bigr) \ud \nu_*(s) - \phi_{\hat\theta}(s, a)^\top \Delta_\theta\cdot\pi_{\hat\theta}(a\,|\,s) \ud \nu_{\hat\theta}(s)\Bigr)  \notag\\
&\quad= \int_{\cS\times\cA} A^{\pi_{\hat\theta}}(s, a) \cdot \bigl(u_{\hat\theta}(s, a) - \phi_{\hat\theta}(s, a)^\top \Delta_\theta\bigr) \ud \sigma_{\pi_{\hat\theta}}(s, a)\notag\\
&\quad\leq \|A^{\pi_{\hat\theta}}(\cdot, \cdot) \|_{\sigma_{\pi_{\hat\theta}}}\cdot \|u_{\hat\theta} (\cdot, \cdot) - \phi_{\hat\theta}(\cdot , \cdot )^\top \theta\|_{\sigma_{\pi_{\hat\theta}}},
\#
where the second equality follows from \eqref{eq::pf_opt_eq5} and the last inequality is from the Cauchy-Schwartz inequality. Note that $|A^{\pi_{\hat\theta}}(s, a)| \leq 2Q_{\max}$ for all $(s, a) \in \cS\times\cA$. Therefore,  it follows from \eqref{eq::pf_opt_eq7} that
\#\label{eq::pf_opt_eq8}
(1 - \gamma)\cdot\bigl(J(\pi^*) - J(\pi_{\hat\theta})\bigr)\leq 2Q_{\max}\cdot \|u_{\hat\theta}( \cdot, \cdot ) - \phi_{\hat\theta}(\cdot, \cdot )^\top \theta \|_{\sigma_{\pi_{\hat\theta}}}, \quad \forall \theta \in \cB.
\#
Finally, by taking the infimum of the right-hand side of \eqref{eq::pf_opt_eq8} with respect to $\theta \in \cB$, we obtain that
\$
(1 - \gamma)\cdot\bigl(J(\pi^*) - J(\pi_{\hat\theta})\bigr)\leq 2Q_{\max}\cdot \inf_{\theta\in \cB}\|u_{\hat\theta}( \cdot, \cdot ) - \phi_{\hat\theta}(\cdot, \cdot )^\top \theta \|_{\sigma_{\pi_{\hat\theta}}},
\$ 
which concludes the proof of Theorem \ref{thm::optimality_stat_point}.
\end{proof}

\subsection{Proof of Theorem \ref{thm::NPG_converge}}  
\label{pf::NPG_convergence}
\begin{proof}
%In this section, we present the proof of Theorem \ref{thm::NPG_converge}, which characterizes the convergence rate of Algorithm \ref{alg::PG_learning} with \eqref{eq::actor_update} as the actor update. 
For notational simplicity,  we write  $\pi_i = \pi_{\theta_i}$ hereafter. In the following lemma, we characterize the performance difference $J(\pi^*) - J(\pi_i)$ based on Lemma \ref{lem::performance_difference}.
  
  %establish a more refined  characterization of 
  
%   in \eqref{eq::sketch_kl_prop}, and the performance difference is fully characterized by the KL-divergence. 
%The intuition of the proof is to track the evolution of Algorithm \ref{alg::PG_learning} by tracking the evolution of KL-divergence and follow \eqref{eq::sketch_kl_prop}. The following lemma holds, which characterizes the gap between a natural policy gradient step and a step of Algorithm \ref{alg::PG_learning} by computing the term $H_i$ in \eqref{eq::sketch_kl_prop}.

\begin{lemma}
\label{lem::NPG_err}
It holds that
\$
(1 -  \gamma)\cdot\eta\cdot\bigl ( J(\pi^*) - J(\pi_i)\bigr )&= \EE_{\nu_*}\Bigl[D_{\text{KL}}\bigl(\pi^*(\cdot\,|\,s)\bigl\| \pi_i(\cdot \,|\,s)\bigr) - D_{\text{KL}}\bigl(\pi^*(\cdot\,|\,s)\bigl\| \pi_{i+1}(\cdot \,|\,s)\bigr)\\
&\qquad\qquad - D_{\text{KL}}\bigl(\pi_{i+1}(\cdot\,|\,s)\bigl\| \pi_i(\cdot \,|\,s)\bigr)\Bigr] - H_i,
\$
where $H_i$ is defined as
\#\label{eq::H_i_def}
H_i &= \underbrace{\EE_{\nu_*}\Bigl[\bigl\langle \log\bigl(\pi_{i+1}(\cdot\,|\, s) / \pi_i(\cdot\,|\, s)\bigr) -  \eta \cdot Q_{\omega_i}(s, \cdot),  \pi^*(\cdot\,|\,s) - \pi_i(\cdot\,|\,s)\bigr\rangle\Bigr]}_{\textstyle{\rm (i)}}\\
&\qquad + \underbrace{\eta\cdot \EE_{\nu_*}\bigl[\langle Q_{\omega_i}(s, \cdot)  - Q^{\pi_i}(s, \cdot),   \pi^*(\cdot\,|\, s) - \pi_i(\cdot\,|\, s)\rangle\bigr]}_{\textstyle{\rm (ii)}} \notag \\
& \qquad + \underbrace{\EE_{\nu_*}\Bigl[\bigl\langle \log\bigl(\pi_{i}(\cdot\,|\, s) / \pi_{i+1}(\cdot\,|\, s)\bigr),  \pi_{i+1}(\cdot\,|\, s)-\pi_{i}(\cdot\,|\, s)\bigr\rangle\Bigr]}_{\textstyle{\rm (iii)}}.\notag
\#
\end{lemma}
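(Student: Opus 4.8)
The proof is an exact algebraic rearrangement that combines the performance difference lemma (Lemma \ref{lem::performance_difference}) with the three-point identity for the Kullback-Leibler divergence; by construction, every approximation error coming from the critic and from the finite-width parameterization is bundled into the remainder $H_i$, so the lemma itself is an identity rather than an inequality. The plan is to start from the exact expression for $J(\pi^*) - J(\pi_i)$, convert the telescoping KL terms into an inner product against the log policy ratio, and then match the resulting difference term by term against the definition of $H_i$. Concretely, I would first multiply Lemma \ref{lem::performance_difference} by $(1-\gamma)\cdot\eta$ to obtain
\[
(1-\gamma)\cdot\eta\cdot\bigl(J(\pi^*) - J(\pi_i)\bigr) = \eta\cdot\EE_{\nu_*}\bigl[\langle Q^{\pi_i}(s,\cdot),\, \pi^*(\cdot\,|\,s) - \pi_i(\cdot\,|\,s)\rangle\bigr].
\]

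Next I would record the three-point identity: for any three distributions $p$, $q$, $r$ on $\cA$,
\[
D_{\text{KL}}(r\|q) - D_{\text{KL}}(r\|p) - D_{\text{KL}}(p\|q) = \langle \log p - \log q,\, r - p\rangle,
\]
which follows immediately by expanding each relative entropy and cancelling the $\log r$ contributions. Applying this pointwise in $s$ with $p = \pi_{i+1}(\cdot\,|\,s)$, $q = \pi_i(\cdot\,|\,s)$, and $r = \pi^*(\cdot\,|\,s)$, and then taking $\EE_{\nu_*}$, identifies the bracketed KL expression in the statement with $\EE_{\nu_*}[\langle \log\pi_{i+1} - \log\pi_i,\, \pi^* - \pi_{i+1}\rangle]$.

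It then remains to verify that [KL terms] $-\,H_i$ equals the left-hand display, which by the two equations above is equivalent to
\[
H_i = \EE_{\nu_*}\bigl[\langle \log\pi_{i+1} - \log\pi_i,\, \pi^* - \pi_{i+1}\rangle\bigr] - \eta\cdot\EE_{\nu_*}\bigl[\langle Q^{\pi_i},\, \pi^* - \pi_i\rangle\bigr].
\]
To match this against parts (i)--(iii), I would split the inner product using $\pi^* - \pi_{i+1} = (\pi^* - \pi_i) + (\pi_i - \pi_{i+1})$; the cross term against $\pi_i - \pi_{i+1}$ is exactly (iii), while the term against $\pi^* - \pi_i$ is handled by inserting and removing $\eta\cdot Q_{\omega_i}$ inside the inner product. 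This produces (i), the comparison between the realized update $\log(\pi_{i+1}/\pi_i)$ and the idealized increment $\eta\cdot Q_{\omega_i}$, together with (ii), the critic-error term that replaces $Q_{\omega_i}$ by $Q^{\pi_i}$; a short bookkeeping check confirms (i)$+$(ii)$+$(iii) reproduces the right-hand side. I would also note that every inner product here is taken against a difference of two probability vectors, so any action-independent normalization in $\log(\pi_{i+1}/\pi_i)$ cancels and need not be tracked.

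Since the statement is an exact identity, there is no analytic obstacle within this lemma: the only care required is correct sign and index bookkeeping in the final decomposition and the recognition that the KL telescoping arises from the three-point identity. The genuinely hard part is \emph{deferred} to the subsequent control of $H_i$ --- in particular the bound on term (i), which quantifies the incompatibility between the natural-gradient step and the advantage function and is where the overparameterization and shared-initialization analysis of \S\ref{subsec::nn} enters.
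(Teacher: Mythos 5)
Your proposal is correct and follows essentially the same route as the paper's proof: the paper likewise starts from the performance difference lemma, derives the identity $D_{\text{KL}}(\pi^*\|\pi_i) - D_{\text{KL}}(\pi^*\|\pi_{i+1}) - D_{\text{KL}}(\pi_{i+1}\|\pi_i) = \langle \log(\pi_{i+1}/\pi_i), \pi^* - \pi_{i+1}\rangle$ (your three-point identity, obtained there by directly expanding the KL terms), and then performs the identical split $\pi^* - \pi_{i+1} = (\pi^* - \pi_i) + (\pi_i - \pi_{i+1})$ with insertion of $\eta\cdot Q_{\omega_i}$ to recover (i)--(iii). The only difference is presentational, so no further comparison is needed.
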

\begin{proof}
See \S\ref{pf::NPG_err} for a detailed proof.
\end{proof}
Here $H_i$ defined in \eqref{eq::H_i_def} of Lemma \ref{lem::NPG_err} consists of three terms. 
Specifically, (i) is related to the error of estimating the natural policy gradient using \eqref{eq:solve_NPG_direction}. 
 Also, (ii) is related to the error of estimating $Q^{\pi_i }$ using $Q_{\omega_i}$.
   %using the critic $Q_{\omega_i}$ in the natural  policy gradient update instead of the true action-value function $Q^{\pi_i }$, which is controlled by the policy evaluation error. 
Meanwhile, (iii) is the  remainder term. We upper bound these three terms in  \S\ref{pf::H_i_err}. Combining these upper bounds, we obtain    the following lemma.
\begin{lemma}
\label{lem::H_i_err}
Under Assumptions \ref{asu::reg_cond} and \ref{asu::bound_moment},  we have 
\$
\EE\biggl[|H_i| - \EE_{\nu_*}\Bigl[ D_{\text{KL}}\bigl(\pi_{i+1}(\cdot\,|\,s)\bigl\| \pi_{i}(\cdot\,|\,s)\bigr)\Bigr] \biggr]\leq  \eta^2\cdot(9R^2 + M^2) + \eta\cdot(\varphi_i' + \psi_i')\cdot \varepsilon_{Q, i} + \varepsilon_i.
\$
Here the expectation is taken over all the randomness. Meanwhile, $\varphi_i'$ and $\psi_i'$ are the concentrability coefficients defined in \eqref{eq::def_concen_coeff} of Assumption \ref{asu::concen_coeff}, $\varepsilon_{Q, i}$ is defined as $\varepsilon_{Q, i} =\EE[\|Q^{\pi_i}  - Q_{\omega_i} \|_{\varsigma_i}]$, $M$ is the absolute constant defined in Assumption \ref{asu::bound_moment}, and $\varepsilon_i$ is defined as
\#\label{eq::def_epsilon}
\varepsilon_i &=   \sqrt{2}\cdot R^{1/2}\cdot\eta\cdot(\varphi_i + \psi_i) \cdot\tau_i^{-1}\cdot\Bigl\{\EE\bigl[\|\xi_i(\delta_i)\|_2\bigr]+\EE\bigl[\|\xi_i(\omega_i)\|_2\bigr]\Bigr\}^{1/2} \\
&\qquad+ \cO\bigl((\tau_{i+1} + \eta)\cdot R^{3/2}\cdot m^{-1/4} + \eta\cdot R^{5/4}\cdot m^{-1/8}\bigr).\notag
\#
Here $\xi_i(\delta_i)$ and $\xi_i(\omega_i)$ are defined in Assumption~\ref{asu::NPG_var_bound}, where $\delta_i = \eta^{-1}\cdot(\tau_{i+1}\cdot\theta_{i+1} - \tau_i\cdot\theta_i)$, while $\varphi_i$ and $\psi_i$ are the concentrability coefficients defined in \eqref{eq::def_concen_coeff} of Assumption \ref{asu::concen_coeff}.
\end{lemma}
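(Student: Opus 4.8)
The plan is to bound each of the three terms (i), (ii), and (iii) in the decomposition \eqref{eq::H_i_def} of $H_i$ separately, combine them through the triangle inequality $|H_i| \le |(\mathrm{i})| + |(\mathrm{ii})| + |(\mathrm{iii})|$, and then take the total expectation. Term (ii) is the most direct: it is the policy-evaluation error $\eta\cdot\EE_{\nu_*}[\langle Q_{\omega_i} - Q^{\pi_i},\, \pi^* - \pi_i\rangle]$, which I would split as $\eta\cdot\EE_{\sigma_*}[Q_{\omega_i} - Q^{\pi_i}] - \eta\cdot\EE_{\nu_*\cdot\pi_i}[Q_{\omega_i} - Q^{\pi_i}]$, change measure from $\sigma_*$ and $\nu_*\cdot\pi_i$ to the stationary distribution $\varsigma_i$ via the Radon--Nikodym derivatives $\ud\sigma_*/\ud\varsigma_i$ and $\ud\nu_*/\ud\varrho_i$, and apply the Cauchy--Schwarz inequality. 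Since $\ud\nu_*/\ud\varrho_i$ depends only on the state, its $\varsigma_i$-norm equals $\psi_i'$, so this yields $|(\mathrm{ii})| \le \eta\cdot(\varphi_i' + \psi_i')\cdot\|Q_{\omega_i} - Q^{\pi_i}\|_{\varsigma_i}$, and taking expectations produces the middle term $\eta(\varphi_i'+\psi_i')\varepsilon_{Q,i}$.

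For the remainder term (iii), I would first note that the normalizing constants cancel in the inner product against $\pi_{i+1}-\pi_i$, which sums to zero over $\cA$, so only $\log(\pi_{i+1}/\pi_i)$ matters. Bounding via H\"older, $|(\mathrm{iii})| \le \EE_{\nu_*}[\|\log(\pi_{i+1}/\pi_i)\|_\infty\cdot\|\pi_{i+1}-\pi_i\|_1]$, then applying Pinsker's inequality $\|\pi_{i+1}-\pi_i\|_1 \le \sqrt{2\, D_{\text{KL}}(\pi_{i+1}\|\pi_i)}$ followed by Young's inequality produces exactly one copy of $\EE_{\nu_*}[D_{\text{KL}}(\pi_{i+1}\|\pi_i)]$, which is precisely absorbed by the term subtracted on the left-hand side, plus a quadratic term of order $\eta^2\|\log(\pi_{i+1}/\pi_i)\|_\infty^2$. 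To control the sup-norm I would linearize $\tau_{i+1}f((s,a);\theta_{i+1}) - \tau_i f((s,a);\theta_i) \approx \eta\cdot\phi_0(s,a)^\top\delta_i$ using the update $\tau_{i+1}\theta_{i+1} = \tau_i\theta_i + \eta\delta_i$ and the feature-linearization results of \S\ref{sec::lin_err}, and bound $|\phi_0^\top\delta_i| \le |f((s,a);W_{\rm init})| + \|\phi_0(s,a)\|_2\cdot\|\delta_i - W_{\rm init}\|_2 \le |f((s,a);W_{\rm init})| + R$, using $\|\phi_0(s,a)\|_2 \le 1$ and $\delta_i\in\cB$. Squaring, taking expectations, and invoking Assumption \ref{asu::bound_moment} then yields the $\eta^2(9R^2 + M^2)$ term.

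The main obstacle is term (i), which quantifies the error of the natural-gradient step. Here I would again drop the normalization and linearize $\log(\pi_{i+1}/\pi_i) \approx \eta\cdot\overline\phi_0(s,a)^\top\delta_i$, reducing (i) to $\eta\cdot\EE_{\nu_*}[\langle \overline\phi_0^\top\delta_i - Q_{\omega_i},\, \pi^* - \pi_i\rangle]$ plus linearization errors, and use $\langle Q_{\omega_i},\pi^*-\pi_i\rangle = \langle A_{\omega_i},\pi^*-\pi_i\rangle$ to replace $Q_{\omega_i}$ by the advantage $A_{\omega_i} = \overline\phi_{\omega_i}^\top\omega_i$. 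The crux is that $\delta_i$ minimizes $\|\hat F(\theta_i)\alpha - \tau_i\hat\nabla_\theta J(\pi_{\theta_i})\|_2$ over $\cB$: writing the normal equations with $\hat F(\theta_i) = \tau_i^2 B^{-1}\sum \overline\phi_{\theta_i}\overline\phi_{\theta_i}^\top$ and $\tau_i\hat\nabla_\theta J = \tau_i B^{-1}\sum Q_{\omega_i}\cdot\nabla_\theta\log\pi_{\theta_i}$ shows that $\overline\phi_{\theta_i}^\top\delta_i$ is the constrained least-squares fit of $A_{\omega_i}$, so the principal part of $\overline\phi_0^\top\delta_i - A_{\omega_i}$ is small by compatibility ($\overline\phi_{\theta_i}\approx\overline\phi_{\omega_i}\approx\overline\phi_0$). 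I would then change measure from $\nu_*$ to $\sigma_i$ and $\nu_i$, producing the factor $\varphi_i + \psi_i$, and bound $\|\overline\phi_0^\top\delta_i - A_{\omega_i}\|_{\sigma_i}$ by comparing the least-squares objective at $\delta_i$ with that at $\omega_i\in\cB$. The delicate points are twofold: controlling the statistical error of the empirical $\hat F$ and $\hat\nabla_\theta J$ through the Fisher quadratic form --- the $\tau_i^2$ in $\hat F$ forces a factor $\tau_i^{-1}$, and passing from the squared residual to the $L_2(\sigma_i)$-norm forces the square root $\{\EE[\|\xi_i(\delta_i)\|_2] + \EE[\|\xi_i(\omega_i)\|_2]\}^{1/2}$ --- and simultaneously tracking the several feature-linearization errors ($\overline\phi_{\theta_i}$ versus $\overline\phi_0$ in both $\hat F$ and the energy difference), which contribute the $\cO((\tau_{i+1}+\eta)R^{3/2}m^{-1/4} + \eta R^{5/4}m^{-1/8})$ residual of $\varepsilon_i$. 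Finally, collecting the three bounds and taking the total expectation, the $D_{\text{KL}}(\pi_{i+1}\|\pi_i)$ generated by (iii) cancels the subtracted term, leaving $\eta^2(9R^2+M^2) + \eta(\varphi_i'+\psi_i')\varepsilon_{Q,i} + \varepsilon_i$, as claimed.
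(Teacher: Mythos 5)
Your decomposition of $|H_i|$ into the three terms of \eqref{eq::H_i_def} and your treatment of terms (i) and (ii) follow essentially the same route as the paper's proof (Lemmas \ref{lem::part_i} and \ref{lem::part_ii}): the change of measure to $\varsigma_i$ followed by Cauchy--Schwarz for the policy-evaluation term, and, for the natural-gradient term, the comparison of the least-squares objective at $\delta_i$ against the feasible point $\omega_i\in\cB$, the approximate compatibility of the centered features, and the $\tau_i^{-1}\cdot\{\cdot\}^{1/2}$ handling of the sampling noise $\xi_i(\delta_i)$, $\xi_i(\omega_i)$ are all as in the paper.

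The one step that would fail as written is your sup-norm control in term (iii). After H\"older and Pinsker you need a pointwise bound on $\log\bigl(\pi_{i+1}(\cdot\,|\,s)/\pi_i(\cdot\,|\,s)\bigr)$ (up to the normalizing constant, which cancels against $\pi_{i+1}-\pi_i$), and you propose to obtain it by linearizing $\tau_{i+1}f((s,a);\theta_{i+1})-\tau_i f((s,a);\theta_i)\approx \eta\cdot\phi_0(s,a)^\top\delta_i$ and then bounding $|\phi_0(s,a)^\top\delta_i|\leq \sup_{(s,a)}|f((s,a);W_{\rm init})| + R$. But the linearization error $(\phi_\theta-\phi_0)^\top\theta'$ is only controlled in $L_2(\sigma)$ for measures $\sigma$ satisfying Assumption \ref{asu::reg_cond} (Lemma \ref{lem::linear_err}); no uniform bound is available, so this error cannot be absorbed into the $\|\cdot\|_{\infty,\cA}$ factor that feeds Pinsker's inequality and the completing-the-square step. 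The paper's Lemma \ref{lem::part_iv} avoids this by splitting the log-ratio exactly into $\phi_{\theta_i}(s,\cdot)^\top(\tau_{i+1}\theta_{i+1}-\tau_i\theta_i)$, which is bounded pointwise by $\eta\cdot(M_0+3R)$ with no linearization at all (whence the constant $9R^2$ rather than the smaller constant your route would suggest), plus the feature-change term $\tau_{i+1}\cdot(\phi_{\theta_{i+1}}-\phi_{\theta_i})^\top\theta_{i+1}$, which is paired with $\|\pi_{i+1}(\cdot\,|\,s)-\pi_i(\cdot\,|\,s)\|_{1,\cA}\leq 2$ and routed through $L_1(\pi)$ norms where the $L_2$ linearization bounds do apply, contributing the $\cO(\tau_{i+1}\cdot R^{3/2}\cdot m^{-1/4})$ piece of $\varepsilon_i$. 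With that repair your argument goes through and reproduces the stated bound.
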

\begin{proof}
See \S\ref{pf::H_i_err} for a detailed proof.
\end{proof}
By Lemmas \ref{lem::NPG_err} and \ref{lem::H_i_err}, we obtain that
\#\label{eq::NPG_pf_tele_err}
(1 -  \gamma)\cdot \EE\bigl[J(\pi^*) - J(\pi_i)\bigr]&\leq \eta^{-1}\cdot \EE\biggl[\EE_{\nu_*}\Bigl[D_{\text{KL}}\bigl(\pi^*(\cdot\,|\, s)\| \pi_{i}(\cdot\,|\, s)\bigr) \\
&\qquad\qquad\quad -D_{\text{KL}}\bigl(\pi^*(\cdot \,|\, s)\| \pi_{i+1}(\cdot \,|\, s)\bigr) \Bigr] \biggr] \notag\\
&\qquad+ \eta\cdot(9R^2 + M^2) + \eta^{-1}\cdot \varepsilon_i + (\varphi_i' + \psi_i')\cdot \varepsilon_{Q, i},\notag
\#
where $\varepsilon_{Q, i}$ is defined as $\varepsilon_{Q, i} =\EE[\|Q^{\pi_i}  - Q_{\omega_i} \|_{\varsigma_i}]$, $M$ is the absolute constant defined in Assumption \ref{asu::bound_moment}, $\varepsilon_i$ is defined in \eqref{eq::def_epsilon} of Lemma \ref{lem::H_i_err}, and the expectations are taken over all the randomness. Recall that we set $\eta = 1/\sqrt{T}$. Upon telescoping \eqref{eq::NPG_pf_tele_err}, we obtain that
\#\label{eq::NPG_pf_1}
(1 -  \gamma)\cdot\min_{i \in [T]} \EE\bigl[J(\pi^*) - J(\pi_i)\bigr] &\leq \frac{1 -  \gamma}{T}\cdot \sum^{T}_{i = 1}\EE\bigl[J(\pi^*) - J(\pi_i)\bigr] \\
&\leq \frac{1}{\sqrt{T}}\cdot \biggl (\EE\biggl[ \EE_{ \nu_*} \Bigl[D_{\text{KL}}\bigl(\pi^*( \cdot \,|\, s)\bigl\| \pi_1(\cdot \,|\, s)\bigr)\Bigr]\biggr] + 9R^2 + M^2\biggr)\notag\\
&\qquad + \frac{1}{T}\cdot \sum^T_{i = 1}\bigl( \sqrt{T}\cdot\varepsilon_i + (\varphi_i' + \psi_i')\cdot \varepsilon_{Q, i}\bigr ), \notag
\#
where the expectations are taken over all the randomness and the last inequality follows from the fact that
\$
D_{\text{KL}}\bigl(\pi^*( \cdot \,|\, s)\bigl\| \pi_{T+1}(\cdot \,|\, s)\bigr) \geq 0, \quad \forall s \in \cS, ~\forall \theta_{T+1} \in \RR^{md}.
\$
In what follows, we upper bound the right-hand side of \eqref{eq::NPG_pf_1}. Note that we set $\tau_1 = 0$. By the parameterization of policy in \eqref{eq::policy_para}, it then holds that $\pi_1(\cdot\,|\,s)$ is uniform over $\cA$ for all $s \in \cS$ and $\theta_1 \in \RR^{md}$. Therefore, we obtain that
\#\label{eq::NPG_pf_4}
D_{\text{KL}}\bigl(\pi^*( \cdot \,|\, s)\| \pi_1(\cdot \,|\, s)\bigr) \leq \log |\cA|, \quad \forall s \in \cS, ~ \forall\theta_1\in \RR^{md}.
\#
Meanwhile, by Assumption~\ref{asu::NPG_var_bound}, we have
\$
\EE\bigl[\|\xi_i(\delta_i)\|_2\bigr] \leq \Bigl\{\EE\Bigl[\EE_{\sigma_i}\bigl[\|\xi_i(\delta_i)\|_2^2\bigr]\Bigr]\Bigr\}^{1/2} \leq \tau_i^2\cdot \sigma_\xi\cdot B^{-1/2},
\$
where the expectation $\EE_{\sigma_i}[\|\xi_i(\delta_i)\|_2^2]$ is taken over $\sigma_i$ given $\theta_i$ and $\omega_i$, while the other expectations are taken over all the randomness. A similar upper bound holds for $\EE[\|\xi_i(\omega_i)\|_2]$. Therefore, by plugging the upper bounds of $\EE[\|\xi_i(\sigma_i)\|_2]$ and $\EE[\|\xi_i(\omega_i)\|_2]$ into $\varepsilon_i$ defined in \eqref{eq::def_epsilon} of Lemma \ref{lem::H_i_err}, we obtain from Assumption \ref{asu::concen_coeff} that
  \#\label{eq::NPG_pf_2}
  \sqrt{T}\cdot \varepsilon_i  &\leq   2\sqrt{2}c_0\cdot R^{1/2}\cdot \sigma^{1/2}_\xi\cdot B^{-1/4} \\
  &\quad+ \cO\bigl((\tau_{i+1}\cdot T^{1/2} + 1)\cdot R^{3/2}\cdot m^{-1/4} + R^{5/4}\cdot m^{-1/8}\bigr).\notag
    \#
   Also, combining 
  Assumption~\ref{asu::concen_coeff}  and Proposition~\ref{thm::Q_err_fin}, it holds that
\#\label{eq::NPG_pf_3}
(\varphi_i' + \psi_i')\cdot \varepsilon_{Q, i} \leq 2c_0\cdot\EE\bigl[\|Q^{\pi_i}  - Q_{\omega_i} \|_{\varsigma_i}\bigr]= c_0\cdot \cO(R^{3/2}\cdot m^{-1/4} + R^{5/4}\cdot m^{-1/8}).
\#
%where $\EE_{\rm init}$ is taken over the random initial parameter of critic update and $\EE$ is taken over all the randomness.
Finally, by plugging \eqref{eq::NPG_pf_4}, \eqref{eq::NPG_pf_2}, and  \eqref{eq::NPG_pf_3} into \eqref{eq::NPG_pf_1} and setting 
\$
\bar\epsilon_i(T) =\sqrt{T}\cdot\varepsilon_i + (\varphi_i' + \psi_i')\cdot\varepsilon_{Q, i},
\$ we complete the proof of Theorem \ref{thm::NPG_converge}.
\end{proof}
%&\leq 2c_0\cdot \Bigl(\EE\Bigl[\EE_{\text{init}}\bigl[\|Q^{\pi_i}  - Q_{\omega_i} \|_{\varsigma_i}^2\bigr] \Bigr]\Bigr)^{1/2}\notag\\
%&=

%\input{conclusion}

\bibliographystyle{ims}
\bibliography{NPG}

\begin{thebibliography}{83}
\expandafter\ifx\csname natexlab\endcsname\relax\def\natexlab#1{#1}\fi
\expandafter\ifx\csname url\endcsname\relax
  \def\url#1{\texttt{#1}}\fi
\expandafter\ifx\csname urlprefix\endcsname\relax\def\urlprefix{}\fi

\bibitem[{Agarwal et~al.(2019)Agarwal, Kakade, Lee and
  Mahajan}]{agarwal2019optimality}
\text{Agarwal, A.}, \text{Kakade, S.~M.}, \text{Lee, J.~D.} and \text{Mahajan,
  G.} (2019).
\newblock Optimality and approximation with policy gradient methods in {M}arkov
  decision processes.
\newblock \textit{arXiv preprint arXiv:1908.00261}.

\bibitem[{Allen-Zhu et~al.(2018{\natexlab{a}})Allen-Zhu, Li and
  Liang}]{allen2018learning}
\text{Allen-Zhu, Z.}, \text{Li, Y.} and \text{Liang, Y.} (2018{\natexlab{a}}).
\newblock Learning and generalization in overparameterized neural networks,
  going beyond two layers.
\newblock \textit{arXiv preprint arXiv:1811.04918}.

\bibitem[{Allen-Zhu et~al.(2018{\natexlab{b}})Allen-Zhu, Li and
  Song}]{allen2018convergence}
\text{Allen-Zhu, Z.}, \text{Li, Y.} and \text{Song, Z.} (2018{\natexlab{b}}).
\newblock A convergence theory for deep learning via over-parameterization.
\newblock \textit{arXiv preprint arXiv:1811.03962}.

\bibitem[{Amari(1998)}]{amari1998natural}
\text{Amari, S.-I.} (1998).
\newblock Natural gradient works efficiently in learning.
\newblock \textit{Neural Computation}, \textbf{10} 251--276.

\bibitem[{Antos et~al.(2008)Antos, Szepesv{\'a}ri and
  Munos}]{antos2008learning}
\text{Antos, A.}, \text{Szepesv{\'a}ri, C.} and \text{Munos, R.} (2008).
\newblock Learning near-optimal policies with {B}ellman-residual minimization
  based fitted policy iteration and a single sample path.
\newblock \textit{Machine Learning}, \textbf{71} 89--129.

\bibitem[{Arora et~al.(2019)Arora, Du, Hu, Li and Wang}]{arora2019fine}
\text{Arora, S.}, \text{Du, S.~S.}, \text{Hu, W.}, \text{Li, Z.} and
  \text{Wang, R.} (2019).
\newblock Fine-grained analysis of optimization and generalization for
  overparameterized two-layer neural networks.
\newblock \textit{arXiv preprint arXiv:1901.08584}.

\bibitem[{Barron(1994)}]{barron1994approximation}
\text{Barron, A.~R.} (1994).
\newblock Approximation and estimation bounds for artificial neural networks.
\newblock \textit{Machine Learning}, \textbf{14} 115--133.

\bibitem[{Baxter and Bartlett(2000)}]{baxter2000direct}
\text{Baxter, J.} and \text{Bartlett, P.~L.} (2000).
\newblock Direct gradient-based reinforcement learning.
\newblock In \textit{International Symposium on Circuits and Systems}.

\bibitem[{Bhandari and Russo(2019)}]{bhandari2019global}
\text{Bhandari, J.} and \text{Russo, D.} (2019).
\newblock Global optimality guarantees for policy gradient methods.
\newblock \textit{arXiv preprint arXiv:1906.01786}.

\bibitem[{Borkar(2009)}]{borkar2009stochastic}
\text{Borkar, V.~S.} (2009).
\newblock \textit{Stochastic Approximation: A Dynamical Systems Viewpoint},
  vol.~48.
\newblock Springer.

\bibitem[{Bu et~al.(2019)Bu, Mesbahi, Fazel and Mesbahi}]{bu2019lqr}
\text{Bu, J.}, \text{Mesbahi, A.}, \text{Fazel, M.} and \text{Mesbahi, M.}
  (2019).
\newblock {LQR} through the lens of first order methods: Discrete-time case.
\newblock \textit{arXiv preprint arXiv:1907.08921}.

\bibitem[{Cai et~al.(2019)Cai, Yang, D.~Lee and Wang}]{neuraltd}
\text{Cai, Q.}, \text{Yang, Z.}, \text{D.~Lee, J.} and \text{Wang, Z.} (2019).
\newblock Neural temporal-difference learning converges to global optima.
\newblock \textit{arXiv preprint arXiv:1905.10027}.

\bibitem[{Cao and Gu(2019{\natexlab{a}})}]{cao2019bounds}
\text{Cao, Y.} and \text{Gu, Q.} (2019{\natexlab{a}}).
\newblock Generalization bounds of stochastic gradient descent for wide and
  deep neural networks.
\newblock \textit{arXiv preprint arXiv:1905.13210}.

\bibitem[{Cao and Gu(2019{\natexlab{b}})}]{cao2019generalization}
\text{Cao, Y.} and \text{Gu, Q.} (2019{\natexlab{b}}).
\newblock A generalization theory of gradient descent for learning
  over-parameterized deep {ReLU} networks.
\newblock \textit{arXiv preprint arXiv:1902.01384}.

\bibitem[{Castro and Meir(2010)}]{castro2010convergent}
\text{Castro, D.~D.} and \text{Meir, R.} (2010).
\newblock A convergent online single time scale actor critic algorithm.
\newblock \textit{Journal of Machine Learning Research}, \textbf{11} 367--410.

\bibitem[{Chen and Jiang(2019)}]{chen2019information}
\text{Chen, J.} and \text{Jiang, N.} (2019).
\newblock Information-theoretic considerations in batch reinforcement learning.
\newblock \textit{arXiv preprint arXiv:1905.00360}.

\bibitem[{Chizat and Bach(2018)}]{chizat2018note}
\text{Chizat, L.} and \text{Bach, F.} (2018).
\newblock A note on lazy training in supervised differentiable programming.
\newblock \textit{arXiv preprint arXiv:1812.07956}.

\bibitem[{Daniely(2017)}]{daniely2017sgd}
\text{Daniely, A.} (2017).
\newblock {SGD} learns the conjugate kernel class of the network.
\newblock In \textit{Advances in Neural Information Processing Systems}.

\bibitem[{Du et~al.(2018{\natexlab{a}})Du, Lee, Li, Wang and
  Zhai}]{du2018gradient1}
\text{Du, S.~S.}, \text{Lee, J.~D.}, \text{Li, H.}, \text{Wang, L.} and
  \text{Zhai, X.} (2018{\natexlab{a}}).
\newblock Gradient descent finds global minima of deep neural networks.
\newblock \textit{arXiv preprint arXiv:1811.03804}.

\bibitem[{Du et~al.(2018{\natexlab{b}})Du, Zhai, Poczos and
  Singh}]{du2018gradient}
\text{Du, S.~S.}, \text{Zhai, X.}, \text{Poczos, B.} and \text{Singh, A.}
  (2018{\natexlab{b}}).
\newblock Gradient descent provably optimizes over-parameterized neural
  networks.
\newblock \textit{arXiv preprint arXiv:1810.02054}.

\bibitem[{Duan et~al.(2016)Duan, Chen, Houthooft, Schulman and
  Abbeel}]{duan2016benchmarking}
\text{Duan, Y.}, \text{Chen, X.}, \text{Houthooft, R.}, \text{Schulman, J.} and
  \text{Abbeel, P.} (2016).
\newblock Benchmarking deep reinforcement learning for continuous control.
\newblock In \textit{International Conference on Machine Learning}.

\bibitem[{Espeholt et~al.(2018)Espeholt, Soyer, Munos, Simonyan, Mnih, Ward,
  Doron, Firoiu, Harley, Dunning et~al.}]{espeholt2018impala}
\text{Espeholt, L.}, \text{Soyer, H.}, \text{Munos, R.}, \text{Simonyan, K.},
  \text{Mnih, V.}, \text{Ward, T.}, \text{Doron, Y.}, \text{Firoiu, V.},
  \text{Harley, T.}, \text{Dunning, I.} \text{et~al.} (2018).
\newblock {IMPALA}: Scalable distributed deep-{RL} with importance weighted
  actor-learner architectures.
\newblock \textit{arXiv preprint arXiv:1802.01561}.

\bibitem[{Fan et~al.(2019)Fan, Ma and Zhong}]{fan2019selective}
\text{Fan, J.}, \text{Ma, C.} and \text{Zhong, Y.} (2019).
\newblock A selective overview of deep learning.
\newblock \textit{arXiv preprint arXiv:1904.05526}.

\bibitem[{Farahmand et~al.(2016)Farahmand, Ghavamzadeh, Szepesv{\'a}ri and
  Mannor}]{farahmand2016regularized}
\text{Farahmand, A.-m.}, \text{Ghavamzadeh, M.}, \text{Szepesv{\'a}ri, C.} and
  \text{Mannor, S.} (2016).
\newblock Regularized policy iteration with nonparametric function spaces.
\newblock \textit{Journal of Machine Learning Research}, \textbf{17}
  4809--4874.

\bibitem[{Farahmand et~al.(2010)Farahmand, Szepesv{\'a}ri and
  Munos}]{farahmand2010error}
\text{Farahmand, A.-m.}, \text{Szepesv{\'a}ri, C.} and \text{Munos, R.} (2010).
\newblock Error propagation for approximate policy and value iteration.
\newblock In \textit{Advances in Neural Information Processing Systems}.

\bibitem[{Fazel et~al.(2018)Fazel, Ge, Kakade and Mesbahi}]{fazel2018global}
\text{Fazel, M.}, \text{Ge, R.}, \text{Kakade, S.~M.} and \text{Mesbahi, M.}
  (2018).
\newblock Global convergence of policy gradient methods for the linear
  quadratic regulator.
\newblock \textit{arXiv preprint arXiv:1801.05039}.

\bibitem[{Funahashi(1989)}]{funahashi1989approximate}
\text{Funahashi, K.-I.} (1989).
\newblock On the approximate realization of continuous mappings by neural
  networks.
\newblock \textit{Neural Networks}, \textbf{2} 183--192.

\bibitem[{Haarnoja et~al.(2017)Haarnoja, Tang, Abbeel and
  Levine}]{haarnoja2017reinforcement}
\text{Haarnoja, T.}, \text{Tang, H.}, \text{Abbeel, P.} and \text{Levine, S.}
  (2017).
\newblock Reinforcement learning with deep energy-based policies.
\newblock In \textit{International Conference on Machine Learning}.

\bibitem[{Jacot et~al.(2018)Jacot, Gabriel and Hongler}]{jacot2018neural}
\text{Jacot, A.}, \text{Gabriel, F.} and \text{Hongler, C.} (2018).
\newblock Neural tangent kernel: Convergence and generalization in neural
  networks.
\newblock In \textit{Advances in Neural Information Processing Systems}.

\bibitem[{Johnson and Zhang(2013)}]{johnson2013accelerating}
\text{Johnson, R.} and \text{Zhang, T.} (2013).
\newblock Accelerating stochastic gradient descent using predictive variance
  reduction.
\newblock In \textit{Advances in Neural Information Processing Systems}.

\bibitem[{Kakade and Langford(2002)}]{kakade2002approximately}
\text{Kakade, S.} and \text{Langford, J.} (2002).
\newblock Approximately optimal approximate reinforcement learning.
\newblock In \textit{International Conference on Machine Learning}.

\bibitem[{Kakade(2002)}]{kakade2002natural}
\text{Kakade, S.~M.} (2002).
\newblock A natural policy gradient.
\newblock In \textit{Advances in Neural Information Processing Systems}.

\bibitem[{Karimi et~al.(2019)Karimi, Miasojedow, Moulines and
  Wai}]{karimi2019non}
\text{Karimi, B.}, \text{Miasojedow, B.}, \text{Moulines, E.} and \text{Wai,
  H.-T.} (2019).
\newblock Non-asymptotic analysis of biased stochastic approximation scheme.
\newblock \textit{arXiv preprint arXiv:1902.00629}.

\bibitem[{Klusowski and Barron(2016)}]{klusowski2016risk}
\text{Klusowski, J.~M.} and \text{Barron, A.~R.} (2016).
\newblock Risk bounds for high-dimensional ridge function combinations
  including neural networks.
\newblock \textit{arXiv preprint arXiv:1607.01434}.

\bibitem[{Konda(2002)}]{konda2002actor}
\text{Konda, V.} (2002).
\newblock \textit{Actor-Critic Algorithms}.
\newblock Ph.D. thesis, Massachusetts Institute of Technology.

\bibitem[{Konda and Tsitsiklis(2000)}]{konda2000actor}
\text{Konda, V.~R.} and \text{Tsitsiklis, J.~N.} (2000).
\newblock Actor-critic algorithms.
\newblock In \textit{Advances in Neural Information Processing Systems}.

\bibitem[{Kushner and Yin(2003)}]{kushner2003stochastic}
\text{Kushner, H.} and \text{Yin, G.~G.} (2003).
\newblock \textit{Stochastic Approximation and Recursive Algorithms and
  Applications}, vol.~35.
\newblock Springer Science \& Business Media.

\bibitem[{Lazaric et~al.(2016)Lazaric, Ghavamzadeh and
  Munos}]{lazaric2010analysis}
\text{Lazaric, A.}, \text{Ghavamzadeh, M.} and \text{Munos, R.} (2016).
\newblock Analysis of classification-based policy iteration algorithms.
\newblock \textit{Journal of Machine Learning Research}, \textbf{17} 583--612.

\bibitem[{Lee et~al.(2018)Lee, Bahri, Novak, Schoenholz, Pennington and
  Sohl-Dickstein}]{lee2017deep}
\text{Lee, J.}, \text{Bahri, Y.}, \text{Novak, R.}, \text{Schoenholz, S.~S.},
  \text{Pennington, J.} and \text{Sohl-Dickstein, J.} (2018).
\newblock Deep neural networks as {G}aussian processes.
\newblock In \textit{International Conference on Learning Representations}.

\bibitem[{Lee et~al.(2019)Lee, Xiao, Schoenholz, Bahri, Sohl-Dickstein and
  Pennington}]{lee2019wide}
\text{Lee, J.}, \text{Xiao, L.}, \text{Schoenholz, S.~S.}, \text{Bahri, Y.},
  \text{Sohl-Dickstein, J.} and \text{Pennington, J.} (2019).
\newblock Wide neural networks of any depth evolve as linear models under
  gradient descent.
\newblock \textit{arXiv preprint arXiv:1902.06720}.

\bibitem[{Li(2017)}]{li2017deep}
\text{Li, Y.} (2017).
\newblock Deep reinforcement learning: An overview.
\newblock \textit{arXiv preprint arXiv:1701.07274}.

\bibitem[{Li and Liang(2018)}]{li2018learning}
\text{Li, Y.} and \text{Liang, Y.} (2018).
\newblock Learning overparameterized neural networks via stochastic gradient
  descent on structured data.
\newblock In \textit{Advances in Neural Information Processing Systems}.

\bibitem[{Liu et~al.(2019)Liu, Cai, Yang and Wang}]{liu2019neural}
\text{Liu, B.}, \text{Cai, Q.}, \text{Yang, Z.} and \text{Wang, Z.} (2019).
\newblock Neural proximal/trust region policy optimization attains globally
  optimal policy.
\newblock \textit{arXiv preprint arXiv:1906.10306}.

\bibitem[{Malik et~al.(2018)Malik, Pananjady, Bhatia, Khamaru, Bartlett and
  Wainwright}]{malik2018derivative}
\text{Malik, D.}, \text{Pananjady, A.}, \text{Bhatia, K.}, \text{Khamaru, K.},
  \text{Bartlett, P.~L.} and \text{Wainwright, M.~J.} (2018).
\newblock Derivative-free methods for policy optimization: Guarantees for
  linear quadratic systems.
\newblock \textit{arXiv preprint arXiv:1812.08305}.

\bibitem[{Martens and Grosse(2015)}]{martens2015optimizing}
\text{Martens, J.} and \text{Grosse, R.} (2015).
\newblock Optimizing neural networks with kronecker-factored approximate
  curvature.
\newblock In \textit{International Conference on Machine Learning}.

\bibitem[{Mnih et~al.(2016)Mnih, Badia, Mirza, Graves, Lillicrap, Harley,
  Silver and Kavukcuoglu}]{mnih2016asynchronous}
\text{Mnih, V.}, \text{Badia, A.~P.}, \text{Mirza, M.}, \text{Graves, A.},
  \text{Lillicrap, T.}, \text{Harley, T.}, \text{Silver, D.} and
  \text{Kavukcuoglu, K.} (2016).
\newblock Asynchronous methods for deep reinforcement learning.
\newblock In \textit{International Conference on Machine Learning}.

\bibitem[{Munos and Szepesv{\'a}ri(2008)}]{munos2008finite}
\text{Munos, R.} and \text{Szepesv{\'a}ri, C.} (2008).
\newblock Finite-time bounds for fitted value iteration.
\newblock \textit{Journal of Machine Learning Research}, \textbf{9} 815--857.

\bibitem[{Nesterov(2018)}]{nesterov1998introductory}
\text{Nesterov, Y.} (2018).
\newblock \textit{Lectures on Convex Optimization}.
\newblock Springer.

\bibitem[{Pan and Srikumar(2016)}]{pan2016expressiveness}
\text{Pan, X.} and \text{Srikumar, V.} (2016).
\newblock Expressiveness of rectifier networks.
\newblock In \textit{International Conference on Machine Learning}.

\bibitem[{Papini et~al.(2018)Papini, Binaghi, Canonaco, Pirotta and
  Restelli}]{papini2018stochastic}
\text{Papini, M.}, \text{Binaghi, D.}, \text{Canonaco, G.}, \text{Pirotta, M.}
  and \text{Restelli, M.} (2018).
\newblock Stochastic variance-reduced policy gradient.
\newblock \textit{arXiv preprint arXiv:1806.05618}.

\bibitem[{Peters and Schaal(2006)}]{peters2006policy}
\text{Peters, J.} and \text{Schaal, S.} (2006).
\newblock Policy gradient methods for robotics.
\newblock In \textit{International Conference on Intelligent Robots and
  Systems}.

\bibitem[{Peters and Schaal(2008)}]{peters2008natural}
\text{Peters, J.} and \text{Schaal, S.} (2008).
\newblock Natural actor-critic.
\newblock \textit{Neurocomputing}, \textbf{71} 1180--1190.

\bibitem[{Pirotta et~al.(2015)Pirotta, Restelli and
  Bascetta}]{pirotta2015policy}
\text{Pirotta, M.}, \text{Restelli, M.} and \text{Bascetta, L.} (2015).
\newblock Policy gradient in {L}ipschitz {M}arkov decision processes.
\newblock \textit{Machine Learning}, \textbf{100} 255--283.

\bibitem[{Puterman(2014)}]{puterman2014markov}
\text{Puterman, M.~L.} (2014).
\newblock \textit{{M}arkov Decision Processes: Discrete Stochastic Dynamic
  Programming}.
\newblock John Wiley \& Sons.

\bibitem[{Rahimi and Recht(2008)}]{rahimi2008random}
\text{Rahimi, A.} and \text{Recht, B.} (2008).
\newblock Random features for large-scale kernel machines.
\newblock In \textit{Advances in Neural Information Processing Systems}.

\bibitem[{Rahimi and Recht(2009)}]{rahimi2009weighted}
\text{Rahimi, A.} and \text{Recht, B.} (2009).
\newblock Weighted sums of random kitchen sinks: Replacing minimization with
  randomization in learning.
\newblock In \textit{Advances in Neural Information Processing Systems}.

\bibitem[{Scherrer(2013)}]{scherrer2013performance}
\text{Scherrer, B.} (2013).
\newblock On the performance bounds of some policy search dynamic programming
  algorithms.
\newblock \textit{arXiv preprint arXiv:1306.0539}.

\bibitem[{Scherrer et~al.(2015)Scherrer, Ghavamzadeh, Gabillon, Lesner and
  Geist}]{scherrer2015approximate}
\text{Scherrer, B.}, \text{Ghavamzadeh, M.}, \text{Gabillon, V.}, \text{Lesner,
  B.} and \text{Geist, M.} (2015).
\newblock Approximate modified policy iteration and its application to the game
  of {T}etris.
\newblock \textit{Journal of Machine Learning Research}, \textbf{16}
  1629--1676.

\bibitem[{Schulman et~al.(2015)Schulman, Levine, Abbeel, Jordan and
  Moritz}]{schulman2015trust}
\text{Schulman, J.}, \text{Levine, S.}, \text{Abbeel, P.}, \text{Jordan, M.}
  and \text{Moritz, P.} (2015).
\newblock Trust region policy optimization.
\newblock In \textit{International Conference on Machine Learning}.

\bibitem[{Schulman et~al.(2017)Schulman, Wolski, Dhariwal, Radford and
  Klimov}]{schulman2017proximal}
\text{Schulman, J.}, \text{Wolski, F.}, \text{Dhariwal, P.}, \text{Radford, A.}
  and \text{Klimov, O.} (2017).
\newblock Proximal policy optimization algorithms.
\newblock \textit{arXiv preprint arXiv:1707.06347}.

\bibitem[{Shen et~al.(2019)Shen, Ribeiro, Hassani, Qian and
  Mi}]{shen2019hessian}
\text{Shen, Z.}, \text{Ribeiro, A.}, \text{Hassani, H.}, \text{Qian, H.} and
  \text{Mi, C.} (2019).
\newblock Hessian aided policy gradient.
\newblock In \textit{International Conference on Machine Learning}.

\bibitem[{Silver et~al.(2016)Silver, Huang, Maddison, Guez, Sifre, Van
  Den~Driessche, Schrittwieser, Antonoglou, Panneershelvam, Lanctot
  et~al.}]{silver2016mastering}
\text{Silver, D.}, \text{Huang, A.}, \text{Maddison, C.~J.}, \text{Guez, A.},
  \text{Sifre, L.}, \text{Van Den~Driessche, G.}, \text{Schrittwieser, J.},
  \text{Antonoglou, I.}, \text{Panneershelvam, V.}, \text{Lanctot, M.}
  \text{et~al.} (2016).
\newblock Mastering the game of {G}o with deep neural networks and tree search.
\newblock \textit{Nature}, \textbf{529} 484.

\bibitem[{Silver et~al.(2017)Silver, Schrittwieser, Simonyan, Antonoglou,
  Huang, Guez, Hubert, Baker, Lai, Bolton et~al.}]{silver2017mastering}
\text{Silver, D.}, \text{Schrittwieser, J.}, \text{Simonyan, K.},
  \text{Antonoglou, I.}, \text{Huang, A.}, \text{Guez, A.}, \text{Hubert, T.},
  \text{Baker, L.}, \text{Lai, M.}, \text{Bolton, A.} \text{et~al.} (2017).
\newblock Mastering the game of {G}o without human knowledge.
\newblock \textit{Nature}, \textbf{550} 354.

\bibitem[{Sutton(1988)}]{sutton1988learning}
\text{Sutton, R.~S.} (1988).
\newblock Learning to predict by the methods of temporal differences.
\newblock \textit{Machine Learning}, \textbf{3} 9--44.

\bibitem[{Sutton and Barto(2018)}]{sutton2018reinforcement}
\text{Sutton, R.~S.} and \text{Barto, A.~G.} (2018).
\newblock \textit{Reinforcement Learning: An Introduction}.
\newblock MIT press.

\bibitem[{Sutton et~al.(2000)Sutton, McAllester, Singh and
  Mansour}]{sutton2000policy}
\text{Sutton, R.~S.}, \text{McAllester, D.~A.}, \text{Singh, S.~P.} and
  \text{Mansour, Y.} (2000).
\newblock Policy gradient methods for reinforcement learning with function
  approximation.
\newblock In \textit{Advances in Neural Information Processing Systems}.

\bibitem[{Szepesv{\'a}ri and Munos(2005)}]{szepesvari2005finite}
\text{Szepesv{\'a}ri, C.} and \text{Munos, R.} (2005).
\newblock Finite time bounds for sampling based fitted value iteration.
\newblock In \textit{International Conference on Machine Learning}.

\bibitem[{Tsitsiklis and Van~Roy(1997)}]{tsitsiklis1997analysis}
\text{Tsitsiklis, J.~N.} and \text{Van~Roy, B.} (1997).
\newblock Analysis of temporal-diffference learning with function
  approximation.
\newblock In \textit{Advances in Neural Information Processing Systems}.

\bibitem[{Tu and Recht(2018)}]{tu2018gap}
\text{Tu, S.} and \text{Recht, B.} (2018).
\newblock The gap between model-based and model-free methods on the linear
  quadratic regulator: An asymptotic viewpoint.
\newblock \textit{arXiv preprint arXiv:1812.03565}.

\bibitem[{van Handel(2014)}]{van2014probability}
\text{van Handel, R.} (2014).
\newblock \textit{Probability in High Dimension}.
\newblock Princeton University.

\bibitem[{Vinyals et~al.(2019)Vinyals, Babuschkin, Chung, Mathieu, Jaderberg,
  Czarnecki, Dudzik, Huang, Georgiev, Powell, Ewalds, Horgan, Kroiss,
  Danihelka, Agapiou, Oh, Dalibard, Choi, Sifre, Sulsky, Vezhnevets, Molloy,
  Cai, Budden, Paine, Gulcehre, Wang, Pfaff, Pohlen, Wu, Yogatama, Cohen,
  McKinney, Smith, Schaul, Lillicrap, Apps, Kavukcuoglu, Hassabis and
  Silver}]{alphastarblog}
\text{Vinyals, O.}, \text{Babuschkin, I.}, \text{Chung, J.}, \text{Mathieu,
  M.}, \text{Jaderberg, M.}, \text{Czarnecki, W.~M.}, \text{Dudzik, A.},
  \text{Huang, A.}, \text{Georgiev, P.}, \text{Powell, R.}, \text{Ewalds, T.},
  \text{Horgan, D.}, \text{Kroiss, M.}, \text{Danihelka, I.}, \text{Agapiou,
  J.}, \text{Oh, J.}, \text{Dalibard, V.}, \text{Choi, D.}, \text{Sifre, L.},
  \text{Sulsky, Y.}, \text{Vezhnevets, S.}, \text{Molloy, J.}, \text{Cai, T.},
  \text{Budden, D.}, \text{Paine, T.}, \text{Gulcehre, C.}, \text{Wang, Z.},
  \text{Pfaff, T.}, \text{Pohlen, T.}, \text{Wu, Y.}, \text{Yogatama, D.},
  \text{Cohen, J.}, \text{McKinney, K.}, \text{Smith, O.}, \text{Schaul, T.},
  \text{Lillicrap, T.}, \text{Apps, C.}, \text{Kavukcuoglu, K.},
  \text{Hassabis, D.} and \text{Silver, D.} (2019).
\newblock {AlphaStar: Mastering the Real-Time Strategy Game StarCraft II}.
\newblock
  \url{https://deepmind.com/blog/alphastar-mastering-real-time-strategy-game-starcraft-ii/}.

\bibitem[{Wagner(2011)}]{wagner2011reinterpretation}
\text{Wagner, P.} (2011).
\newblock A reinterpretation of the policy oscillation phenomenon in
  approximate policy iteration.
\newblock In \textit{Advances in Neural Information Processing Systems}.

\bibitem[{Wagner(2013)}]{wagner2013optimistic}
\text{Wagner, P.} (2013).
\newblock Optimistic policy iteration and natural actor-critic: A unifying view
  and a non-optimality result.
\newblock In \textit{Advances in Neural Information Processing Systems}.

\bibitem[{Wang et~al.(2018)Wang, Li and He}]{wang2018deep}
\text{Wang, W.~Y.}, \text{Li, J.} and \text{He, X.} (2018).
\newblock Deep reinforcement learning for {NLP}.
\newblock In \textit{Association for Computational Linguistics}.

\bibitem[{Williams(1992)}]{williams1992simple}
\text{Williams, R.~J.} (1992).
\newblock Simple statistical gradient-following algorithms for connectionist
  reinforcement learning.
\newblock \textit{Machine Learning}, \textbf{8} 229--256.

\bibitem[{Wu et~al.(2018)Wu, Ma and Weinan}]{wu2018sgd}
\text{Wu, L.}, \text{Ma, C.} and \text{Weinan, E.} (2018).
\newblock How {SGD} selects the global minima in over-parameterized learning: A
  dynamical stability perspective.
\newblock In \textit{Advances in Neural Information Processing Systems}.

\bibitem[{Wu et~al.(2017)Wu, Mansimov, Grosse, Liao and Ba}]{wu2017scalable}
\text{Wu, Y.}, \text{Mansimov, E.}, \text{Grosse, R.~B.}, \text{Liao, S.} and
  \text{Ba, J.} (2017).
\newblock Scalable trust-region method for deep reinforcement learning using
  kronecker-factored approximation.
\newblock In \textit{Advances in Neural Information Processing Systems}.

\bibitem[{Xu et~al.(2019{\natexlab{a}})Xu, Gao and Gu}]{xu2019improved}
\text{Xu, P.}, \text{Gao, F.} and \text{Gu, Q.} (2019{\natexlab{a}}).
\newblock An improved convergence analysis of stochastic variance-reduced
  policy gradient.
\newblock \textit{arXiv preprint arXiv:1905.12615}.

\bibitem[{Xu et~al.(2019{\natexlab{b}})Xu, Gao and Gu}]{xu2019sample}
\text{Xu, P.}, \text{Gao, F.} and \text{Gu, Q.} (2019{\natexlab{b}}).
\newblock Sample efficient policy gradient methods with recursive variance
  reduction.
\newblock \textit{arXiv preprint arXiv:1909.08610}.

\bibitem[{Yang et~al.(2019{\natexlab{a}})Yang, Chen, Hong and
  Wang}]{yang2019global}
\text{Yang, Z.}, \text{Chen, Y.}, \text{Hong, M.} and \text{Wang, Z.}
  (2019{\natexlab{a}}).
\newblock On the global convergence of actor-critic: A case for linear
  quadratic regulator with ergodic cost.
\newblock \textit{arXiv preprint arXiv:1907.06246}.

\bibitem[{Yang et~al.(2019{\natexlab{b}})Yang, Xie and
  Wang}]{yang2019theoretical}
\text{Yang, Z.}, \text{Xie, Y.} and \text{Wang, Z.} (2019{\natexlab{b}}).
\newblock A theoretical analysis of deep {Q}-learning.
\newblock \textit{arXiv preprint arXiv:1901.00137}.

\bibitem[{Zhang et~al.(2019)Zhang, Koppel, Zhu and
  Ba{\c{s}}ar}]{zhang2019global}
\text{Zhang, K.}, \text{Koppel, A.}, \text{Zhu, H.} and \text{Ba{\c{s}}ar, T.}
  (2019).
\newblock Global convergence of policy gradient methods to (almost) locally
  optimal policies.
\newblock \textit{arXiv preprint arXiv:1906.08383}.

\bibitem[{Zou et~al.(2018)Zou, Cao, Zhou and Gu}]{zou2018stochastic}
\text{Zou, D.}, \text{Cao, Y.}, \text{Zhou, D.} and \text{Gu, Q.} (2018).
\newblock Stochastic gradient descent optimizes over-parameterized deep {ReLU}
  networks.
\newblock \textit{arXiv preprint arXiv:1811.08888}.

\end{thebibliography}

\newpage
\appendix
% !TEX root =main.tex

\section{Linearization Error}
\label{sec::lin_err}
In this section, we lay out a fundamental lemma that characterizes the distance between a two-layer neural network $\phi_\theta^\top \theta$ and its linearization $\phi_{0}^\top \theta$, where $\phi_\theta$ is the feature mapping of the two-layer neural network defined in \eqref{eq::def_phi} and $\phi_{0}$ is the feature mapping corresponding to the initial parameter $W_{\rm init}$.

We first introduce a  function class  that consists of  lineaizations   of $f((\cdot, \cdot); W)$ defined in \eqref{eq:neural_network}.

\begin{definition}[Function Class]
\label{def::function_class}
Let $R >0$ be an absolute constant. For all $m \in \NN$,  we define
\#\label{eq::def_linear_class}
\tilde\cF_{R, m} = \biggl\{\hat f\bigl((s, a); W\bigr) = \frac{1}{\sqrt{m}}\cdot\sum^m_{r = 1}b_r\cdot\ind\bigl\{[ W_{{\rm init}}]_r^\top (s, a)> 0\bigr\}\cdot [W]_r^\top(s, a)\\
: \|W -  W_{{\rm init}}\|_2 \leq R\biggr\},\notag
\# 
where $[ W_{{\rm init}}]_r \sim N(0, I_d/d)$ and $b_r \sim \text{Unif}( \{ -1, 1\} )$ are the initial parameters of the two-layer neural network defined in \eqref{eq:neural_network}. \end{definition}

%By definition,
 Note that $\tilde\cF_{R, m}$ in \eqref{eq::def_linear_class} is a class of functions that are linear in $W$ but nonlinear in $(s, a)$.  
 %We confine  the parameter $W$ of $\hat f (x, W)$  to a Euclidean ball centered in $  W_{{\rm init}}$ with radius $R$. 
 Meanwhile, it holds that $\nabla_W  \hat f((s, a); W) = \nabla_W f((s, a); W) \vert_{W =  W_{{\rm init}}} $   for all $(s, a) \in \cS \times \cA$, where $f((\cdot, \cdot); W) $ is the two-layer neural network defined in \eqref{eq:neural_network}. 
Thus, $\hat f ((\cdot, \cdot); W)$ can be viewed as 
the  linearization of $f((\cdot, \cdot); W) $ at the initial parameter $W_{\rm init}$. 
Moreover, for a fixed $R$, the   linearization error of $\hat f((\cdot, \cdot) ; W)$ decays to zero as  the width $m \to \infty$. Intuitively, %when $R$ is fixed, 
since $\|W -  W_{{\rm init}}\|_2$ is upper bounded by $R$, the differences between blocks $\|[W]_r -  [  W_{{\rm init}}]_r\|_2$ are sufficiently small for a sufficiently large  $m$ and all $r \in [m]$. As a result, for a sufficiently large $m$, we have $\ind\{[ W_{{\rm init}}]_r^\top (s, a) > 0\} =\ind\{ [W]_r^\top (s, a) > 0\} $  with high probability for all $r\in[m]$ and $(s, a)\in\cS\times\cA$, and thus $f((\cdot, \cdot); W) $ is well approximated by its linearization $\hat f((\cdot, \cdot); W)$.

The following lemma formally characterizes the corresponding linearization error.
%We refer to \S\ref{sec::lin_err} for a rigorous  investigation of such an approximation error. 

\begin{lemma}[Linearization Error \citep{neuraltd}]
\label{lem::linear_err}
%Under Assumption \ref{asu::reg_cond}, it holds for any $\theta \in \cB$ that
%\#
%\EE_{\text{init}}\bigl[\|f(x; \theta) - \hat f(x;\theta)\|_{\sigma}\bigr] = O(R^{3/2}\cdot m^{-1/4}),
%\#
%where $f(x; \theta)$ is the two-layer neural network defined in \eqref{eq:neural_network}, and $\hat f(x; \theta) \in \cF_{R, m}$ is the corresponding linearized approximation.
Let $W_{{\rm init}}$ be the initial parameter of the two-layer neural network defined in \eqref{eq:neural_network}. Let $\cB = \{\alpha\in\RR^{md}: \|\alpha - W_{{\rm init}}\|_2\leq R\}$. Under Assumption \ref{asu::reg_cond}, it holds for all $\theta, \theta'\in\cB$ that
\$
\EE_{{\rm init}}\bigl[\|\phi_{\theta}(\cdot, \cdot )^\top\theta' - \phi_{0}(\cdot,  \cdot )^\top\theta'\|^2_{\sigma}\bigr] = \cO(R^{3}\cdot m^{-1/2}),
\$
where the expectation is taken over the random initialization. Here $\phi_\theta$ and $\phi_0$ are the feature mappings defined in \eqref{eq::def_phi}, which correspond to $\theta$ and $W_{\rm init}$, respectively, and $\sigma(\cdot,\cdot) = \pi(\cdot\,|\,\cdot)\cdot\nu(\cdot)$ is the distribution over $\cS\times\cA$ such that Assumption \ref{asu::reg_cond} holds.
\end{lemma}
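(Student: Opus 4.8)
The plan is to exploit the fact that $\phi_\theta$ and $\phi_{0}$ differ only through their ReLU activation patterns, so that $\phi_\theta(s,a)^\top\theta' - \phi_{0}(s,a)^\top\theta'$ is supported on the neurons whose sign flips between $[\theta]_r$ and $[W_{\rm init}]_r$. Writing $a_r = \|[\theta]_r - [W_{\rm init}]_r\|_2$ and $a_r' = \|[\theta']_r - [W_{\rm init}]_r\|_2$, I would first use the definition in \eqref{eq::def_phi} to obtain $\phi_\theta(s,a)^\top\theta' - \phi_{0}(s,a)^\top\theta' = m^{-1/2}\sum_r b_r\bigl(\ind\{(s,a)^\top[\theta]_r>0\} - \ind\{(s,a)^\top[W_{\rm init}]_r>0\}\bigr)(s,a)^\top[\theta']_r$. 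A sign flip at neuron $r$ forces $|(s,a)^\top[W_{\rm init}]_r|\le a_r$, so each summand is dominated by the indicator $I_r(s,a)=\ind\{|(s,a)^\top[W_{\rm init}]_r|\le a_r\}$; moreover, on this event the triangle inequality gives $|(s,a)^\top[\theta']_r|\le a_r + a_r'$. Together these yield the pointwise bound $|\phi_\theta(s,a)^\top\theta' - \phi_{0}(s,a)^\top\theta'|\le m^{-1/2}\sum_r I_r(s,a)(a_r+a_r')$, which is the crux of the whole argument.

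Next I would pass to the $L_2(\sigma)$-norm. Taking $\|\cdot\|_\sigma$ and applying the triangle inequality reduces the task to bounding $\|I_r\|_\sigma = (\EE_\sigma[I_r])^{1/2}$, and this is exactly where Assumption \ref{asu::reg_cond} enters: applied with the (random) direction $y=[W_{\rm init}]_r$ and threshold $u=a_r$, it gives $\EE_\sigma[I_r]\le c\,a_r/\|[W_{\rm init}]_r\|_2$ pathwise in the initialization (the expectation there being over $\sigma$ only). Substituting this and then applying the Cauchy--Schwarz inequality in $r$, grouping $(a_r+a_r')$ with $(a_r/\|[W_{\rm init}]_r\|_2)^{1/2}$, I obtain, conditionally on the initialization, $\|\phi_\theta^\top\theta' - \phi_{0}^\top\theta'\|_\sigma^2\le (c/m)\,\bigl(\sum_r(a_r+a_r')^2\bigr)\bigl(\sum_r a_r/\|[W_{\rm init}]_r\|_2\bigr)$.

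Now the constraints $\theta,\theta'\in\cB$ do the real work. Since $\sum_r a_r^2 = \|\theta - W_{\rm init}\|_2^2\le R^2$ and likewise $\sum_r a_r'^2\le R^2$, the first factor is at most $4R^2$, while a second Cauchy--Schwarz gives $\sum_r a_r/\|[W_{\rm init}]_r\|_2\le R\,(\sum_r\|[W_{\rm init}]_r\|_2^{-2})^{1/2}$. The decisive point is that this last quantity no longer involves $\theta$ or $\theta'$, so I can take $\EE_{\rm init}$ cleanly even though $\theta$ may itself depend on $W_{\rm init}$. Finally, Jensen's inequality gives $\EE_{\rm init}[(\sum_r\|[W_{\rm init}]_r\|_2^{-2})^{1/2}]\le (m\cdot\EE_{\rm init}[\|[W_{\rm init}]_1\|_2^{-2}])^{1/2}=\cO(\sqrt m)$, the inverse moment of a single block being finite because $[W_{\rm init}]_r\sim N(0,I_d/d)$ (for $d=2$ one instead splits $\|[W_{\rm init}]_r\|_2^{-1}$ into two half-powers so that only inverse moments finite under $d\ge 2$ appear). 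Collecting the pieces yields $\EE_{\rm init}[\|\phi_\theta^\top\theta' - \phi_{0}^\top\theta'\|_\sigma^2]\le (4cR^3/m)\cdot\cO(\sqrt m)=\cO(R^3 m^{-1/2})$, as claimed.

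The hard part will be the probabilistic bookkeeping rather than any single inequality. One must apply the anti-concentration bound with a direction $[W_{\rm init}]_r$ that is itself random and correlated with $a_r$, and then orchestrate the two Cauchy--Schwarz steps in the correct order so that (i) the budget $\sum_r a_r^2\le R^2$ is actually spent to extract the gain $m^{-1/2}$ — a naive per-neuron estimate $a_r\le R$ only delivers $\cO(R^3)$ — and (ii) all the initialization-dependent randomness is funneled into a single $\theta$-free sum of inverse block norms, whose expectation is controlled purely by the integrability of inverse Gaussian norms under $d\ge 2$. Keeping these steps in the right sequence is precisely what makes the final rate come out as $R^3 m^{-1/2}$ instead of a bound that fails to decay in $m$.
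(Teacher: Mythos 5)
Your proposal is correct and follows essentially the same route as the paper's proof: the same pointwise bound via sign-flip indicators $\ind\{|(s,a)^\top[W_{\rm init}]_r|\le\|[\theta]_r-[W_{\rm init}]_r\|_2\}$, the same use of Assumption \ref{asu::reg_cond} with $y=[W_{\rm init}]_r$, the same two Cauchy--Schwarz steps spending the budget $\|\theta-W_{\rm init}\|_2\le R$, and the same Jensen step to control $\EE_{\rm init}[(\sum_r\|[W_{\rm init}]_r\|_2^{-2})^{1/2}]=\cO(\sqrt m)$. The only difference is cosmetic --- the paper squares and applies Cauchy--Schwarz pointwise before integrating over $\sigma$, whereas you pass to the $L_2(\sigma)$-norm first via Minkowski --- and your parenthetical about the $d=2$ inverse moment is a refinement the paper itself does not address.
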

\begin{proof}
By the definition of feature mapping in \eqref{eq::def_phi}, we obtain that
\#\label{eq::pf_lin_eq1}
&\phi_{\theta}(s, a)^\top\theta' - \phi_{0}(s, a)^\top\theta' \notag\\
&\quad= \frac{1}{\sqrt{m}}\cdot\sum^m_{r = 1}\Bigl(\ind\bigl\{(s, a)^\top [\theta]_r > 0\bigr\} - \ind\bigl\{(s, a)^\top [W_{\rm init}]_r > 0\bigr\}\Bigr)\cdot (s, a)^\top [\theta']_r. 
\#
Meanwhile, for $\ind\{(s, a)^\top [\theta]_r > 0\} \neq \ind\{(s, a)^\top [W_{\rm init}]_r > 0\}$, we have
\#\label{eq::pf_lin_eq2}
|(s, a)^\top [W_{\rm init}]_r| \leq |(s, a)^\top [\theta]_r - (s, a)^\top [W_{\rm init}]_r| \leq \|(s, a)\|_2\cdot\|[\theta]_r - [W_{\rm init}]_r\|_2,
\#
where the last inequality follows from the Cauchy-Schwartz inequality. Recall that $\|(s, a)\|_2\leq 1$ for  all $(s, a)\in\cS\times\cA$. Thus, it follows from \eqref{eq::pf_lin_eq2} that
\#\label{eq::pf_lin_eq3}
&\bigl |\ind\bigl\{(s, a)^\top [\theta]_r > 0\bigr\} - \ind\bigl\{(s, a)^\top [W_{\rm init}]_r > 0\bigr\} \bigr | \notag\\
&\quad\leq \ind\bigl\{|(s, a)^\top [W_{\rm init}]_r|\leq \|[\theta]_r - [W_{\rm init}]_r\|_2\bigr\}.
\#
By plugging \eqref{eq::pf_lin_eq3} into \eqref{eq::pf_lin_eq1}, we obtain that
\#\label{eq::pf_lin_eq4}
&|\phi_{\theta}(s, a)^\top\theta' - \phi_{0}(s, a)^\top\theta' |\notag\\
&\quad\leq \frac{1}{\sqrt{m}}\cdot\sum^m_{r = 1}\ind\bigl\{|(s, a)^\top [W_{\rm init}]_r| \leq \|[\theta]_r - [W_{\rm init}]_r\|_2\bigr\}\cdot |(s, a)^\top [\theta']_r|\notag\\
&\quad\leq \frac{1}{\sqrt{m}}\cdot\sum^m_{r = 1}\ind\bigl\{|(s, a)^\top [W_{\rm init}]_r| \leq \|[\theta]_r - [W_{\rm init}]_r\|_2\bigr\}\notag\\
&\qquad\qquad\qquad\quad\cdot \Bigl(|(s, a)^\top [W_{\rm init}]_r| + \bigl|(s, a)^\top \bigl([\theta']_r - [W_{\rm init}]_r\bigr)\bigr|\Bigr)\notag\\
&\quad \leq \frac{1}{\sqrt{m}}\cdot\sum^m_{r = 1}\ind\bigl\{|(s, a)^\top [W_{\rm init}]_r| \leq \|[\theta]_r - [W_{\rm init}]_r\|_2\bigr\}\\
&\qquad\qquad\qquad\quad\cdot\bigl( |(s, a)^\top [W_{\rm init}]_r| + \|[\theta']_r - [W_{\rm init}]_r\|_2\bigr),\notag
\#
where the last inequality follows from the Cauchy-Schwartz inequality and the fact that $\|(s, a)\|_2 \leq 1$. Following from the fact that $\ind\{|x| \leq y\}\cdot |x| \leq \ind\{|x| \leq y\}\cdot y$, we obtain from \eqref{eq::pf_lin_eq4} that
\#\label{eq::pf_lin_eq5}
&|\phi_{\theta}(s, a)^\top\theta' - \phi_{0}(s, a)^\top\theta' |\notag\\
&\quad \leq \frac{1}{\sqrt{m}}\cdot\sum^m_{r = 1}\ind\bigl\{|(s, a)^\top [W_{\rm init}]_r| \leq \|[\theta]_r - [W_{\rm init}]_r\|_2\bigr\}\\
&\quad\qquad\qquad\qquad\cdot\bigl( \|[\theta]_r - [W_{\rm init}]_r\|_2 + \|[\theta']_r - [W_{\rm init}]_r\|_2\bigr).\notag
\#
Therefore, following from the Cauchy-Schwartz inequality, we obtain from \eqref{eq::pf_lin_eq5} that
\#\label{eq::pf_lin_eq6}
&|\phi_{\theta}(s, a)^\top\theta' - \phi_{0}(s, a)^\top\theta' |^2\\
&\quad \leq \frac{1}{m}\cdot\sum^m_{r = 1}\ind\bigl\{|(s, a)^\top [W_{\rm init}]_r| \leq \|[\theta]_r - [W_{\rm init}]_r\|_2\bigr\}\notag\\
&\quad\qquad\quad\cdot \sum^m_{r = 1} \bigl( 2\|[\theta]_r - [W_{\rm init}]_r\|^2_2 + 2\|[\theta']_r - [W_{\rm init}]_r\|^2_2\bigr)\notag\\
&\quad \leq \frac{1}{m}\cdot\sum^m_{r = 1}\ind\bigl\{|(s, a)^\top [W_{\rm init}]_r| \leq \|[\theta]_r - [W_{\rm init}]_r\|_2\bigr\}\cdot 2\bigl(\|\theta - W_{\rm init}\|_2^2 + \|\theta' - W_{\rm init}\|^2_2\bigr),\notag
\#
where the first inequality follows from the fact that $(x + y)^2 \leq 2x^2 + 2y^2$. Recall that $\theta, \theta'\in\cB$, where $\cB = \{\alpha\in\RR^{md}: \|\alpha - W_{\rm init}\|_2\leq R\}$. Thus, following from \eqref{eq::pf_lin_eq6}, we have
\#\label{eq::pf_lin_eq7}
|\phi_{\theta}(s, a)^\top\theta' - \phi_{0}(s, a)^\top\theta' |^2 \leq \frac{4R^2}{m}\cdot\sum^m_{r = 1}\ind\bigl\{|(s, a)^\top [W_{\rm init}]_r| \leq \|[\theta]_r - [W_{\rm init}]_r\|_2\bigr\}.
\#
By Assumption \ref{asu::reg_cond}, we obtain from \eqref{eq::pf_lin_eq7} that
\#\label{eq::pf_lin_eq8}
\|\phi_{\theta}( \cdot, \cdot )^\top\theta' - \phi_{0}( \cdot, \cdot)^\top\theta'\|^2_{\sigma} &= \EE_{\sigma}\bigl[|\phi_{\theta}(s, a)^\top\theta' - \phi_{0}(s, a)^\top\theta' |^2\bigr]\notag\\
&\leq \frac{4c\cdot R^2}{m}\cdot\sum^m_{r = 1} \frac{\|[\theta]_r - [W_{\rm init}]_r\|_2}{\|[W_{\rm init}]_r\|_2},
\#
where $c$ is the absolute constant defined by Assumption \ref{asu::reg_cond}. 
It now suffices to take the expectation of the right-hand side of \eqref{eq::pf_lin_eq8} over the random initialization. Following from the Cauchy-Schwartz inequality, we obtain that
\#\label{eq::pf_lin_eq9}
\biggl(\sum^m_{r = 1} \frac{\|[\theta]_r - [W_{\rm init}]_r\|_2}{\|[W_{\rm init}]_r\|_2}\biggr)^2 &\leq \biggl(\sum^m_{r = 1}\|[\theta]_r - [W_{\rm init}]_r\|^2_2\biggr)\cdot \biggl(\sum^m_{r = 1} 1/\|[W_{\rm init}]_r\|_2^2 \biggr)\notag\\
&=\|\theta -W_{\rm init}\|^2_2 \cdot \sum^m_{r = 1} 1/\|[W_{\rm init}]_r\|_2^2 \notag\\
&\leq R^2\cdot \sum^m_{r = 1} 1/\|[W_{\rm init}]_r\|_2^2,
\#
where the last inequality follows from the fact that $\theta \in \cB$. Therefore, combining \eqref{eq::pf_lin_eq8} and \eqref{eq::pf_lin_eq9}, we conclude that
\$
 \EE_{{\rm init}}\bigl[\|\phi_{\theta}( \cdot, \cdot)^\top\theta' - \phi_{0}( \cdot, \cdot)^\top\theta'\|^2_{\sigma}\bigr] &\leq \frac{4c\cdot R^3}{m} \cdot \EE_{{\rm init}}\biggl[\biggl(\sum^m_{r = 1} 1/\|[W_{\rm init}]_r\|_2^2\biggr)^{1/2}\biggr]\notag\\
& \leq \frac{4c\cdot R^3}{m} \cdot \biggl(\sum^m_{r = 1}\EE_{\rm init}\bigl[1/\|[W_{\rm init}]_r\|_2^2\bigr]\biggr)^{1/2} \\
&= 4c_1\cdot R^3\cdot m^{-1/2},
\$
where the second inequality follows from the Jensen's inequality and $c_1 = c\cdot \EE_{x \sim N(0, I_d/d)}[1/\|x\|^2_2]$. Thus, we complete the proof of Lemma \ref{lem::linear_err}.
\end{proof}
By Lemma \ref{lem::linear_err}, the linearization $\phi_{0}^\top \theta$ converges to the two-layer neural network $\phi^\top_\theta \theta$ as the width $m \to \infty$. Based on Lemma \ref{lem::linear_err}, the following corollary characterizes a similar convergence where the feature mappings $\phi_\theta$ and $\phi_0$ are replaced by the centered feature mappings $\overline\phi_0$ and $\overline\phi_\theta$ defined in \eqref{eq::def_phi_c0} and \eqref{eq::def_phi_c}, respectively.
\begin{corollary}
\label{cor::linear_err}
Let $W_{{\rm init}}$ be the initial parameter and $\cB = \{\alpha\in\RR^{md}: \|\alpha - W_{\rm init}\|_2\leq R\}$ be the parameter space. Under Assumption \ref{asu::reg_cond}, it holds for all $\theta, \theta'\in\cB$ that
\$
\EE_{{\rm init}}\bigl[\|\overline\phi_{\theta}( \cdot, \cdot)^\top\theta' - \overline\phi_{0}( \cdot, \cdot)^\top\theta'\|^2_{\sigma}\bigr] = \cO(R^{3}\cdot m^{-1/2}),
\$
where the expectation is taken over the random initialization. Here $\overline\phi_{0}$ and $\overline\phi_{\theta}$ are the centered feature mappings defined in \eqref{eq::def_phi_c0} and \eqref{eq::def_phi_c}, respectively, and $\sigma(\cdot, \cdot) = \pi(\cdot\,|\,\cdot)\cdot \nu(\cdot)$ is the distribution over $\cS\times\cA$ such that Assumption \ref{asu::reg_cond} holds.
\end{corollary}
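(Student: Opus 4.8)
The plan is to derive Corollary \ref{cor::linear_err} directly from Lemma \ref{lem::linear_err} by observing that passing from the feature mapping $\phi$ to the centered feature mapping $\overline\phi$ merely subtracts a conditional expectation, an operation that is bounded in $L_2(\sigma)$. First I would introduce the shorthand $g(s,a) = \phi_\theta(s,a)^\top\theta' - \phi_0(s,a)^\top\theta'$ for the uncentered linearization error, whose squared $L_2(\sigma)$-norm is controlled by Lemma \ref{lem::linear_err}. By the definitions of $\overline\phi_0$ and $\overline\phi_\theta$ in \eqref{eq::def_phi_c0} and \eqref{eq::def_phi_c}, contracting each against $\theta'$ and subtracting gives
\[
\overline\phi_\theta(s,a)^\top\theta' - \overline\phi_0(s,a)^\top\theta' = g(s,a) - \EE_{a'\sim\pi(\cdot\,|\,s)}\bigl[g(s,a')\bigr],
\]
since the two centering terms combine into $\EE_\pi[\phi_\theta(s,a')]^\top\theta' - \EE_\pi[\phi_0(s,a')]^\top\theta' = \EE_\pi[g(s,a')]$, a quantity depending only on $s$.

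Next I would bound the centered error pointwise through $(x-y)^2 \le 2x^2 + 2y^2$, so that $|g(s,a) - \EE_\pi[g(s,a')]|^2 \le 2|g(s,a)|^2 + 2|\EE_\pi[g(s,a')]|^2$. Taking the $L_2(\sigma)$-norm and using the factorization $\sigma(\cdot,\cdot) = \pi(\cdot\,|\,\cdot)\cdot\nu(\cdot)$, the averaged second term satisfies
\[
\EE_\sigma\Bigl[\bigl|\EE_\pi[g(s,a')]\bigr|^2\Bigr] = \EE_\nu\Bigl[\bigl|\EE_\pi[g(s,a')]\bigr|^2\Bigr] \le \EE_\nu\EE_\pi\bigl[|g(s,a')|^2\bigr] = \|g\|^2_\sigma,
\]
where the middle step is conditional Jensen. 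This yields the pointwise-in-initialization estimate $\|\overline\phi_\theta(\cdot,\cdot)^\top\theta' - \overline\phi_0(\cdot,\cdot)^\top\theta'\|^2_\sigma \le 4\|g\|^2_\sigma$.

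Finally, I would take the expectation over the random initialization and invoke Lemma \ref{lem::linear_err}, which supplies $\EE_{\rm init}[\|g\|^2_\sigma] = \cO(R^3\cdot m^{-1/2})$, to conclude $\EE_{\rm init}[\|\overline\phi_\theta(\cdot,\cdot)^\top\theta' - \overline\phi_0(\cdot,\cdot)^\top\theta'\|^2_\sigma] = \cO(R^3\cdot m^{-1/2})$, which is exactly the claim.

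There is no genuine analytic difficulty here, since the statement is a corollary in the literal sense; the only point that demands care is the bookkeeping of the centering measure. The conditional expectations in \eqref{eq::def_phi_c0} and \eqref{eq::def_phi_c} are taken with respect to $\pi$, and the contraction $\EE_\sigma[|\EE_\pi[g]|^2] \le \|g\|^2_\sigma$ relies precisely on $\sigma$ factoring as $\pi\cdot\nu$ with the \emph{same} conditional $\pi$. As long as the policy used for centering coincides with the conditional of $\sigma$, which is the regime in which the corollary is applied, the argument goes through verbatim; I would therefore fix $\pi$ to be the centering policy throughout and thereby avoid any additional change-of-measure step.
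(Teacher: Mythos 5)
Your proposal is correct and follows essentially the same route as the paper: write the centered difference as $g - \EE_{\pi_\theta}[g]$, apply $(x+y)^2 \le 2x^2+2y^2$ together with conditional Jensen, and invoke Lemma \ref{lem::linear_err}. The only (minor) difference is that the paper does not need your restriction that the centering policy coincide with the conditional of $\sigma$: it keeps the centering term as $2\|\phi_\theta(\cdot,\cdot)^\top\theta' - \phi_0(\cdot,\cdot)^\top\theta'\|^2_{\pi_\theta\cdot\nu}$ and applies Lemma \ref{lem::linear_err} to the measure $\pi_\theta\cdot\nu$ as well, which is legitimate since Assumption \ref{asu::reg_cond} holds for arbitrary pairs of policies.
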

\begin{proof}
By the definitions of $\overline\phi_{0}$ and $\overline\phi_{\theta}$ in \eqref{eq::def_phi_c0} and \eqref{eq::def_phi_c}, respectively, we obtain that
\$
&\|\overline\phi_{\theta}(\cdot, \cdot )^\top\theta' - \overline\phi_{0}(\cdot, \cdot)^\top\theta'\|^2_{\sigma} =\bigl\|\phi_{\theta}(\cdot, \cdot)^\top\theta' - \phi_{0}(\cdot, \cdot)^\top\theta' - \EE_{\pi_\theta }\bigl[\phi_{\theta}(\cdot, a')^\top\theta' - \phi_{0}(\cdot, a')^\top\theta'\bigr]  \bigr\|^2_{\sigma}\notag\\
&\quad\leq 2\|\phi_{\theta}(\cdot, \cdot)^\top\theta' - \phi_{0}(\cdot, \cdot)^\top\theta'\|^2_\sigma + 2\|\phi_{\theta}(\cdot, \cdot)^\top\theta' - \phi_{0}(\cdot, \cdot)^\top\theta'\|^2_{\pi_\theta\cdot \nu},
\$
where the second inequality follows from the Jensen's inequality and the fact that $\|x + y\|^2_2 \leq 2\|x\|^2_2 + 2\|y\|^2_2$. Therefore, by Assumption \ref{asu::reg_cond} and Lemma \ref{lem::linear_err}, we obtain that
\$
&\EE_{{\rm init}}\bigl[\|\overline\phi_{\theta}( \cdot, \cdot )^\top\theta' - \overline\phi_{0}( \cdot, \cdot)^\top\theta'\|^2_{\sigma}\bigr] \\
&\quad\leq 2\EE_{{\rm init}}\bigl[\|\phi_{\theta}(\cdot, \cdot)^\top\theta' - \phi_{0}(\cdot, \cdot)^\top\theta'\|^2_\sigma\bigr] + 2\EE_{{\rm init}}\bigl[\|\phi_{\theta}(\cdot, \cdot)^\top\theta' - \phi_{0}(\cdot, \cdot)^\top\theta'\|^2_{\pi_\theta\cdot \nu}\bigr] = \cO(R^3\cdot m^{-1/2})\notag,
\$
which concludes the proof of Corollary \ref{cor::linear_err}.
\end{proof}

 %\subsection{Convergence of $\phi_{\hat\theta} ^\top \theta $ to $f(\cdot ; \theta )$}
 In what follows, we present a corollary that quantifies the difference between   the  function $\phi_{\hat\theta}( \cdot, \cdot)^\top \theta$   and  the two-layer neural network $f((\cdot, \cdot); \theta) = \phi_{\theta}(\cdot, \cdot)^\top \theta $ by the $L_2(\sigma)$-norm, where $\sigma(\cdot, \cdot) = \pi(\cdot\,|\,\cdot)\cdot \nu(\cdot)$ is the distribution over $\cS\times\cA$ such that Assumption \ref{asu::reg_cond} holds. 
 \begin{corollary}
 \label{cor::conv_to_nn}
Let $\cB = \{\alpha \in \RR^{md}: \|\alpha -  W_{{\rm init}}\|_2 \leq R\}$. Under Assumption \ref{asu::reg_cond}, it holds for all $\theta, \hat\theta \in \cB$ that
\$
\EE_{{\rm init}}\bigl[\|\phi_{\hat\theta}(\cdot, \cdot) ^\top \theta - \phi_{\theta }(\cdot, \cdot)^\top \theta\|_\sigma\bigr] = \cO(R^{3/2}\cdot m^{-1/4}),
\$
where the expectation is taken over the random initialization. Here $\phi_\theta$ is the feature mapping defined in \eqref{eq::def_phi}, and $\sigma(\cdot,\cdot) = \pi(\cdot\,|\,\cdot)\cdot\nu(\cdot)$ is the distribution over $\cS\times\cA$ such that Assumption \ref{asu::reg_cond} holds.
 \end{corollary}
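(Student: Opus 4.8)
The plan is to reduce the claim to Lemma \ref{lem::linear_err} via the triangle inequality, using the linearization $\phi_0(\cdot,\cdot)^\top\theta$ at the shared initialization $W_{\rm init}$ as a common reference point. Concretely, for any fixed $\theta, \hat\theta \in \cB$ I would first write
\[
\phi_{\hat\theta}(\cdot,\cdot)^\top \theta - \phi_\theta(\cdot,\cdot)^\top \theta = \bigl(\phi_{\hat\theta}(\cdot,\cdot)^\top\theta - \phi_0(\cdot,\cdot)^\top\theta\bigr) - \bigl(\phi_\theta(\cdot,\cdot)^\top\theta - \phi_0(\cdot,\cdot)^\top\theta\bigr),
\]
so that by the triangle inequality for the $L_2(\sigma)$-norm,
\[
\|\phi_{\hat\theta}^\top\theta - \phi_\theta^\top\theta\|_\sigma \le \|\phi_{\hat\theta}^\top\theta - \phi_0^\top\theta\|_\sigma + \|\phi_\theta^\top\theta - \phi_0^\top\theta\|_\sigma.
\]

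Next I would bound each of the two terms on the right by invoking Lemma \ref{lem::linear_err}. The second term is a direct application of the lemma with $\theta' = \theta$, giving $\EE_{\rm init}[\|\phi_\theta^\top\theta - \phi_0^\top\theta\|_\sigma^2] = \cO(R^3 m^{-1/2})$. For the first term I would apply the same lemma with $\hat\theta$ playing the role of the first parameter and $\theta' = \theta$; since both $\hat\theta, \theta \in \cB$, the hypothesis is met and the identical $\cO(R^3 m^{-1/2})$ bound holds for $\EE_{\rm init}[\|\phi_{\hat\theta}^\top\theta - \phi_0^\top\theta\|_\sigma^2]$. Taking the expectation of the triangle inequality over the random initialization and then applying Jensen's inequality, $\EE[\|X\|_\sigma] \le (\EE[\|X\|_\sigma^2])^{1/2}$, converts each mean-squared bound of order $R^3 m^{-1/2}$ into a bound of order $R^{3/2} m^{-1/4}$ on the expected norm, and summing the two terms yields the claimed $\cO(R^{3/2} m^{-1/4})$.

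There is no substantial obstacle here: the corollary is essentially a repackaging of Lemma \ref{lem::linear_err}, and I expect the argument to occupy only a few lines. The two points requiring a little care are (i) checking that the lemma genuinely applies to the first term, which relies on the fact that its statement holds for \emph{all} $\theta, \theta' \in \cB$ against the \emph{same} feature mapping $\phi_0$ at the common initialization $W_{\rm init}$ — so that both $\phi_\theta$ and $\phi_{\hat\theta}$ may legitimately be compared against $\phi_0$; and (ii) the passage from the squared-norm bound to the expected-norm bound, where the Jensen step is exactly what produces the correct $m^{-1/4}$ exponent rather than $m^{-1/2}$.
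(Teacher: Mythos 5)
Your proposal is correct and follows essentially the same route as the paper: a triangle inequality through the linearization $\phi_0(\cdot,\cdot)^\top\theta$ at the shared initialization, two applications of Lemma \ref{lem::linear_err}, and Jensen's inequality to convert the $\cO(R^3 m^{-1/2})$ mean-squared bounds into $\cO(R^{3/2} m^{-1/4})$ bounds on the expected norm. Your explicit attention to the Jensen step and to the applicability of the lemma to both pairs $(\theta,\theta)$ and $(\hat\theta,\theta)$ matches exactly what the paper's proof does.
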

 \begin{proof}
 By the triangle inequality, we have 
 \#\label{eq::conv_network_appA_eq1}
 &\EE_{{\rm init}}\bigl[\|\phi_{\hat\theta}(\cdot, \cdot )^\top\theta - \phi_{\theta }(\cdot, \cdot )^\top \theta\|_\sigma\bigr] \notag\\
 &\quad \leq \EE_{{\rm init}}\bigl[\|\phi_{\hat\theta}( \cdot, \cdot )^\top \theta - \phi_{0}(\cdot, \cdot )^\top\theta\|_\sigma\bigr] + \EE_{{\rm init}}\bigl[\|\phi_{\theta}(\cdot, \cdot)^\top \theta - \phi_{0}(\cdot, \cdot)^\top\theta\|_\sigma\bigr],
 \#
 where $\phi_{0}$ is the feature mapping defined in \eqref{eq::def_phi} with $\theta = W_{\rm init}$. Meanwhile, for all $\theta, \hat\theta \in \cB = \{\alpha \in\RR^{md}: \|\alpha -  W_{{\rm init}}\|_2 \leq R\}$,  it follows from Assumption \ref{asu::reg_cond} and Lemma \ref{lem::linear_err} that
 \#\label{eq::conv_network_appA_eq2}
 &\EE_{{\rm init}}\bigl[\|\phi_{\hat\theta}(\cdot, \cdot)^\top \theta - \phi_{0}(\cdot, \cdot)^\top\theta\|_\sigma\bigr] = \cO(R^{3/2}\cdot m^{-1/4}), \notag\\
&\EE_{{\rm init}}\bigl[\|\phi_{\theta}(\cdot, \cdot )^\top \theta - \phi_{0}(\cdot, \cdot) ^\top\theta\|_\sigma\bigr] = \cO(R^{3/2}\cdot m^{-1/4}),
 \#
 where the expectations are taken over the random initialization.~Combining \eqref{eq::conv_network_appA_eq1} and \eqref{eq::conv_network_appA_eq2}, we obtain that
 \$
\EE_{{\rm init}}\bigl[\|\phi_{\hat\theta}(\cdot, \cdot) ^\top \theta - \phi_{\theta }(\cdot, \cdot)^\top \theta\|_\sigma\bigr] = \cO(R^{3/2}\cdot m^{-1/4}),
 \$
 which concludes the proof of Corollary \ref{cor::conv_to_nn}.
 \end{proof}
 Corollary \ref{cor::conv_to_nn} implies that when the width $m$ is sufficiently large,    $\phi_{\hat\theta}(\cdot, \cdot)^\top\theta$ is well approximated by the two-layer neural network $f((\cdot, \cdot); \theta)$  in $L_2(\sigma)$-norm, where $\sigma(\cdot,\cdot) = \pi(\cdot\,|\,\cdot)\cdot\nu(\cdot)$ is the distribution over $\cS\times\cA$ such that Assumption \ref{asu::reg_cond} holds. 
 
\section{Neural TD}
 \label{sec::neural_TD}
 In this section, we introduce the details of neural TD \citep{neuraltd} for critic update in Algorithm \ref{alg::PG_learning}. Neural TD solves the optimization problem in \eqref{eq::critic_update} using the TD iterations defined in \eqref{eq::TD_iterate} and \eqref{eq::TD_iterate2}, which is summarized  in Algorithm \ref{alg::neural_TD}.
 
\begin{algorithm}[H]
\caption{Neural TD \citep{neuraltd}}
\begin{algorithmic}[1]\label{alg::neural_TD}
\REQUIRE The policy $\pi$, number of TD iterations $T_{\rm{TD}}$, and learning rate $\eta_{\rm TD}$ of neural TD. 
\STATE {\bf Initialization: } Initialize $b_r \sim {\rm Unif}( \{ -1, 1\}) $ and $[ W_{{\rm init}}]_r \sim N(0, I_d/d)$. Set $\cB \leftarrow \{\alpha \in \RR^{md}: \|\alpha -  W_{{\rm init}}\|_2 \leq R\}$ and $\omega(0) \leftarrow  W_{{\rm init}}$. 
\FOR{$t = 0, \dots, T_{{\rm TD} } -1$}
\STATE Sample a tuple $(s, a, r, s', a')$, where $(s, a) \sim \varsigma_i$, $s'\sim \cP(\cdot\,|\,s, a)$, $r\leftarrow r(s, a)$, and $a' \sim \pi(\cdot\,|\,s')$. 
\STATE Compute the Bellman residue $\delta \leftarrow Q_{\omega(t)}(s, a) - (1 - \gamma)\cdot r - \gamma\cdot Q_{\omega(t)}(s', a')$.
\STATE Perform a TD update step: $\omega(t+1/2) \leftarrow \omega(t) - \eta\cdot \delta \cdot \nabla_{\omega} Q_{\omega(t)}(s, a)$.
\STATE Perform a projection step: $\omega(t + 1) \leftarrow \Pi_{\cB}( \omega(t + 1/2))$.
\STATE Perform an averaging step: $\overline \omega \leftarrow \frac{t+1}{t+2}\cdot \overline \omega + \frac{1}{t+2}\cdot \omega(t + 1)$.
\ENDFOR
\STATE {\bf Output: } $Q_{{\rm out}}(\cdot)\leftarrow Q_{\overline \omega}(\cdot)$.
\end{algorithmic}
\end{algorithm}
The following theorem by \cite{neuraltd} characterizes the rate of convergence of Algorithm \ref{alg::neural_TD}.
\begin{theorem}[Convergence of Neural TD \citep{neuraltd}]
\label{thm::TD_converge}
We set $\eta_{{\rm TD}} = \min\{(1 - \gamma)/8, 1/\sqrt{T_{{\rm TD}}}\}$ in Algorithm \ref{alg::neural_TD}.  Under Assumption \ref{asu::reg_cond}, it holds  that
\#\label{eq::TD_err_origin}
\EE_{{\rm init}}\bigl[\|Q_{{\rm out}} - Q^{\pi}\|_{\varsigma_\pi}^2 \bigr] &\leq  2\EE_{{\rm init}}\bigl[\|\Pi_{\tilde\cF_{R, m}}Q^{\pi} - Q^{\pi}\|_{\varsigma_{\pi}}^2 \bigr] \\
&\qquad+ \cO  (R^2\cdot T_{{\rm TD}}^{-1/2} + R^3\cdot m^{-1/2} + R^{5/2}\cdot m^{-1/4}  ),\notag
\#
where $\Pi_{\tilde\cF_{R, m}}$ is the projection operator onto $\tilde\cF_{R, m}$, and $\varsigma_\pi$ is the stationary state-action distribution corresponding to $\pi$.
\end{theorem}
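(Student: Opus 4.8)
The plan is to reduce the analysis of neural TD to an essentially linear TD iteration in the random features $\phi_0$ frozen at initialization, and then to run a stochastic semigradient argument. Throughout, write $\hat Q_\omega(s,a) = \phi_0(s,a)^\top\omega$ for the local linearization of the critic $Q_\omega = f((\cdot,\cdot);\omega)$, so that $\hat Q_\omega \in \tilde\cF_{R,m}$ for every $\omega\in\cB$. First I would introduce the population linearized TD direction
\[
\bar g(\omega) = \EE_{\varsigma_\pi}\big[\big(\hat Q_\omega(s,a) - \cT^\pi \hat Q_\omega(s,a)\big)\,\phi_0(s,a)\big],
\]
and take as reference the projected (TD) fixed point $\omega^*\in\cB$, i.e. the point satisfying the variational inequality $\la \bar g(\omega^*),\,\omega-\omega^*\ra \ge 0$ for all $\omega\in\cB$; its critic $\hat Q_{\omega^*}$ is the fixed point of $\Pi_{\tilde\cF_{R,m}}\cT^\pi$ inside the linearized class.

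The core of the argument is a one-point monotonicity estimate for $\bar g$, which replaces the convexity machinery that is unavailable because TD uses a semigradient rather than a genuine gradient. Writing $\Delta = \hat Q_\omega - \hat Q_{\omega^*}$ and using that $\varsigma_\pi$ is stationary, so $(s',a')$ has the same marginal as $(s,a)$ and $\cT^\pi$ contracts in $L_2(\varsigma_\pi)$, one obtains
\[
\la \bar g(\omega) - \bar g(\omega^*),\,\omega-\omega^*\ra = \EE_{\varsigma_\pi}[\Delta(s,a)^2] - \gamma\,\EE[\Delta(s',a')\Delta(s,a)] \ge (1-\gamma)\,\|\Delta\|_{\varsigma_\pi}^2,
\]
the inequality following from Cauchy--Schwartz. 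Combined with the variational inequality at $\omega^*$, this yields $\la \bar g(\omega),\,\omega-\omega^*\ra \ge (1-\gamma)\|\hat Q_\omega - \hat Q_{\omega^*}\|_{\varsigma_\pi}^2$, which is exactly the quantity the semigradient step decreases.

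Next I would run the descent recursion. Expanding $\|\omega(t+1/2)-\omega^*\|_2^2$, using that $\Pi_\cB$ is nonexpansive with $\omega^*\in\cB$, taking conditional expectations, and inserting the monotonicity bound gives
\[
2\eta_{\rm TD}(1-\gamma)\,\EE\big[\|\hat Q_{\omega(t)}-\hat Q_{\omega^*}\|_{\varsigma_\pi}^2\big] \le \EE\|\omega(t)-\omega^*\|_2^2 - \EE\|\omega(t+1)-\omega^*\|_2^2 + \eta_{\rm TD}^2\,\EE\|g_t\|_2^2 + 2\eta_{\rm TD}\,\iota_t,
\]
where $\iota_t$ collects the bias from replacing the true neural semigradient (which uses the moving feature $\phi_{\omega(t)}$ and the nonlinear output $Q_{\omega(t)}$) by its linearized counterpart. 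Since $\|\phi_\omega(s,a)\|_2\le 1$, $|r|\le Q_{\max}$, and $|Q_\omega|=\cO(R)$ for $\omega\in\cB$, the noise satisfies $\EE\|g_t\|_2^2 = \cO(R^2)$, while $\iota_t$ is controlled by Lemma \ref{lem::linear_err}, yielding $\EE[\iota_t] = \cO(R^3 m^{-1/2} + R^{5/2} m^{-1/4})$ after passing from the squared linearization error to the error itself via Cauchy--Schwartz. Telescoping over $t=0,\dots,T_{\rm TD}-1$, dividing by $2\eta_{\rm TD}(1-\gamma)T_{\rm TD}$, using $\|\omega(0)-\omega^*\|_2 = \|W_{\rm init}-\omega^*\|_2\le R$, and taking $\eta_{\rm TD}=1/\sqrt{T_{\rm TD}}$ bounds the averaged error $T_{\rm TD}^{-1}\sum_t \EE\|\hat Q_{\omega(t)}-\hat Q_{\omega^*}\|_{\varsigma_\pi}^2$ by $\cO(R^2 T_{\rm TD}^{-1/2} + R^3 m^{-1/2} + R^{5/2} m^{-1/4})$. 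Because $\hat Q_\omega$ is linear in $\omega$, the averaging step gives $\hat Q_{\bar\omega} = T_{\rm TD}^{-1}\sum_t \hat Q_{\omega(t)}$, so Jensen's inequality transfers this bound to $\|\hat Q_{\bar\omega}-\hat Q_{\omega^*}\|_{\varsigma_\pi}^2$. A triangle inequality then splits $\|Q_{\rm out}-Q^\pi\|_{\varsigma_\pi}$ into the linearization error $\|Q_{\bar\omega}-\hat Q_{\bar\omega}\|_{\varsigma_\pi}$ (Lemma \ref{lem::linear_err} again), the just-bounded algorithmic error, and the fixed-point bias $\|\hat Q_{\omega^*}-Q^\pi\|_{\varsigma_\pi}$; for the last term the standard projected-Bellman estimate, using $Q^\pi=\cT^\pi Q^\pi$ and the $\gamma$-contraction of $\cT^\pi$, turns the bias into a constant multiple of $\|\Pi_{\tilde\cF_{R,m}}Q^\pi - Q^\pi\|_{\varsigma_\pi}$, and squaring via $(a+b+c)^2$-type bounds produces the displayed factor-$2$ approximation term together with the stated $\cO(\cdot)$ remainder.

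The main obstacle is the interplay of steps two and three: the semigradient is not the gradient of any objective, so everything rests on the one-point monotonicity of $\bar g$, which in turn hinges delicately on the stationarity of $\varsigma_\pi$ and the contraction of $\cT^\pi$ in the matching weighted norm. The quantitatively harder difficulty is controlling the linearization bias $\iota_t$ \emph{uniformly along the whole trajectory}: since each step uses the moving feature $\phi_{\omega(t)}$ rather than the frozen $\phi_0$, one must invoke the overparameterization bound of Lemma \ref{lem::linear_err} to guarantee that the accumulated discrepancy stays of lower order than the $\cO(T_{\rm TD}^{-1/2})$ optimization rate, which is precisely why balancing the error terms downstream forces the width and the iteration count to be comparable.
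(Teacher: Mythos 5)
The paper does not prove this statement itself---its ``proof'' is a pointer to Proposition 4.7 of \cite{neuraltd}---and your sketch is a faithful reconstruction of exactly that argument: linearize the critic at $W_{\rm init}$, exploit the one-point monotonicity of the population semigradient (via stationarity of $\varsigma_\pi$ and the $\gamma$-contraction of $\cT^\pi$ in $L_2(\varsigma_\pi)$), telescope the projected descent recursion with the linearization bias controlled by Lemma \ref{lem::linear_err}, and convert the projected-fixed-point bias into the approximation error $\|\Pi_{\tilde\cF_{R,m}}Q^\pi - Q^\pi\|_{\varsigma_\pi}$. Two details to tighten in a full write-up: the claim $|Q_\omega|=\cO(R)$ should read $|Q_\omega|\le |f((\cdot,\cdot);W_{\rm init})|+R$ with the initialization term absorbed under $\EE_{\rm init}$, and the fixed-point step yields a $\gamma$-dependent prefactor (of order $(1-\gamma)^{-2}$) on the squared projection error rather than the displayed $2$, a discrepancy that is harmless only because the paper treats $\gamma$ as an absolute constant throughout.
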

\begin{proof}
See 
Proposition 4.7 in  \cite{neuraltd} for a detailed proof.
\end{proof}
\subsection{Proof of Proposition \ref{thm::Q_err_fin}}
\label{pf::Q_err_fin}
 \begin{proof}
 By Theorem \ref{thm::TD_converge}, to establish the rate of convergence of neural TD, it suffices to characterize the approximation error $\EE_{{\rm init}}[\|\Pi_{\tilde\cF_{R, m}}Q^{\pi} - Q^{\pi}\|_{\varsigma_{\pi}}^2]$ in \eqref{eq::TD_err_origin}. 
 To this end, 
we first define a new  function class
\$
\overline\cF_{R, m} = \biggl\{\hat f\bigl((s, a); W\bigr) = \frac{1}{\sqrt{m}}\cdot\sum^m_{r = 1}b_r\cdot\ind\bigl\{ [W_{{\rm init}}]_r^\top (s, a) > 0\bigr\}\cdot W_r^\top (s, a)&\\
: \|[W]_r -  [W_{{\rm init}}]_r\|_{\infty} \leq R/\sqrt{md}\biggr\},&
\$ 
 where $[ W_{{\rm init}}]_r \sim N(0, I_d/d)$ and $b_r \sim {\rm Unif}(\{ -1, 1\}) $ are the initial parameters. 
By definition,  $ \overline\cF_{R, m} $ is a  subset of $\tilde \cF_{R, m}$ defined in Definition \ref{def::function_class}. 
The  following lemma  obtained from \cite{rahimi2009weighted}  characterizes the deviation of $\overline\cF_{R, m}$ from $  \cF_{R, \infty}$ given in Assumption~\ref{asu::q_class}.%\ref{def::function_class}.
\begin{lemma}[Projection Error of $\overline\cF_{R, m}$ \citep{rahimi2009weighted}]
\label{lem::inf_dist}
Let $f \in  \cF_{R, \infty}$, where $\cF_{R, \infty}$ is defined in   Assumption~\ref{asu::q_class}. For any $\delta > 0$, it holds with probability at least $1 - \delta$ that
\#\label{eq::inf_dist}
\|\Pi_{\overline\cF_{R, m}} f - f\|_{\varsigma} \leq R\cdot m^{-1/2}\cdot \bigl[ 1 + \sqrt{2\log(1/\delta)}\bigr] ,
\#
%where the probability is taken with respect to   the randomness in  initialization of  $W(0)$ and $\{ b_r\}_{r\in [m]}$  in the definition of $\cF_{R, m}$ in \eqref{eq::def_til_cf}.
where $\varsigma$ is a distribution over $\cS\times\cA$.
%, and the probability is with respect to  the random initialization.
\end{lemma}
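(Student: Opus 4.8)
The plan is to exhibit an explicit element $\hat f \in \overline\cF_{R,m}$ that approximates $f$ well, and then to bound the projection error by $\|\hat f - f\|_\varsigma$ via a Monte Carlo concentration argument, which is the essence of the random-features analysis of \cite{rahimi2009weighted}. The starting observation is that any $f \in \cF_{R,\infty}$ splits as $f(s,a) = f_0(s,a) + \int \ind\{w^\top(s,a) > 0\}\cdot (s,a)^\top \iota(w)\,\ud\mu(w)$, where $f_0(\cdot,\cdot) = f((\cdot,\cdot); W_{\rm init})$ is exactly the element of $\overline\cF_{R,m}$ obtained by taking $W = W_{\rm init}$. Crucially, the initial weights $[W_{\rm init}]_r \sim N(0, I_d/d) = \mu$ are i.i.d., so they simultaneously anchor the feature map of $\overline\cF_{R,m}$ and serve as Monte Carlo samples for the integral. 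I would therefore set $[W]_r - [W_{\rm init}]_r = (b_r/\sqrt{m})\cdot \iota([W_{\rm init}]_r)$ and take $\hat f(\cdot,\cdot) = f((\cdot,\cdot); W)$. Using $b_r^2 = 1$, a direct computation gives $\hat f(s,a) = f_0(s,a) + \frac{1}{m}\sum_{r=1}^m \ind\{[W_{\rm init}]_r^\top(s,a) > 0\}\cdot (s,a)^\top \iota([W_{\rm init}]_r)$, i.e.\ $f_0$ plus the empirical average of the integrand. Membership $\hat f \in \overline\cF_{R,m}$ follows from $\|[W]_r - [W_{\rm init}]_r\|_\infty = \frac{1}{\sqrt{m}}\|\iota([W_{\rm init}]_r)\|_\infty \leq R/\sqrt{md}$, using the defining bound $\|\iota(w)\|_\infty \leq R/\sqrt{d}$ of $\cF_{R,\infty}$.

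Next I would control the expected error. For each fixed $(s,a)$, the difference $\hat f(s,a) - f(s,a) = \frac{1}{m}\sum_{r=1}^m (g_r(s,a) - \EE_{w\sim\mu}[g(w;s,a)])$ is a centered average of i.i.d.\ terms $g_r(s,a) = \ind\{[W_{\rm init}]_r^\top(s,a) > 0\}\cdot (s,a)^\top\iota([W_{\rm init}]_r)$. Hence $\EE_{\rm init}[|\hat f(s,a) - f(s,a)|^2] = \frac{1}{m}\mathrm{Var}(g_1(s,a)) \leq \frac{1}{m}\EE[g_1(s,a)^2]$, and since $\|(s,a)\|_2 = 1$, Cauchy--Schwarz together with $\|\iota(w)\|_2 \leq \sqrt{d}\,\|\iota(w)\|_\infty \leq R$ yields $|g_1(s,a)| \leq R$ for all $(s,a)$. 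Taking the expectation over $(s,a)\sim\varsigma$ and using Fubini gives $\EE_{\rm init}[\|\hat f - f\|_\varsigma^2] \leq R^2/m$, so by Jensen $\EE_{\rm init}[\|\hat f - f\|_\varsigma] \leq R/\sqrt{m}$, which produces the leading $1$ in the bracket of \eqref{eq::inf_dist}.

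Finally I would upgrade this to a high-probability statement via McDiarmid's bounded-difference inequality applied to $F([W_{\rm init}]_1,\ldots,[W_{\rm init}]_m) = \|\hat f - f\|_\varsigma$. Replacing a single $[W_{\rm init}]_r$ changes only the $r$-th summand of $\hat f$, so by the triangle inequality $|F - F'| \leq \frac{1}{m}\|g_r - g_r'\|_\varsigma \leq 2R/m$, using the uniform bound $|g_r| \leq R$ again. With bounded differences $2R/m$ across $m$ coordinates, McDiarmid gives $\PP(F \geq \EE[F] + t) \leq \exp(-t^2 m/(2R^2))$; choosing $t = R\sqrt{2\log(1/\delta)/m}$ makes the right-hand side $\delta$. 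Combining this with the expectation bound and the fact that $\|\Pi_{\overline\cF_{R,m}} f - f\|_\varsigma \leq \|\hat f - f\|_\varsigma = F$, valid since $\hat f \in \overline\cF_{R,m}$, yields \eqref{eq::inf_dist}. The one point that needs care is that $\overline\cF_{R,m}$ is itself random, as its feature map and constraint depend on $W_{\rm init}$; this is harmless because for every realization $\hat f$ lies in $\overline\cF_{R,m}$, so the projection-error inequality holds pointwise in the randomness before McDiarmid is invoked. I expect the main obstacle to be bookkeeping the dual role of $[W_{\rm init}]_r$ as both feature anchors and Monte Carlo samples, and pinning down the bounded-difference constant so that the final constants match $[1 + \sqrt{2\log(1/\delta)}]$ exactly.
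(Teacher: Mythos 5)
Your proof is correct and is essentially the argument the paper defers to via the citation to \cite{rahimi2009weighted}: construct the Monte Carlo witness $\hat f\in\overline\cF_{R,m}$ by perturbing $[W_{\rm init}]_r$ in the direction $(b_r/\sqrt{m})\cdot\iota([W_{\rm init}]_r)$, bound the expected $L_2(\varsigma)$ error by $R/\sqrt{m}$, and upgrade to a high-probability bound with McDiarmid, whose bounded-difference constant $2R/m$ reproduces the factor $\sqrt{2\log(1/\delta)}$ exactly. The cancellation of $f_0$ in $\hat f - f$, which you rely on implicitly for the bounded-difference step, is the right observation and makes the argument go through despite $\overline\cF_{R,m}$ being random.
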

\begin{proof}
See \cite{rahimi2009weighted} for a detailed proof.
\end{proof}

Following from \eqref{eq::inf_dist} in  Lemma \ref{lem::inf_dist}, for all $f \in \cF_{R, \infty}$ and $t >  0$, we have   
\#\label{eq::pf_newlem_1}
\PP \bigl(\|\Pi_{ \overline\cF_{R, m}} f - f\|_{\varsigma} \geq t\bigr) \leq \exp\bigl(-1/2 \cdot (t\cdot\sqrt{m} / R - 1)^2\bigr).
\#
  Meanwhile, by  Assumption \ref{asu::q_class}, we have $Q^\pi \in \cF_{R, \infty}$. Therefore, by setting $f = Q^\pi$ and $\varsigma = \varsigma_\pi$ in  \eqref{eq::pf_newlem_1}, we obtain that
\#\label{eq::pf_newlem_2}
\EE_{{\rm init}}\bigl[\|\Pi_{ \overline\cF_{R, m}}Q^{\pi} - Q^{\pi}\|_{\varsigma_{\pi}}^2 \bigr] &= \int_0^\infty  \PP\bigl(\|\Pi_{  \overline\cF_{R, m}} Q^\pi - Q^{\pi}\|^2_{\varsigma_{\pi}} \geq t\bigr) \ud t \notag\\
\quad &\leq \int_0^\infty  \exp\bigl(-1/2 \cdot (t\cdot\sqrt{m }/ R - 1)^2\bigr) \ud t  = \cO(R\cdot m^{-1/2}),
\#
where the expectation is taken over the random initialization. Also, note that $ \overline\cF_{ R, m} \subseteq \tilde\cF_{ R, m}$, where $\tilde\cF_{ R, m}$ is defined  in Definition \ref{def::function_class}. Therefore, it follows from \eqref{eq::pf_newlem_2} that
\#\label{eq::pf_newlem_3}
\EE_{{\rm init}}\bigl[\|\Pi_{\tilde\cF_{R, m}}Q^{\pi} - Q^{\pi}\|_{\varsigma_{\pi}}^2 \bigr] \leq \EE_{{\rm init}}\bigl[\|\Pi_{ \overline\cF_{R, m}}Q^{\pi} - Q^{\pi}\|_{\varsigma_{\pi}}^2 \bigr] = \cO(R\cdot m^{-1/2}).
\#
Combining \eqref{eq::pf_newlem_3} and Theorem \ref{thm::TD_converge}, we obtain for $\eta_{{\rm TD}} = \min\{(1 - \gamma)/8, 1/\sqrt{T_{{\rm TD}}}\}$ that
\#\label{eq::pf_newlem_4}
\EE_{{\rm init}}\bigl[\|Q_{{\rm out}} - Q^{\pi}\|_{\sigma_\pi}^2 \bigr] = \cO(R\cdot m^{-1/2} + R^2\cdot T_{\rm TD}^{-1/2} + R^3\cdot m^{-1/2} + R^{5/2}\cdot m^{-1/4}). 
\#
Specifically, $Q_{\omega_i}$ is the output of Algorithm \ref{alg::neural_TD} with $\pi_{\theta_i}$ as the input. Finally, by setting $T_{{\rm TD}} = \Omega(m)$ in \eqref{eq::pf_newlem_4}, we obtain 
\$
\EE_{\text{init}}\bigl[\|Q_{\omega_i} - Q^{\pi_{\theta_i}}\|^2_{\varsigma_{i}} \bigr] = \cO( R^{3}\cdot m^{-1/2} + R^{5/2}\cdot m^{-1/4}),
\$
which concludes the proof of Proposition \ref{thm::Q_err_fin}.
\end{proof}

\section{Projection-Free Neural Policy Gradient}
\label{sec::PG_wo_proj}
In this section, we study the convergence of neural policy gradient where we do not impose the projection in the actor update. 
 Specifically, the projection-free actor update takes the form of
 \$ 
\theta_{i+1} \leftarrow  \theta_{i} + \eta\cdot \tilde \nabla_{\theta} J(\pi_{\theta_i}).
\$
Here $\tilde \nabla_{\theta} J(\pi_{\theta_i})$ is an estimator of the policy gradient $\nabla_{\theta} J(\pi_{\theta_i})$, which takes the form of
\#\label{eq::grad_approx_tilde}
\tilde\nabla_\theta J(\pi_{\theta_i}) = \frac{\tau_i}{B}\cdot \sum^B_{\ell = 1}\tilde Q_{\omega_i}(s_\ell, a_\ell)\cdot \nabla_\theta \log \pi_{\theta_i}(a_\ell\,|\,s_\ell).
\#
Here $\tau_i$ is the temperature parameter of $\pi_{\theta_i}$, $\{(s_\ell, a_\ell)\}_{\ell \in [B]}$ is sampled from  the state-action visitation measure $\sigma_{i}$ corresponding to the current policy $\pi_{\theta_i}$, and $B >0 $ is the batch size. Also, $\tilde Q_{\omega_i}$ is the modified critic. Specifically, for all $(s, a)\in\cS\times\cA$, we define
\#\label{eq::modify_critic}
\tilde Q_{\omega_i}(s, a) &= Q_{\max}\cdot\ind\bigl\{Q_{\omega_i}(s, a) \geq Q_{\max} \bigr\} - Q_{\max}\cdot \ind\bigl\{Q_{\omega_i}(s, a) \leq -Q_{\max} \bigr\} \\
&\qquad + Q_{\omega_i}(s, a)\cdot \ind\bigl\{-Q_{\max}<Q_{\omega_i}(s, a) < Q_{\max} \bigr\}, \notag
\#
where $Q_{\omega_i}$ is obtained from Algorithm \ref{alg::neural_TD} with $\pi_{\theta_i}$ as the input. We summarize projection-free neural policy gradient in Algorithm \ref{alg::PG_learning_woproj}.

 \begin{algorithm}[htpb]
\caption{Projection-Free Neural Policy Gradient}
\begin{algorithmic}[1]\label{alg::PG_learning_woproj}
\REQUIRE Number of iterations $T$, number of TD iterations $T_{{\rm TD}}$, learning rate $\eta$, learning rate $\eta_{{\rm TD}}$ of neural TD,  temperature parameters $\{\tau_i\}_{i\in[T+1]}$, and batch size $B$. 
\STATE {\bf Initialization:} Initialize $b_r \sim \text{Unif} ( \{ -1, 1\} )$ and $[ W_{\rm init}]_r \sim N(0, I_d/d)$ for all $r \in [m]$. Set $\cB \leftarrow \{\alpha \in \RR^{md}: \|\alpha -  W_{{\rm init}}\|_2 \leq R\}$ and $\theta_1 \leftarrow  W_{{\rm init}}$.
\FOR{$i \in [T]$}
\STATE \label{line::policy_eval}
Update $\omega_{i}$ using Algorithm \ref{alg::neural_TD} with $\pi_{\theta_i}$ as the input, $\omega(0) \leftarrow W_{{\rm init}}$ and $\{b_r\}_{r \in [m]}$ as the initialization, $T_{\rm TD}$ as the number of iterations, and $\eta_{\rm TD}$ as the learning rate.
\STATE Sample $\{(s_\ell, a_\ell)\}_{\ell \in [B]}$ from the visitation measure $\sigma_i$, and estimate $\tilde\nabla_\theta J(\pi_\theta)$  using \eqref{eq::grad_approx_tilde} and \eqref{eq::modify_critic}.
\STATE Update $\theta_{i+1}$ by $\theta_{i+1} \leftarrow  \theta_{i} + \eta\cdot  \tilde \nabla_{\theta} J(\pi_{\theta_i}) $.
\ENDFOR
\STATE {\bf Output:} $\{ \pi_{\theta_i}\}_{i\in[T+1]}$.
\end{algorithmic}
\end{algorithm}

\subsection{Convergence of Projection-Free Neural Policy Gradient}
\label{sec::conv_pg_wo_proj}
In this section, we show that the sequence $\{\theta_i\}_{i\in[T+1]}$ generated by projection-free neural policy gradient converges to a stationary point at a sublinear rate. In parallel to Assumption \ref{asu::PG_var_bound}, we lay out the following regularity condition on the moments of the estimator $\tilde \nabla_{\theta} J(\pi_{\theta_i})$. 
\begin{assumption}[Moment Upper Bound]
\label{asu::PG_var_bound_tilde}
Recall that $\sigma_i$ is the state-action visitation measure corresponding to $\pi_{\theta_i}$ for all $i \in [T] $. 
%Let $\{(s_{\ell}, a_{\ell} )\}_{\ell  \in [B]}$ be $B$ state-action pairs that are sampled from $ \sigma_i$. %Besides, 
Let  $\tilde\xi_i = \tilde\nabla_\theta J(\pi_{\theta_i}) - \EE[\tilde\nabla_\theta J(\pi_{\theta_i})]$, where $\tilde\nabla_\theta J(\pi_{\theta_i})$ is defined in \eqref{eq::grad_approx_tilde}. We assume that there exists absolute constants $\sigma_{\tilde\xi}, \varsigma_{\tilde\xi} >0$ such  that 
$
\EE[\|\tilde\xi_i\|^2_2]\leq \tau_i^2\cdot\sigma^2_{\tilde\xi}/B
$ 
and
$
\EE[\|\tilde\xi_i\|^3_2]\leq \tau_i^3\cdot\varsigma^3_{\tilde\xi}/B^{3/2}
$
for all $i \in [T]$.
Here the expectations are taken over $\sigma_i$ given $\theta_i$ and $\omega_i$.
%with respect to $\{(s_\ell, a_\ell)\}_{\ell \in [B]}$ that are sampled from $\sigma_i$. 
\end{assumption}

Similar to Theorem \ref{thm::PG_converge}, in the following theorem, we show that the sequence $\{\theta_i\}_{i\in[T+1]}$ generated by Algorithm \ref{alg::PG_learning_woproj} converges to a stationary point $\hat\theta$ with $\nabla_\theta J(\pi_{\hat\theta}) = 0$ at a sublinear rate.

\begin{theorem}[Convergence to Stationary Point]
\label{thm::PG_wo_proj}
Let $\eta = 1/\sqrt{T}$, $\tau_i = 1$, $\eta_{{\rm TD}} = \min\{(1 - \gamma)/8, 1/\sqrt{T_{{\rm TD}}}\}$,  and $T_{\rm TD} = \Omega(m)$ in Algorithm \ref{alg::PG_learning_woproj}. 
Under the assumptions of Proposition \ref{thm::Q_err_fin} and Assumptions \ref{asu::def_kappa}, \ref{asu::grad_lip}, and \ref{asu::PG_var_bound_tilde}, it holds for  $T \geq 4L^2$ and $B = \Omega (\sigma_{\tilde\xi}^2\cdot T^{1/2})$ that 
\$
\min_{i\in[T]}\EE\bigl[ \|   \nabla_{\theta} J(\pi_{\theta_i})  \| _2^2 \bigr] \leq 8/\sqrt{T}\cdot \EE\bigl[J(\pi_{\theta_{T+1}}) - J(\pi_{\theta_1})\bigr] + \epsilon_{\rm PG},
\$
where 
\$
\epsilon_{\rm PG} = \cO(T^{-1/2} +  R^{3/2}\cdot m^{-1/4}\cdot T + R^{5/4}\cdot m^{-1/8}\cdot T).
\$ Here the expectations are taken over all the randomness.
\end{theorem}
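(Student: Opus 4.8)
The plan is to mirror the proof of Theorem~\ref{thm::PG_converge}, replacing the gradient mapping $\rho_i$ by the plain gradient $\nabla_\theta J(\pi_{\theta_i})$ (there is no projection) and setting the increment to $\delta_i = (\theta_{i+1}-\theta_i)/\eta = \tilde\nabla_\theta J(\pi_{\theta_i})$. First I would invoke Assumption~\ref{asu::grad_lip} to obtain the ascent inequality $J(\pi_{\theta_{i+1}}) - J(\pi_{\theta_i}) \geq \eta\cdot\nabla_\theta J(\pi_{\theta_i})^\top\tilde\nabla_\theta J(\pi_{\theta_i}) - (L\eta^2/2)\cdot\|\tilde\nabla_\theta J(\pi_{\theta_i})\|_2^2$, exactly as in \eqref{eq::pg_pf_eq0}. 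The decisive new ingredient is the truncated critic $\tilde Q_{\omega_i}$ from \eqref{eq::modify_critic}: since $|\tilde Q_{\omega_i}|\leq Q_{\max}$ and $\|\nabla_\theta\log\pi_{\theta_i}(a\given s)\|_2 = \|\overline\phi_{\theta_i}(s,a)\|_2\leq 2$ (as $\|\phi_{\theta_i}\|_2\leq 1$), the estimator $\tilde\nabla_\theta J(\pi_{\theta_i})$ in \eqref{eq::grad_approx_tilde} is bounded almost surely by an absolute constant. This a.s.\ bound on the increment is what compensates for the absence of projection, which in Theorem~\ref{thm::PG_converge} was used through \eqref{eq::pg_pf_eq3} both to keep $\theta_i\in\cB$ and to bound $\|\delta_i\|_2$.

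Next I would decompose $\nabla_\theta J(\pi_{\theta_i})^\top\tilde\nabla_\theta J(\pi_{\theta_i})$ as in \eqref{eq::pg_pf_decomp} into a bias term $(\nabla_\theta J(\pi_{\theta_i}) - \EE[\tilde\nabla_\theta J(\pi_{\theta_i})])^\top\tilde\nabla_\theta J(\pi_{\theta_i})$, a noise term $-\tilde\xi_i^\top\tilde\nabla_\theta J(\pi_{\theta_i})$, and $\|\tilde\nabla_\theta J(\pi_{\theta_i})\|_2^2$; the last now replaces the projection identity $\hat\nabla_\theta J(\pi_{\theta_i})^\top\delta_i\geq\|\delta_i\|_2^2$ of \eqref{eq::pg_pf_eq2} by an exact equality. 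The conditional bias equals the critic error $\EE_{\sigma_i}[(\tilde Q_{\omega_i} - Q^{\pi_{\theta_i}})\cdot\overline\phi_{\theta_i}]$, whose norm I bound using $\|\overline\phi_{\theta_i}\|_2\leq 2$ together with the fact that truncation can only help, $|\tilde Q_{\omega_i} - Q^{\pi_{\theta_i}}|\leq|Q_{\omega_i} - Q^{\pi_{\theta_i}}|$ pointwise (since $|Q^{\pi_{\theta_i}}|\leq Q_{\max}$). Converting the resulting $L_2(\sigma_i)$-norm of the critic error to the $L_2(\varsigma_i)$-norm via the Radon--Nikodym bound $\kappa$ of Assumption~\ref{asu::def_kappa} --- exploiting the a.s.\ bound $|\tilde Q_{\omega_i}-Q^{\pi_{\theta_i}}|\leq 2Q_{\max}$ to trade a fourth moment for a second moment --- and then invoking Proposition~\ref{thm::Q_err_fin} yields the $m$-dependent error. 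In contrast to Lemma~\ref{lem::approx_PG_grad}, where the bias carried the factor $R/\eta$ coming from $\|\delta_i\|_2\leq 2R/\eta$, here $\|\tilde\nabla_\theta J(\pi_{\theta_i})\|_2 = \cO(1)$, so the bias coefficient is $\cO(1)$; this is precisely what lowers the powers of $R$ in $\epsilon_{\rm PG}$ relative to $\varepsilon_Q(T)$. The noise term is controlled by the second-moment bound in Assumption~\ref{asu::PG_var_bound_tilde}, while the third-moment bound there is needed to dominate the higher-order stochastic remainder that, in the projected analysis, was absorbed by the geometry of $\cB$.

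I would then establish the analogue of Lemma~\ref{lem::popullation_grad}, bounding $\EE[\|\nabla_\theta J(\pi_{\theta_i}) - \tilde\nabla_\theta J(\pi_{\theta_i})\|_2^2]$ by the variance plus the squared bias, so that the bound on the increment converts into a bound on the true gradient $\|\nabla_\theta J(\pi_{\theta_i})\|_2^2$ (the role of \eqref{eq::pg_pf_eq8}). Telescoping the per-iteration inequality over $i\in[T]$, using $\eta = 1/\sqrt{T}$ and $T\geq 4L^2$ so that $1 - L\eta\geq 1/2$, and passing to $\min_{i\in[T]}\leq T^{-1}\sum_{i}$, produces the leading term $8/\sqrt{T}\cdot\EE[J(\pi_{\theta_{T+1}}) - J(\pi_{\theta_1})]$. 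Aggregating the per-step contributions gives $\epsilon_{\rm PG}$: the variance enters through $\EE[\|\tilde\xi_i\|_2^2]\leq\sigma_{\tilde\xi}^2/B$ with a coefficient free of $\eta$, which forces $B = \Omega(\sigma_{\tilde\xi}^2\cdot T^{1/2})$ to render it $\cO(T^{-1/2})$, and the critic error enters through the accumulated $\sum_{i}\EE[\|Q^{\pi_{\theta_i}} - Q_{\omega_i}\|_{\varsigma_i}]$, which together with Proposition~\ref{thm::Q_err_fin} yields the $R^{3/2}m^{-1/4}$ and $R^{5/4}m^{-1/8}$ terms.

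The main obstacle is the loss of the projection. In Theorem~\ref{thm::PG_converge} the projection both confined $\theta_i$ to $\cB$ --- so that the critic bound of Proposition~\ref{thm::Q_err_fin} and the linearization estimates apply --- and furnished \eqref{eq::pg_pf_eq2}. Here $\theta_i$ may drift outside $\cB$ over the $T$ steps, so I must check that every estimate is insensitive to the location of $\theta_i$: the feature norm $\|\overline\phi_{\theta_i}\|_2\leq 2$ holds for all $\theta_i$, and Proposition~\ref{thm::Q_err_fin} depends only on $Q^{\pi_{\theta_i}}\in\cF_{R,\infty}$ (Assumption~\ref{asu::q_class}) and on the critic's own projection onto $\cB$ inside Algorithm~\ref{alg::neural_TD}, not on $\theta_i\in\cB$. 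The truncation of the critic is exactly the device that restores the almost-sure control over the increment that projection formerly supplied; the delicate part is the bookkeeping of the accumulated critic and noise errors so that they telescope into $\epsilon_{\rm PG}$ with the stated dependence on $T$, $R$, $m$, and $B$.
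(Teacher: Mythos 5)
Your proposal is correct and follows the same skeleton as the paper's proof (smoothness-based ascent inequality, decomposition of $\nabla_\theta J(\pi_{\theta_i})^\top\delta_i$ into bias, noise, and $\|\delta_i\|_2^2$, change of measure from $\sigma_i$ to $\varsigma_i$ via $\kappa$, Proposition \ref{thm::Q_err_fin} for the critic, then telescoping), but it diverges at the one step that actually determines the shape of $\epsilon_{\rm PG}$: the control of the increment. You bound $\|\delta_i\|_2=\|\tilde\nabla_\theta J(\pi_{\theta_i})\|_2\le 2Q_{\max}$ almost surely, using $|\tilde Q_{\omega_i}|\le Q_{\max}$ and $\|\overline\phi_{\theta_i}\|_2\le 2$; the paper instead writes $\|\theta_{i+1}-\theta_i\|_2\le\|\theta_i-W_{\rm init}\|_2+\|\theta_{i+1}-W_{\rm init}\|_2$ and bounds the cumulative drift by $2Q_{\max}\cdot\eta\cdot T+\sum_j\eta\|\tilde\xi_j\|_2=\cO(\sqrt{T})$ in expectation (see \eqref{eq::pg_wo_pf_eq4}--\eqref{eq::pg_wo_pf_eq9}), which is precisely where the factors $R^{3/2}m^{-1/4}\cdot T$ and $R^{5/4}m^{-1/8}\cdot T$ in the stated $\epsilon_{\rm PG}$ come from. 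Your per-step $\cO(\eta)$ bound on the increment makes the bias contribution $\cO(\kappa Q_{\max})\cdot\EE[\|Q^{\pi_{\theta_i}}-\tilde Q_{\omega_i}\|_{\varsigma_i}]$ per iteration with no accumulated $T$ factor, so you prove a strictly sharper bound that still implies the theorem; your other ingredients (truncation only decreases the pointwise critic error since $|Q^{\pi_{\theta_i}}|\le Q_{\max}$, Proposition \ref{thm::Q_err_fin} not requiring $\theta_i\in\cB$, the analogue of Lemma \ref{lem::popullation_grad} to pass from $\|\delta_i\|_2^2$ to $\|\nabla_\theta J(\pi_{\theta_i})\|_2^2$) all match the paper. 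One small inaccuracy: the third-moment bound in Assumption \ref{asu::PG_var_bound_tilde} is not needed to control any ``higher-order stochastic remainder'' in this theorem --- the paper's proof of Theorem \ref{thm::PG_wo_proj} uses only the second moment of $\tilde\xi_i$, and the third moment is invoked only later, in \eqref{eq::opt_stat_radius} of the global-optimality argument; under your tighter a.s.\ increment bound it is likewise unnecessary here.
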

\begin{proof}
Our proof aligns closely to  that of Theorem \ref{thm::PG_converge} in \S\ref{sec::pf_PG_converge}. We first lower bound the difference $J(\pi_{\theta_{i+1}}) - J(\pi_{\theta_i})$. By  Assumption \ref{asu::grad_lip}, we have
\#\label{eq::pg_wo_pf_eq0}
J(\pi_{\theta_{i+1}}) - J(\pi_{\theta_i}) \geq \eta\cdot  \nabla_\theta J(\pi_{\theta_i})^\top \delta_i - L/2\cdot \|\theta_{i+1} - \theta_i\|^2_2,
\#
where 
\$
\delta_i = (\theta_{i+1} - \theta_i)/\eta = \tilde\nabla_\theta J(\pi_{\theta_i}), \quad \forall i \in [T].
\$
Following the proof of   Lemma \ref{lem::approx_PG_grad} in \S\ref{pf::approx_PG_grad}, we obtain that
\#\label{eq::pg_wo_pf_eq1}
\Bigl|\Bigl(\nabla_\theta J(\pi_{\theta_i}) - \EE\bigl[\tilde \nabla_\theta J(\pi_{\theta_i})\bigr]\Bigr)^\top \delta_i\Bigr| \leq \kappa/\eta \cdot 2\|\theta_{i+1} - \theta_i\|_2\cdot \|Q^{\pi_{\theta_i}} - \tilde Q_{\omega_i}\|_{\varsigma_i},
\#
where the expectation is taken over $\sigma_i$ given $\theta_i$ and $\omega_i$. Recall that $\tilde\xi_i = \tilde \nabla_\theta J(\pi_{\theta_i}) - \EE[\tilde \nabla_\theta J(\pi_{\theta_i})]$, where the expectation is taken over $\sigma_i$ given $\theta_i$ and $\omega_i$. Following from \eqref{eq::pg_wo_pf_eq1}, we obtain that
\#\label{eq::pg_wo_pf_eq2}
\nabla_\theta J(\pi_{\theta_i})^\top \delta_i &= \Bigl(\nabla_\theta J(\pi_{\theta_i}) -\EE\bigl[\tilde \nabla_\theta J(\pi_{\theta_i})\bigr]\Bigr)^\top \delta_i - (\tilde\xi_i)^\top \delta_i + \tilde\nabla_\theta J(\pi_{\theta_i})^\top \delta_i\notag\\
& \geq - 2\kappa\cdot\|\theta_{i+1} - \theta_i\|_2/\eta \cdot \|Q^{\pi_{\theta_i}} - \tilde Q_{\omega_i}\|_{\varsigma_i} - \|\tilde \xi_i\|^2_2/2 + \|\delta_i\|^2_2/2, 
\#
where the second inequality follows similar analysis to \S\ref{sec::pf_PG_converge}. Hence, by plugging \eqref{eq::pg_wo_pf_eq2} into \eqref{eq::pg_wo_pf_eq0}, we have
\#\label{eq::pg_wo_pf_eq3}
&J(\pi_{\theta_{i+1}}) - J(\pi_{\theta_i}) \notag\\
&\quad\geq (\eta - L\cdot\eta^2)/2 \cdot \|\delta_i\|^2_2 - \eta\cdot \|\tilde\xi_i\|^2_2/2 - 2\kappa\cdot\|\theta_{i+1} - \theta_i\|_2 \cdot \|Q^{\pi_{\theta_i}} - \tilde Q_{\omega_i}\|_{\varsigma_i}.
\#
It remains to upper bound $\|\theta_{i+1} - \theta_i\|_2$. To this end, we use the fact that 
\$
\|\theta_{i+1} - \theta_i\|_2 \leq \|\theta_i -  W_{{\rm init}}\|_2 + \|\theta_{i+1} -  W_{{\rm init}}\|_2,
\$ 
and upper bound $\|\theta_i - W_{\rm init}\|_2$ and $\|\theta_{i+1} - W_{\rm init}\|_2$. By the actor update in Algorithm~\ref{alg::PG_learning_woproj}, we obtain for all $i >1$ that
\#\label{eq::pg_wo_pf_eq4}
\|\theta_i -  W_{{\rm init}}\|_2 \leq \sum_{j = 1}^{i-1}\eta\cdot \|\tilde \nabla_\theta J(\pi_{\theta_{j}})\|_2 \leq \sum_{j = 1}^{i-1}\eta\cdot \Bigl(\bigl \|\EE\bigl[\tilde\nabla_\theta J(\pi_{\theta_{j}}) \bigr]\bigr\|_2  + \|\tilde\xi_j\|_2\Bigr),
\#
where the expectation is taken over $\sigma_i$ given $\theta_i$ and $\omega_i$. Meanwhile, it holds that
\#\label{eq::pg_wo_pf_eq5}
\bigl\|\EE\bigl[\tilde \nabla_\theta J(\pi_{\theta_{j}}) \bigr]\bigr\|_2 = \bigl\|\EE_{\sigma_j}\bigl[\overline\phi_{\theta_j}(s, a)\cdot \tilde Q_{\omega_j}(s, a)\bigr]\bigr\|_2 \leq \EE_{\sigma_j}\bigl[\|\overline\phi_{\theta_j}(s, a)\|_2\cdot |\tilde Q_{\omega_j}(s, a)|\bigr],
\#
where $\overline\phi_{\theta_j}$ is the centered feature mapping defined in \eqref{eq::def_phi_c}, and the last inequality follows from the Jensen's inequality. We now upper bound the right-hand side of \eqref{eq::pg_wo_pf_eq5}. Note that $\|\overline\phi_{\theta_j}(s, a)\|_2 \leq 2$ for all $(s, a)\in\cS\times\cA$. Meanwhile, by \eqref{eq::modify_critic}, we obtain that 
\#\label{eq::pg_wo_pf_eq6}
|\tilde Q_{\omega_j}(s, a)| \leq Q_{\max}, \quad \forall (s, a)\in\cS\times\cA.
\#
By plugging \eqref{eq::pg_wo_pf_eq6} into \eqref{eq::pg_wo_pf_eq5}, we obtain for all $j\in[T]$ that
\#\label{eq::pg_wo_pf_eq8}
\bigl\|\EE\bigl[\tilde\nabla_\theta J(\pi_{\theta_{j}}) \bigr]\bigr\|_2 \leq 2Q_{\max}.
\#
By further plugging \eqref{eq::pg_wo_pf_eq8} into \eqref{eq::pg_wo_pf_eq4}, we obtain  for all $ i >1$  that
\#\label{eq::pg_wo_pf_eq9}
\|\theta_i -  W_{{\rm init}}\|_2 \leq 2Q_{\max}\cdot\eta\cdot T  + \sum_{j = 1}^{i-1}\eta\cdot\|\tilde\xi_j\|_2.
\#
We now lower bound the right-hand side of \eqref{eq::pg_wo_pf_eq3} based on \eqref{eq::pg_wo_pf_eq9}. Following from the Cauchy-Schwartz inequality and  Assumption   \ref{asu::PG_var_bound_tilde}, we obtain that
\#\label{eq::pg_wo_pf_eq9.5}
&\EE\bigl[\|\theta_i -  W_{{\rm init}}\|_2 \cdot \|Q^{\pi_{\theta_i}} - \tilde Q_{\omega_i}\|_{\varsigma_i}\bigr]\notag\\
 &\quad \leq  2Q_{\max}\cdot\eta\cdot T\cdot \Bigl\{\EE\bigl[\|Q^{\pi_{\theta_i}} - \tilde Q_{\omega_i}\|^2_{\varsigma_i}\bigr]\Bigr\}^{1/2}\notag\\
 &\quad\qquad+ \sum_{j = 1}^{i-1}\eta\cdot\Bigl\{\EE\bigl[\|\tilde \xi_i\|^2_2\bigr]\Bigr\}^{1/2}\cdot \Bigl\{\EE\bigl[\|Q^{\pi_{\theta_i}} - \tilde Q_{\omega_i}\|^2_{\varsigma_i}\bigr]\Bigr\}^{1/2}\notag\\
 &\quad \leq (2Q_{\max}\cdot\eta\cdot T + \sigma_{\tilde \xi}\cdot\eta\cdot T\cdot B^{-1/2})\cdot \Bigl\{\EE\bigl[\|Q^{\pi_{\theta_i}} - \tilde Q_{\omega_i}\|^2_{\varsigma_i}\bigr]\Bigr\}^{1/2},
\#
 where the expectations are taken over all the randomness, and  $\sigma_{\tilde\xi}$ is the absolute constant defined in Assumptions \ref{asu::PG_var_bound_tilde}. By plugging \eqref{eq::pg_wo_pf_eq9.5} into \eqref{eq::pg_wo_pf_eq3}, we obtain  that
\#\label{eq::pg_wo_pf_eq9.6}
&(\eta - L\cdot \eta^2)/2 \cdot \EE\bigl[\|\delta_i\|^2_2\bigr] \notag\\
& \quad \leq \EE\bigl[J(\pi_{\theta_{i+1}}) - J(\pi_{\theta_i})\bigr] + \eta\cdot \sigma^2_{\tilde\xi}/(2B) + R_0(T)\cdot\Bigl\{\EE\bigl[\|Q^{\pi_{\theta_i}} - Q_{\omega_i}\|^2_{\varsigma_i}\bigr]\Bigr\}^{1/2} ,
\#
where we use the fact that $\|\theta_{i+1} - \theta_i\|_2 \leq \|\theta_{i+1} - W_{\rm init}\|_2 + \|\theta_i - W_{\rm init}\|_2$. Here the expectations are taken over all the randomness, and $R_0(T)$  is defined by 
\$
R_0(T) = 4Q_{\max}\cdot\eta\cdot T + 2\sigma_{\tilde \xi}\cdot\eta\cdot T\cdot B^{-1/2}.
\$
%where $\epsilon_i = O(\sigma^2_\xi\cdot T^{-1/2} + T\cdot R^{5/2}\cdot m^{-1/4} + T\cdot R^{9/4}\cdot m^{-1/8})$. 
By Proposition \ref{thm::Q_err_fin} and Assumption \ref{asu::reg_cond}, we obtain for $\eta = 1/\sqrt{T}$, $B = \Omega(\sigma_{\tilde\xi}^{2}\cdot T^{1/2})$, and $T_{{\rm TD}} = \Omega(m)$ that 
\#\label{eq::pg_wo_pf_eq9.7}
R_0(T) = \cO(\sqrt{T}), \qquad \EE\bigl[\|Q^{\pi_{\theta_i}} - \tilde Q_{\omega_i}\|^2_{\varsigma_i}\bigr] &\leq  \EE\bigl[\|Q^{\pi_{\theta_i}} -  Q_{\omega_i}\|^2_{\varsigma_i}\bigr]\notag\\ &= \cO(R^{3}\cdot m^{-1/2} + R^{5/2}\cdot m^{-1/4}),
\# 
where the inequality holds since $|Q^{\pi_{\theta_i}}(s, a)| \leq Q_{\max}$ for all $(s, a)\in\cS\times\cA$. By plugging \eqref{eq::pg_wo_pf_eq9.7} into \eqref{eq::pg_wo_pf_eq9.6} with $\eta = 1/\sqrt{T}$ and $B = \Omega(\sigma_{\tilde\xi}^2\cdot T^{1/2})$, we obtain that
\#\label{eq::pg_wo_pf_eq14}
(1 - L/\sqrt{T})/2 \cdot \EE\bigl[\|\delta_i\|^2_2\bigr] \leq \sqrt{T}\cdot\EE\bigl[J(\pi_{\theta_{i+1}}) - J(\pi_{\theta_i})\bigr] +\epsilon_{\rm PG},
\#
where 
\#\label{eq::pg_wo_pf_epsilon}
\epsilon_{\rm PG} = \cO( T^{-1/2} +  R^{3/2}\cdot m^{-1/4}\cdot T+ R^{5/4}\cdot m^{-1/8}\cdot T ).
\#

It remains to upper bound $\|\delta_i - \nabla_\theta J(\pi_{\theta_i})\|_2$, where $\delta_i = \tilde\nabla J(\pi_{\theta_i})$. Following from similar analysis to \S\ref{pf::popullation_grad}, we obtain that
\$%#\label{eq::pg_wo_pf_eq13}
\EE\bigl[\| \nabla_{\theta} J(\pi_{\theta_i}) - \tilde\nabla_{\theta} J(\pi_{\theta_i})  \| _2^2\bigr] \leq 2\EE\bigl[\|\tilde\xi_i\|^2_2\bigr] + 8\kappa^2 \cdot \EE\bigl[\|Q^{\pi_{\theta_i}} - \tilde Q_{\omega_i}\|^2_{\varsigma_i}\bigr],
\$
where the expectations are taken over all the randomness. Therefore, following from Proposition \ref{thm::Q_err_fin} and Assumption \ref{asu::PG_var_bound_tilde}, it holds for $\eta = 1/\sqrt{T}$, $B = \Omega(\sigma_{\tilde\xi}^2\cdot T^{1/2})$,  and $T_{{\rm TD}} = \Omega(m)$ that
\#\label{eq::pg_wo_pf_eq13}
\EE\bigl[\| \nabla_{\theta} J(\pi_{\theta_i}) - \tilde \nabla_{\theta} J(\pi_{\theta_i})  \| _2^2\bigr]  = \cO( T^{-1/2} + R^{3}\cdot m^{-1/2} + R^{5/2}\cdot m^{-1/4}).
\#
Thus, combining \eqref{eq::pg_wo_pf_eq14} and \eqref{eq::pg_wo_pf_eq13}, we obtain for all $i\in[T]$ that
\#\label{eq::pg_wo_pf_eq15}
\EE\bigl [ \|   \nabla_{\theta} J(\pi_{\theta_i})  \| _2^2 \bigr ] &\leq 2 \EE \bigl[ \| \delta_i  \| _2^2 \bigr] + 2 \EE\bigl [ \|  \nabla_{\theta} J(\pi_{\theta_i}) - \tilde \nabla_{\theta} J(\pi_{\theta_i}) \| _2^2 \bigr ]\notag\\
&\leq 4(1 - L/\sqrt{T})\cdot \EE\bigl[\|\delta_i\|^2_2\bigr] + 2\EE\bigl[\| \nabla_{\theta} J(\pi_{\theta_i}) - \tilde \nabla_{\theta} J(\pi_{\theta_i})  \| _2^2\bigr]\notag\\
&\leq 8\sqrt{T}\cdot \EE\bigl[J(\pi_{\theta_{i+1}}) - J(\pi_{\theta_i})\bigr] +  \epsilon_{\rm PG},
\#
where we use the fact that $T \geq 4L^2$ and we define $\epsilon_{\rm PG}$ in \eqref{eq::pg_wo_pf_epsilon}. Finally, by telescoping \eqref{eq::pg_wo_pf_eq15}, we obtain that
\$
\min_{i\in[T]}\EE\bigl [ \|   \nabla_{\theta} J(\pi_{\theta_i})  \| _2^2 \bigr ] &\leq \frac{1}{T}\cdot\sum^{T}_{i = 1}\EE\bigl [ \|   \nabla_{\theta} J(\pi_{\theta_i})  \| _2^2 \bigr ]\leq 8 \EE\bigl[J(\pi_{\theta_{T+1}}) - J(\pi_{\theta_1})\bigr]\big/\sqrt{T} + \epsilon_{\rm PG},
\$
where 
\$
\epsilon_{\rm PG} = \cO( T^{-1/2} + R^{3/2}\cdot m^{-1/4}\cdot T   +  R^{5/4}\cdot m^{-1/8}\cdot T).
\$  
Here the expectations are taken over all the randomness. Thus, we complete the proof of Theorem \ref{thm::PG_wo_proj}.
\end{proof}
Following from Theorem \ref{thm::PG_wo_proj}, it holds for $m = \Omega(R^{10}\cdot T^{12})$ that 
$$\min_{i\in[T]}\EE\bigl [ \|  \nabla_{\theta} J(\pi_{\theta_i})  \| _2^2 \bigr ] = \cO(1/\sqrt{T}). $$ 
Therefore, $\theta_i$ converges to a stationary point at a $1/\sqrt{T}$-rate if the width $m$ of the two-layer neural network and the batch size $B$ are sufficiently large.
We highlight that compared with neural policy gradient with projection in the actor update, Algorithm \ref{alg::PG_learning_woproj} needs a larger width $m$ to achieve the $1/\sqrt{T}$-rate of convergence. Such a stronger requirement on $m$ is the extra price to pay for using the projection-free    actor update.

\subsection{Global Optimality of Projection-Free Neural Policy Gradient}
\label{subsec::optimality_stat_pt}
In this section, we characterize the global optimality of projection-free neural policy gradient. We define a sequence of parameter spaces $\{\cB_i\}_{i\in[T]}$ as follows,
\#\label{eq::optimality_stat_para_space}
\cB_i = \bigl\{\alpha\in\RR^{md}: \|\alpha - \theta_i\|_2 \leq \overline R_0 \bigr\}, \quad \forall i \in [T],
\#
where $\overline R_0 \geq 1$ is an absolute constant. The sequence $\{\cB_{i}\}_{i\in[T]}$ characterizes the global optimality of the parameter sequence $\{\theta_i\}_{i\in[T]}$. Specifically, similar to \eqref{eq::stat_theta_i}, we have
\$
\nabla_\theta J(\pi_{\theta_i})^\top( \theta - \theta_i) \leq \|\theta - \theta_i\|_2\cdot\|\nabla_\theta J(\pi_{\theta_i})\|_2 \leq \overline R_0\cdot\|\nabla_\theta J(\pi_{\theta_i})\|_2, \quad \forall \theta \in \cB_i,~\forall i\in[T],
\$
where the first inequality follows from the Cauchy-Schwartz inequality. Following similar analysis to \S\ref{pf::optimality_stat_point}, we obtain for all $i \in [T]$ that
 \#\label{eq::opt_stat_eq2}
 &(1 - \gamma)\cdot\bigl(J(\pi^*) - J(\pi_{\theta_i})\bigr) \notag\\
 &\quad\leq 2Q_{\max}\cdot \inf_{\theta\in \cB_i}\|u_{\theta_i}(\cdot, \cdot ) - \phi_{\theta_i}(\cdot , \cdot )^\top \theta\|_{\sigma_i} + \overline R_0\cdot\|\nabla_\theta J(\pi_{\theta_i})\|_2.
 \#
We now introduce the parameter space $\overline\cB_T$ that includes the sequence $\{\theta_i\}_{i\in[T]}$ and the parameter space $\cB_i$ as its subspace for all $i\in[T]$ as follows,
\#\label{eq::optimality_stat_tot_space}
\overline\cB_T = \bigl\{\alpha \in \RR^{md}: \|\alpha - W_{\rm init}\|_2 \leq R(T) + \overline R_0 \bigr\},
\#
where
\#\label{eq::def_R_T}
R(T) = 2Q_{\max}\cdot\eta\cdot T + \eta\cdot\sum_{i = 1}^{T}\|\tilde \xi_i\|_2.
\#
Here $\tilde \xi_i$ is defined in Assumption \ref{asu::PG_var_bound_tilde}. Following from \eqref{eq::pg_wo_pf_eq4} and \eqref{eq::pg_wo_pf_eq8} in the proof of Theorem \ref{thm::PG_wo_proj} in \S\ref{sec::conv_pg_wo_proj}, we have $\theta_i \in \overline\cB_T$ for all $i\in[T]$.~By Corollary \ref{cor::conv_to_nn}, $\phi_{\theta_i}(\cdot,\cdot)^\top \theta$ is well approximated by  $f((\cdot, \cdot) ; \theta)$ for $\theta, \theta_i \in \overline\cB_T$ when the width $m$ is sufficiently large. Thus, following from \eqref{eq::opt_stat_eq2}, for a sufficiently large $m$, the suboptimality of $\theta_i$ is characterized by $\|\nabla_\theta J(\pi_{\theta_i})\|_2$, which is further quantified by Theorem \ref{thm::PG_wo_proj}, and the approximation error $\inf_{\theta\in \cB_i}\|u_{\theta_i}(\cdot, \cdot ) - f((\cdot, \cdot); \theta)\|_{\sigma_i}$, which quantifies the representation power of the overparameterized two-layer neural networks. In the following theorem, we present a sufficient condition for the output of projection-free neural policy gradient to be globally optimal.

\begin{theorem}[Global Optimality of Projection-Free Neural Policy Gradient]
\label{prop::optimality_inf}
Let $\eta = 1/\sqrt{T}$, $\tau_i = 1$, $\eta_{{\rm TD}} = \min\{(1 - \gamma)/8, 1/\sqrt{T_{{\rm TD}}}\}$,  and $T_{\rm TD} = \Omega(m)$ in Algorithm \ref{alg::PG_learning_woproj}. We define
\$
\tilde u_{\theta_i}(s, a) = u_{\theta_i}(s, a) + \phi_{\theta_i}(s, a)^\top (W_{\rm init} - \theta_i), \quad \forall (s, a)\in\cS\times\cA.
\$
Here $u_{\theta_i}$ is defined in \eqref{eq::def_u_func} of Theorem \ref{thm::optimality_stat_point} with $\hat\theta = \theta_i$, and $\phi_{\theta_i}$ is the feature mapping defined in \eqref{eq::def_phi} with $\theta = \theta_i$. Under the assumptions of Theorem \ref{thm::PG_wo_proj}, if it holds that
\$
\tilde u_{\theta_i} \in \cF_{\overline R_0, \infty}, \quad \forall i\in[T],
\$ 
then for $T\geq 4L^2$, $B = \Omega(T^{1/2})$, and $m = \Omega(R^{10}\cdot T^{12})$, we have
\$
(1 - \gamma)\cdot\min_{i\in[T]}\EE\bigl[J(\pi^*) - J(\pi_{\theta_i})\bigr] = \cO( \overline R_0\cdot T^{-1/4}).
\$
Here the expectation is taken over all the randomness.
\end{theorem}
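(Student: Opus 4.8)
The plan is to start from the pointwise suboptimality bound \eqref{eq::opt_stat_eq2}, already established in \S\ref{subsec::optimality_stat_pt}, which for each $i\in[T]$ reads
\[
(1-\gamma)\bigl(J(\pi^*)-J(\pi_{\theta_i})\bigr)\le 2Q_{\max}\cdot\inf_{\theta\in\cB_i}\|u_{\theta_i}(\cdot,\cdot)-\phi_{\theta_i}(\cdot,\cdot)^\top\theta\|_{\sigma_i}+\overline R_0\cdot\|\nabla_\theta J(\pi_{\theta_i})\|_2.
\]
Here the second term measures near-stationarity and is controlled directly by Theorem \ref{thm::PG_wo_proj}, while the first is a representation error into which the hypothesis $\tilde u_{\theta_i}\in\cF_{\overline R_0,\infty}$ feeds. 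I would bound the two terms separately and then combine by taking expectations and minimizing over $i$.

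For the representation term I would first remove the dependence on $\theta_i$ in the center of $\cB_i$ through the substitution $\theta'=\theta-\theta_i+W_{\rm init}$, which maps $\cB_i$ bijectively onto $\{\alpha:\|\alpha-W_{\rm init}\|_2\le\overline R_0\}$ and, by the very definition of $\tilde u_{\theta_i}$, gives $u_{\theta_i}-\phi_{\theta_i}^\top\theta=\tilde u_{\theta_i}-\phi_{\theta_i}^\top\theta'$; hence $\inf_{\theta\in\cB_i}\|u_{\theta_i}-\phi_{\theta_i}^\top\theta\|_{\sigma_i}=\inf_{\|\theta'-W_{\rm init}\|_2\le\overline R_0}\|\tilde u_{\theta_i}-\phi_{\theta_i}^\top\theta'\|_{\sigma_i}$. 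From here I would follow the route of \S\ref{pf::optimality_R}: since $\tilde u_{\theta_i}\in\cF_{\overline R_0,\infty}$, the random-feature projection underlying Lemma \ref{lem::inf_dist} furnishes a feasible parameter $\theta^\star=\Pi_{\overline\cF_{\overline R_0,m}}\tilde u_{\theta_i}$ with $\|\theta^\star-W_{\rm init}\|_2\le\overline R_0$ such that the linearized network $\phi_0(\cdot,\cdot)^\top\theta^\star$ matches $\tilde u_{\theta_i}$ with $L_2(\sigma_i)$-error $\cO(\overline R_0\,m^{-1/2})$ in expectation, and then I would pass from $\phi_0$ to $\phi_{\theta_i}$ via the linearization error of Lemma \ref{lem::linear_err}. (As in \S\ref{pf::optimality_R}, the correlation between $\tilde u_{\theta_i}$ and the random features is absorbed into the definition of $\cF_{\overline R_0,\infty}$ relative to $f_0$.)

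The new difficulty relative to the projected analysis of Theorem \ref{prop::optimality_R} is that, without projection, $\theta_i$ lies only in $\overline\cB_T$ of radius $R(T)+\overline R_0$ defined in \eqref{eq::optimality_stat_tot_space}, with $R(T)$ given by \eqref{eq::def_R_T}. Consequently Lemma \ref{lem::linear_err} must be invoked at this inflated radius, yielding a linearization error of order $(R(T)+\overline R_0)^{3/2}m^{-1/4}$ rather than $\overline R_0^{3/2}m^{-1/4}$. Since $R(T)=2Q_{\max}\eta T+\eta\sum_{i=1}^T\|\tilde\xi_i\|_2$ is itself random and grows like $\sqrt T$, I would control its third moment using $\EE[\|\tilde\xi_i\|_2^3]\le\tau_i^3\varsigma_{\tilde\xi}^3/B^{3/2}$ from Assumption \ref{asu::PG_var_bound_tilde} together with a power-mean inequality, obtaining $\EE[(R(T)+\overline R_0)^3]=\cO(T^{3/2})$ for $\eta=1/\sqrt T$ and $B=\Omega(T^{1/2})$; hence the linearization contribution is $\cO(T^{3/4}m^{-1/4})$ in expectation and is the same for every $i$, as $R(T)$ does not depend on $i$. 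I expect this drift control — tracking how far the iterate can wander absent projection and feeding the resulting random radius into the linearization bound — to be the main technical obstacle, and it is exactly what forces the stronger overparameterization $m=\Omega(R^{10}T^{12})$.

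Finally I would assemble the pieces. Under $m=\Omega(R^{10}T^{12})$ both the random-feature error $\cO(\overline R_0 m^{-1/2})$ and the linearization error $\cO(T^{3/4}m^{-1/4})$ are $\cO(\overline R_0 T^{-1/4})$ (indeed far smaller), uniformly in $i$. For the near-stationarity term, Theorem \ref{thm::PG_wo_proj} gives $\min_{i\in[T]}\EE[\|\nabla_\theta J(\pi_{\theta_i})\|_2^2]=\cO(1/\sqrt T)$ for the prescribed $\eta,B,m$, whence $\min_{i}\EE[\|\nabla_\theta J(\pi_{\theta_i})\|_2]=\cO(T^{-1/4})$ by Jensen's inequality. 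Taking expectations in the displayed bound, using that the representation term is bounded uniformly in $i$ by some $\epsilon_{\rm approx}=\cO(\overline R_0 T^{-1/4})$ while the gradient term attains its minimum at an index $i^\star$, and evaluating the left-hand side at $i^\star$, I would conclude
\[
(1-\gamma)\min_{i\in[T]}\EE\bigl[J(\pi^*)-J(\pi_{\theta_i})\bigr]\le 2Q_{\max}\cdot\epsilon_{\rm approx}+\overline R_0\cdot\min_{i\in[T]}\EE\bigl[\|\nabla_\theta J(\pi_{\theta_i})\|_2\bigr]=\cO(\overline R_0\, T^{-1/4}),
\]
which is the claimed rate since $\overline R_0\ge1$.
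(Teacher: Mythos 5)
Your proposal is correct and follows essentially the same route as the paper's proof: the same decomposition via \eqref{eq::opt_stat_eq2}, the same recentering identity relating $u_{\theta_i}-\phi_{\theta_i}^\top\theta$ to $\tilde u_{\theta_i}$ over the shifted ball, the same use of the random-feature projection and Lemma \ref{lem::linear_err} at the inflated radius $R(T)+\overline R_0$ with the moment bounds $\EE[R(T)^2]=\cO(T)$ and $\EE[R(T)^3]=\cO(T^{3/2})$, and the same invocation of Theorem \ref{thm::PG_wo_proj} with Jensen's inequality for the gradient term. No substantive differences from the paper's argument.
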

\begin{proof}
To prove Theorem \ref{prop::optimality_inf}, it suffices to upper bound the expectation of the right-hand side of \eqref{eq::opt_stat_eq2} over all the randomness. We first upper bound the following term,
\$
\EE\Bigl[\inf_{\theta\in \cB_i}\|u_{\theta_i}(\cdot, \cdot ) - \phi_{\theta_i}(\cdot , \cdot )^\top \theta\|_{\sigma_i}\Bigr],
\$
where the expectation is taken over all the randomness. Note that
\#\label{eq::opt_stat_identity}
u_{\theta_i}(s, a ) - \phi_{\theta_i}(s , a )^\top \theta &= \tilde u_{\theta_i}(s, a) + \phi_{\theta_i}(s, a)^\top\theta_i - \phi_{\theta_i}(s, a)^\top W_{\rm init} - \phi_{\theta_i}(s, a)^\top \theta\notag\\
&= \tilde u_{\theta_i}(s, a) - \phi_0(s, a)^\top (\theta - \theta_i + W_{\rm init}) \\
&\qquad - \bigl(\phi_{\theta_i}(s, a) - \phi_0(s, a)\bigr)^\top(\theta - \theta_i + W_{\rm init}),\notag
\#
which holds for all $(s, a)\in\cS\times\cA$ and $\theta \in \cB_i$ with $\cB_i$ defined in \eqref{eq::optimality_stat_para_space}. Therefore, by the triangle inequality, we obtain from \eqref{eq::opt_stat_identity} that
\#\label{eq::opt_stat_identity1}
\inf_{\theta\in \cB_i}\|u_{\theta_i}(\cdot, \cdot ) - \phi_{\theta_i}(\cdot , \cdot )^\top \theta\|_{\sigma_i} &\leq \inf_{\theta\in\cB_i}\Bigl\{\|\tilde u_{\theta_i}(\cdot, \cdot) - \phi_0(\cdot, \cdot)^\top (\theta - \theta_i + W_{\rm init})\|_{\sigma_i} \\
&\qquad\qquad + \bigl\|\bigl(\phi_{\theta_i}(\cdot, \cdot) - \phi_0(\cdot, \cdot)\bigr)^\top(\theta - \theta_i+ W_{\rm init})\bigr\|_{\sigma_i}\Bigr\}.\notag
\#
We now upper bound the right-hand side of \eqref{eq::opt_stat_identity1}. In what follows, we define $\tilde\theta_i$ by
\$
\phi_0(\cdot, \cdot)^\top\tilde\theta_i =  \Pi_{\tilde\cF_{\overline R_0, m}}\tilde u_{\theta_i}(\cdot, \cdot),
\$
where $\Pi_{\tilde\cF_{\overline R_0, m}}$ is the projection operator onto $\tilde\cF_{\overline R_0, m}$. It then follows from the definition of $\tilde\cF_{\overline R_0, m}$ in Definition \ref{def::function_class} that $\tilde\theta_i \in \cB_1 = \{\alpha\in\RR^{md}: \|\alpha - W_{\rm init}\|_2 \leq \overline R_0\}$ for all $i\in[T]$. Meanwhile, by the definition of $\cB_i$ in \eqref{eq::optimality_stat_para_space}, we have
\#\label{eq::opt_stat_identity2}
\tilde\theta_i  + \theta_i - W_{\rm init}\in \cB_i, \quad \forall i \in [T].
\#
Combining \eqref{eq::opt_stat_identity1} and \eqref{eq::opt_stat_identity2}, we have
\#\label{eq::opt_stat_identity3}
\inf_{\theta\in \cB_i}\|u_{\theta_i}(\cdot, \cdot ) - \phi_{\theta_i}(\cdot , \cdot )^\top \theta\|_{\sigma_i} &\leq\|\tilde u_{\theta_i}(\cdot, \cdot) - \phi_0(\cdot, \cdot)^\top \tilde \theta_i \|_{\sigma_i}+ \bigl\|\bigl(\phi_{\theta_i}(\cdot, \cdot) - \phi_0(\cdot, \cdot)\bigr)^\top\tilde \theta_i\bigr\|_{\sigma_i}.
\#
Now, it suffices to upper bound the right-hand side of \eqref{eq::opt_stat_identity3}. Following from the proof of Proposition \ref{thm::Q_err_fin} in \S\ref{pf::Q_err_fin}, we obtain for $\tilde u_{\theta_i}\in\cF_{\overline R_0, \infty}$ that
\#\label{eq::opt_stat_identity3_bound1}
\EE\bigl[\|\tilde u_{\theta_i}(\cdot, \cdot) - \phi_0(\cdot, \cdot)^\top \tilde \theta_i \|_{\sigma_i}\bigr] &= \EE\Bigl[\bigl\|\tilde u_{\theta_i}(\cdot, \cdot) - \Pi_{\tilde\cF_{R_0, m}}\tilde u_{\theta_i}(\cdot, \cdot)\bigr\|_{\sigma_i}\Bigr] = \cO(\overline R_0\cdot m^{-1/2}),
\#
where the expectations are taken over all the randomness. Meanwhile, note that 
\$
\|\tilde \theta_i - W_{\rm init}\|_2 \leq \overline R_0 \leq \overline R_0 +  R(T),
\$
where $R(T)$ is defined in \eqref{eq::def_R_T}. Therefore, we obain that $\theta_i, \tilde \theta_i \in \overline\cB_T$. By Assumption \ref{asu::PG_var_bound_tilde}, we obtain for $\eta = 1/\sqrt{T}$ and $B = \Omega(T^{1/2})$ that
\#\label{eq::opt_stat_radius}
\EE\bigl[R(T)^2\bigr] = \cO(T), \qquad \EE\bigl[R(T)^3\bigr] = \cO(T^{3/2}),
\#
where the expectations are taken over all the randomness given $W_{\rm init}$. Thus, following from \eqref{eq::opt_stat_radius}, Assumption \ref{asu::reg_cond}, and Lemma \ref{lem::linear_err}, we obtain for all $\theta_i, \tilde\theta_i \in \overline\cB_T$ that
\#\label{eq::opt_stat_identity3_bound2}
&\EE\bigl[\| \phi_{ 0}(\cdot, \cdot)^\top \tilde\theta_i - \phi_{\theta_i}(\cdot , \cdot )^\top \tilde\theta_i\|_{\sigma_{i}}\bigr] \notag\\
&\quad\leq \Bigl\{\EE\bigl[\| \phi_{0}(\cdot, \cdot)^\top \tilde\theta_i - \phi_{\theta_i}(\cdot , \cdot )^\top \tilde\theta_i\|^2_{\sigma_{i}}\bigr]\Bigr\}^{1/2} = \cO(T^{3/4}\cdot m^{-1/4}).
\#
By plugging \eqref{eq::opt_stat_identity3_bound1} and \eqref{eq::opt_stat_identity3_bound2} into \eqref{eq::opt_stat_identity3}, we have
\#\label{eq::opt_stat_eq7}
&\EE\Bigl[\inf_{\theta\in \cB_i}\|u_{\theta_i}(\cdot, \cdot ) - \phi_{\theta_i}(\cdot , \cdot )^\top \theta\|_{\sigma_i}\Bigr]\notag\\
&\quad \leq \EE\bigl[\|\tilde u_{\theta_i}(\cdot, \cdot) - \phi_0(\cdot, \cdot)^\top \tilde \theta_i \|_{\sigma_i}\bigr] + \EE\bigl[\| \phi_{ 0}(\cdot, \cdot)^\top \tilde\theta_i - \phi_{\theta_i}(\cdot , \cdot )^\top \tilde\theta_i\|_{\sigma_{i}}\bigr] \notag\\
&\quad = \cO(\overline R_0\cdot m^{-1/2} + T^{3/4}\cdot m^{-1/4}),
\#
which holds for all $i\in[T]$. 

Meanwhile, by Theorem \ref{thm::PG_wo_proj}, we obtain for $B = \Omega(T^{1/2})$ and $m = \Omega(R^{10}\cdot T^{12})$ that
\#\label{eq::opt_stat_eq8}
\min_{i\in[T]}\EE\bigl[ \|   \nabla_{\theta} J(\pi_{\theta_i})  \| _2 \bigr] = \cO(T^{-1/4}).
\#
Thus, by plugging \eqref{eq::opt_stat_eq7} and \eqref{eq::opt_stat_eq8} with $m = \Omega(R^{10}\cdot T^{12})$ into \eqref{eq::opt_stat_eq2}, we complete the proof of Theorem \ref{prop::optimality_inf}.
\end{proof}
By Theorem \ref{prop::optimality_inf}, it holds for  sufficiently large width $m$ and batch size $B$ that the expected total reward $J(\pi_{\theta_i})$ converges to the global optimum $J(\pi^*)$ at a $1/T^{1/4}$-rate.

\section{Proof of Auxiliary Results} \label{sec:proofs_aux_results}

In this section, we lay out the proof of the auxiliary results. 
  
 \subsection{Proof of Proposition \ref{prop::para_grad_fisher}}
 \label{pf::para_grad_fisher}
 \begin{proof}
 The proof is based on the policy gradient  theorem \citep{sutton2018reinforcement} in  \eqref{eq::policy_grad_thm} and the definition of the Fisher information matrix in \eqref{eq::Fisher_info}. It suffices to calculate $\nabla_\theta \log \pi_\theta(\cdot\,|\,\cdot)$. By the definition of $\pi_\theta(\cdot\,|\,\cdot)$ in \eqref{eq::policy_para}, it holds for all $(s,a)\in\cS\times \cA$ that
 \#\label{eq::pf_para_grad_fisher_eq1}
 \nabla_\theta \log \pi_\theta(a\,|\,s) &= \tau\cdot  \nabla_\theta f\bigl((s, a); \theta\bigr) - \tau\cdot\frac{\sum_{a'\in\cA} \nabla_\theta f\bigl((s, a'); \theta\bigr)\cdot\exp\bigl [ \tau\cdot f\bigl((s, a'); \theta\bigr)\bigr ] }{\sum_{a'\in\cA} \exp\bigl [ \tau\cdot f\bigl((s, a'); \theta\bigr)\bigr ] }\notag\\
 &=\tau \cdot  \nabla_\theta f\bigl((s, a); \theta\bigr) - \tau \cdot \EE_{\pi_\theta}\Bigl[\nabla_\theta f\bigl((s, a'); \theta\bigr)\Bigr],
 \#
 where we write $\EE_{\pi_\theta}[\nabla_\theta f((s, a'); \theta)] = \EE_{a'\sim\pi_\theta(\cdot\,|\,s)}[\nabla_\theta f((s, a'); \theta)]$ for notational simplicity. Meanwhile, recall that $\nabla_\theta f((\cdot, \cdot); \theta) = \phi_\theta(\cdot, \cdot)$, where $\phi_{\theta}$ is the feature mapping defined in \eqref{eq::def_phi}. Thus, \eqref{eq::pf_para_grad_fisher_eq1} implies that
 \#\label{eq::pf_para_grad_fisher_eq22}
 \nabla_\theta \log \pi_\theta(a\,|\,s) = \tau\cdot\phi_\theta(s, a) - \tau\cdot\EE_{\pi_\theta}\bigl[\phi_\theta(s, a')\bigr]. 
 \# 
 Finally, by plugging \eqref{eq::pf_para_grad_fisher_eq22} into \eqref{eq::policy_grad_thm} and  \eqref{eq::Fisher_info}, we have   
 \$
 &\nabla_\theta J(\pi_\theta) = \tau\cdot\EE_{\sigma_{\pi_\theta}}\Bigl[Q^{\pi_\theta}(s, a)\cdot\Bigl(\phi_\theta(s, a) - \EE_{\pi_\theta}\bigl[\phi_\theta(s, a')\bigr]\Bigr)\Bigr],\notag\\
 &F(\theta) = \tau^2\cdot\EE_{\sigma_{\pi_\theta}}\Bigl[\Bigl(\phi_\theta(s, a) - \EE_{\pi_\theta}\bigl[\phi_\theta(s, a')\bigr]\Bigr)\Bigl(\phi_\theta(s, a) - \EE_{\pi_\theta}\bigl[\phi_\theta(s, a')\bigr]\Bigr)^\top\Bigr],
 \$
 which concludes the proof of Proposition \ref{prop::para_grad_fisher}.
 \end{proof}

\subsection{Proof of Theorem \ref{prop::optimality_R}}
\label{pf::optimality_R}
\begin{proof}
By Theorem \ref{thm::optimality_stat_point}, we have
\#\label{eq::pf_opt_R_eq1}
(1 - \gamma)\cdot\bigl(J(\pi^*) - J(\pi_{\hat\theta})\bigr) \leq 2Q_{\max}\cdot \inf_{\theta\in \cB}\|u_{\hat\theta}(\cdot, \cdot ) - \phi_{\hat\theta}(\cdot , \cdot )^\top \theta\|_{\sigma_{\pi_{\hat\theta}}},
\#
where $u_{\hat\theta} $ is defined in \eqref{eq::def_u_func}. It suffices to upper bound the right-hand side of \eqref{eq::pf_opt_R_eq1} under the expectation over the random initialization. Following from the  triangle  inequality, we obtain that
\#\label{eq::pf_opt_R_eq2}
&\inf_{\theta\in \cB}\|u_{\hat\theta}(\cdot, \cdot ) - \phi_{\hat\theta}(\cdot , \cdot )^\top \theta\|_{\sigma_{\pi_{\hat\theta}}} \notag\\
&\quad\leq \inf_{\theta\in \cB}\Bigl\{ \bigl\|u_{\hat\theta}(\cdot, \cdot ) - \Pi_{\tilde\cF_{R, m}}u_{\hat\theta}(\cdot, \cdot ) \bigr\|_{\sigma_{\pi_{\hat\theta}}} + \bigl\| \Pi_{\tilde\cF_{R, m}}u_{\hat\theta}(\cdot, \cdot ) - \phi_{\hat\theta}(\cdot , \cdot )^\top \theta\bigr\|_{\sigma_{\pi_{\hat\theta}}}\Bigr\}\notag\\
&\quad = \bigl\|u_{\hat\theta}(\cdot, \cdot ) - \Pi_{\tilde\cF_{R, m}}u_{\hat\theta}(\cdot, \cdot ) \bigr\|_{\sigma_{\pi_{\hat\theta}}} + \inf_{\theta\in \cB}\bigl\| \Pi_{\tilde\cF_{R, m}}u_{\hat\theta}(\cdot, \cdot ) - \phi_{\hat\theta}(\cdot , \cdot )^\top \theta\bigr\|_{\sigma_{\pi_{\hat\theta}}},
\#
where $\tilde\cF_{R, m}$ is defined in Definition~\ref{def::function_class}. It remains to upper bound the right-hand side of \eqref{eq::pf_opt_R_eq2}. In what follows, we define $\tilde \theta$ by 
\$
\phi_{ 0}(\cdot, \cdot)^\top \tilde\theta = \Pi_{\tilde\cF_{R, m}}u_{\hat\theta}(\cdot, \cdot ) \in \tilde\cF_{R, m},
\$
where $ \phi_0$ is the feature mapping defined in \eqref{eq::def_phi} with $\theta = W_{\rm init}$. By the definition of $\tilde\cF_{R, m}$ in Definition~\ref{def::function_class}, it holds that  $\tilde\theta\in \cB = \{\alpha\in\RR^{md}: \|\alpha -  W_{{\rm init}}\|_2 \leq R\}$. Thus, by \eqref{eq::pf_opt_R_eq2} and the fact that $\tilde \theta \in \cB$, we have 
\#\label{eq::pf_opt_R_eq3}
&\inf_{\theta\in \cB}\|u_{\hat\theta}(\cdot, \cdot ) - \phi_{\hat\theta}(\cdot , \cdot )^\top \theta\|_{\sigma_{\pi_{\hat\theta}}}\leq \|u_{\hat\theta}(\cdot, \cdot ) - \phi_{0}(\cdot, \cdot)^\top \tilde\theta \|_{\sigma_{\pi_{\hat\theta}}} + \| \phi_{0}(\cdot, \cdot)^\top \tilde\theta - \phi_{\hat\theta}(\cdot , \cdot )^\top \tilde\theta\|_{\sigma_{\pi_{\hat\theta}}}.
\#
Following from the proof of Proposition \ref{thm::Q_err_fin} in \S\ref{pf::Q_err_fin}, it holds for $u_{\hat\theta} \in \cF_{R, \infty}$ that
\#\label{eq::pf_opt_R_eq4}
&\EE_{{\rm init}}\bigl[\|u_{\hat\theta}(\cdot, \cdot ) - \phi_{\hat\theta}(\cdot , \cdot )^\top \tilde\theta\|_{\sigma_{\pi_{\hat\theta}}}\bigr]\notag\\
&\quad \leq \Bigl\{\EE_{{\rm init}}\bigl[\|u_{\hat\theta}(\cdot, \cdot ) - \phi_{\hat\theta}(\cdot , \cdot )^\top \tilde\theta\|^2_{\sigma_{\pi_{\hat\theta}}}\bigr]\Bigr\}^{1/2} = \cO(R\cdot m^{-1/2}),
\#
where the first inequality follows from the Jensen's inequality, and the expectations are taken over the random initialization. Meanwhile, following from Lemma \ref{lem::linear_err}, we obtain for all $\hat\theta, \tilde\theta \in \cB$ that
\#\label{eq::pf_opt_R_eq5}
&\EE_{{\rm init}}\bigl[\| \phi_{ 0}(\cdot, \cdot)^\top \tilde\theta - \phi_{\hat\theta}(\cdot , \cdot )^\top \tilde\theta\|_{\sigma_{\pi_{\hat\theta}}}\bigr] \notag\\
&\quad\leq \Bigl\{\EE_{{\rm init}}\bigl[\| \phi_{ 0}(\cdot, \cdot)^\top \tilde\theta - \phi_{\hat\theta}(\cdot , \cdot )^\top \tilde\theta\|^2_{\sigma_{\pi_{\hat\theta}}}\bigr]\Bigr\}^{1/2} = \cO(R^{3/2}\cdot m^{-1/4}),
\#
where the expectations are taken over the random initialization. Finally, by plugging \eqref{eq::pf_opt_R_eq4} and \eqref{eq::pf_opt_R_eq5} into \eqref{eq::pf_opt_R_eq3}, we obtain that
\$
(1 - \gamma)\cdot\EE_{\rm init}\bigl[J(\pi^*) - J(\pi_{\hat\theta}) \bigr]\leq2Q_{\max}\cdot \EE_{{\rm init}}\Bigl[\inf_{\theta\in \cB}\|u_{\hat\theta}(\cdot, \cdot ) - \phi_{\hat\theta}(\cdot , \cdot )^\top \theta\|_{\sigma_{\pi_{\hat\theta}}}\Bigr] =\cO(R^{3/2}\cdot m^{-1/4}),
\$
where the first inequality follows from \eqref{eq::pf_opt_R_eq1}. Similarly, if the assumption that $u_{\hat\theta} \in \cF_{R, \infty}$ is not imposed, we conclude that
\$
(1 - \gamma)\cdot\EE_{\rm init}\bigl[J(\pi^*) - J(\pi_{\hat\theta})\bigr]\leq\cO(R^{3/2}\cdot m^{-1/4}) + \EE_{\rm init}\bigl[\|\Pi_{\cF_{R, \infty}}u_{\hat\theta} - u_{\hat\theta}\|_{\sigma_{\pi_{\hat\theta}}}\bigr],
\$
which completes the proof of Theorem \ref{prop::optimality_R}.
\end{proof}

\subsection{Proof of Inequality \eqref{eq::stat_theta_i}}
\label{pf::eq_stat_theta_i}
\begin{proof}
Recall that we define $\rho_i$ by
\#\label{eq::pf_theta_i_1}
\rho_i  = \eta^{-1}\cdot\Bigl(\Pi_{\cB}\bigl(\theta_i + \eta\cdot \nabla_\theta J(\pi_{\theta_i})\bigr) - \theta_i\Bigr),
\#
where $ \Pi_{\cB}$ is the projection operator onto $\cB$. 
%Note that $(\Pi_{\cB} y - y)^\top (\Pi_{\cB}y  - x) \leq 0$, which holds for all $x \in \cB$. 
Following from \eqref{eq::pf_theta_i_1} and the fact that $(\Pi_{\cB} y - y)^\top (\Pi_{\cB}y  - x) \leq 0$ for all $x \in \cB$, we have
\#\label{eq::pf_theta_i_2}
\bigl(\eta \cdot \rho_i - \eta\cdot \nabla_\theta J(\pi_{\theta_i})\bigr)^\top (\eta\cdot \rho_i + \theta_i - \theta) \leq 0, \quad \forall \theta \in \cB.
\#
Thus, following from \eqref{eq::pf_theta_i_2}, we obtain that
\#\label{eq::pf_theta_i_3}
\nabla_\theta J(\pi_{\theta_i})^\top (\theta - \theta_i) &\leq \rho_i^\top (\theta - \theta_i) - \eta\cdot \|\rho_i\|^2_2 + \eta \cdot \rho_i^\top \nabla_\theta J(\pi_{\theta_i})\notag\\
&\leq \|\rho_i\|_2\cdot \bigl(\|\theta - \theta_i\|_2 + \eta \cdot\|\nabla_\theta J(\pi_{\theta_i})\|_2\bigr), \quad \forall \theta\in\cB,
\#
where the last inequality follows from the Cauchy-Schwartz inequality and the fact that $-\eta\cdot \|\rho_i\|^2_2 \leq 0$. It remains to upper bound the right-hand side of \eqref{eq::pf_theta_i_3}. For all $\theta, \theta_i \in \cB = \{\alpha \in \RR^{md}: \|\alpha - W_{\rm init}\|_2 \leq R\}$, we have $\|\theta - \theta_i\|_2 \leq 2R$. Meanwhile, recall that we set $\tau_i = 1$. Therefore, following from Proposition \ref{prop::para_grad_fisher}, we obtain that
\#\label{eq::pf_theta_i_4}
\|\nabla_\theta J(\pi_{\theta_i})\|_2 \leq \EE_{\sigma_i}\bigl[|Q^{\pi_{\theta_i}}(s, a)|\cdot \|\overline \phi_{\theta_i}(s, a)\|_2\bigr] \leq 2 Q_{\max},
\#
where the first inequality follows from the Jensen's inequality, and the second inequality follows from the facts that $|Q^{\pi_{\theta_i}}(s, a)| \leq Q_{\max}$ and $\|\overline\phi_{\theta_i}(s, a)\|_2 \leq 2$ for all $(s, a)\in\cS\times\cA$. By plugging \eqref{eq::pf_theta_i_4} and the upper bound $\|\theta - \theta_i\|_2\leq 2R$ into \eqref{eq::pf_theta_i_3}, we conclude that
\$
\nabla_\theta J(\pi_{\theta_i})^\top (\theta - \theta_i) \leq (2R + 2\eta \cdot Q_{\max})\cdot \|\rho_i\|_2,\quad \forall \theta \in \cB,
\$
 which concludes the proof of \eqref{eq::stat_theta_i}.
\end{proof}

 \subsection{Proof of Corollary \ref{cor::NPG_converge}}
\label{pf::cor_NPG_converge}
\begin{proof}
It suffices to calculate $\overline \epsilon_i(T)$ defined in \eqref{eq::def_epsilon_i_pf} in Theorem \ref{thm::NPG_converge}. Note that we set $\tau_{i} = (i-1)/\sqrt{T}$. Therefore, we have $\tau_i =  \cO(\sqrt{T})$ for all $i \in [T]$. Thus, it holds for $m = \Omega(R^{10}\cdot T^{6})$ that
\#\label{eq::pf_cor_NPG_eq1}
&\cO\bigl((\tau_{i+1}\cdot T^{1/2} + 1)\cdot R^{3/2}\cdot m^{-1/4}\bigr) = \cO( T^{-1/2}), \quad \forall i \in [T], \notag\\
&\cO(R^{5/4}\cdot m^{-1/8}) = \cO(T^{-1/2}).
\#
Meanwhile, it holds for $B = \Omega(R^2\cdot T^2\cdot \sigma^2_\xi)$ that
\#\label{eq::pf_cor_NPG_eq2}
R^{1/2}\cdot(\sigma^2_\xi/B)^{1/4} = \cO(T^{-1/2}).
\#
Therefore, combining \eqref{eq::pf_cor_NPG_eq1} and \eqref{eq::pf_cor_NPG_eq2}, we obtain that 
\$
\overline\epsilon_i(T) &= \sqrt{8}c_0\cdot R^{1/2}\cdot(\sigma^2_\xi/B)^{1/4}  + \cO\bigl((1 + \tau_{i+1}\cdot T^{1/2})\cdot R^{3/2}\cdot m^{-1/4} + R^{5/4}\cdot m^{-1/8}\bigr) \\
&= \cO(T^{-1/2}).
\$
By Theorem \ref{thm::NPG_converge}, we have
\$%#\label{eq::pf_cor_NPG_eq3}
\min_{i \in [T]} \EE\bigl[J(\pi^*) - J(\pi_{\theta_i})\bigr] &= \frac{\log|\cA| + 9R^2 + M}{(1 - \gamma)\cdot \sqrt{T}}+ \cO\bigl((1 - \gamma)^{-1}\cdot T^{-1/2}\bigr)\notag\\
&=\cO\biggl( \frac{ \log |\cA|}{(1 - \gamma)\cdot \sqrt{T}}\biggr),
\$%#
which concludes the proof of Corollary \ref{cor::NPG_converge}.
\end{proof}

 \subsection{Proof of Lemma \ref{lem::approx_PG_grad}}
 \label{pf::approx_PG_grad}
 \begin{proof}
In the sequel, we write $g_i = \EE[\hat \nabla_\theta J(\pi_{\theta_i})]$ for notational simplicity, where $\hat \nabla_\theta J(\pi_{\theta_i})$ is defined in \eqref{eq::grad_approx}, and the expectation is taken over $\sigma_i$ given $\theta_i$ and $\omega_i$. Recall that we set $\tau_i = 1$. By Proposition \ref{prop::para_grad_fisher}, we obtain that
\#\label{eq::pg_pf_eq1}
|(\nabla_\theta J(\pi_{\theta_i}) - g_i)^\top \delta_i| &= \Bigl| \EE_{\sigma_i}\Bigl[\overline\phi_{\theta_i}(s, a) \cdot \bigl(Q^{\pi_{\theta_i}}(s, a) - Q_{\omega_i}(s, a)\bigr)\Bigr] ^\top \delta_i \Bigr|\notag\\
&\leq \|\delta_i\|_2\cdot \EE_{\sigma_i}\bigl[\|\overline\phi_{\theta_i}(s, a)\|_2 \cdot |Q^{\pi_{\theta_i}}(s, a) - Q_{\omega_i}(s, a)|\bigr],
\#
where $\overline\phi_{\theta_i}(s, a)$ is the centered feature mapping defined in \eqref{eq::def_phi_c} with $\theta = \theta_i$, and the inequality follows from the Jensen's inequality. Note that $\theta_i, \theta_{i+1}\in\cB$. It holds that 
\$
\|\delta_i\|_2  = \|\theta_{i+1} - \theta_i\|_2/\eta\leq 2R/\eta.
\$
Meanwhile, note that $\|\overline\phi_{\theta_i}(s, a)\|_2 \leq 2$ for all $(s, a)\in\cS\times\cA$. Therefore, it follows from Assumption \ref{asu::def_kappa} and \eqref{eq::pg_pf_eq1} that
\$
|(\nabla_\theta J(\pi_{\theta_i}) - g_i)^\top \delta_i| & \leq 4 R/\eta \cdot \EE_{\sigma_i}\bigl[|Q^{\pi_{\theta_i}}(s, a) - Q_{\omega_i}(s, a)|\bigr]\\
& \leq  4 R/\eta \cdot \Bigl\{\EE_{\varsigma_i}\bigl[(\ud \sigma_i/\ud \varsigma_i)^2\bigr]\Bigr\}^{1/2}\cdot \|Q^{\pi_{\theta_i}} - Q_{\omega_i} \|_{\varsigma_i}  \\
& \leq 4 \kappa\cdot R/\eta \cdot \|Q^{\pi_{\theta_i}}  - Q_{\omega_i} \|_{\varsigma_i},\notag
\$
where the second inequality follows from the Cauchy-Schwartz inequality, $\ud \sigma_i/\ud \varsigma_i$ is the Radon-Nikodym derivative, and $\kappa$ is defined in Assumption \ref{asu::def_kappa}. Thus, we complete the proof of Lemma \ref{lem::approx_PG_grad}.
\end{proof}

\subsection{Proof of Lemma \ref{lem::popullation_grad}}
\label{pf::popullation_grad}
\begin{proof}
In what follows, we write $g_i = \EE[\hat \nabla J(\pi_{\theta_i})]$ for notational simplicity, where the expectation is taken over $\sigma_i$ given $\theta_i$ and $\omega_i$. Note that
\#\label{eq::pop_pf_eq10}
\EE\bigl[\| \nabla_{\theta} J(\pi_{\theta_i}) - \hat \nabla_{\theta} J(\pi_{\theta_i})  \| _2^2\bigr] \leq 2\EE\bigl[\|\xi_i\|^2_2\bigr] + 2\EE\bigl[\| \nabla_{\theta} J(\pi_{\theta_i}) - g_i  \| _2^2\bigr],
\#
where we use the fact that $\|x + y\|_2^2\leq 2\|x\|_2^2 + 2\|y\|_2^2$, and the expectations are taken over all the randomness. By Proposition \ref{prop::para_grad_fisher}, we have
\#\label{eq::pop_pf_eq11}
\| \nabla_{\theta} J(\pi_{\theta_i}) - g_i \|_2 &= \Bigl\|\EE_{\sigma_i}\Bigl[ \overline\phi_{\theta_i}(s, a) \cdot \bigl(Q^{\pi_{\theta_i}}(s, a) - Q_{\omega_i}(s, a)\bigr)\Bigr]\Bigr\|_2\notag\\
&\leq\EE_{\sigma_i}\bigl[\|\overline\phi_{\theta_i}(s, a)\|_2 \cdot |Q^{\pi_{\theta_i}}(s, a) - Q_{\omega_i}(s, a)|\bigr],
\#
where $\overline\phi_{\theta_i}$ is defined in \eqref{eq::def_phi_c} with $\theta = \theta_i$ and the second inequality follows from the Jensen's inequality. Since $\|\overline\phi_{\theta_i}(s, a)\|_2 \leq 2$ for all $(s, a)\in\cS\times\cA$, we obtain from \eqref{eq::pop_pf_eq11} that
\#\label{eq::pop_pf_eq12}
\| \nabla_{\theta} J(\pi_{\theta_i}) - g_i \|_2^2 &\leq \Bigl\{\EE_{\sigma_i}\bigl[\|\overline\phi_{\theta_i}(s, a)\|_2 \cdot |Q^{\pi_{\theta_i}}(s, a) - Q_{\omega_i}(s, a)|\bigr]\Bigr\}^2\notag \\
&\leq 4\kappa^2\cdot \|Q^{\pi_{\theta_i}} - Q_{\omega_i}\|^2_{\varsigma_i},
\#
where $\kappa$ is defined in Assumption \ref{asu::def_kappa} and the inequality follows from the Cauchy-Schwartz inequality. By plugging \eqref{eq::pop_pf_eq12} into \eqref{eq::pop_pf_eq10}, we obtain that
\$
\EE\bigl[\| \nabla_{\theta} J(\pi_{\theta_i}) - \hat \nabla_{\theta} J(\pi_{\theta_i})  \| _2^2\bigr] \leq 2\EE\bigl[\|\xi_i\|^2_2\bigr] + 8\kappa^2 \cdot \EE\bigl[\|Q^{\pi_{\theta_i}} - Q_{\omega_i}\|^2_{\sigma_i}\bigr],
\$
which concludes the proof of Lemma \ref{lem::popullation_grad}.
\end{proof}
 
\subsection{Proof of Lemma \ref{lem::NPG_err}}
\label{pf::NPG_err}
\begin{proof}
By the definition of the KL divergence, it holds for all $s \in \cS$ that
\#\label{eq::NPG_err_pf1}
&D_{{\rm KL}}\bigl(\pi^*(\cdot\,|\, s)\bigl\| \pi_i(\cdot\,|\, s)\bigr) - D_{{\rm KL}}\bigl(\pi^*(\cdot\,|\, s)\bigl\| \pi_{i+1}(\cdot\,|\, s)\bigr) \notag\\
&\quad = \bigl\langle \log\bigl(\pi_{i+1}(\cdot\,|\, s)/\pi_i(\cdot\,|\, s)\bigr),  \pi^*(\cdot\,|\, s)  \bigr\rangle.
\#
Meanwhile, the right-hand side of \eqref{eq::NPG_err_pf1} can be expanded as follows,
\#\label{eq::NPG_err_pf2}
&\bigl\langle \log\bigl(\pi_{i+1}(\cdot\,|\, s)/\pi_i(\cdot\,|\, s)\bigr),  \pi^*(\cdot\,|\, s)  \bigr\rangle \notag\\
&\quad= \bigl\langle \log\bigl ( \pi_{i+1}(\cdot\,|\, s)/\pi_i(\cdot\,|\, s)\bigr ) ,  \pi^*(\cdot\,|\, s) - \pi_{i+1}(\cdot\,|\,s) \bigr\rangle + \bigl\langle \log\bigl(\pi_{i+1}(\cdot\,|\, s)/\pi_i(\cdot\,|\, s)\bigr),  \pi_{i+1}(\cdot\,|\,s)\bigr\rangle\notag\\
&\quad= \underbrace{\bigl\langle \log\bigl ( \pi_{i+1}(\cdot\,|\, s)/\pi_i(\cdot\,|\, s)\bigr) ,  \pi^*(\cdot\,|\, s) - \pi_{i+1}(\cdot\,|\,s) \bigr\rangle}_{\textstyle{L_i}} + D_{{\rm KL}}\bigl(\pi_{i+1}(\cdot \,|\, s)\bigl\| \pi_{i}(\cdot \,|\, s)\bigr).
\#
Combining \eqref{eq::NPG_err_pf1} and \eqref{eq::NPG_err_pf2}, we obtain that
\#\label{eq::NPG_err_pf2.5}
L_i = D_{{\rm KL}}\bigl(\pi^*(\cdot\,|\, s)\bigl\| \pi_i(\cdot\,|\, s)\bigr) - D_{{\rm KL}}\bigl(\pi^*(\cdot\,|\, s)\bigl\| \pi_{i+1}(\cdot\,|\, s)\bigr) - D_{{\rm KL}}\bigl(\pi_{i+1}(\cdot \,|\, s)\bigl\| \pi_{i}(\cdot \,|\, s)\bigr).
\#
%Following from \eqref{eq::NPG_err_pf2.5}, for $H_i$ defined in \eqref{eq::H_i_def}, it suffices to show that
In what follows, we calculate the difference
\$
\EE_{\nu_*}[L_i] - (1 -  \gamma)\cdot\eta\cdot\bigl ( J(\pi^*) - J(\pi_i)\bigr ).
\$
By Lemma \ref{lem::performance_difference}, we have
\#\label{eq::NPG_err_pf3}
J(\pi^*) - J(\pi_i) = (1 - \gamma)^{-1}\cdot\EE_{\nu_*}\bigl[ \langle Q^{\pi_i}(s, \cdot),  \pi^*(s, \cdot) - \pi_i(s, \cdot) \rangle\bigr].
\#
Meanwhile, for $L_i$ defined  in \eqref{eq::NPG_err_pf2}, we obtain  that
\#\label{eq::NPG_err_pf4}
&L_i - \eta\cdot\langle Q^{\pi_i}(s, \cdot),  \pi^*(s, \cdot) - \pi_i(s, \cdot) \rangle\notag\\
&\quad = \bigl\langle \log\bigl ( \pi_{i+1}(\cdot\,|\, s)/\pi_i(\cdot\,|\, s)\bigr ) ,  \pi^*(\cdot\,|\, s) - \pi_{i+1}(\cdot\,|\,s) \bigr\rangle - \eta\cdot\langle Q^{\pi_i}(s, \cdot),  \pi^*(s, \cdot) - \pi_i(s, \cdot) \rangle \notag\\
&\quad = \bigl\langle \log\bigl ( \pi_{i+1}(\cdot\,|\, s) / \pi_i(\cdot\,|\, s)\bigr )  -  \eta \cdot Q_{\omega_i}(s, \cdot),  \pi^*(\cdot\,|\,s) - \pi_i(\cdot\,|\,s)\bigr\rangle \\
&\qquad \quad+\eta\cdot \langle Q_{\omega_i}(s, \cdot)  - Q^{\pi_i}(s, \cdot),   \pi^*(\cdot\,|\, s) - \pi_i(\cdot\,|\, s)\rangle \notag\\
&\qquad\quad+ \bigl\langle \log\bigl ( \pi_{i+1}(\cdot\,|\, s) / \pi_i(\cdot\,|\, s)\bigr),  \pi_{i}(\cdot\,|\, s) - \pi_{i+1}(\cdot\,|\, s)\bigr\rangle.\notag
\#
Note that upon taking the expectation over $s \sim \nu_*(\cdot)$ in \eqref{eq::NPG_err_pf4}, the right-hand side of \eqref{eq::NPG_err_pf4} is equal to $H_i$ defined in \eqref{eq::H_i_def}  of Lemma \ref{lem::NPG_err}. Thus, combining \eqref{eq::NPG_err_pf3} and \eqref{eq::NPG_err_pf4}, we obtain that
\#\label{eq::NPG_err_pf5}
\EE_{\nu_*}[L_i] - (1 -  \gamma)\cdot\eta\cdot\bigl ( J(\pi^*) - J(\pi_i)\bigr ) = H_i,
\#
where $H_i$ is defined in \eqref{eq::H_i_def}. By plugging \eqref{eq::NPG_err_pf2.5} into \eqref{eq::NPG_err_pf5}, we conclude that
\$
(1 -  \gamma)\cdot\eta\cdot\bigl ( J(\pi^*) - J(\pi_i)\bigr )&= \EE_{\nu_*}\Bigl[D_{\rm KL}\bigl(\pi^*(\cdot\,|\,s)\bigl\| \pi_i(\cdot \,|\,s)\bigr) - D_{\rm KL}\bigl(\pi^*(\cdot\,|\,s)\bigl\| \pi_{i+1}(\cdot \,|\,s)\bigr)\\
&\qquad\qquad - D_{\rm KL}\bigl(\pi_{i+1}(\cdot\,|\,s)\bigl\| \pi_i(\cdot \,|\,s)\bigr)\Bigr] - H_i,
\$
which concludes the proof of Lemma \ref{lem::NPG_err}.%, where $s \sim \nu_*$
\end{proof}

\subsection{Proof of Lemma \ref{lem::H_i_err}}
\label{pf::H_i_err}
\begin{proof}
By \eqref{eq::H_i_def}, we have 
\#\label{eq:use_jensen}
\EE\bigl[ |H_i|\bigr] &\leq    \EE \biggl[\EE_{\nu_*} \Bigl[\bigl\langle \log\bigl(\pi_{i+1}(\cdot\,|\, s) / \pi_i(\cdot\,|\, s)\bigr) -  \eta \cdot Q_{\omega_i}(s, \cdot),  \pi^*(\cdot\,|\,s) - \pi_i(\cdot\,|\,s)\bigr\rangle\Bigr] \biggr] \\
&\qquad + \eta\cdot \EE\Bigl[\EE_{\nu_*} \Bigl[ | \langle Q_{\omega_i}(s, \cdot)  - Q^{\pi_i}(s, \cdot),   \pi^*(\cdot\,|\, s) - \pi_i(\cdot\,|\, s)\rangle | \bigr] \Bigr] \notag \\
& \qquad +  \EE\biggl[\EE_{\nu_*} \Bigl[\bigl | \bigl  \langle \log\bigl(\pi_{i}(\cdot\,|\, s) / \pi_{i+1}(\cdot\,|\, s)\bigr), \pi_{i+1}(\cdot\,|\, s)-\pi_{i}(\cdot\,|\, s)\bigr\rangle \big| \Bigr] \biggr]\notag,
\#
where the inequality follows from the Jensen's inequality, and the expectations are taken over all the randomness. To prove  Lemma \ref{lem::H_i_err}, we establish the upper   bounds of the three terms on the right-hand side of \eqref{eq:use_jensen} respectively in the following lemmas.%, whose proof are deferred to \S\ref{sec:proof_supp_lemmas}.
 \begin{lemma}
\label{lem::part_ii}
It holds that
\$
&\EE\Bigl[\EE_{\nu_*}\bigl[|\langle Q_{\omega_i}(s, \cdot)  - Q^{\pi_i}(s, \cdot),  \pi^*(\cdot\,|\, s) - \pi_i(\cdot\,|\, s)\rangle|\bigr]\Bigr] \leq (\phi'_i + \psi'_i)\cdot \EE\bigl[\|Q_{\omega_i}  - Q^{\pi_i}\|_{\varsigma_i}\bigr],
\$
where $\phi'_i$, $\psi'_i$ are the concentrability coefficients  defined in \eqref{eq::def_concen_coeff} of Assumption \ref{asu::concen_coeff}. Here the expectations are taken over all the randomness.
\end{lemma}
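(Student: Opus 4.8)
The plan is to decompose the inner product over $\cA$ into the part paired with $\pi^*$ and the part paired with $\pi_i$, and then turn each into an importance-weighted critic error under the stationary distribution $\varsigma_i$. First I would use bilinearity of $\langle\cdot,\cdot\rangle$ together with the identity $\langle f(s,\cdot), \pi(\cdot\,|\,s)\rangle = \EE_{a\sim\pi(\cdot\,|\,s)}[f(s,a)]$ to write
$$\langle Q_{\omega_i}(s,\cdot) - Q^{\pi_i}(s,\cdot),\, \pi^*(\cdot\,|\,s) - \pi_i(\cdot\,|\,s)\rangle = \EE_{\pi^*}\bigl[Q_{\omega_i}(s,a) - Q^{\pi_i}(s,a)\bigr] - \EE_{\pi_i}\bigl[Q_{\omega_i}(s,a) - Q^{\pi_i}(s,a)\bigr].$$
Applying the triangle inequality, then Jensen's inequality to move the absolute value inside each conditional expectation, and finally taking $\EE_{\nu_*}$, I would bound the left-hand side by
$$\EE_{\nu_*}\EE_{\pi^*}\bigl[|Q_{\omega_i}(s,a) - Q^{\pi_i}(s,a)|\bigr] + \EE_{\nu_*}\EE_{\pi_i}\bigl[|Q_{\omega_i}(s,a) - Q^{\pi_i}(s,a)|\bigr].$$

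Next I would handle the two terms by a change of measure to $\varsigma_i$. For the first term, since $\sigma_* = \pi^*\cdot\nu_*$, it equals $\EE_{\sigma_*}[|Q_{\omega_i} - Q^{\pi_i}|] = \EE_{\varsigma_i}[(\ud\sigma_*/\ud\varsigma_i)\cdot|Q_{\omega_i} - Q^{\pi_i}|]$, and the Cauchy--Schwartz inequality yields the bound $\varphi'_i\cdot\|Q_{\omega_i} - Q^{\pi_i}\|_{\varsigma_i}$ with $\varphi'_i$ the concentrability coefficient defined in \eqref{eq::def_concen_coeff}. For the second term, the relevant measure is $\pi_i\cdot\nu_*$; using the factorization $\varsigma_i = \pi_i\cdot\varrho_i$ the Radon--Nikodym derivative collapses to the state-only ratio, $\ud(\pi_i\cdot\nu_*)/\ud\varsigma_i = \ud\nu_*/\ud\varrho_i$, so the term equals $\EE_{\varsigma_i}[(\ud\nu_*/\ud\varrho_i)\cdot|Q_{\omega_i} - Q^{\pi_i}|]$. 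Since the state marginal of $\varsigma_i$ is $\varrho_i$, Cauchy--Schwartz gives $\{\EE_{\varrho_i}[(\ud\nu_*/\ud\varrho_i)^2]\}^{1/2}\cdot\|Q_{\omega_i} - Q^{\pi_i}\|_{\varsigma_i} = \psi'_i\cdot\|Q_{\omega_i} - Q^{\pi_i}\|_{\varsigma_i}$. Summing the two bounds and taking the outer expectation over all the randomness completes the argument.

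I expect the main subtlety to be twofold. First, the change of measure for the second term: one must recognize that because both $\varsigma_i$ and the measure $\pi_i\cdot\nu_*$ share the same conditional $\pi_i(\cdot\,|\,s)$, the action dependence cancels in the density and one is left with the state-distribution ratio $\ud\nu_*/\ud\varrho_i$, which is precisely what identifies the coefficient $\psi'_i$ (defined through $\varrho_i$) rather than a coefficient paired directly with $\varsigma_i$. Second, to pull $\varphi'_i + \psi'_i$ outside the outer expectation $\EE[\cdot]$ as written in the statement, I would treat the concentrability coefficients as deterministic bounds --- justified by Assumption \ref{asu::concen_coeff}, under which both are uniformly upper bounded by the absolute constant $c_0$ --- or, equivalently, condition on $\theta_i$ before applying Cauchy--Schwartz; this is exactly the form in which the lemma is subsequently invoked in \eqref{eq::NPG_pf_3}.
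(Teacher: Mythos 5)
Your proposal is correct and follows essentially the same route as the paper's proof: both reduce the left-hand side to $\EE_{\sigma_*}[|Q_{\omega_i}-Q^{\pi_i}|]+\EE_{\pi_i\cdot\nu_*}[|Q_{\omega_i}-Q^{\pi_i}|]$ (the paper via an importance ratio against $\pi_i\cdot\nu_*$ followed by Jensen and the triangle inequality, you via bilinearity first), and then apply the same change of measure to $\varsigma_i$ and Cauchy--Schwartz to identify $\varphi_i'$ and $\psi_i'$. Your closing remark on treating the concentrability coefficients as deterministic (or conditioning on $\theta_i$) is a reasonable clarification of a point the paper's proof leaves implicit.
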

\begin{proof}
See \S\ref{pf::part_ii} for a detailed proof.
\end{proof}

%By this lemma, for term (ii) in  \eqref{eq::H_i_def}, we have   
%\#\label{eq:apply_lemma_d1}
%\EE_{\text{init}}\Bigl[ \bigl |  \eta\cdot \EE_{\nu_*}\bigl[\bigl\langle Q_{\omega_i}(s, \cdot)  - Q^{\pi_i}(s, \cdot), ~ \pi^*(\cdot\,|\, s) - \pi_i(\cdot\,|\, s)\bigr\rangle\bigr]   \bigr | \Bigr ] \leq \eta \cdot (\phi'_i + \psi'_i)\cdot \varepsilon_{Q, i}.
%\#
%where  we let $\varepsilon_{Q, i} =\EE_{\text{init}}[\|Q^{\pi_i}(s, a) - Q_{\omega_i}(s, a)\|_{\varsigma_i}]$. In the following lemma, we establish an upper bound for the first term in \eqref{eq::H_i_def}. 

\begin{lemma}
\label{lem::part_iv}
Under Assumptions \ref{asu::reg_cond} and \ref{asu::bound_moment}, it holds that
\$
&\EE\biggl[\EE_{\nu_*}\Bigl[\bigl|\bigl\langle \log\bigl(\pi_{i+1}(\cdot\,|\, s) / \pi_i(\cdot\,|\, s)\bigr),  \pi_{i}(\cdot\,|\, s) - \pi_{i+1}(\cdot\,|\, s)\bigr\rangle\bigr|\Bigr]\biggr] \\
&\quad \leq \EE\biggl[\EE_{\nu_*}\Bigl[D_{{\rm KL}}\bigl(\pi_{i+1}(\cdot\,|\, s)\bigl\| \pi_{i}(\cdot\,|\, s)\bigr)\Bigr]\biggr] + \eta^2\cdot(9R^2 + M^2) + \cO(\tau_{i+1}\cdot R^{3/2}\cdot m^{-1/4}),\notag
\$
where $M$ is the absolute constant defined in Assumption \ref{asu::bound_moment}. Here the expectations are taken over all the randomness.
\end{lemma}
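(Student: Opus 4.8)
The plan is to reduce the inner product to a quantity that is \emph{linear} in the energy increment, so that the linearization error enters only through its $L_2$-norm (which Corollary~\ref{cor::linear_err} controls) rather than through an $L_\infty$-norm buried inside an exponential. Writing $h_i(s,a) = \tau_{i+1}f((s,a);\theta_{i+1}) - \tau_i f((s,a);\theta_i)$, the energy-based parameterization in \eqref{eq::policy_para} gives $\log(\pi_{i+1}(a\,|\,s)/\pi_i(a\,|\,s)) = h_i(s,a) - c_i(s)$, where the log-partition term $c_i(s)$ does not depend on $a$. Since $\pi_i(\cdot\,|\,s)$ and $\pi_{i+1}(\cdot\,|\,s)$ are both probability distributions, $c_i(s)$ cancels in the inner product, so that $\langle\log(\pi_{i+1}(\cdot\,|\,s)/\pi_i(\cdot\,|\,s)),\, \pi_i(\cdot\,|\,s)-\pi_{i+1}(\cdot\,|\,s)\rangle = \langle h_i(s,\cdot),\, \pi_i(\cdot\,|\,s)-\pi_{i+1}(\cdot\,|\,s)\rangle$. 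First I would record this identity.

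Next I would split $h_i = h_i^{(1)} + h_i^{(2)}$ according to the actor update \eqref{eq::actor_update}, which yields $\tau_{i+1}\theta_{i+1}-\tau_i\theta_i = \eta\delta_i$. The linearized part is $h_i^{(1)}(s,a) = \phi_0(s,a)^\top(\tau_{i+1}\theta_{i+1}-\tau_i\theta_i) = \eta\cdot\phi_0(s,a)^\top\delta_i$, while the remainder $h_i^{(2)} = \tau_{i+1}(\phi_{\theta_{i+1}}-\phi_0)^\top\theta_{i+1} - \tau_i(\phi_{\theta_i}-\phi_0)^\top\theta_i$ is exactly a difference of linearization errors. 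By the triangle inequality it then suffices to bound $|\langle h_i^{(1)}, \pi_i-\pi_{i+1}\rangle|$ and $|\langle h_i^{(2)}, \pi_i-\pi_{i+1}\rangle|$ separately.

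For the first piece I would apply H\"older's inequality, then Pinsker's inequality $\|\pi_i(\cdot\,|\,s)-\pi_{i+1}(\cdot\,|\,s)\|_1 \le \sqrt{2D_{\text{KL}}(\pi_{i+1}(\cdot\,|\,s)\|\pi_i(\cdot\,|\,s))}$, and finally Young's inequality $xy\le \tfrac12 x^2+\tfrac12 y^2$, obtaining $|\langle h_i^{(1)}, \pi_i-\pi_{i+1}\rangle| \le D_{\text{KL}}(\pi_{i+1}\|\pi_i) + \tfrac12\|h_i^{(1)}\|_\infty^2$; this is precisely the step that produces the $D_{\text{KL}}(\pi_{i+1}\|\pi_i)$ term on the right-hand side. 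To control $\|h_i^{(1)}\|_\infty = \eta\sup_{(s,a)}|\phi_0(s,a)^\top\delta_i|$ I would use $\|\phi_0(s,a)\|_2\le 1$ together with the splitting $\phi_0^\top\delta_i = f((s,a);W_{\rm init}) + \phi_0^\top(\delta_i-W_{\rm init})$; since $\delta_i\in\cB$ gives $\|\delta_i-W_{\rm init}\|_2\le R$ and Assumption~\ref{asu::bound_moment} gives $\EE_{\rm init}[\sup_{(s,a)}|f((s,a);W_{\rm init})|^2]\le M^2$, taking expectations bounds $\tfrac12\EE_{\rm init}[\|h_i^{(1)}\|_\infty^2]$ by $\eta^2(9R^2+M^2)$ after tracking constants.

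For the second piece I would bound $|\langle h_i^{(2)}, \pi_i-\pi_{i+1}\rangle| \le \EE_{\pi_i}[|h_i^{(2)}|] + \EE_{\pi_{i+1}}[|h_i^{(2)}|]$, take $\EE_{\nu_*}$, and apply Cauchy--Schwarz to pass to the $L_2$-norms $\|h_i^{(2)}\|_{\nu_*\cdot\pi_i}$ and $\|h_i^{(2)}\|_{\nu_*\cdot\pi_{i+1}}$. Both $\nu_*\cdot\pi_i$ and $\nu_*\cdot\pi_{i+1}$ are of the form $\tilde\pi\cdot\nu_\pi$ and hence satisfy Assumption~\ref{asu::reg_cond}, and one checks by induction, using that $\theta_{i+1}$ is a convex combination of $\theta_i,\delta_i\in\cB$ with weights $\tau_i/\tau_{i+1}$ and $\eta/\tau_{i+1}$, that $\theta_i,\theta_{i+1}\in\cB$; Corollary~\ref{cor::linear_err} applied to each summand of $h_i^{(2)}$ then gives $\EE_{\rm init}[\|h_i^{(2)}\|_{\nu_*\cdot\pi}] = \cO(\tau_{i+1}\cdot R^{3/2}\cdot m^{-1/4})$. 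Combining the two pieces and taking $\EE_{\rm init}\EE_{\nu_*}$ completes the argument. The main obstacle is the very first reduction: a direct attack on $D_{\text{KL}}(\pi_i\|\pi_{i+1})$ would expose an $e^{\|h_i^{(2)}\|_\infty}$ factor that the $L_2$ linearization bound cannot control, so the crux is to keep the functional linear in $h_i$, letting the error $h_i^{(2)}$ be measured only in $L_2$ while reserving the harmless $O(\eta^2)$ sup-norm estimate for the well-behaved leading part $h_i^{(1)}$.
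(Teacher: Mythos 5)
Your proposal is correct and follows essentially the same route as the paper's proof: cancel the state-dependent log-partition term against the difference of distributions, isolate a part linear in the parameter increment $\tau_{i+1}\theta_{i+1}-\tau_i\theta_i$ that is handled by H\"older, Pinsker, and completing the square (producing the $D_{\rm KL}(\pi_{i+1}\|\pi_i)$ term plus $\eta^2(9R^2+M^2)$ via Assumption \ref{asu::bound_moment} and $\theta_i,\delta_i\in\cB$), and control the remaining linearization error in $L_1/L_2$ via Lemma \ref{lem::linear_err}. The only immaterial difference is that you anchor the decomposition at $\phi_0$ rather than at $\phi_{\theta_i}$, which merely redistributes the linearization error between the two pieces (two error summands instead of one, and a slightly smaller sup-norm constant $M_0+R$ versus $M_0+3R$) and yields the same final bound.
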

\begin{proof}
See \S\ref{pf::part_iv} for a detailed proof.
\end{proof}

%By this lemma, for the  first term in \eqref{eq::H_i_def}, by Jensen's inequality we have 
%\#\label{eq:apply_lemma_d2}
%\EE_{\text{init} } \Big [ \Bigl | \EE_{\nu^*} \bigl[\bigl\langle \log\bigl(\pi_{i+1}(\cdot\,|\, s) / \pi_i(\cdot\,|\, s)\bigr) -  \eta \cdot Q_{\omega_i}(s, \cdot), ~\pi^*(\cdot\,|\,s) - \pi_i(\cdot\,|\,s)\bigr\rangle\bigr]  \Bigl |  \Big] \leq 
% \#

\begin{lemma}
\label{lem::part_i}
Under Assumption \ref{asu::reg_cond}, it holds that
\$
&\EE\biggl[\EE_{\nu_*}\Bigl[\bigl|\bigl\langle \log\bigl(\pi_{i+1}(\cdot \,|\, s) / \pi_i(\cdot \,|\, s)\bigr) -  \eta \cdot Q_{\omega_i}(s,\cdot ),  \pi^*(\cdot \,|\,s) - \pi_i(\cdot \,|\,s)\bigr\rangle\bigr|\Bigr] \biggr]\\
&\quad\leq \sqrt{2}(\varphi_i + \psi_i)\cdot\eta\cdot R^{1/2}\cdot\tau_i^{-1}\cdot \Bigl\{\EE\bigl[\|\xi_i(\delta_i)\|_2\bigr] +\EE\bigl[\|\xi_i(\omega_i)\|_2\bigr]\Bigr\}^{1/2} \\
&\quad\qquad+ \cO\bigl((\tau_{i+1} + \eta)\cdot R^{3/2}\cdot m^{-1/4}  + \eta\cdot R^{5/4}\cdot m^{-1/8}\bigr),\notag
\$
where $\varphi_i$ and $\psi_i$ are the concentrability coefficients defined in \eqref{eq::def_concen_coeff} of Assumption \ref{asu::concen_coeff} and $\xi_i(\delta_i)$, $\xi_i(\omega_i)$ are defined in Assumption~\ref{asu::NPG_var_bound}. Here the expectations are taken over all the randomness.
\end{lemma}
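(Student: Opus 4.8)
The plan is to show that the three–term inner product in the statement is, up to additive functions of $s$ that cancel against $\pi^*(\cdot\,|\,s)-\pi_i(\cdot\,|\,s)$ and up to overparameterization errors, governed by a single quantity: the $L_2(\sigma_i)$–size of $\overline\phi_0(\cdot,\cdot)^\top(\delta_i-\omega_i)$, where $\delta_i$ is the natural-gradient increment from \eqref{eq::actor_update} and $\omega_i$ is the critic parameter. First I would expand $\log\pi_{i+1}(a\,|\,s)-\log\pi_i(a\,|\,s)$ using \eqref{eq::policy_para}: the log-partition functions are constant in $a$, and the energy difference is $\tau_{i+1}\phi_{\theta_{i+1}}(s,a)^\top\theta_{i+1}-\tau_i\phi_{\theta_i}(s,a)^\top\theta_i$. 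Invoking Lemma \ref{lem::linear_err} to replace $\phi_{\theta_{i+1}}$ and $\phi_{\theta_i}$ by the initial feature $\phi_0$, together with the update rule $\tau_{i+1}\theta_{i+1}-\tau_i\theta_i=\eta\delta_i$, turns this into $\eta\,\phi_0(s,a)^\top\delta_i$ plus a linearization error of order $\tau_{i+1}\cdot R^{3/2}m^{-1/4}$. Likewise $\eta Q_{\omega_i}(s,a)=\eta\,\phi_0(s,a)^\top\omega_i$ up to an error of order $\eta\cdot R^{3/2}m^{-1/4}$, and the constant-in-$a$ part $\EE_{\pi_i}[Q_{\omega_i}]$ is annihilated in the inner product. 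Hence the integrand reduces to $\eta\,\overline\phi_0(s,a)^\top(\delta_i-\omega_i)$ plus controlled errors.

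Next I would pass from $\nu_*$ to $\sigma_i$. Writing $h=\overline\phi_0(\cdot,\cdot)^\top(\delta_i-\omega_i)$ and bounding $\EE_{\nu_*}[|\langle h(s,\cdot),\pi^*-\pi_i\rangle|]\le\EE_{\sigma_*}[|h|]+\EE_{\pi_i\cdot\nu_*}[|h|]$, two applications of Cauchy--Schwarz with the concentrability coefficients in \eqref{eq::def_concen_coeff} produce the factor $(\varphi_i+\psi_i)$ multiplying $\|h\|_{\sigma_i}$. A further use of Corollary \ref{cor::linear_err} replaces $\overline\phi_0$ by $\overline\phi_{\theta_i}$ at the cost of another $\cO(R^{3/2}m^{-1/4})$ term (carrying the $\eta(\varphi_i+\psi_i)$ prefactor), so that $\|h\|_{\sigma_i}$ is controlled by $\|\overline\phi_{\theta_i}(\cdot,\cdot)^\top(\delta_i-\omega_i)\|_{\sigma_i}=\tau_i^{-1}\bigl((\delta_i-\omega_i)^\top F(\theta_i)(\delta_i-\omega_i)\bigr)^{1/2}$, using the closed form of $F(\theta_i)$ in \eqref{eq::neural_fisher}. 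It remains to bound this Fisher-weighted norm of $\delta_i-\omega_i$.

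The crux --- and the step I expect to be hardest --- is showing that the critic $\omega_i$ is an \emph{approximate} solution of the same least-squares problem \eqref{eq:solve_NPG_direction} that defines $\delta_i$; this is exactly the actor--critic compatibility. Using \eqref{eq::neural_fisher}, \eqref{eq::neural_PG} and \eqref{eq::adv_logpi_para}, the population residual at $\omega_i$ is $F(\theta_i)\omega_i-\bar g=\tau_i^2\,\EE_{\sigma_i}[\overline\phi_{\theta_i}(s,a)(\overline\phi_{\theta_i}(s,a)^\top\omega_i-Q_{\omega_i}(s,a))]$, with $\bar g=\tau_i\EE[\hat\nabla_\theta J(\pi_{\theta_i})]$. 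Since $\overline\phi_{\theta_i}^\top\omega_i-Q_{\omega_i}$ equals $(\overline\phi_{\theta_i}-\overline\phi_{\omega_i})^\top\omega_i$ minus a function of $s$ that is killed by $\EE_{\pi_i}[\overline\phi_{\theta_i}]=0$, Corollary \ref{cor::linear_err} bounds this residual by $\tau_i^2\cdot\cO(R^{3/2}m^{-1/4})$. Recalling that $\xi_i(\alpha)$ is precisely the difference between the empirical and population residuals at $\alpha$, I would combine the optimality of $\delta_i$ for the empirical problem with this near-optimality of $\omega_i$ to write, with $v=\delta_i-\omega_i$,
\[
v^\top F(\theta_i) v = v^\top\bigl[(F\delta_i-\bar g)-(F\omega_i-\bar g)\bigr]\le 2R\bigl(2\|F\omega_i-\bar g\|_2+\|\xi_i(\delta_i)\|_2+\|\xi_i(\omega_i)\|_2\bigr),
\]
where $\|\delta_i-\omega_i\|_2\le 2R$ because both lie in $\cB$, and the empirical optimality $\|F\delta_i-\bar g\|_2\le\|F\omega_i-\bar g\|_2+\|\xi_i(\delta_i)\|_2+\|\xi_i(\omega_i)\|_2$ is used. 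Taking a square root (which produces the $\sqrt2\,R^{1/2}$ prefactor and the exponent $1/2$ on the stochastic terms), dividing by $\tau_i$, multiplying by $\eta(\varphi_i+\psi_i)$, and applying Jensen's inequality to move the expectation inside the square root yields the stochastic term $\sqrt2(\varphi_i+\psi_i)\eta R^{1/2}\tau_i^{-1}\{\EE[\|\xi_i(\delta_i)\|_2]+\EE[\|\xi_i(\omega_i)\|_2]\}^{1/2}$.

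Finally I would collect the deterministic remainders. The compatibility residual $\|F\omega_i-\bar g\|_2=\tau_i^2\cO(R^{3/2}m^{-1/4})$ contributes, after the square root and the $\tau_i^{-1}$ rescaling, a term of order $\eta R^{5/4}m^{-1/8}$, while the linearization errors accumulated in the first two paragraphs contribute $\cO((\tau_{i+1}+\eta)R^{3/2}m^{-1/4})$; together these give the stated $\cO\bigl((\tau_{i+1}+\eta)R^{3/2}m^{-1/4}+\eta R^{5/4}m^{-1/8}\bigr)$ error. Throughout, the passage from pointwise to $L_2(\sigma_i)$ bounds relies on Assumption \ref{asu::reg_cond} (feeding Lemma \ref{lem::linear_err} and Corollary \ref{cor::linear_err}) and on $\|\overline\phi_{\theta_i}(s,a)\|_2\le 2$, while Assumption \ref{asu::concen_coeff} guarantees finiteness of $\varphi_i,\psi_i$. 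The main difficulty, as noted, is the compatibility estimate for $\omega_i$: it is here that the shared architecture and shared random initialization of actor and critic are essential, forcing $\overline\phi_{\theta_i}\approx\overline\phi_{\omega_i}\approx\overline\phi_0$ and hence making the policy-evaluation output double as an approximate natural-gradient direction.
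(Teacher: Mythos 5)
Your proposal is correct and follows essentially the same route as the paper's proof in \S\ref{pf::part_i}: the same cancellation of $s$-only log-partition and baseline terms, the same concentrability step producing $(\varphi_i+\psi_i)$, the same use of the empirical optimality of $\delta_i$ for \eqref{eq:solve_NPG_direction} against the compatibility residual $\|F(\theta_i)\omega_i-\tau_i g_i\|_2=\tau_i^2\cdot\cO(R^{3/2}m^{-1/4})$, and the same linearization lemmas for the remainders; the paper merely organizes the key quantity as $\|\delta_i^\top\overline\phi_{\theta_i}-\omega_i^\top\overline\phi_{\omega_i}\|_{\sigma_i}$ split into a Fisher term and a cross term, whereas you first pass to $\overline\phi_0^\top(\delta_i-\omega_i)$ and bound the quadratic form $v^\top F(\theta_i)v$ directly. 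One small point to make rigorous: Corollary \ref{cor::linear_err} is stated for $\theta'\in\cB$, and $\delta_i-\omega_i\notin\cB$, so you should apply it to $\delta_i$ and $\omega_i$ separately and combine by the triangle inequality.
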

\begin{proof}
See \S\ref{pf::part_i} for a detailed proof.
\end{proof}

Finally, applying Lemmas \ref{lem::part_ii}, \ref{lem::part_iv}, and \ref{lem::part_i} to  \eqref{eq:use_jensen}, it holds under Assumptions \ref{asu::reg_cond} and \ref{asu::bound_moment} that
\$%#\label{eq:apply_lemma_d3}
&\EE\biggl[|H_i| - \EE_{\nu_*}\Bigl[ D_{{\rm KL}}\bigl(\pi_{i+1}(\cdot\,|\,s) \bigl\| \pi_{i}(\cdot\,|\,s)\bigr)\Bigr]\biggr] \leq \eta^2\cdot\bigl(6R^2 + M^2\bigr) + \eta\cdot(\varphi_i' + \psi_i')\cdot \varepsilon_{Q, i} + \varepsilon_i, 
\$%#
where 
\$
&\varepsilon_{Q, i}  = \EE\bigl[\| Q^{\pi_i} - Q_{\omega_i} \|_{\varsigma_i}\bigr]\\
&\varepsilon_i = \Bigl\{\EE\bigl[\|\xi_i(\delta_i)\|_2 +\|\xi_i(\omega_i)\|_2\bigr]\Bigr\}^{1/2} + \cO\bigl((\tau_{i+1} + \eta)\cdot R^{3/2}\cdot m^{-1/4} +  \eta\cdot R^{5/4}\cdot m^{-1/8}\bigr).
\$  
Here the expectations are taken over all the randomness.~Therefore, we complete the proof of Lemma~\ref{lem::H_i_err}.\end{proof}

\section{Proof of Supporting Lemmas}\label{sec:proof_supp_lemmas}

In this section, we provide the proof of the lemmas in \S\ref{sec:proofs_aux_results}.

\subsection{Proof of Lemma \ref{lem::part_ii}}
\label{pf::part_ii}
\begin{proof}
We define $\Delta_{Q, i}(s, a) = Q_{\omega_i}(s, a)  - Q^{\pi_i}(s, a)$ for all $(s, a)\in\cS\times\cA$. It holds for all $i\in[T]$ that
\#\label{eq::pf_ii_eq1}
&\EE_{\nu^*}\bigl[|\langle \Delta_{Q, i}(s, \cdot ),   \pi^*(\cdot\,|\, s) - \pi_i(\cdot\,|\, s)\rangle|\bigr] \notag\\
&\quad = \int_{\cS}\biggl| \sum_{a\in\cA}\Delta_{Q, i}(s, a)\cdot \bigl ( \pi^*(a\,|\,s) - \pi_i(a\,|\,s)\bigr ) \biggr|  \ud \nu_*(s) . 
\#
Meanwhile, it holds for any $s\in\cS$ that
\#\label{eq::pf_ii_eq2}
&\biggl| \sum_{a\in\cA}\Delta_{Q, i}(s, a)\cdot \bigl ( \pi^*(a\,|\,s) - \pi_i(a\,|\,s)\bigr ) \biggr| \notag\\
&\quad= \biggl|\int_{a\in\cA}\Delta_{Q, i}(s, a)\cdot \bigl ( \pi^*(a\,|\,s) - \pi_i(a\,|\,s)\bigr ) \bigl/\pi_i(a\,|\,s) \ud \pi_i(a\,|\,s) \biggr| \notag\\
&\quad\leq \int_{a\in\cA}  \bigl| \Delta_{Q, i}(s, a)\cdot\bigl ( \pi^*(a\,|\,s) - \pi_i(a\,|\,s)\bigr ) \bigl/\pi_i(a\,|\,s) \bigr| \ud \pi_i(a\,|\,s),
\#
where the inequality follows from the Jensen's inequality. By plugging \eqref{eq::pf_ii_eq2} into \eqref{eq::pf_ii_eq1}, we have
\#\label{eq::pf_ii_eq3}
&\EE_{\nu^*}\bigl[|\langle \Delta_{Q, i}(s, \cdot),   \pi^*(\cdot\,|\, s) - \pi_i(\cdot\,|\, s)\rangle|\bigr] \notag\\
&\quad \leq \int_{\cS\times\cA} \bigl| \Delta_{Q, i}(s, a)\cdot\bigl ( \pi^*(a\,|\,s) - \pi_i(a\,|\,s)\bigr ) \big/\pi_i(a\,|\,s)\bigr| \ud \tilde \sigma(s, a),
\#
where we define $\tilde \sigma(\cdot, \cdot) = \pi_i(\cdot\,|\,\cdot)\cdot\nu_*(\cdot)$. Recall that $\varsigma_i(\cdot, \cdot) = \pi_i(\cdot\,|\,\cdot)\cdot\varrho_i(\cdot)$ and $\sigma_*(\cdot, \cdot) = \pi^*(\cdot\,|\,\cdot)\cdot\nu_*(\cdot)$. Therefore, following from \eqref{eq::pf_ii_eq3}, it holds that
\#\label{eq::pf_ii_eq4}
&\EE_{\nu^*}\bigl[|\langle \Delta_{Q, i}(s, \cdot),   \pi^*(\cdot\,|\, s) - \pi_i(\cdot\,|\, s)\rangle|\bigr]  \notag\\
&\quad\leq \int_{\cS\times\cA} | \Delta_{Q, i}(s, a)| \ud \sigma_* + \int_{\cS\times\cA} | \Delta_{Q, i}(s, a) |\cdot \frac{\ud \nu^*}{\ud \varrho_i}(s)   ~\ud\varsigma_i(s, a).
\#
Finally,  applying  the Cauchy-Schwartz inequality to  \eqref{eq::pf_ii_eq4} yields that
\$
&\EE\Bigl[\EE_{\nu_*}\bigl[|\langle \Delta_{Q, i}(s, \cdot),   \pi^*(\cdot\,|\, s) - \pi_i(\cdot\,|\, s)\rangle|\bigr] \Bigr]\\
&\quad\leq \biggl(\Bigl\{\EE_{\varsigma_i}\bigl[(\ud\sigma_*/\ud\varsigma_i)^2\bigr] \Bigr\}^{1/2}+ \Bigl\{\EE_{\varrho_i}\bigl[(\ud\nu_*/\ud\varrho_i)^2\bigr]\Bigr\}^{1/2}\biggr)\cdot\EE\biggl[\Bigl\{\EE_{\varsigma_i}\bigl[|\Delta_{Q, i}(s, a)|^2\bigr]\Bigr\}^{1/2}\biggr]
\\
&\quad= (\varphi'_i + \psi'_i)\cdot\EE\biggl[\Bigl\{\EE_{\varsigma_i}\bigl[|\Delta_{Q, i}(s, a)|^2\bigr]\Bigr\}^{1/2}\biggr]= (\varphi'_i + \psi'_i) \cdot \EE \bigl[\|\Delta_{Q, i}\|_{\varsigma_i} \bigr],
\$
where $\ud\sigma_*/\ud\varsigma_i$ and $\ud\nu_*/\ud\varrho_i$ are the Radon-Nikodym derivatives, $\varphi'_i$ and $\psi'_i$ are the concentrability coefficients defined in \eqref{eq::def_concen_coeff} of Assumption \ref{asu::concen_coeff}, and the expectations are taken over all the randomness. Thus, we complete the proof of Lemma \ref{lem::part_ii}.
\end{proof}

%\sqrt{\EE_{\text{init}}\bigl[\|\varepsilon_{Q, i}(s, a)\|^2_{\sigma_i}\bigr]}
%where the last inequality follows from Jensen's inequality. Following from Lemma \ref{lem::policy_eval}, it holds for a sufficently large $T_{TD}$ that $\EE_{\text{init}}[\|\varepsilon_{Q, i}(s, a)\|^2_{\sigma_i}] = O(R^3\cdot m^{-1/2} + R^{5/2}\cdot m^{-1/4})$. %Therefore, following from \eqref{eq::pf_ii_eq5}, it holds for some positive absolute constant $C$ that
%\#
%\EE_{\text{init}, \nu_*}\bigl[\bigl|\bigl\langle Q_{\omega_i}(s, a)  - Q^{\pi_i}(s, a), ~ \pi^*(a\,|\, s) - \pi_i(a\,|\, s)\bigr\rangle\bigr|\bigr] \leq C\cdot(\varphi_i + \psi_i)\cdot (R^{3/2}\cdot m^{-1/2} + R^{5/4}\cdot m^{-1/8}),
%\#

\subsection{Proof of Lemma \ref{lem::part_iv}}
\label{pf::part_iv}
\begin{proof}
%In what follows, we denote the two-layer neural network $f((s, a); \theta)$ defined in \eqref{eq:neural_network} by $f_\theta(s, a)$ for notational simplicity. 
Following from the definition of $\pi_\theta$ in \eqref{eq::policy_para}, we obtain that
\#\label{eq::pf_iv_eq1}
&\bigl\langle \log\bigl (\pi_{i+1}(\cdot\,|\, s) / \pi_i(\cdot\,|\, s)\bigr ) ,  \pi_{i}(\cdot\,|\, s) - \pi_{i+1}(\cdot\,|\, s)\bigr\rangle\notag \\
&\quad= \bigl\langle \tau_{i+1}\cdot f\bigl((s, \cdot); \theta_{i+1}\bigr) - \tau_{i}\cdot f\bigl((s, \cdot); \theta_i\bigr),   \pi_{i}(\cdot\,|\, s) - \pi_{i+1}(\cdot\,|\, s)\bigr\rangle\\
&\quad\qquad- \bigl\langle C_i(s),   \pi_{i}(\cdot\,|\, s) - \pi_{i+1}(\cdot\,|\, s)\bigr\rangle\notag,
\#
where $f((\cdot, \cdot); \theta)$ is the two-layer neural network defined in \eqref{eq:neural_network} and $C_i(s)$ is defined by 
\$
C_i(s) = \log\biggl  (\sum_{a\in \cA} \exp\Bigl ( \tau_{i}\cdot f\bigl((s, a); \theta_i\bigr )\Bigr) \biggr )  - \log\biggl ( \sum_{a\in \cA} \exp\Bigl ( \tau_{i+1}\cdot f\bigl((s, a); \theta_{i+1}\bigr)\Bigr ) \biggr).
\$
Note that both  $\pi_i(\cdot\,|\, s)$ and $\pi_{i+1}(\cdot\,|\, s)$ are distributions over $\cA$, which implies that 
\#\label{eq::pf_iv_eq2}
 \langle C_i(s),   \pi_{i}(\cdot\,|\, s) - \pi_{i+1}(\cdot\,|\, s) \rangle = C_i(s) - C_i(s) = 0, \quad \forall s \in \cS.
\# 
Meanwhile, recall that we define the feature mapping $\phi_\theta(s, a)$ in \eqref{eq::def_phi}. For the two-layer neural network $f((\cdot, \cdot); \theta)$, we have
\#\label{eq::pf_iv_nn_def}
f\bigl((s, a); \theta\bigr) =  \phi_{\theta}(s, a) ^\top\theta, \quad \forall (s, a)\in\cS\times\cA.
\#
In what follows, we write $\phi_{i}(s, a) = \phi_{\theta_i}(s, a)$ and $\Delta_i(a\,|\,s) = \pi_i(a\,|\,s) - \pi_{i+1}(a\,|\,s)$ for notational simplicity. By plugging \eqref{eq::pf_iv_eq2} and \eqref{eq::pf_iv_nn_def} into \eqref{eq::pf_iv_eq1}, we obtain for all $s\in\cS$ that
\#\label{eq::pf_iv_eq3}
&\bigl|\bigl\langle \log\bigl ( \pi_{i+1}(\cdot\,|\, s) / \pi_i(\cdot\,|\, s)\bigr ) ,  \Delta_i(\cdot\,|\,s)\bigr\rangle\bigr| = |\langle\tau_{i+1}\cdot \phi_{i+1}(s, \cdot)^\top\theta_{i+1} - \tau_{i}\cdot \phi_{i}(s, \cdot)^\top\theta_{i},  \Delta_i(\cdot\,|\,s)\rangle|\notag\\
&\quad \leq |\langle \phi_{i}(s, \cdot)^\top(\tau_{i+1}\cdot\theta_{i+1} - \tau_{i}\cdot \theta_{i}),  \Delta_i(\cdot\,|\,s)\rangle| \notag\\
&\quad\qquad+ \tau_{i+1}\cdot|\langle \phi_{i+1}(s, \cdot)^\top\theta_{i+1} - \phi_{i}(s, \cdot)^\top\theta_{i+1},  \Delta_i(\cdot\,|\,s)\rangle|\notag\\
&\quad \leq \underbrace{\|\phi_{i}(s, \cdot )^\top(\tau_{i+1}\cdot\theta_{i+1} - \tau_{i}\cdot \theta_{i})\|_{\infty, \cA}\cdot \|\Delta_i(\cdot \,|\,s)\|_{1, \cA}}_{\textstyle{\rm (i)}}\\
&\quad\qquad+\underbrace{\tau_{i+1}\cdot|\langle \phi_{i+1}(s, \cdot)^\top\theta_{i+1} - \phi_{i}(s, \cdot)^\top\theta_{i+1}, \Delta_i(\cdot\,|\,s)\rangle|}_{\textstyle{\rm (ii)}},\notag
\#
where the last inequality follows from the H\"older's inequality. Here we denote by $\|\cdot\|_{\infty, \cA}$ and   $\|\cdot\|_{1, \cA}$ the  $\ell_{\infty}$- and $\ell_1$-norms defined on $\RR^{|\cA|}$, respectively. In what follows, we upper bound (i) and (ii)  on the right-hand side of \eqref{eq::pf_iv_eq3} respectively.

\vskip4pt
\noindent{\bf Upper Bounding (i) in \eqref{eq::pf_iv_eq3}. }Recall that we define 
\$%\label{eq::pf_def_delta_i}
\delta_i = \eta^{-1}\cdot(\tau_{i+1}\cdot \theta_{i+1} - \tau_{i}\cdot\theta_{i}) =\argmin_{\alpha\in\cB}\|\hat F(\theta_i)\cdot \alpha - \tau_i \cdot \hat \nabla J(\pi_{\theta_i})\|_2.
\$
Thus, it holds that $\delta_i \in \cB$ and $\|\delta_i -  W_{{\rm init}}\|_2 \leq R$, where $ W_{{\rm init}}$ is the initial parameter. In what follows, we denote by $\phi_{0}$ the feature mapping defined in \eqref{eq::def_phi} with $\theta = W_{\rm init}$. Then for all $(s, a)\in \cS\times \cA$, we have 
\#\label{eq::pf_iv_part1_eq1}
&|\phi_{i}(s, a)^\top(\tau_{i+1}\cdot\theta_{i+1} - \tau_{i}\cdot \theta_{i})| = \eta\cdot|\phi_{i}(s, a)^\top \delta_i| \notag\\
&\quad\leq \eta\cdot\bigl(|\phi_0(s, a)^\top  W_{{\rm init}}| + |\phi_i(s, a)^\top \delta_i - \phi_i(s, a)^\top \theta_i| + |\phi_i(s, a)^\top \theta_i - \phi_0(s, a)^\top  W_{{\rm init}}|\bigr) \notag\\
&\quad\leq \eta\cdot\bigl(M_0 + \|\phi_i(s, a)\|_2\cdot \|\delta_i - \theta_i\|_2 + |\phi_i(s, a)^\top \theta_i - \phi_0(s, a)^\top  W_{{\rm init}}|\bigr),
\#
where the first inequality follows from the triangle inequality,  the second inequality follows from the Cauchy-Schwartz inequality, and $M_0$ is defined by
\#\label{eq::pf_def_M_0}
M_0 = \sup_{(s, a)\in\cS\times\cA}|\phi_0(s, a)^\top  W_{{\rm init}}|.
\#
 In what follows, we upper bound the right-hand side of \eqref{eq::pf_iv_part1_eq1}. Note that $\tau_{i -1} + \eta = \tau_i$. Therefore, we obtain that
\#\label{eq::pf_iv_part1_eq2}
\|\theta_i -  W_{{\rm init}}\|_2 \leq \tau_{i -1}/\tau_{i}\cdot \|\theta_{i -1 } -  W_{{\rm init}}\|_2 + \eta/\tau_i \cdot \|\delta_{i - 1} -  W_{{\rm init}}\|_2,
\#
which holds for all $i > 1$. Recursively, since $\theta_1 =  W_{{\rm init}}\in\cB$ and $\delta_i \in \cB$ for all $i\in [T]$, it then follows from \eqref{eq::pf_iv_part1_eq2} that $\theta_i \in \cB$ for all $i\in[T]$. Thus, it holds that $\|\delta_i - \theta_i\|_2 \leq 2R$. Meanwhile,  following from \eqref{eq::def_phi}, it holds for all $\theta \in \RR^{md}$ and $(s, a)\in \cS \times \cA$ that $\|\phi_\theta(s, a)\|_2 \leq 1$. Therefore, we obtain that
\#\label{eq::pf_iv_part1_eq3}
\|\phi_i(s, a)\|_2\cdot \|\delta_i - \theta_i\|_2 \leq 2R, \quad \forall (s,a) \in \cS \times \cA. 
\#

 It remains to upper bound  $|\phi_i(s, a)^\top \theta_i - \phi_0(s, a)^\top  W_{{\rm init}}|$ for all $(s, a)\in\cS\times\cA$, which is equal to $|f((s, a); \theta_i) - f((s, a); W_{\rm init})|$ by  \eqref{eq::pf_iv_nn_def}.  Recall that $f((\cdot, \cdot); \theta)$ is differentiable with respect to $\theta \in \RR^{md}$ almost everywhere, and the gradient $\nabla_\theta f = ([\nabla_\theta f]^\top_1, \ldots, [\nabla_\theta f]^\top_m)^\top$ is given by 
 \$
[\nabla_\theta f]_r(s, a) = \frac{b_r}{\sqrt{m}}\cdot \ind\bigl\{(s, a)^\top [\theta]_r > 0 \bigr\}\cdot (s, a) = [\phi_\theta]_r(s, a),\quad \forall (s, a)\in\cS\times\cA,
\$
where $\phi_\theta(s, a)$ is defined in \eqref{eq::def_phi}. Since $\|\phi_\theta (s, a)\|_2 \leq 1$ for all $\theta \in \RR^{md}$ and $(s, a)\in\cS\times\cA$, we obtain for all $(s, a)\in\cS\times\cA$ that
\#\label{eq::pf_iv_part1_eq4}
|\phi_i(s,a) ^\top \theta_i - \phi_0(s,a)^\top  W_{{\rm init}}| &= \bigl|f\bigl((s, a); \theta_i\bigr) - f\bigl((s, a);W_{\rm init}\bigr)\bigr|\notag\\
&\leq \sup_{\theta \in \RR^{md}} \bigl\|\nabla_\theta f\bigl((s, a);\theta\bigr)\bigr\|_2 \cdot \|\theta_i -  W_{{\rm init}}\|_2\notag\\
&= \sup_{\theta\in\RR^{md}}\|\phi_\theta(s, a)\|_2\cdot \|\theta_i -  W_{\rm{init}}\|_2 \leq R, 
\#
where the last inequality holds since $\theta_i \in \cB$. 

By plugging \eqref{eq::pf_iv_part1_eq3} and \eqref{eq::pf_iv_part1_eq4} into \eqref{eq::pf_iv_part1_eq1}, we have 
\$
|\tau_{i+1}\cdot \phi_{i}(s, a)^\top\theta_{i+1} - \tau_{i}\cdot \phi_{i}(s, a)^\top\theta_{i}| \leq \eta\cdot(M_0 + 3R), \quad \forall (s,a) \in \cS\times \cA,
\$
where $M_0$ is defined in \eqref{eq::pf_def_M_0}. Therefore, it holds for all $s \in \cS$ that
\#\label{eq::pf_iv_part1_eq5}
\|\tau_{i+1}\cdot \phi_{i}(s, \cdot )^\top\theta_{i+1} - \tau_{i}\cdot \phi_{i}(s, \cdot )^\top\theta_{i}\|_{\infty, \cA} &= \sup_{a\in\cA}|\tau_{i+1}\cdot \phi_{i}(s, a)^\top\theta_{i+1} - \tau_{i}\cdot \phi_{i}(s, a)^\top\theta_{i}| \notag\\
&\leq \eta\cdot(M_0 + 3R). 
\#
  Finally, by the Pinsker's inequality, it follows from \eqref{eq::pf_iv_part1_eq5} that 
\#\label{eq::pf_iv_part100}
&\|\phi_{i}(s, \cdot )^\top(\tau_{i+1}\cdot\theta_{i+1} - \tau_{i}\cdot \theta_{i})\|_{\infty, \cA}\cdot \|\Delta_i(\cdot \,|\,s)\|_{1, \cA} - D_{{\rm KL}}\bigl(\pi_{i+1}(\cdot \,|\, s)\bigl\| \pi_{i}(\cdot  \,|\, s)\bigr)\notag\\
&\quad\leq \eta\cdot(M_0 + 3R)\cdot \|\pi_{i+1}(\cdot \,|\, s) - \pi_{i}(\cdot \,|\, s)\|_{1, \cA} - 1/2 \cdot \|\pi_{i+1}(\cdot \,|\, s) - \pi_{i}(\cdot \,|\, s)\|_{1, \cA}^2.
\#
By completing the squares, we further upper bound the right-hand side of \eqref{eq::pf_iv_part100} by 
\#\label{eq::pf_iv_part1}
&\|\phi_{i}(s, \cdot )^\top(\tau_{i+1}\cdot\theta_{i+1} - \tau_{i}\cdot \theta_{i})\|_{\infty, \cA}\cdot \|\Delta_i(\cdot \,|\,s)\|_{1, \cA} - D_{{\rm KL}}\bigl(\pi_{i+1}(\cdot \,|\, s)\bigl\| \pi_{i}(\cdot \,|\, s)\bigr)\notag\\
&\quad = - 1/2 \cdot  \bigl (\|\pi_{i+1}(\cdot \,|\, s) - \pi_{i}(\cdot \,|\, s)\|_{1, \cA} - \eta\cdot(M_0 + 3R) \bigr )^2 +1/2 \cdot \eta^2\cdot (M_0 + 3R)^2  \notag \\
& \quad    \leq 1/2 \cdot \eta^2\cdot (M_0 + 3R)^2 \leq \eta^2\cdot (M^2_0 + 9R^2),
\#
which holds for all $s \in \cS$. Here the last inequality follows from the fact that $(x + y)^2 \leq 2x^2 + 2y^2$. %Thus, we complete the upper bound of (i) in \eqref{eq::pf_iv_eq3}.
%where recall that we define $\Delta_i(a\,|\,s) = \pi_i(a\,|\,s) - \pi_{i+1}(a\,|\,s)$. 

\vskip4pt
\noindent{\bf Upper  Bounding (ii) in \eqref{eq::pf_iv_eq3}. }It holds  for all $s\in \cS$ that 
\#\label{eq::pf_iv_part2_eq1}
&|\langle \phi_{i+1}(s, \cdot)^\top\theta_{i+1} - \phi_{i}(s, \cdot)^\top\theta_{i+1},  \Delta_i(\cdot\,|\,s)\rangle| \notag\\
&\quad \leq  |\langle \phi_{i+1}(s,\cdot)^\top\theta_{i+1} - \phi_{i}(s, \cdot)^\top\theta_{i+1},  \pi_i(\cdot\,|\,s)\rangle| + |\langle \phi_{i+1}(s, \cdot)^\top\theta_{i+1} - \phi_{i}(s, \cdot)^\top\theta_{i+1},  \pi_{i+1}(\cdot\,|\,s)\rangle|\notag\\
&\quad \leq \|\phi_{i+1}(s, \cdot)^\top\theta_{i+1} - \phi_{i}(s, \cdot)^\top\theta_{i+1}\|_{\pi_i, 1} + \|\phi_{i+1}(s, \cdot)^\top\theta_{i+1} - \phi_{i}(s, \cdot)^\top\theta_{i+1}\|_{\pi_{i+1}, 1}.
\#
Here for any distribution $\pi \in \cP(\cA)$, we denote by $\| \cdot \|_{\pi, p}$ the $L_p(\pi)$-norm, which is defined by $\| v \|_{\pi, p} = [ \sum_{a \in \cA} \pi(a) \cdot |v(a)|^p ]^{1/p}$. 
Following from Assumption \ref{asu::reg_cond} and Lemma \ref{lem::linear_err}, it holds that
\#\label{eq::pf_iv_part2_eq2}
&\EE\Bigl[\EE_{\nu_*} \bigl [\|\phi_{i+1}(s, \cdot)^\top\theta_{i+1} - \phi_{0}(s, \cdot)^\top\theta_{i+1}\|_{\pi_i, 1} \bigr ] \Bigr]\notag\\
&\quad\leq \EE\bigl[\|\phi_{i+1}(\cdot, \cdot)^\top\theta_{i+1} - \phi_{0}(\cdot, \cdot)^\top\theta_{i+1} \|_{\pi_i\cdot\nu_*}\bigr] = O(R^{3/2}\cdot m^{-1/4}),\notag\\
&\EE\Bigl[\EE_{\nu_*} \bigl [\|\phi_{i}(s, \cdot)^\top\theta_{i+1} - \phi_{0}(s, \cdot)^\top\theta_{i+1}\|_{\pi_i, 1} \bigr ]\Bigr]\notag\\
&\quad\leq \EE \bigl [\|\phi_{i}(\cdot, \cdot)^\top\theta_{i+1} - \phi_{0}(\cdot, \cdot)^\top\theta_{i+1}\|_{\pi_i\cdot\nu_*} \bigr ] = O(R^{3/2}\cdot m^{-1/4}),
\#
where the inequalities follow from the Cauchy-Schwartz inequality, and the expectations are taken over all the randomness. Meanwhile, it holds that
\#\label{eq::pf_iv_part2_eq3}
&\|\phi_{i+1}(s, \cdot)^\top\theta_{i+1} - \phi_{i}(s, \cdot)^\top\theta_{i+1}\|_{\pi_i, 1}\notag\\
&\quad\leq \|\phi_{i+1}(s, \cdot)^\top\theta_{i+1} - \phi_{0}(s, \cdot)^\top\theta_{i+1} \|_{\pi_i, 1} + \| \phi_{i}(s, \cdot)^\top\theta_{i+1} - \phi_{0}(s, \cdot)^\top\theta_{i+1}\|_{\pi_i, 1}.
\#
Combining \eqref{eq::pf_iv_part2_eq2} and \eqref{eq::pf_iv_part2_eq3}, we obtain that 
\#\label{eq::pf_iv_part2_eq4}
\EE\Bigl[\EE_{\nu_*}\bigl[\|\phi_{i+1}(s, \cdot)^\top\theta_{i+1} - \phi_{i}(s, \cdot)^\top\theta_{i+1}\|_{\pi_i, 1}\bigr] \Bigr]= O(R^{3/2}\cdot m^{-1/4}).
\#
Similarly, it holds that 
\#\label{eq::pf_iv_part2_eq5}
\EE\Bigl[\EE_{\nu_*}\bigl[\|\phi_{i+1}(s, \cdot)^\top\theta_{i+1} - \phi_{i}(s,\cdot)^\top\theta_{i+1}\|_{\pi_{i+1}, 1}\bigr]\Bigr] = O(R^{3/2}\cdot m^{-1/4}),
\#
where the expectation is taken over all the randomness. By plugging \eqref{eq::pf_iv_part2_eq4} and \eqref{eq::pf_iv_part2_eq5} into \eqref{eq::pf_iv_part2_eq1}, we obtain that
\#\label{eq::pf_iv_part2}
\tau_{i+1}\cdot\EE\Bigl[\EE_{\nu_*}\bigl[|\langle \phi_{i+1}(s, \cdot)^\top\theta_{i+1} - \phi_{i}(s, \cdot)^\top\theta_{i+1},  \Delta_i(\cdot\,|\,s)\rangle|\bigr] \Bigr]= O(\tau_{i+1}\cdot R^{3/2}\cdot m^{-1/4}).
\#
%Thus, we complete the upper bound of (ii) in in \eqref{eq::pf_iv_eq3}.

Finally, by plugging \eqref{eq::pf_iv_part1} and \eqref{eq::pf_iv_part2} into \eqref{eq::pf_iv_eq3}, it holds under Assumptions \ref{asu::reg_cond} and \ref{asu::bound_moment} that
\$
&\EE\biggl[\EE_{\nu_*}\Bigl[\bigl|\bigl\langle \log\bigl(\pi_{i+1}(\cdot\,|\, s) / \pi_i(\cdot\,|\, s)\bigr),  \pi_{i}(\cdot\,|\, s) - \pi_{i+1}(\cdot\,|\, s)\bigr\rangle\bigr|\Bigr]\biggr] \\
&\quad \leq \EE\biggl[\EE_{\nu_*}\Bigl[D_{{\rm KL}}\bigl(\pi_{i+1}(\cdot\,|\, s)\bigl\| \pi_{i}(\cdot\,|\, s)\bigr)\Bigr]\biggr] + \eta^2\cdot(9R^2 + M^2) + O(\tau_{i+1}\cdot R^{3/2}\cdot m^{-1/4}),\notag
\$
where $M$ is the absolute constant defined in Assumption \ref{asu::bound_moment}. Thus, we complete the proof of Lemma \ref{lem::part_iv}.
\end{proof}

\subsection{Proof of Lemma \ref{lem::part_i}}
\label{pf::part_i}
\begin{proof}
Note that $\EE_{\pi_{\theta_i}}[\phi_{\theta_i}(s, a')]$ and $\EE_{\pi_{\theta_i}}[\phi_{\omega_i}(s, a')]$ depend solely on $s\in\cS$, where we write $\EE_{\pi_{\theta_i}}[\phi_{\theta_i}(s, a')] = \EE_{a'\sim\pi_{\theta_i}(\cdot\,|\,s)}[\phi_{\theta_i}(s, a')]$ for notational simplicity. Thus, we have
\#\label{eq::pf_i_eq0}
\bigl\langle \EE_{\pi_{\theta_i}}\bigl[\phi_{\theta_i}(s, a')^\top \delta_i -  \phi_{\omega_i}(s, a')^\top \omega_i\bigr], \pi^*(\cdot\,|\,s) - \pi_i(\cdot\,|\,s)\bigr\rangle = 0, \quad \forall s\in \cS. 
\#
Meanwhile, following from the parameterization of $\pi_\theta$ in \eqref{eq::policy_para} and \eqref{eq::pf_iv_eq2} in \S\ref{pf::part_iv}, we obtain that
\#\label{eq::pf_i_eq1}
&\bigl\langle \log\bigl(\pi_{i+1}(\cdot\,|\, s) / \pi_i(\cdot\,|\, s)\bigr) -  \eta \cdot Q_{\omega_i}(s, \cdot),  \pi^*(\cdot\,|\,s) - \pi_i(\cdot\,|\,s)\bigr\rangle\notag\\
&\quad = \langle \tau_{i+1}\cdot \phi_{\theta_{i+1}}(s, \cdot)^\top \theta_{i+1} - \tau_i\cdot\phi_{\theta_i}(s, \cdot)^\top \theta_i  -\eta\cdot \phi_{\omega_i}(s, \cdot)^\top \omega_i,  \pi^*(\cdot\,|\,s) - \pi_i(\cdot\,|\,s)\rangle.
\#
In what follows, we define $\Delta^*_i(\cdot\,|\,\cdot) = \pi^*(\cdot\,|\,\cdot) - \pi_i(\cdot\,|\,\cdot)$ for notational simplicity. Then, combining \eqref{eq::pf_i_eq0} and \eqref{eq::pf_i_eq1}, we obtain for all $s\in\cS$ that
\#\label{eq::pf_i_eq2}
&\bigl\langle \log\bigl(\pi_{i+1}(\cdot\,|\, s) / \pi_i(\cdot\,|\, s)\bigr) -  \eta \cdot Q_{\omega_i}(s, \cdot),  \Delta^*_i(\cdot\,|\,s)\bigr\rangle\notag\\
&\quad =\eta\cdot  \langle \phi_{\theta_i}(s, \cdot)^\top\delta_i - \phi_{\omega_i}(s, \cdot)^\top \omega_i,  \Delta^*_i(\cdot\,|\,s)\rangle\notag\\
&\quad\qquad+ \tau_{i+1}\cdot \langle \phi_{\theta_{i+1}}(s, \cdot)^\top\theta_{i+1} - \phi_{\theta_i}(s, \cdot)^\top \theta_{i+1},  \Delta^*_i(\cdot\,|\,s)\rangle\notag\\
&\quad = \underbrace{\eta\cdot  \langle \overline\phi_{\theta_i}(s, \cdot)^\top\delta_i - \overline\phi_{\omega_i}(s, \cdot)^\top \omega_i,  \Delta^*_i(\cdot\,|\,s)\rangle}_{\textstyle{\rm (iii)}} \\
&\quad\qquad+ \underbrace{\tau_{i+1}\cdot \langle \phi_{\theta_{i+1}}(s, \cdot)^\top\theta_{i+1} - \phi_{\theta_i}(s, \cdot)^\top \theta_{i+1},  \Delta^*_i(\cdot\,|\,s)\rangle}_{\textstyle{\rm (iv)}}\notag,
\#
where $\overline \phi_{\theta_i}$ and $\overline \phi_{\omega_i}$ are the centered feature mappings defined in \eqref{eq::def_phi_c} that correspond to $\theta_i$ and $\omega_i$, respectively, and $\delta_i$ is defined by 
\#\label{eq::pf_i_eq1.5}
\delta_i = \eta^{-1}\cdot(\tau_{i+1}\cdot \theta_{i+1} - \tau_{i}\cdot\theta_{i}) =\argmin_{\omega\in\cB}\|\hat F(\theta_i)\omega - \tau_i \cdot \hat \nabla J(\pi_{\theta_i})\|_2.
\#
In what follows, we upper bound the expectations of (iii) and (iv) over all the randomness separately.

\vskip4pt
\noindent{\bf Upper Bounding (iii) in \eqref{eq::pf_i_eq2}. }It holds that
\#\label{eq::pf_i_part1_eq1}
&\EE_{\nu_*}\bigl[|\langle \overline\phi_{\theta_i}(s, \cdot)^\top\delta_i - \overline\phi_{\omega_i}(s, \cdot)^\top \omega_i,  \pi^*(\cdot\,|\,s)\rangle|\bigr]   \notag\\
& \quad \leq \int_{\cS\times\cA}|\overline\phi_{\theta_i}(s, a)^\top\delta_i - \overline\phi_{\omega_i}(s, a)^\top \omega_i| \ud \sigma_*(s, a)\notag\\
&\quad = \int_{\cS\times\cA} |\overline\phi_{\theta_i}(s, a)^\top\delta_i - \overline\phi_{\omega_i}(s, a)^\top \omega_i|\cdot \frac{\ud \sigma_*}{\ud\sigma_i}(s, a) \ud \sigma_i(s, a)\notag\\
&\quad\leq \varphi_i\cdot\|\overline\phi_{\theta_i}(\cdot, \cdot)^\top\delta_i - \overline\phi_{\omega_i}(\cdot, \cdot)^\top \omega_i\|_{\sigma_i},
\#
where $\ud \sigma_*/\ud \sigma_i$ is the Radon-Nikodym derivative, $\varphi_i$ is defined in \eqref{eq::def_concen_coeff} of Assumption \ref{asu::concen_coeff}, and the last inequality follows from the Cauchy-Schwartz inequality. Similarly, it holds that
\#\label{eq::pf_i_part1_eq2}
&\EE_{\nu_*}\bigl[|\langle \overline\phi_{\theta_i}(s, a)^\top\delta_i - \overline\phi_{\omega_i}(s, a)^\top \omega_i,  \pi_i(a\,|\,s)\rangle|\bigr] \notag\\
& \quad \leq \int_{\cS\times\cA} |\overline\phi_{\theta_i}(s, a)^\top\delta_i - \overline\phi_{\omega_i}(s, a)^\top \omega_i| \ud \pi_i(a\,|\,s)\cdot \nu_*(s)\notag\\
&\quad = \int_{\cS\times\cA} |\overline\phi_{\theta_i}(s, a)^\top\delta_i - \overline\phi_{\omega_i}(s, a)^\top \omega_i|\cdot \frac{\ud \nu_*}{\ud\nu_i}(s) \ud \sigma_i(s, a) \notag\\
&\quad\leq \psi_i\cdot\|\overline\phi_{\theta_i}(\cdot, \cdot)^\top\delta_i - \overline\phi_{\omega_i}(\cdot, \cdot)^\top \omega_i\|_{\sigma_i},
\#
where $\ud \nu_*/\ud \nu_i$ is the Radon-Nikodym derivative, $\psi_i$ is defined in \eqref{eq::def_concen_coeff} of Assumption \ref{asu::concen_coeff}, and the last inequality follows from the Cauchy-Schwartz inequality. Combining \eqref{eq::pf_i_part1_eq1} and \eqref{eq::pf_i_part1_eq2}, we obtain that
\#\label{eq::pf_i_part1_eq3}
&\EE_{\nu_*}\bigl[|\langle \overline\phi_{\theta_i}(s, \cdot)^\top\delta_i - \overline\phi_{\omega_i}(s, \cdot)^\top \omega_i,  \Delta^*_i(\cdot\,|\,s)\rangle|\bigr] \notag\\
&\quad \leq (\varphi_i + \psi_i)\cdot \|\overline\phi_{\theta_i}(\cdot, \cdot)^\top\delta_i - \overline\phi_{\omega_i}(\cdot, \cdot)^\top \omega_i\|_{\sigma_i}.
\#
It now suffices to upper bound $\|\overline\phi_{\theta_i}(\cdot, \cdot)^\top\delta_i - \overline\phi_{\omega_i}(\cdot, \cdot)^\top \omega_i\|_{\sigma_i}$. With a slight abuse of notation, we write $\overline\phi_{\theta_i} = \overline\phi_{\theta_i}(\cdot, \cdot)$ and $\overline\phi_{\omega_i} = \overline\phi_{\omega_i}(\cdot, \cdot)$ hereafter for notational simplicity. Note that 
\#\label{eq::pf_i_part1_eq4}
&\|\delta_i^\top{\overline\phi_{\theta_i}} - \omega_i^\top{\overline\phi_{\omega_i}}\|_{\sigma_i} = \sqrt{\EE_{\sigma_i}\bigl[(\delta_i^\top{\overline\phi_{\theta_i}} -  \omega_i^\top{\overline\phi_{\omega_i}}) \cdot(\delta_i^\top{\overline\phi_{\theta_i}} - \omega_i^\top{\overline\phi_{\omega_i}})\bigr]}\notag\\
&\quad \leq \underbrace{\sqrt{\bigl|(\delta_i - \omega_i)^\top\EE_{\sigma_i}\bigl[ \overline\phi_{\theta_i}\cdot(\delta_i ^\top{\overline\phi_{\theta_i}}-  \omega_i^\top{\overline\phi_{\omega_i}})\bigr]\bigr|}}_{\textstyle{\rm (iii.a)}} \\
&\quad\qquad +\underbrace{\sqrt{\EE_{\sigma_i}\bigl[(\omega^\top_i \overline\phi_{\theta_i} - \omega^\top_i \overline\phi_{\omega_i})\cdot(\delta^\top_i \overline\phi_{\theta_i} - \omega_i^\top \overline\phi_{\omega_i})\bigr]}}_{\textstyle{\rm (iii.b)}}.\notag
\#
We now upper bound the expectations of the right-hand side of \eqref{eq::pf_i_part1_eq4} over all the randomness.

\vskip4pt
\noindent{\bf Upper Bounding (iii.a)  in \eqref{eq::pf_i_part1_eq4}. }Note that $\omega_i, \delta_i \in \cB$, where $\delta_i$ is defined in \eqref{eq::pf_i_eq1.5} and $\cB = \{\alpha\in\RR^{md}: \|\alpha - W_{\rm init}\|_2 \leq R\}$. Therefore, we obtain that 
\#\label{eq::pf_i_part1_eq11.5}
\|\omega_i - \delta_i\|_2 \leq 2R.
\# 
Meanwhile, following from Proposition \ref{prop::para_grad_fisher} and \eqref{eq::grad_approx}, it holds that
\#\label{eq::pf_i_part1_eq12}
&\EE_{\sigma_i}\bigl[\hat F(\theta_i)\bigr] = F(\theta_i) = \tau_i^2 \cdot\EE_{\sigma_i}\bigl[ \overline\phi_{\theta_i} (\overline\phi_{\theta_i})^\top\bigr], \notag\\
&\EE_{\sigma_i}\bigl[\hat\nabla_\theta J(\pi_{\theta_i})\bigr] = \tau_i\cdot\EE_{\sigma_i}\bigl[\overline\phi_{\theta_i}\cdot (\overline\phi_{\omega_i})^\top \omega_i\bigr],
\#
where the expectations are taken over $\sigma_i$ given $\theta_i$ and $\omega_i$. In what follows, we write $g_i = \EE_{\sigma_i}[\hat \nabla J(\pi_{\theta_i})]$ for notational simplicity, where the expectation is taken over $\sigma_i$ given $\theta_i$ and $\omega_i$. By plugging \eqref{eq::pf_i_part1_eq11.5} and \eqref{eq::pf_i_part1_eq12} into (iii.a)  in \eqref{eq::pf_i_part1_eq4}, we obtain that
\#\label{eq::pf_i_part1_eq13}
\Bigl|(\delta_i - \omega_i)^\top\EE_{\sigma_i}\Bigl[\bigl(\overline\phi_{\theta_i}\cdot(\delta_i^\top{\overline\phi_{\theta_i}} -  \omega_i^\top{\overline\phi_{\omega_i}})\bigr)\Bigr]\Bigr| &= \tau_i^{-2}\cdot\bigl|(\delta_i - \omega_i)^\top \bigl( F(\theta_i)\cdot\delta_i - \tau_i\cdot g_i\bigr)\bigr|\notag\\
&\leq 2R\cdot\tau_i^{-2}\cdot\|F(\theta_i)\cdot \delta_i - \tau_i\cdot g_i\|_2,
\#
where the last inequality follows from the Cauchy-Schwartz inequality and \eqref{eq::pf_i_part1_eq11.5}. By \eqref{eq::pf_i_part1_eq13}, we have
\#\label{eq::pf_i_part1_eq15}
&\EE \Bigl[\bigl|(\delta_i - \omega_i)^\top\EE_{\sigma_i}\bigl[ \overline\phi_{\theta_i}(\delta_i^\top{\overline\phi_{\theta_i}} - \omega_i^\top{\overline\phi_{\omega_i}})\bigr]\bigr|^{1/2}\Bigr] \leq C_i\cdot \EE\Bigl[\Bigl(\bigl\|F(\theta_i)\cdot \delta_i - \tau_i\cdot g_i\bigr]\bigr\|_2\Bigr)^{1/2}\Bigr]\notag \\
&\quad\leq C_i \cdot \EE\Bigl[ \bigl(\|\hat F(\theta_i)\cdot \delta_i - \tau_i \cdot \hat \nabla_\theta J(\pi_{\theta_i})\|_2 + \|\xi_i(\delta_i)\|_2\bigr)^{1/2}\Bigr]\notag\\
& \quad  \leq C_i \cdot \Bigl\{\EE\bigl[\| \hat F(\theta_i) \cdot\delta_i - \tau_i \cdot  \hat \nabla_\theta J(\pi_{\theta_i})\|_2\bigr] + \EE\bigl[\|\xi_i(\delta_i)\|_2\bigr]\Bigr\}^{1/2},
\#
where the expectations are taken over all the randomness. Here the last inequality follows from the Jensen's inequality, $C_i = \sqrt{2R}\cdot\tau_i^{-1}$, and $\xi_i(\delta_i)$ is defined by
\#\label{eq::pf_i_part1_eq14}
\xi_i(\delta_i) = \hat F(\theta_i)\cdot \delta_i - \tau_i \cdot \hat\nabla_\theta J(\pi_{\theta_i})  - \bigl( F(\theta_i)\cdot\delta_i  - \tau_i\cdot g_i\bigr).
\#
In what follows, we upper bound $\| \hat F(\theta_i) \cdot\delta_i - \tau_i \cdot  \hat\nabla_\theta J(\pi_{\theta_i})\|_2$ on the right-hand side of \eqref{eq::pf_i_part1_eq15}. Recall that we define $\delta_i$ by
\#\label{eq::pf_i_part1_eq16}
\delta_i = \eta^{-1}\cdot (\tau_{i+1}\cdot \theta_{i+1} - \tau_i \cdot \theta_i) = \argmin_{\omega\in\cB}\|\hat F(\theta_i) \cdot\omega_i - \tau_i \cdot \hat\nabla_\theta J(\pi_{\theta_i})\|_2.
\#
Therefore, since $\omega_i \in \cB$, we obtain from \eqref{eq::pf_i_part1_eq16} that
\#\label{eq::pf_i_part1_eq17}
\|\hat F(\theta_i) \cdot\delta_i - \tau_i \cdot \hat \nabla_\theta J(\pi_{\theta_i})\|_2 &\leq \|\hat F(\theta_i) \cdot\omega_i - \tau_i \cdot \hat \nabla_\theta J(\pi_{\theta_i})\|_2 \notag\\
&\leq \|F(\theta_i)\cdot\omega_i - \tau_i\cdot g_i\|_2 + \|\xi_i(\omega_i)\|_2,
\#
where recall that, similar to \eqref{eq::pf_i_part1_eq14}, we define $\xi_i(\omega_i)$ by
\#\label{eq::pf_i_part1_eq18}
\xi_i(\omega_i) = \hat F(\theta_i)\cdot \omega_i - \tau_i \cdot \hat\nabla_\theta J(\pi_{\theta_i})  - \bigl( F(\theta_i)\cdot\omega_i  - \tau_i\cdot g_i\bigr).
\#

By plugging \eqref{eq::pf_i_part1_eq17} into \eqref{eq::pf_i_part1_eq15}, we obtain that
\#\label{eq::pf_i_part1_eq19}
&\EE \Bigl[\bigl|(\delta_i - \omega_i)^\top\EE_{\sigma_i}\bigl[ \overline\phi_{\theta_i}\cdot(\delta_i^\top{\overline\phi_{\theta_i}} - \omega_i^\top{\overline\phi_{\omega_i}})\bigr]\bigr|^{1/2}\Bigr]\notag\\
&\quad\leq C_i \cdot \Bigl\{\EE\bigl[\| F(\theta_i) \cdot\omega_i - \tau_i \cdot  g_i\|_2\bigr] + \EE\bigl[\|\xi_i(\delta_i)\|_2\bigr] + \EE\bigl[\|\xi_i(\omega_i)\|_2\bigr] \Bigr\}^{1/2},
\#
where $C_i = \sqrt{2R}\cdot\tau_i^{-1}$ and $\xi_i(\delta_i)$, $\xi_i(\omega_i)$ are defined in \eqref{eq::pf_i_part1_eq14} and \eqref{eq::pf_i_part1_eq18}, respectively. To upper bound the right-hand side of \eqref{eq::pf_i_part1_eq19}, it now suffices to upper bound the expectation $\EE[\| F(\theta_i) \cdot\omega_i - \tau_i \cdot  g_i\|_2]$. By \eqref{eq::pf_i_part1_eq12}, we obtain that
\#\label{eq::pf_i_part1_eq20}
&\|F(\theta_i)\cdot\omega_i - \tau_i\cdot g_i\|_2 = \tau^2_i\cdot\bigl\|\EE_{\sigma_i}\bigl[\overline\phi_{\theta_i} \cdot (\overline\phi_{\theta_i} - \overline\phi_{\omega_i})^\top\omega_i\bigr]\bigr\|_2\notag\\
&\quad\leq \tau^2_i\cdot
\EE_{\sigma_i}\bigl[\|\overline\phi_{\theta_i} \cdot (\overline\phi_{\theta_i} - \overline\phi_{\omega_i})^\top\omega_i\|_2\bigr]= \tau^2_i\cdot\EE_{\sigma_i}\bigl[\|\overline\phi_{\theta_i}\|_2 \cdot |(\overline\phi_{\theta_i} - \overline\phi_{\omega_i})^\top\omega_i|\bigr],
\#
where the inequality follows from the Jensen's inequality. In what follows, we upper bound the right-hand side of \eqref{eq::pf_i_part1_eq20}. Note that $\|\overline\phi_{\theta_i}(s, a)\|_2\leq 2$ for all $(s, a)\in\cS\times\cA$. By further plugging into \eqref{eq::pf_i_part1_eq20}, we obtain that
\#\label{eq::pf_i_part1_eq21}
\|F(\theta_i)\cdot\omega_i - \tau_i\cdot g_i\|_2&\leq 2\tau^2_i\cdot\EE_{\sigma_i}\bigl[ |(\overline\phi_{\theta_i} - \overline\phi_{\omega_i})^\top\omega_i|\bigr] \leq 2\tau^2_i\cdot \|(\overline\phi_{\theta_i} - \overline\phi_{\omega_i})^\top\omega_i\|_{\sigma_i},
\#
where the last inequality follows from the Jensen's inequality. Recall that $\omega_i, \theta_i \in \cB$. Therefore, by Assumption \ref{asu::reg_cond} and Corollary \ref{cor::linear_err}, we have
\#\label{eq::pf_i_part1_eq22}
&\EE\bigl[\|(\overline\phi_{\theta_i} - \overline\phi_{\omega_i})^\top\omega_i\|_{\sigma_i}\bigr] \notag\\
&\quad\leq \EE\bigl[\|(\overline\phi_{\theta_i} - \overline\phi_{0})^\top\omega_i\|_{\sigma_i}\bigr] + \EE\bigl[\|(\overline\phi_{0} - \overline\phi_{\omega_i})^\top\omega_i\|_{\sigma_i}\bigr] = \cO(R^{3/2}\cdot m^{-1/4}),
\#
where the expectations are taken over all the randomness. Combining \eqref{eq::pf_i_part1_eq21} and \eqref{eq::pf_i_part1_eq22}, we obtain that
\#\label{eq::pf_i_part1_eq23}
\EE\bigl[\|F(\theta_i)\cdot\omega_i - \tau_i\cdot g_i\|_2\bigr] =\cO( 2\tau^2_i\cdot R^{3/2}\cdot m^{-1/4}),
\#
where the expectation is taken over all the randomness. Finally, by plugging \eqref{eq::pf_i_part1_eq23} into \eqref{eq::pf_i_part1_eq19}, we conclude that
\#\label{eq::pf_i_part1_iiia}
&\EE \Bigl[\bigl|(\delta_i - \omega_i)^\top\EE_{\sigma_i}\bigl[ \overline\phi_{\theta_i}\cdot(\delta_i^\top\overline\phi_{\theta_i} - \omega_i^\top\overline\phi_{\omega_i})\bigr]\bigr|^{1/2}\Bigr]   \notag\\
&\quad\leq C_i \cdot \Bigl\{\EE\bigl[\| F(\theta_i) \cdot\omega_i - \tau_i \cdot  g_i\|_2\bigr] + \EE\bigl[\|\xi_i(\omega_i)\|_2\bigr] + \EE\bigl[\|\xi_i(\delta_i)\|_2\bigr]\Bigr\}^{1/2}\notag\\
&\quad= \cO( R^{5/4}\cdot m^{-1/8}) + \sqrt{2R} \cdot\tau_i^{-1}\cdot \Bigl\{\EE\bigl[\|\xi_i(\delta_i)\|_2 + \|\xi_i(\omega_i)\|_2\bigr]\Bigr\}^{1/2},
\#
where $C_i = \sqrt{2R}\cdot\tau_i^{-1}$ and $\xi_i(\delta_i)$, $\xi_i(\omega_i)$ are defined in Assumption \ref{asu::NPG_var_bound}.% Thus, we complete the upper bounding of (iii.a) in \eqref{eq::pf_i_part1_eq4}. 

\vskip4pt
\noindent{\bf Upper Bounding (iii.b) in \eqref{eq::pf_i_part1_eq4}. }Following from the Cauchy-Schwartz inequality, it holds that
\#\label{eq::pf_i_part1_eq5}
&\sqrt{\EE_{\sigma_i}\bigl[(\omega^\top_i \overline\phi_{\theta_i} - \omega^\top_i \overline\phi_{\omega_i})\cdot(\delta^\top_i \overline\phi_{\theta_i} - \omega_i^\top \overline\phi_{\omega_i})\bigr]} \notag\\
&\quad\leq \bigl (\|\omega^\top_i \overline\phi_{\theta_i} - \omega^\top_i \overline\phi_{\omega_i}\|_{\sigma_i} \cdot \|\delta^\top_i \overline\phi_{\theta_i} - \omega_i^\top \overline\phi_{\omega_i}\|_{\sigma_i} \bigr )^{1/2}.
\#
To upper bound the right-hand side of \eqref{eq::pf_i_part1_eq5}, we first upper bound $\|\omega^\top_i \overline\phi_{\theta_i} - \omega^\top_i \overline\phi_{\omega_i}\|_{\sigma_i}$. Recall that $\omega_i, \theta_i \in \cB$. Following from Assumption \ref{asu::reg_cond} and Corollary \ref{cor::linear_err}, it holds that
\#\label{eq::pf_i_part1_eq6}
&\EE\bigl[\|\omega^\top_i \overline\phi_{\theta_i} - \omega^\top_i \overline\phi_{0}\|^2_{\sigma_i} \bigr] = \cO(R^{3}\cdot m^{-1/2}), \notag\\
&\EE\bigl[\|\omega^\top_i \overline\phi_{\omega_i} - \omega^\top_i \overline\phi_{0}\|^2_{\sigma_i} \bigr] = \cO(R^{3}\cdot m^{-1/2}),
\#
where $\overline \phi_0$ is defined in \eqref{eq::def_phi_c0} and the expectations are taken over all the randomness. Therefore, following from \eqref{eq::pf_i_part1_eq6}, we obtain that
\#\label{eq::pf_i_part1_eq7}
&\EE\bigl[\|\omega^\top_i \overline\phi_{\theta_i} - \omega^\top_i \overline\phi_{\omega_i}\|^2_{\sigma_i}\bigr] \notag \\
&\quad\leq 2\EE \bigl [\|\omega^\top_i \overline\phi_{\theta_i} - \omega^\top_i \overline\phi_{0}\|^2_{\sigma_i} \bigr ] + 2\EE \bigl [\|\omega^\top_i \overline\phi_{\omega_i} - \omega^\top_i \overline\phi_{0}\|^2_{\sigma_i} \bigr] = \cO(R^{3}\cdot m^{-1/2}).
\#

It remains to upper bound $\|\delta^\top_i \overline\phi_{\theta_i} - \omega_i^\top \overline\phi_{\omega_i}\|_{\sigma_i} $ on the right-hand side of \eqref{eq::pf_i_part1_eq5}. Since $\delta_i \in \cB$, by Assumption \ref{asu::reg_cond} and Corollary \ref{cor::linear_err}, we obtain that
\#\label{eq::pf_i_part1_eq8}
\EE\bigl[\|\delta^\top_i \overline\phi_{\theta_i} - \delta^\top_i \overline\phi_{0}\|^2_{\sigma_i}\bigr] = \cO(R^{3}\cdot m^{-1/2}),
\#
where the expectation is taken over all the randomness. Meanwhile, following from the fact that $\|\overline\phi_0(s, a)\|_2\leq 2$ for all $(s, a)\in\cS\times\cA$, we obtain that
\#\label{eq::pf_i_part1_eq9}
&|\delta^\top_i \overline\phi_{0}(s, a) - \omega_i^\top \overline\phi_{0}(s, a)| \notag\\
&\quad \leq \|\overline\phi_{0}(s, a)\|_2 \cdot \|\delta_i -  \omega_i\|_2\leq 4R, \quad \forall (s, a)\in\cS\times\cA,
\#
where the first inequality follows from the Cauchy-Schwartz inequality and the second inequality follows from the fact that $\delta_i, \omega_i \in \cB$. Combining \eqref{eq::pf_i_part1_eq6},  \eqref{eq::pf_i_part1_eq8}, and \eqref{eq::pf_i_part1_eq9}, we obtain that
\#\label{eq::pf_i_part1_eq10}
\EE\bigl[\|\delta^\top_i \overline\phi_{\theta_i} - \omega_i^\top \overline\phi_{\omega_i}\|^2_{\sigma_i}\bigr] &\leq 3\EE \bigl [\|\delta^\top_i \overline\phi_{\theta_i} - \delta_i^\top \overline\phi_{0}\|^2_{\sigma_i} \bigr ] + 3\EE\bigl [\|\delta^\top_i \overline\phi_{0} - \omega_i^\top \overline\phi_{0}\|^2_{\sigma_i} \bigr     ]\notag\\
&\qquad +  3\EE\bigl [\|\omega^\top_i \overline\phi_{\omega_i} - \omega_i^\top \overline\phi_{0}\|^2_{\sigma_i} \bigr ] = \cO(R^2 + R^{3}\cdot m^{-1/2}),
\#
where the expectations are taken over all the randomness. Finally, plugging \eqref{eq::pf_i_part1_eq7} and \eqref{eq::pf_i_part1_eq10} into \eqref{eq::pf_i_part1_eq5}, we obtain that
\#\label{eq::pf_i_part1_iiib}
&\EE\biggl[\Bigl\{\EE_{\sigma_i}\bigl[(\omega^\top_i \overline\phi_{\theta_i} - \omega^\top_i \overline\phi_{\omega_i})(\delta^\top_i \overline\phi_{\theta_i} - \omega_i^\top \overline\phi_{\omega_i})\bigr]\Bigr\}^{1/2}\biggr]\notag\\
&\quad \leq \Bigl\{\EE\bigl[\|\omega^\top_i \overline\phi_{\theta_i} - \omega^\top_i \overline\phi_{\omega_i}\|_{\sigma_i} \cdot \|\delta^\top_i \overline\phi_{\theta_i} - \omega_i^\top \overline\phi_{\omega_i}\|_{\sigma_i}\bigr] \Bigr\}^{1/2}  \notag\\
&\quad \leq \Bigl\{\EE\bigl[\|\omega^\top_i \overline\phi_{\theta_i} - \omega^\top_i \overline\phi_{\omega_i}\|^2_{\sigma_i}\bigr] \cdot\EE\bigl[ \|\delta^\top_i \overline\phi_{\theta_i} - \omega_i^\top \overline\phi_{\omega_i}\|^2_{\sigma_i}\bigr] \Bigr\}^{1/4}\notag\\
&\quad= \cO(R^{3/2}\cdot m^{-1/4} + R^{5/4}\cdot m^{-1/8}),
\#
where the inequalities follow from the Cauchy-Schwartz inequality and the expectations are taken over all the randomness.
%which concludes the upper bound of (iii.b) in \eqref{eq::pf_i_part1_eq4}. 

Finally, by plugging \eqref{eq::pf_i_part1_iiia}, \eqref{eq::pf_i_part1_iiib}, and \eqref{eq::pf_i_part1_eq4} into \eqref{eq::pf_i_part1_eq3}, we obtain that
\#\label{eq::pf_i_part1}
&\EE\Bigl[\EE_{\nu_*}\bigl[|\langle \overline\phi_{\theta_i}(s, \cdot)^\top\delta_i - \overline\phi_{\omega_i}(s, \cdot)^\top \omega_i,  \Delta^*_i(\cdot\,|\,s)\rangle|\bigr]\Bigr] \notag\\
&\quad=\eta\cdot(\varphi_i + \psi_i)\cdot\Bigl ( \cO( R^{5/4}\cdot m^{-1/8}+ R^{3/2}\cdot m^{-1/4}) \\
&\quad\qquad\qquad\qquad\qquad+ \sqrt{2R} \cdot\tau_i^{-1}\cdot \Bigl\{\EE\bigl[\|\xi_i(\delta_i)\|_2 + \|\xi_i(\omega_i)\|_2\bigr]\Bigr\}^{1/2}\Bigr ),\notag
\#
where $\xi_i(\delta_i)$ and $\xi_i(\omega_i)$ are defined in Assumption \ref{asu::NPG_var_bound}. Here the expectations are taken over all the randomness. %Thus, we complete the upper bound of (iii) in \eqref{eq::pf_i_eq2}.

\vskip4pt
\noindent{\bf Upper Bounding (iv) in   \eqref{eq::pf_i_eq2}. }The analysis of (iv) is similar to that of (ii) in \S\ref{pf::H_i_err}. It holds that
\#\label{eq::pf_i_part2_eq1}
&|\langle \phi_{\theta_{i+1}}(s, \cdot)^\top\theta_{i+1} - \phi_{\theta_i}(s, \cdot)^\top \theta_{i+1},  \Delta^*_i(\cdot\,|\,s)\rangle| \notag\\
&\quad \leq  |\langle \phi_{\theta_{i+1}}(s, \cdot)^\top\theta_{i+1} - \phi_{\theta_i}(s, \cdot)^\top \theta_{i+1},  \pi^*(\cdot\,|\,s)\rangle| + |\langle \phi_{\theta_{i+1}}(s, \cdot)^\top\theta_{i+1} - \phi_{\theta_i}(s, \cdot)^\top \theta_{i+1},  \pi_{i}(\cdot\,|\,s)\rangle|\notag\\
&\quad \leq \|\phi_{\theta_{i+1}}(s, \cdot)^\top\theta_{i+1} - \phi_{\theta_i}(s, \cdot)^\top \theta_{i+1}\|_{\pi^*, 1} + \|\phi_{\theta_{i+1}}(s, \cdot)^\top\theta_{i+1} - \phi_{\theta_i}(s, \cdot)^\top \theta_{i+1}\|_{\pi_{i}, 1}.
\#
Note that $\theta_i, \theta_{i + 1} \in \cB$. Following from Assumption \ref{asu::reg_cond} and Lemma \ref{lem::linear_err}, it holds that
\#\label{eq::pf_i_part2_eq2}
&\EE\Bigl[\EE_{ \nu_*}\bigl[\|\phi_{\theta_{i+1}}(s, \cdot)^\top\theta_{i+1} - \phi_{0}(s, \cdot)^\top\theta_{i+1}\|_{\pi^*, 1}\bigr]\Bigr] \notag\\
&\quad\leq \EE\bigl[\|\phi_{\theta_{i+1}}(\cdot, \cdot)^\top\theta_{i+1} - \phi_{0}(\cdot, \cdot)^\top\theta_{i+1}\|_{\sigma_*}\bigr] = \cO(R^{3/2}\cdot m^{-1/4}),\notag\\
&\EE\Bigl[\EE_{ \nu_*}\bigl[\|\phi_{\theta_{i}}(s, \cdot)^\top\theta_{i+1} - \phi_{0}(s, \cdot)^\top\theta_{i+1}\|_{\pi^*, 1}\bigr] \Bigr]\notag\\
&\quad\leq \EE\bigl[\|\phi_{\theta_{i}}(\cdot, \cdot)^\top\theta_{i+1} - \phi_{0}(\cdot, \cdot)^\top\theta_{i+1}\|_{\sigma_*}\bigr] = \cO(R^{3/2}\cdot m^{-1/4}),
\#
where the inequalities follow from the Jensen's inequality, $\phi_0$ is the feature mapping defined in \eqref{eq::def_phi} with $\theta = W_{\rm init}$, and the expectations are taken over all the randomness. 
Following from \eqref{eq::pf_i_part2_eq2}, we obtain that
\#\label{eq::pf_i_part2_eq3}
&\EE\Bigl[\EE_{\nu_*}\bigl[\|\phi_{\theta_{i+1}}(s, \cdot)^\top\theta_{i+1} - \phi_{\theta_i}(s, \cdot)^\top \theta_{i+1}\|_{\pi^*, 1}\bigr]\Bigr] \notag \\
& \quad \leq \EE\Bigl[\EE_{\nu_*}\bigl[\|\phi_{\theta_{i+1}}(s, \cdot)^\top\theta_{i+1} - \phi_{0}(s, \cdot)^\top\theta_{i+1}\|_{\pi^*, 1}\bigr]\Bigr]  \notag\\
&\quad\qquad+ \EE\Bigl[\EE_{\nu_*}\bigl[\|\phi_{\theta_{i}}(s, \cdot)^\top\theta_{i+1} - \phi_{0}(s, \cdot)^\top\theta_{i+1}\|_{\pi^*, 1}\bigr]\Bigr]  \notag \\
&\quad = \cO(R^{3/2}\cdot m^{-1/4}),
\#
where the expectations are taken over all the randomness. Similarly, it holds that 
\#\label{eq::pf_i_part2_eq4}
\EE\Bigl[\EE_{\nu_*}\bigl[\|\phi_{\theta_{i+1}}(s, \cdot)^\top\theta_{i+1} - \phi_{\theta_i}(s, \cdot)^\top \theta_{i+1}\|_{\pi_{i}, 1}\bigr]\Bigr] = \cO(R^{3/2}\cdot m^{-1/4}).
\#
By plugging \eqref{eq::pf_i_part2_eq3} and \eqref{eq::pf_i_part2_eq4} into \eqref{eq::pf_i_part2_eq1}, we obtain that
\#\label{eq::pf_i_part2}
\EE\Bigl[\EE_{\nu_*}\bigl[|\langle \phi_{\theta_{i+1}}(s, \cdot)^\top\theta_{i+1} - \phi_{\theta_i}(s, \cdot)^\top \theta_{i+1},  \Delta^*_i(\cdot\,|\,s)\rangle|\bigr]\Bigr] = \cO(R^{3/2}\cdot m^{-1/4}).
\#
%which concludes the upper bound of (iv) in \eqref{eq::pf_i_eq2}.

Finally, by plugging \eqref{eq::pf_i_part1} and \eqref{eq::pf_i_part2} into \eqref{eq::pf_i_eq2}, we obtain that
\$
&\EE\biggl[\EE_{\nu_*}\Bigl[\bigl|\bigl\langle \log\bigl(\pi_{i+1}(\cdot\,|\, s) / \pi_i(\cdot\,|\, s)\bigr) -  \eta \cdot Q_{\omega_i}(s, \cdot),    \pi^*(\cdot\,|\,s) - \pi_i(\cdot\,|\,s)\bigr\rangle\bigr|\Bigr] \biggr]\\
&\quad\leq \sqrt{2}(\varphi_i + \psi_i)\cdot\eta\cdot R^{1/2} \cdot\tau_i^{-1}\cdot \Bigl\{\EE\bigl[\|\xi_i(\delta_i)\|_2\bigr] +\EE\bigl[\|\xi_i(\omega_i)\|_2\bigr] \Bigr\}^{1/2}\notag\\
&\quad\qquad + \cO\bigl((\tau_{i+1} + 1)\cdot R^{3/2}\cdot m^{-1/4} +  \eta\cdot R^{5/4}\cdot m^{-1/8} \bigr),\notag
\$
where $\varphi_i$, $\psi_i$ are defined in Assumption \ref{asu::concen_coeff} and $\xi_i(\delta_i)$, $\xi_i(\omega_i)$ are defined in Assumption \ref{asu::NPG_var_bound}. Thus, we complete the proof of Lemma~\ref{lem::part_i}.
\end{proof}

\section{Auxilliary Lemma}
\begin{lemma}[Performance Difference \citep{kakade2002approximately}]
\label{lem::kakade_perf_diff} 
It holds for any $\pi$ and $\tilde\pi$ that
\$
J(\tilde \pi) - J(\pi) = (1 - \gamma)^{-1}\cdot\EE_{\tilde \pi\cdot \nu_{\tilde\pi}}\bigl[A^\pi(s, a)\bigr].
\$
Here $\nu_{\tilde\pi}$ is the state visitation measure corresponding to $\tilde \pi$, which is defined in \eqref{eq:visitation}.
\end{lemma}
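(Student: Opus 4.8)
The plan is to express the difference $J(\tilde\pi) - J(\pi)$ as a discounted sum of advantages $A^\pi$ collected along trajectories generated by $\tilde\pi$, and then to repackage that discounted sum as an expectation under the visitation measure $\nu_{\tilde\pi}$.

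First I would fix a trajectory $\{(S_t,A_t)\}_{t\geq 0}$ with $S_0\sim\zeta$, $A_t\sim\tilde\pi(\cdot\,|\,S_t)$, and $S_{t+1}\sim\cP(\cdot\,|\,S_t,A_t)$, and consider the quantity $\EE_{\tilde\pi}[\sum_{t=0}^\infty \gamma^t A^\pi(S_t,A_t)]$. Writing $A^\pi = Q^\pi - V^\pi$ and invoking the normalized Bellman identity $Q^\pi(s,a) = (1-\gamma)\cdot r(s,a) + \gamma\cdot\EE_{s'\sim\cP(\cdot\,|\,s,a)}[V^\pi(s')]$, which follows directly from the definitions in \eqref{eq::value_func1} and \eqref{eq::value_func2}, each summand becomes $(1-\gamma)\cdot r(S_t,A_t) + \gamma\cdot V^\pi(S_{t+1}) - V^\pi(S_t)$ after taking the conditional expectation over $S_{t+1}$ given $(S_t,A_t)$.

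The key step is then to split the resulting series into a reward part and a telescoping part. The reward part sums to $(1-\gamma)\cdot\EE_{\tilde\pi}[\sum_{t}\gamma^t r(S_t,A_t)] = J(\tilde\pi)$, while the value part $\sum_t(\gamma^{t+1}V^\pi(S_{t+1}) - \gamma^t V^\pi(S_t))$ telescopes to $-V^\pi(S_0)$ up to a boundary term $\gamma^{T+1}V^\pi(S_{T+1})$ that vanishes as $T\to\infty$ because $|V^\pi|\leq Q_{\max}$ and $\gamma<1$; taking expectations gives $-\EE_\zeta[V^\pi(s)] = -J(\pi)$. Hence $\EE_{\tilde\pi}[\sum_t\gamma^t A^\pi(S_t,A_t)] = J(\tilde\pi) - J(\pi)$.

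Finally I would convert the discounted sum into the stated expectation. Interchanging expectation and summation and reordering by time index, $\EE_{\tilde\pi}[\sum_t\gamma^t A^\pi(S_t,A_t)] = \sum_t\gamma^t\EE[A^\pi(S_t,A_t)]$, and since $A_t\sim\tilde\pi(\cdot\,|\,S_t)$, this equals $(1-\gamma)^{-1}\cdot\sum_{s,a}\tilde\pi(a\,|\,s)[(1-\gamma)\sum_t\gamma^t\PP(S_t=s)]\,A^\pi(s,a) = (1-\gamma)^{-1}\cdot\EE_{\tilde\pi\cdot\nu_{\tilde\pi}}[A^\pi(s,a)]$, using the definition of $\nu_{\tilde\pi}$ in \eqref{eq:visitation}. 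Combining with the previous identity yields the claim. The main obstacle is purely technical: justifying the interchange of the infinite sum with the expectation and the vanishing of the telescoping boundary term, both of which are controlled by the uniform bounds $|A^\pi|\leq 2Q_{\max}$ and $|V^\pi|\leq Q_{\max}$ together with $\gamma<1$ via dominated convergence.
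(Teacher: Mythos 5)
Your proof is correct and is essentially the same telescoping argument as in Kakade and Langford (2002), which the paper simply cites for this lemma rather than reproducing. You correctly adapt it to the paper's $(1-\gamma)$-normalized conventions — in particular the Bellman identity $Q^\pi(s,a) = (1-\gamma)\cdot r(s,a) + \gamma\cdot\EE_{s'}[V^\pi(s')]$ and the normalization of $\nu_{\tilde\pi}$ in \eqref{eq:visitation} — and the boundedness of $r$ justifies the interchange of limits exactly as you say.
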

\begin{proof}
See \cite{kakade2002approximately} for a detailed proof.
\end{proof}
\end{document}